\documentclass[10pt]{article}

\usepackage[in]{fullpage}  
\usepackage{natbib} 
\usepackage{mathpazo} 
\DeclareMathSizes{10}{9}{7}{5}

\usepackage[utf8]{inputenc} %
\usepackage[T1]{fontenc}    %
\usepackage[unicode]{hyperref}
\usepackage{enumitem}

\usepackage{url}            %
\usepackage{booktabs}       %
\usepackage{amsfonts}       %
\usepackage{nicefrac}       %
\usepackage{amsmath} %
\usepackage{microtype}      %
\usepackage{lipsum}         %
\usepackage{todonotes}

\usepackage{tcolorbox}

\usepackage{titletoc}

\newcommand{\Ex}{\mathbb{E}}
\newcommand{\Exk}{\mathbb{E}_k}
\newcommand{\taut}{{\tau_k}}

\AtBeginDocument{\setlength\abovedisplayskip{5pt}}
\AtBeginDocument{\setlength\belowdisplayskip{5pt}}

\hypersetup{
    colorlinks=true,
    citecolor=blue,
    linkcolor=blue,
}

\usepackage{listings} %

\definecolor{codegreen}{rgb}{0,0.6,0}
\definecolor{codegray}{rgb}{0.5,0.5,0.5}
\definecolor{codepurple}{rgb}{0.58,0,0.82}
\definecolor{codeblue}{rgb}{0,0,1}
\definecolor{backcolour}{rgb}{0.95,0.95,0.92}
\definecolor{key-color}{rgb}{0.8, 0.47, 0.196}

\lstdefinestyle{mystyle}{
    backgroundcolor=\color{backcolour},   
    commentstyle=\color{codegreen},
    numberstyle=\tiny\color{codegray},
    stringstyle=\color{codepurple},
    basicstyle=\ttfamily\footnotesize,
    breakatwhitespace=false,         
    breaklines=true,                 
    captionpos=b,                    
    keepspaces=true,                 
    numbers=left,                    
    numbersep=5pt,                  
    showspaces=false,                
    showstringspaces=false,
    showtabs=false,                  
    tabsize=2,
    language=Python,
    emph={lm},
    emphstyle={\color{blue}},
    classoffset=1, %
    otherkeywords={sum},
    morekeywords={rm, mean},
    keywordstyle=\color{codegreen},
    classoffset=0,
}
\newcommand\nnfootnote[1]{%
  \begin{NoHyper}
  \renewcommand\thefootnote{}\footnote{#1}%
  \addtocounter{footnote}{-1}%
  \end{NoHyper}
}

\usepackage[utf8]{inputenc} %
\usepackage[T1]{fontenc}    %
\usepackage{hyperref}       %
\usepackage{url}            %
\usepackage{booktabs}       %
\usepackage{amsfonts}       %
\usepackage{nicefrac}       %
\usepackage{microtype}      %
\usepackage{xcolor}         %
\usepackage{graphicx}
\usepackage{subfigure}
\usepackage{booktabs} %
\usepackage{algorithm}
\usepackage{algorithmic}
\usepackage{wrapfig}
\usepackage{color}
\usepackage{framed}
\definecolor{shadecolor}{rgb}{0.92,0.92,0.92}
\colorlet{shadecolor}{orange!15}
\usepackage{amsmath}
\usepackage{amssymb}
\usepackage{mathtools}
\usepackage{amsthm}
\usepackage{mathrsfs} 
\usepackage{ulem}
\usepackage{makecell}
\usepackage[capitalize,noabbrev]{cleveref}

\theoremstyle{plain}
\newtheorem{theorem}{Theorem}[section] %

\newtheorem{lemma}[theorem]{Lemma}
\newtheorem{corollary}[theorem]{Corollary}
\theoremstyle{definition}

\newtheorem{assumption}[theorem]{Assumption}
\theoremstyle{remark}

\newcommand{\BLUE}[1]{{\color{blue} #1}}

\newcommand{\mlki}{m_{l,k,i}}

\newcommand{\betabeta}{(\beta_1,\beta_2)}
\newcommand{\functionclass}{\mathcal{F}_{L, D_0, D_1}^{n}  (\mathbb{R}^d)}

\newcommand{\RED}[1]{{\color{black} #1}}
\newcommand{\yushunrevise}[1]{\textcolor{black}{ #1}}

\title{\fontsize{16.95pt}{18pt}\selectfont Adam Converges Without Any Modification On Update Rules}

\makeatletter
\def\@fnsymbol#1{\ensuremath{\ifcase#1\or *\or \dagger\or \ddagger\or
   \mathsection\or \sharp\or \Diamond\or \mathparagraph\or \|\or
   \or \ddagger\ddagger \else\@ctrerr\fi}}
\makeatother

\author{\fontsize{12pt}{18pt}\selectfont 
  Yushun Zhang$^{12}$, Bingran Li$^{1}$, Congliang Chen$^{1}$, Zhi-Quan Luo$^{12}$, Ruoyu Sun$^{12\dagger}$
   \\
   \\
     $^1$The Chinese University of Hong Kong, Shenzhen, China \\
   $^2$Shenzhen Research Institute of Big Data 
  \\
    \texttt{\{yushunzhang,bingranli,congliangchen\}@link.cuhk.edu.cn},  \\
    \texttt{\{sunruoyu,luozq\}@cuhk.edu.cn} \\
}

\date{}

\begin{document}

\maketitle
\numberwithin{equation}{section} %
\nnfootnote{$\dagger$: Correspondence author.}
\nnfootnote{This manuscript is an extended journal version of the conference paper ``Adam Can Converge Without Any Modification On Update Rules'' appeared at NeurIPS 2022. This manuscript contains a simplified proof for Adam under random shuffling (simplified based on the conference paper), and a  new proof for Adam under with-replacement sampling (inspired by the conference paper).  }
\begin{abstract}
Adam is the default algorithm for training neural networks, including large language models (LLMs). However, \citet{reddi2019convergence} provided an example that Adam diverges, raising concerns for its deployment in AI model training. We identify a key mismatch between the divergence example and practice: \citet{reddi2019convergence} pick the problem after picking the hyperparameters of Adam, i.e., $(\beta_1,\beta_2)$; while practical applications often fix the problem first and then tune $(\beta_1,\beta_2)$. In this work, we prove that Adam converges with proper problem-dependent hyperparameters. First, we prove that Adam converges when $\beta_2$ is large and $\beta_1 < \sqrt{\beta_2}$. Second, when $\beta_2$ is small, we point out a region of $(\beta_1,\beta_2)$ combinations where Adam can diverge to infinity. Our results indicate a phase transition for Adam from divergence to convergence when changing the $(\beta_1, \beta_2)$ combination. To our knowledge, this is the first phase transition in $(\beta_1,\beta_2)$ 2D-plane reported in the literature, providing rigorous theoretical guarantees for Adam optimizer. We further point out that the critical boundary $(\beta_1^*, \beta_2^*)$ is problem-dependent, and particularly, dependent on batch size. This provides suggestions on how to tune $\beta_1$ and $\beta_2$: when Adam does not work well, we suggest tuning up $\beta_2$ inversely with batch size to surpass the threshold $\beta_2^*$, and then trying $\beta_1< \sqrt{\beta_2}$. Our suggestions are supported by reports from several empirical studies, which observe improved LLM training performance when applying them.

\end{abstract}

\section{Introduction}
\label{section:intro}

Machine learning tasks often aim to solve the following empirical risk minimization (ERM) problem. 
\begin{equation} \label{finite_sum}
  \operatorname*{minimize}_{x \in \mathbb{R}^{d}} \quad f(x):=\frac{1}{n}\sum_{i=0}^{n-1} f_{i}(x),
\end{equation}
where $x\in \mathbb{R}^{d}$ denotes the trainable parameters, $n \in \mathbb{N}$ is the number of mini-batches that partition the dataset, and $f_i(x)$ denotes the loss on the $i$-th mini-batch data. For a fixed dataset of size $\mathcal{D}$, the batch size in each mini-batch is $\mathcal{D} /n $. 
In deep learning, Adam \citep{kingma2014adam} is one of the most popular algorithms for solving \eqref{finite_sum}. 
It has been applied to various domains such as natural language processing (NLP) and computer vision (CV) (e.g., \citep{vaswani2017attention,dosovitskiy2021an}).
Its impact is also evidenced by over 230,000 citations as of December 2025, a number that continues to grow rapidly \citep{adamcitation}.

 In the era of large language models (e.g., \citep{openai_chatgpt_2022}), Adam plays a central role in large-scale training. 
 Adam is reported to be used to train many mainstream LLMs, including Llama series \citep{touvron2023llama},  Qwen series \citep{bai2023qwen}, and DeepSeek series \citep{liu2024deepseek}, etc. Adam is clearly serving as a major horsepower behind the advancement of AI.  Its influence was recently recognized when it received the {\it ICLR 2025 Test-of-Time Award} \citep{ICLR2025TestOfTime}.

Despite its prevalence, an influential paper
\citep{reddi2019convergence} (the winner of {\it ICLR 2018 Best Paper Award} \citep{ICLR2018BestPaper}) provides an example that 
Adam diverges with a wide range of hyperparameters. A main result in  \citep{reddi2019convergence} states that: 
\begin{snugshade}
\begin{center}
   {\it \citep{reddi2019convergence}: For any $\beta_1, \beta_2$ s.t. $0 \leq \beta_1 < \sqrt{\beta_2} <1$, there exists a problem such that Adam diverges. } 
\end{center}
\end{snugshade}
Here, $\beta_1$ and $\beta_2$ are the hyperparameters to control Adam's 1st-order and 2nd-order momentum $m_k$ and $v_k$.
The divergence region is visualized in Figure \ref{fig:intro_paper} (a). This finding raises serious concerns for Adam's deployment in AI model training,  where the divergence can raise alerts of unpredictable training failures. Since then, many new variants have been designed. For instance, 
AMSGrad \citep{reddi2019convergence} enforces $v_k$ (defined later in Algorithm \ref{algorithm_wr}) to be non-decreasing; AdaBound \citep{luo2018adaptive} imposes constraint $v_k \in [C_l, C_u]$ to ensure the boundedness on the effective stepsize.

\begin{figure*}[t]
  \vspace{-1cm}
    \centering
    \subfigure[Divergent region claimed by \citep{reddi2019convergence}]{
    \begin{minipage}[t]{0.25\linewidth}
    \centering
    \includegraphics[width=\linewidth]{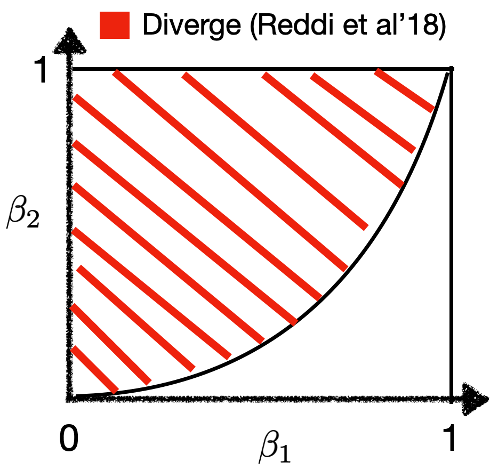}
    \end{minipage}%
    }%
    \subfigure[Our contribution]{
      \begin{minipage}[t]{0.25\linewidth}
      \centering
   \includegraphics[width=\linewidth]{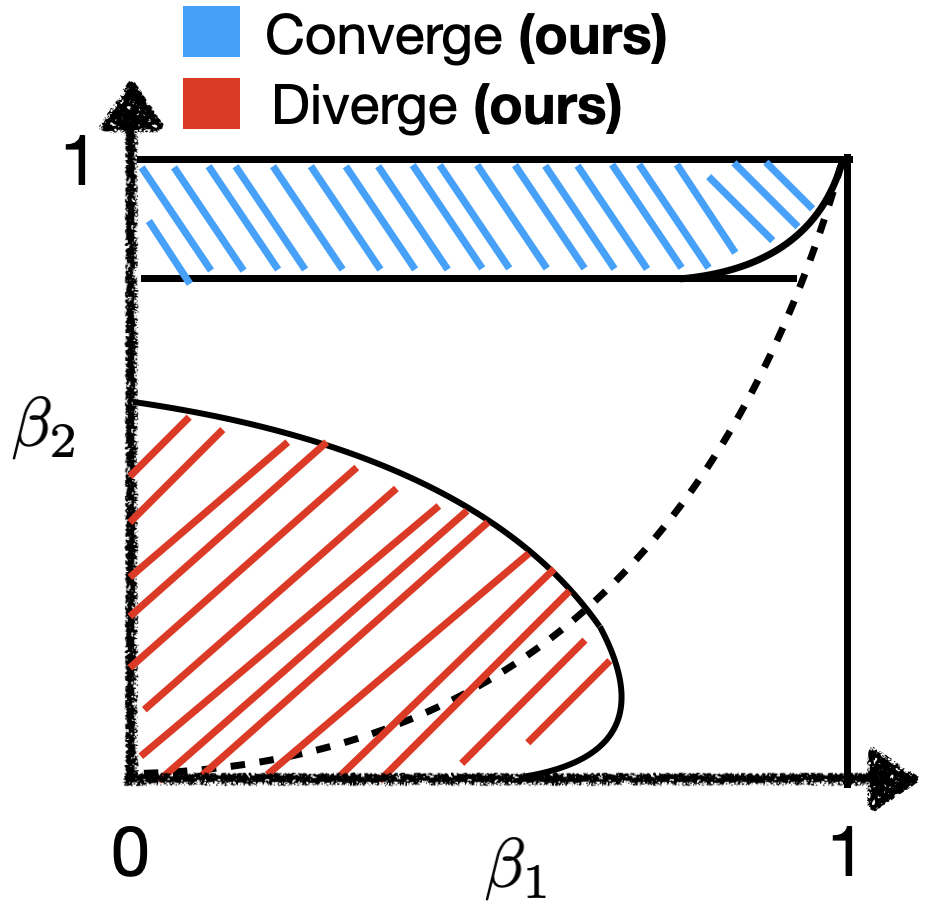}
      \end{minipage}%
      }%
    \subfigure[MNIST]{
      \begin{minipage}[t]{0.25\linewidth}
      \centering
    \includegraphics[width=\linewidth]{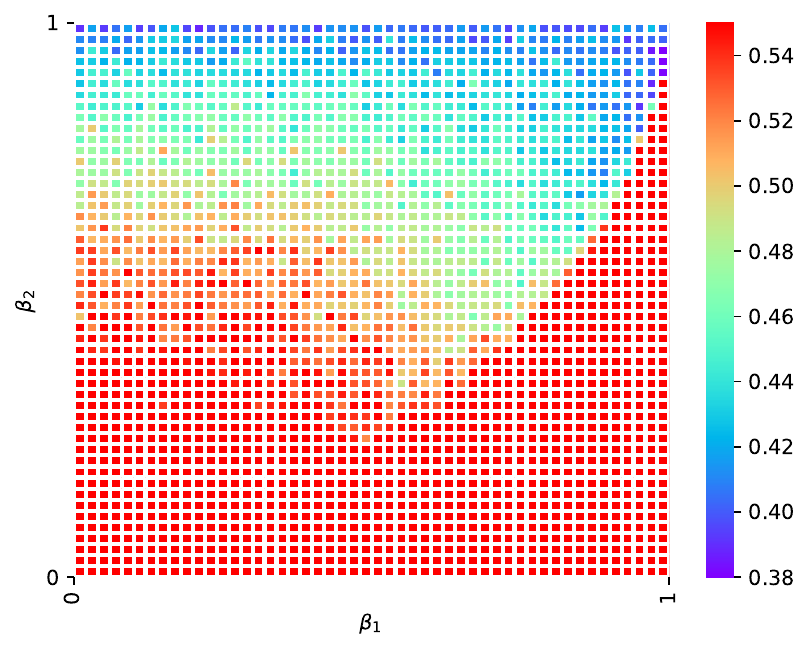}
      \end{minipage}%
      }%
    \subfigure[CIFAR-10]{
      \begin{minipage}[t]{0.25\linewidth}
      \centering
    \includegraphics[width=\linewidth]{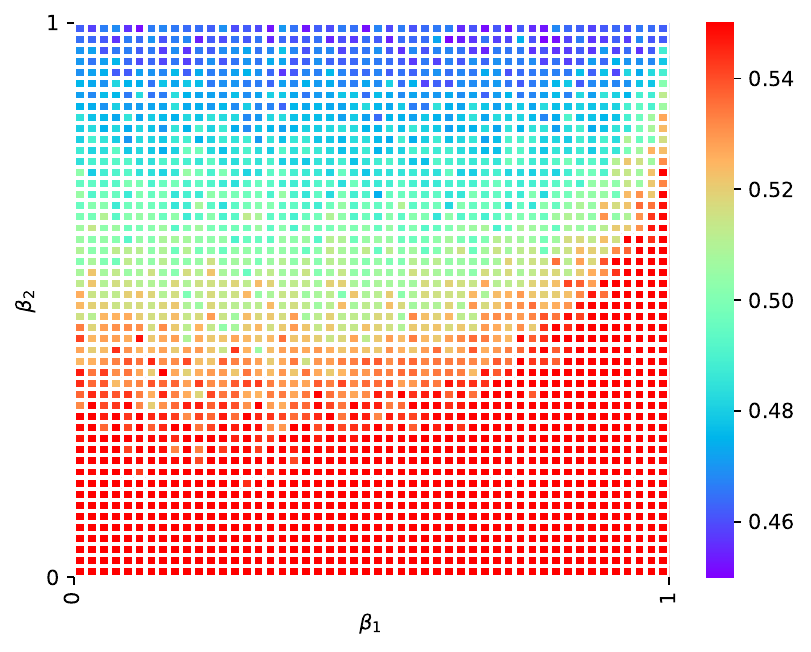}
      \end{minipage}%
      }%
    \centering
    \vspace{-4mm}
    \caption{ {\small {\bf (a)}: The divergent region of Adam claimed by \citep{reddi2019convergence}. They fix $\betabeta$ first and then pick a problem to construct the divergence example.   {\bf (b)}: An illustration of our contribution in $\betabeta$ phase diagram. We fix the problem before picking $\betabeta$.  Note that this is a different setting from {\bf (a)}, so there is no contradiction. {\bf Both boundaries of the red and blue regions depend on batch size (shown later).} The shape of the region follows the solution to our analytic conditions.  The dotted curve satisfies $\beta_1 =\sqrt{\beta_2}$. {\bf (c), (d)}:  The training loss on MNIST and CIFAR-10.  We sweep $\beta_1$ and  $\beta_2$ in grids $\{(k_1/50,k_2/50)| k_1 = 0,\cdots,49,k_2=0, \cdots, 49\}$, resulting in 2,500 trials. The performance of Adam reconciles with our theoretical characterization in {\bf (b)}.   }}%
    \label{fig:intro_paper}
\vspace{-1.5mm}
\end{figure*}

On the other hand, counter-intuitively, vanilla Adam remains exceptionally popular.  Without any modification on its update rules, Adam works well in practice.  This is rather surprising due to the existence of divergence theory.
 Even more mysteriously, we find that the commonly reported hyperparameters actually satisfy the divergence condition stated earlier. 
 For instance, 
\citet{kingma2014adam} claim that  $\betabeta=(0.9,0.999)$ is a ``good choice for the tested machine learning problems'' and it is indeed the default setting in deep learning libraries. 
GAN researchers (e.g. \citet{radford2015unsupervised,isola2017image}) use $\betabeta=(0.5,0.999)$. For LLMs such as GPT-3,  Llama series, and DeepSeek series \citep{brown2020language, touvron2023llama, liu2024deepseek}, $\betabeta$ is chosen to be $(0.9,0.95)$.
All these hyperparameters live in the divergence region $\beta_1 < \sqrt{\beta_2}$.
Surprisingly, instead of observing the divergence issue, these hyperparameters achieve good performances and they actually show the sign of convergence.

 Why does Adam work well despite its theoretical divergence issue? Is there any mismatch between deep learning problems and the divergence example? 
We take a closer look into the divergence example and find out the mismatch {\it does} exist. 
In particular, we notice an important (but often ignored) characteristic of the divergence  example:
 \citep{reddi2019convergence} picks   $\betabeta$ {\it before} picking the problem (or precisely, the \# mini-batches $n$).
 That is, to construct the divergence example, they change $n$ for different $\betabeta$.
   For instance,  for $\betabeta=(0,0.99)$, they use one $n$ to construct the  divergent example; for $\betabeta = (0, 0.9999)$, they use  another $n$ to construct another divergent example. 
On the other hand, in practical applications of Adam listed above, 
practitioners tune hyperparameters $\betabeta$ \textit{after} the problem (or $n$) is fixed. 
So there is a gap between the setting of theory and practice: the order of picking the problem and $\betabeta$ is different.

Given the good practical performance of Adam, we conjecture that Adam can converge when the problem is fixed. 
Unfortunately, the behavior of vanilla Adam is far less studied than its variants (perhaps due to the criticism of divergence).
 To verify this conjecture, we run Adam for the classification problem on data MNIST and CIFAR-10 as shown in Figure \ref{fig:intro_paper} (c) and (d). For these two problems, we find that:
 First, when $\beta_2$ is large, the optimization error is small for almost all values of $\beta_1$;  Second, when $\beta_1,\ \beta_2$ are both small, there is a 
red region with 1.4x larger error.

While Adam's performances seem unstable in the red region, we find that {\it it always performs well in the top blue region in Figure \ref{fig:intro_paper}.} This seems to suggest that: when the problem is fixed, Adam can converge without modification after proper tuning of $\beta_1$ and $\beta_2$.  
We ask  the following question:

\begin{snugshade}
    \begin{center}
        {\it Can  Adam provably converge without any modification on its update rules?} 
\end{center}
\end{snugshade}

In this work, we theoretically characterize Adam's behavior across different choices of $(\beta_1,\beta_2)$. 
We show that vanilla Adam (without algorithmic modification) exhibits two qualitatively different regimes: 
in a \textit{safe} region of $(\beta_1,\beta_2)$ it provably converges to the set of critical points (in the realizable case) or to a neighborhood of that set (in the non-realizable case), 
whereas in a \textit{danger} region it can diverge to infinity. 
Together, these results reveal a divergence--convergence phase transition in the $(\beta_1,\beta_2)$ plane.

Our contributions are visualized in Figure~\ref{fig:intro_paper}(b) and summarized as follows.

\begin{itemize} 
\item \textbf{Convergence for large $\beta_2$.} 
When {\small $0\leq\beta_1< \sqrt{\beta_2}<1$} and {\small $\beta_2$} is larger than a problem-dependent threshold, Adam converges to critical points in the realizable case and to a neighborhood of critical points in the non-realizable case. 
The threshold for $\beta_2$ depends on the problem class, and it increases with $n$ (equivalently, it decreases with batch size).   

\item \textbf{Divergence for small $\beta_2$.} 
For any $n \geq 3$ and any problem-class parameters $(L,D_0,D_1)$ with large enough $D_1$---where $L$ is the smoothness constant (Assumption~\ref{assum1}) and $(D_0,D_1)$ quantify the gradient variance condition (Assumption~\ref{assum2})---there exists an instance in the problem class such that Adam diverges to infinity for all $(\beta_1,\beta_2)$ in a certain region. The boundary of the divergence region expands with $n$ (or decreases with batch size).

\item 
\textbf{Key characteristics.} 
We emphasize the following aspects of our results. 

\textbf{(1) Phase transition}. 
The divergence result considers the same setting as our convergence result  (fixed problem class), 
indicating the existence of a critical boundary $(\beta_1^*, \beta_2^*)$ that demarcates a phase transition from divergence to convergence.

\textbf{(2) Problem-dependent bounds}. 
Our convergence and divergence regions of $\betabeta$ depend on the problem-class parameters. This is drastically different from \citep{reddi2019convergence}, which considers problem-independent $\betabeta$. 

\textbf{(3) Non-asymptotic characterization}. 
Our ``divergence region'' of $\betabeta$ expands as $n$ increases and converges to the whole region $ [0,1)^2$ as $n$ goes to infinity, 
 which recovers (actually stronger than) the problem-independent divergence result of \citep{reddi2019convergence}. 
 In this sense, we can view the divergence result of \citep{reddi2019convergence} 
as  an asymptotic characterization  of the divergence region (as $n \rightarrow \infty$)
 and our divergence result as a non-asymptotic characterization (for any fixed and finite $n$). 

\textbf{(4) No assumption of bounded gradients.}
 Our convergence analysis does {\it not} need a 
 bounded gradient assumption, which is commonly used in the literature. 
Removing such assumption is crucial for revealing the phase transition: with gradients bounded a priori, the gradients of Adam cannot diverge, while we prove that it can happen under certain $\betabeta$.

\item The primary challenge in the convergence proof is characterizing the limit behavior of a stochastic non-linear dynamics. 
We handle this by considering a concentration effect of Adam's {\small $1/\sqrt{v_k}$} around {\small $1/ \sqrt{\Ex(v_k)}$}. One specific difficulty is that $v_k$ can be arbitrarily close to 0, and $1/\sqrt{v_k}$ can behave badly in the worst case.   We find that a large $\beta_2$ helps the concentration and stabilizes the dynamic system. The intuition is that a large $\beta_2$ slows down the changes of $v_k$, and its behavior becomes predictable.

 \item  {\bf Tuning suggestions for $\betabeta$.} 
 Our positive and negative results provide guidance for tuning $\beta_1$ and $\beta_2$: when Adam does not work well, we suggest tuning up $\beta_2$ {\it inversely with batch size} to surpass the threshold $\beta_2^*$, and then trying $\beta_1 <\sqrt{\beta_2}$. These suggestions are supported by empirical findings in LLM pre-training \citep{porian2024resolving,zhang2024does}.

\end{itemize}

We believe our results advance the theoretical understanding of Adam. 
While \citet{reddi2019convergence} state that ``Adam can diverge'', our results show the other side of the story: when the problem is fixed, proper tuning can make Adam converge without any modification. 
In particular, the convergence region of $\betabeta$ is guaranteed to be nonempty. 
This is analogous to gradient descent on an $L$-smooth objective function, where a convergent stepsize always exists (e.g. $\eta < 2/L$).
Overall, our results provide theoretical support for vanilla Adam and offer concrete guidance for selecting $\betabeta$ in practice.

\paragraph{Relation to the conference version \citep{zhang2022adam}.}
This manuscript is an extended journal version of our NeurIPS 2022 conference paper \citep{zhang2022adam}. 
The current manuscript presents a streamlined convergence proof for Adam under random shuffling (greatly simplified relative to the conference version) and establishes a new convergence result and proof under with-replacement sampling, which was not covered in the conference version. 
Technically, this journal version develops new concentration results along Adam's trajectory and introduces a systematic procedure to deploy these concentration tools in the presence of momentum and unbounded gradients. 
These changes help reduce the length of the random-shuffling proof from approximately $44$ pages to about $18$ pages, and they enable the with-replacement convergence analysis. 
These proof techniques are new relative to the conference version and may be broadly applicable to analyzing Adam-type algorithms.

\section{Preliminaries}
\subsection{Review of Adam}
\label{section:preliminaries}

We consider problem \eqref{finite_sum}.
We use $x$  to denote the optimization variable. 
We denote $\nabla f_i$ as the gradient of $f_i$. 
We consider two implementations of Adam based on data sampling strategies:  Adam under with-replacement sampling (Algorithm \ref{algorithm_wr}) and under random shuffling (Algorithm \ref{algorithm_rr}).  Algorithm  \ref{algorithm_wr} is more theoretically oriented, and Algorithm \ref{algorithm_rr} is more widely used in practical deployment of Adam \footnote{For instance, GPT-3 technical report states that ''Data are sampled without replacement during
training'' \citep{brown2020language}.}.

\begin{algorithm}
    \caption{Adam under With-Replacement Sampling}
    \label{algorithm_wr}
    \begin{algorithmic}
     \STATE Initialize $m_{0}$, $v_{0}$, and  $x_{1}$.
      \FOR {$k=1\to\infty$}
      \STATE \BLUE{Sample $\tau_k$ uniformly from $\{0, 1, \dots, n-1\}$.}
      \STATE 
      $m_{k} = \beta_1 m_{k-1} + (1 - \beta_1) \nabla f_{\tau_k}(x_k)$ %
      \STATE $v_{k} = \beta_2 v_{k-1} + (1 - \beta_2) \nabla f_{\tau_k}(x_k) \circ \nabla f_{\tau_k}(x_k)$ %
      \STATE $x_{k+1} = x_{k} - \frac{\eta_k}{\sqrt{v_k} + \epsilon} \circ m_k$
       \ENDFOR
    \end{algorithmic}
  \end{algorithm}

\begin{algorithm}
    \caption{Adam under Random Shuffling }
    \label{algorithm_rr}
    \begin{algorithmic}
     \STATE Initialize  $m_{1,-1}$, $v_{1,-1}$, and $x_{1,0}$.
      \FOR {$k=1\to\infty$}
      \STATE \BLUE{Sample $\{\tau_{k,0},\tau_{k,1},\cdots,\tau_{k,n-1}\}$ as a random permutation of $\{0,1,2,\cdots,n-1\}$.}
        \FOR  {$i=0 \to n-1$}
      \STATE 
      $m_{k,i}=\beta_1 m_{k,i-1}+\left(1-\beta_1\right)\nabla f_{\tau_{k,i}}(x_{k,i})$ %
      \STATE $v_{k,i}= \beta_2 v_{k,i-1}+\left(1-\beta_2\right)\nabla f_{\tau_{k,i}}(x_{k,i})\circ \nabla f_{\tau_{k,i}}(x_{k,i})$ %
      \STATE $x_{k,i+1}=x_{k,i}-\frac{\eta_{k}}{\sqrt{v_{k,i}}+\epsilon}\circ m_{k,i}$	
      \ENDFOR
      \STATE $x_{k+1,0}=x_{k,n}$; $v_{k+1,-1}=v_{k,n-1}$; $m_{k+1,-1}=m_{k,n-1}$
       \ENDFOR
    \end{algorithmic}
  \end{algorithm}
  
In Algorithm  \ref{algorithm_wr} and \ref{algorithm_rr}, $m$ and $v$ denote the 1st-order and 2nd-order momentum, respectively. The product $\circ$, division, and square-root operator are component-wise. 
Regarding Algorithm \ref{algorithm_wr}, 
we denote $x_{k}, m_{k}, v_{k} \in \mathbb{R}^{d}$ as the value of $x, m, v$ at the $k$-th iteration, respectively. 
Regarding Algorithm \ref{algorithm_rr}, 
we denote $x_{k, i}, m_{k, i}, v_{k, i} \in \mathbb{R}^{d}$ as the value of $x, m, v$ at the $k$-th outer loop (epoch) and the $i$-th inner loop (batch), respectively.   We choose {\small $\eta_{k}=\frac{\eta_{0}}{\sqrt{k}}$} as the stepsize. 

In Algorithm  \ref{algorithm_wr} and \ref{algorithm_rr}, $\epsilon$ is adopted to avoid the corner case where $v_k$'s are constantly 0 along the trajectory, in which case Adam is not well-defined. To ensure the well-definedness of Adam,  one can either use (i) a small  $\epsilon$ (e.g., default $\epsilon$ is $10^{-8}$); or (ii) use $\epsilon = 0$ with a non-zero initialization $v \succ 0$. Our theory supports both cases. In the main body of the proof, we adopt (ii), which makes the results cleaner. We also provide the proof of case (i), i.e., the convergence of Adam with ``non-zero $\epsilon$'' in Appendix \ref{appendix:bias_correction}.  In either case, our final convergence results do not have any dependencies on $\epsilon$ or initialization $v$, and thus our result remains non-vacuous regardless of $\epsilon$ or $v$.

In the original paper by \citep{kingma2014adam}, the authors introduce an additional ``bias correction'' step, which can be implemented by changing the stepsize $\eta_k$ to {\small $\hat{\eta}_k=\frac{\sqrt{1-\beta_2^k}}{1-\beta_1^k} \eta_k$}.  Note that  {\small $\hat{\eta}_k \in [\sqrt{1-\beta_2} \eta_k,\frac{1}{1-\beta_1}\eta_k]$} is well-bounded near $\eta_k$, so $\eta_k$ and $\hat{\eta}_k$ bring the same convergence rate. In the main body of our proof, we follow the forms of Algorithm  \ref{algorithm_wr} and \ref{algorithm_rr}, which make the results cleaner. For completeness, we add the proof on the convergence of Adam with ``bias correction'' steps in Appendix \ref{appendix:bias_correction}.

In our analysis, we make the following mild assumptions on $f_i(x) $ and $f(x)$  in the ERM problem \eqref{finite_sum}.

\begin{assumption} \label{assum1}
   For any $i \in [n]$ and any $ x, y \in  \mathbb{R}^d $,  $\|\nabla f_i(x) - \nabla f_i(y)\|_2 \leq L \|x -y\|_2 $.
   In addition, $f(x)$ is lower bounded by a finite constant $f^*$. 
 \end{assumption}

\begin{assumption}\label{assum2}
  $ \sum_{i=0}^{n-1}\left\|\nabla f_{i}(x)\right\|_{2}^{2} \leq D_{1}\|\nabla f(x)\|_{2}^{2}+D_{0}, \forall x\in \mathbb{R}^d,$ where $D_1, D_0 \geq 0$ and are not both zero.
  \end{assumption}
  \vspace{-1mm}

When $n,  L, D_0,$ and $D_1$ are fixed a priori, we define the corresponding problem class
\begin{equation}
\functionclass 
\;:=\;
\Bigr\{
f(x)  \Big|\ 
\ f(x)=\tfrac1n\sum_{i=0}^{n-1} f_i(x),\ x \in \mathbb{R}^d \
\text{and Assumptions~\ref{assum1}--\ref{assum2} hold with }(L,D_0,D_1)
\Bigr\}.
\end{equation}

     In Assumption \ref{assum1}, the Lipschitz condition for component functions is standard for ERM problems  (e.g., \citep{bertsekas1996neuro,bertsekas2000gradient, bubeck2015convex,schmidt2017minimizing,allen2018katyusha}).

     Assumption \ref{assum2} covers a class of variance or growth conditions in the literature.  
     We now discuss how Assumption \ref{assum2} is reduced to different conditions under different choices of $D_1$ and $D_0$.

      We firstly discuss $D_1$. Under a slightly more restricted condition of $D_1 > n$,  Assumption \ref{assum2} becomes the ``affine variance'' condition in \eqref{eq_a2_affine_variance}, which controls the deviation between mini-batch and full gradient. The ``affine variance'' condition is originally proposed by \citep{bertsekas2000gradient} and is later popularized by \citep{bottou2018optimization}.
    {\small 
        \begin{equation}
        \label{eq_a2_affine_variance}
       \frac{1}{n} \sum_{i=0}^{n-1}\left\|\nabla f_{i}(x) - \nabla f(x) \right\|_{2}^{2} = \frac{1}{n}\left( \sum_{i=0}^{n-1}\left\|\nabla f_{i}(x)\right\|_{2}^2 \right)  -  \|\nabla f(x)\|_{2}^2  \leq   \frac{(D_{1} -n)}{n}\|\nabla f(x)\|_{2}^{2} + \frac{D_0}{n}, \forall x\in \mathbb{R}^d.
    \end{equation}
    }

      When $D_1=n$, Assumption \ref{assum2} and \eqref{eq_a2_affine_variance} reduce to the ``bounded variance'' in \eqref{eq_a2_bdd_variance}:
    {\small
    \begin{equation}
        \label{eq_a2_bdd_variance}
  \frac{1}{n} \sum_{i=0}^{n-1}\left\|\nabla f_{i}(x) - \nabla f(x) \right\|_{2}^{2} = \frac{1}{n}\left( \sum_{i=0}^{n-1}\left\|\nabla f_{i}(x)\right\|_{2}^2 \right)  -  \|\nabla f(x)\|_{2}^2  \leq   \frac{D_0}{n}, \forall x\in \mathbb{R}^d.
    \end{equation}
    }

The ``bounded variance'' condition is widely used for analyzing both SGD and adaptive gradient methods (e.g. \citep{polyak1992acceleration,ghadimi2013stochastic, ghadimi2016mini, Manzil2018adaptive}).   %
Assumption \ref{assum2}  allows the variance to grow with the gradient norm, making it strictly weaker than the bounded variance condition \eqref{eq_a2_bdd_variance}. This relaxation is meaningful: bounded variance is often too restrictive and can easily fail. For  example, consider a convex quadratic minimization problem \citep{bottou2018optimization}:
\begin{equation}
\label{eq_a2_quadratic_example}
   \operatorname*{minimize}_{x\in\mathbb{R}^d} \quad f(x) = \frac{1}{2n} \|Ax - b\|_2^2 = \frac{1}{2n} \sum_{i=0}^{n-1} (a_i^\top x - b_i)^2 := \frac{1}{n}\sum_{i=0}^{n-1}  f_i(x),
\end{equation}
where $A$ has rank at least 2. In this case, the left-hand side of \eqref{eq_a2_bdd_variance} grows quadratically with $\|x\|_2$, and no finite $D_0$ satisfies \eqref{eq_a2_bdd_variance}. In contrast, Assumption \ref{assum2} holds with finite $D_1$ and  $D_0$. We provide a more detailed justification in Appendix \ref{appendix_bdd_variance}. 

Another drawback of the bounded variance condition \eqref{eq_a2_bdd_variance} is that it excludes some divergence counter-examples of SignSGD (which is equivalent to Adam with $\beta_1 = \beta_2 = 0$). As such, analysis under \eqref{eq_a2_bdd_variance} may not reveal the full picture of Adam.  We provide more detailed explanation later in Section \ref{sec_div}.

There is also a stronger condition that requires $D_1 = 0$, i.e.,  $ \sum_{i=0}^{n-1}\left\|\nabla f_{i}(x)\right\|_{2}^{2} \leq  D_0$. This condition is sometimes called ``bounded 2nd-order moment'' (e.g., \citep{nemirovski2009robust}). This is strictly stronger than \eqref{eq_a2_bdd_variance} and implies bounded gradient condition, so it is even more restricted.

As a result, Assumption \ref{assum2} with generic $D_1$ is recommended in \citep{bottou2018optimization} as it is {\it ``relatively minor''} and {\it ``variance is allowed to grow quadratically in any direction.''}

Now we discuss $D_0$.  
When $D_{0}=0$, the condition becomes ``strong growth condition'' (SGC)  \citep{solodov1998incremental,schmidt2013fast}.
An implication of SGC is that 
when $\|\nabla f(x)\|=0$, we have $\left\|\nabla f_i(x)\right\|=0$ for all $i$.
This condition is considered reasonable in the overparameterized regime where neural networks  can interpolate all data points
\citep{vaswani2019fast,shi2020rmsprop}. 
We will show that Adam converges to exact critical points when 
SGC holds. 

For the general case where $D_{0} > 0$, Adam is not guaranteed to reach the exact critical points. 
Instead, it can only converge to a neighborhood of critical points \citep{Manzil2018adaptive,shi2020rmsprop}. 
This phenomenon indeed occurs for Adam in experiments, even with diminishing stepsizes (see Figure~\ref{fig:cifar_nlp_mnist}(b)).
This behavior is in line with a classical phenomenon in stochastic gradient methods: constant-stepsize SGD is known to converge to a neighborhood of critical points whose size scales with the noise level \citep{luo1991convergence,bertsekas1997nonlinear,yan2018unified,yu2019linear,liu2020improved}.

Finally, we emphasize that we do {\it not} add the bounded gradient assumption {\small $\|\nabla f(x)\|\leq G$}, which is commonly used in the literature. 
This is crucial for revealing the phase transition: with gradients bounded a priori, the gradients of Adam cannot diverge, while we prove that this can happen under certain $\betabeta$.

\subsection{A Brief Review of the Counter-example in \citep{reddi2019convergence}}
\label{sec:reddi}

Before stating our theoretical results, we first restate the counter-example by \citep{reddi2019convergence}. For the consistency of notation, we will restate their results under our notation. 
They consider the following one-dimensional convex problem:  $\operatorname*{minimize} \frac{1}{n} \sum_{i=0}^{n-1} f_i(x)$ where $x \in [-1,1]$, $n \geq 3$:
	\begin{equation}\label{counterexample_reddi}
		f_{i}(x)=\left\{\begin{array}{ll}n x, & \text { for } i =0 \\ -x, & \text { otherwise, }\end{array}\right.
	\end{equation}

Note that \eqref{counterexample_reddi} satisfies both Assumption \ref{assum1} and \ref{assum2} (with $D_1= n^4 +n^3 -n^2$ and $D_0 =0$), so our assumptions do not rule out this counter-example a priori.
Nevertheless, problem \eqref{counterexample_reddi} is a constrained problem with feasible set $x \in [-1,1]$, and the optimal solution is $x^*=-1$. 
Since this is a constrained problem, the term ``divergence'' here actually means the iterates will stay in a huge region with the size of the whole feasible set. 
Since this is a constrained problem, the term ``divergence'' here actually means that the iterates will stay on the boundary of the feasible region and are far from the optimal solution $x* = -1$.
Note that both cases describe algorithmic behavior that is opposite to convergence, and we do not emphasize this distinction between them.

  In  \citep{reddi2019convergence}, 
the function \eqref{counterexample_reddi} is presented as an ``online optimization problem'' (not a finite-sum problem).
We rewrite \eqref{counterexample_reddi} in a finite-sum form so that it matches our ERM notation in \eqref{finite_sum}. 
We will use the same cyclic sampling order
$ (f_0, f_1, f_2 ; f_0, f_1, f_2; \dots)  $ as in \citet{reddi2019convergence}. 
For completeness, 
we restate their main claim below in our notation.

\begin{theorem}[Theorem 2 in \citep{reddi2019convergence}]
For any fixed $\betabeta$ satisfying $\beta_1 < \sqrt{\beta_2}$, there exists a sufficiently large $n$, s.t., applying Adam to the function \eqref{counterexample_reddi} (under cyclic sampling) converges to the sub-optimal point $x=1$.
\end{theorem}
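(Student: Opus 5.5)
We will argue directly on the cyclic Adam dynamics, with the projection onto $[-1,1]$ applied after each step as in \citet{reddi2019convergence}; the plan is a period-by-period potential argument on the cycle $(f_0,f_1,\dots,f_{n-1})$. The gradients are deterministic and piecewise constant in time. In each period one step sees gradient $n$, and $n-1$ steps see gradient $-1$. Hence $m_k$ and $v_k$ are functions of the phase only, up to a transient decaying like $\beta_1^k,\beta_2^k$. We treat the transient by assuming the steady state, or by waiting a fixed number of periods after which the initialization contributes negligibly.

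\textbf{Step 1: steady-state momenta.} Index the steps within a period by $i=0,\dots,n-1$, with $i=0$ the step that sees gradient $n$. In the steady state,
\[
m_i \approx (1-\beta_1)\Bigl(\frac{n\beta_1^{i}}{1-\beta_1^{n}}-\sum_{j\le i,\,j\ge1}\beta_1^{i-j}-\frac{\beta_1^{i+1}(1-\beta_1^{n-1})}{(1-\beta_1)(1-\beta_1^{n})}\Bigr),
\]
and $v_i$ is given by the analogous expression with squared gradients. In particular
\[
v_i\ \ge\ (1-\beta_2)\frac{n^2\beta_2^{i}}{1-\beta_2^{n}}.
\]

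\textbf{Step 2: net displacement per period.} The net displacement over one period, ignoring projection, is $-\eta_k\sum_i m_i/\sqrt{v_i}$, with $\eta_k$ roughly constant within a period. We want to show that this sum is negative for large $n$, so that $x$ moves toward $+1$. The key comparison:
\begin{itemize}
\item The large positive contributions come from phases $i$ with $\beta_1^{i}n$ large. For those phases $m_i/\sqrt{v_i}\lesssim C\,\beta_1^{i}n\big/\bigl(\sqrt{1-\beta_2}\,n\beta_2^{i/2}\bigr)=C(\beta_1/\sqrt{\beta_2})^{i}/\sqrt{1-\beta_2}$.
\item Since $\gamma:=\beta_1/\sqrt{\beta_2}<1$, the total positive mass is bounded by $C/\bigl((1-\gamma)\sqrt{1-\beta_2}\bigr)$, independently of $n$.
\item Once $i$ is large, the $-1$ gradients dominate $m_i$, so $m_i\approx-1$. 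Meanwhile $v_i$ stays bounded, roughly $(1-\beta_2)n^2\beta_2^{i}+1$. Each such step therefore contributes at least a constant $c>0$ to the negative side, and there are $\Theta(n)$ such steps.
\end{itemize}
Hence for $n$ large the sum is $\le -cn/2$. This is where $n$ is chosen after $\beta_1,\beta_2$.

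\textbf{Step 3: conclusion.} Each period moves $x$ by at least $c'\eta_k n$ toward $1$, and $\sum_k\eta_k=\infty$. Within a period the excursion toward $-1$ is at most the bounded positive mass times $\eta_k\to0$. So $x$ reaches the boundary $1$, stays within $O(\eta_k)$ of it, and converges to $x=1$. Projection only clips movement at the boundary; it does not change $m$ or $v$, because the gradients are constant in $x$.

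The main obstacle is Step 2. We need a uniform bound on $\sum_i m_i/\sqrt{v_i}$, and the delicate part is the intermediate phases, where $n\beta_1^{i}$ is of order $1$ and $m_i$ changes sign. We will handle these by splitting at $i_0=\lceil\log n/\log(1/\beta_1)\rceil$. Before $i_0$, use the geometric bound with ratio $\gamma$. After $i_0$, use $m_i\le -1/2$ and $v_i\le 2+(1-\beta_2)n^2\beta_2^{i}$. The last term is $O(1)$ for $i\ge 2\log n/\log(1/\beta_2)$, which still leaves $n-O(\log n)$ negative steps.
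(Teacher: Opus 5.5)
The paper gives no proof of this statement. It is quoted from \citet{reddi2019convergence}, and the paper only remarks that their construction needs $n$ to depend on $\betabeta$ (their Eq.~(7)). Your argument is correct and is, in substance, theirs. You bound the leftward mass of a period by a geometric series in $\gamma=\beta_1/\sqrt{\beta_2}$, using $v_i\ge(1-\beta_2)n^2\beta_2^{i}$, so this mass is independent of $n$. It is then beaten by the $n-O(\log n)$ late phases, where $m_i\le-\tfrac12$ and $v_i=O(1)$.

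A few constants and details need repair:
\begin{itemize}
\item With $i_0=\lceil\log n/\log(1/\beta_1)\rceil$ you only get $m_{i}\le\frac{(1-\beta_1)(n+1)}{n(1-\beta_1^n)}-1$. This can be as large as about $-\beta_1$, which is not $\le-\tfrac12$ for small $\beta_1$. Take $i_0=\lceil\log(4n)/\log(1/\beta_1)\rceil$ instead.
\item The exact steady state is $v_i=1+\frac{(1-\beta_2)(n^2-1)\beta_2^{i}}{1-\beta_2^{n}}$, with the factor $1/(1-\beta_2^n)$ you dropped. It is still $\le 2$ once $\beta_2^{i}\le n^{-2}$.
\item For Step 3, the clean projection fact is $x_{\mathrm{end}}\ge\min\{x_{\mathrm{start}}+S,\,1-L\}$. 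Here $S$ is the unprojected net displacement of the period and $L=O(\eta)$ is its total leftward movement. This gives exactly your conclusion.
\item With per-iteration stepsizes $\alpha/\sqrt{t}$, comparing the positive and negative masses within a period costs at most a factor $\sqrt{2}$.
\end{itemize}

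The nearest argument the paper does contain is the proof of Theorem~\ref{thm_diverge}. Its counterexample coincides with this one on $x\ge-1$ when $a=1$. It uses the same per-epoch split into carried-over momentum (I), the $-1$ gradients (II) and the large gradient (III), with the sign condition \eqref{diverge_c2} on the carried momentum. However, it bounds $|(\mathrm{III})|$ only by $\frac{1-\beta_1}{\sqrt{1-\beta_2}}+\frac{\beta_1 n}{\sqrt{1-\beta_2}}$, essentially using just $v\ge 1-\beta_2$. That bound grows with $n$, so at a fixed $n$ it certifies divergence only in the small-$\betabeta$ region \eqref{diverge_c1}. Your sharper geometric bound is exactly where $\beta_1<\sqrt{\beta_2}$ enters. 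It covers the whole region $\beta_1<\sqrt{\beta_2}$, but only after choosing $n$ in terms of $\betabeta$, which is the quantifier order the paper contrasts with its fixed-$n$ setting.
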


We briefly discuss the divergent condition for this Theorem.  As stated in Eq. (7), Appendix B in \citep{reddi2019convergence},  for every fixed $\betabeta$, they need an ``$n$ that depends on $\beta_1$ and $\beta_2$''. 
As such, they require different $n$ to cause divergence on different $\betabeta$. The considered problem class is constantly changing.

For completeness, we further restate Theorem 1 in \citep{reddi2019convergence}.

\begin{theorem}[Theorem 1 in \citep{reddi2019convergence}]
 For function \eqref{counterexample_reddi}, when $\beta_1=0$ and $\beta_2 = 1/(n^2+1)$, Adam will  converge to a sub-optimal point $x=1$.
\end{theorem}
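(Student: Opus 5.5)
We prove this directly on the cyclic sequence $f_0,f_1,\dots,f_{n-1},f_0,\dots$, with $\beta_1=0$ and $\beta_2=1/(n^2+1)$, so that $1-\beta_2=n^2/(n^2+1)$. As in \citep{reddi2019convergence}, Adam here is projected onto $[-1,1]$ after each step. Since $\beta_1=0$, we have $m_t=g_t$. The gradients are $g_t=n$ on the step that uses $f_0$ and $g_t=-1$ otherwise. The update is $x_{t+1}=\Pi_{[-1,1]}\bigl(x_t-\eta_t g_t/\sqrt{v_t}\bigr)$ with $\eta_t=\eta/\sqrt t$. The strategy is to show that over every cycle of $n$ steps the net displacement toward $+1$ is positive and of order $\eta_t$. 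Then $x$ is pushed to the boundary $x=1$ and stays within $O(\eta_t)\to0$ of it.

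\textbf{Step 1: two-sided control of $v_t$.} Assume the initialization satisfies $0<v_0\le n^2$ (otherwise wait one cycle, after which $v$ is within this range up to a geometrically decaying term). Each $v_t$ is a convex combination of $v_{t-1}$ and $g_t^2\in\{1,n^2\}$, so $v_t\le n^2$ for all $t$.
\begin{itemize}
\item On an $f_0$-step, $v_t\ge(1-\beta_2)n^2=n^4/(n^2+1)$. Hence the downward move has size at most $\eta_t n/\sqrt{v_t}\le\eta_t\sqrt{n^2+1}/n$.
\item On the $j$-th step after an $f_0$-step ($j\ge1$), $v_t\le\beta_2^{j} n^2+(1-\beta_2)\le \beta_2 n^2+1<2$. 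Hence each of the $n-1\ge2$ steps using $f_i$, $i\ne0$, moves $x$ up by at least $\eta_t/\sqrt2$, unless the projection clips it at $1$.
\end{itemize}

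\textbf{Step 2: net drift per cycle.} Consider one cycle starting at time $s$ with $x_s<1$, and suppose no clipping occurs. The displacement is then at least
\[
(n-1)\frac{\eta_{s+n}}{\sqrt2}-\frac{\sqrt{n^2+1}}{n}\,\eta_s
\;\ge\;\eta_s\Bigl(\sqrt2\,\tfrac{\eta_{s+n}}{\eta_s}-\sqrt{1+1/n^2}\Bigr).
\]
Since $\eta_{s+n}/\eta_s\to1$ and $\sqrt2>\sqrt{1+1/n^2}$ for $n\ge3$ (indeed $\sqrt{10}/3<\sqrt2$), this is at least $c\,\eta_s$ for some $c>0$ and all large $s$. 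Because $\sum_s\eta_s=\infty$, the cycle-start iterates cannot stay bounded away from $1$. The lower boundary $-1$ is harmless: it can only clip a downward move, which only helps. So some cycle eventually reaches $x=1$ via clipping.

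\textbf{Step 3: trapping near $1$.} Once clipping at $1$ has occurred within a cycle, the only subsequent downward move is the next $f_0$-step, of size at most $\sqrt{1+1/n^2}\,\eta_t$. The following $n-1$ upward steps have total size at least $\sqrt2\,\eta_{t+n}$, which exceeds this, so they return $x$ to $1$. By induction the iterate equals $1$ at a fixed phase of every later cycle. In between it deviates from $1$ by at most $O(\eta_t)\to0$. Therefore $x_t\to1$, which is sub-optimal since $x^*=-1$.

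\textbf{Main obstacle.} The delicate point is the upper bound on $v_t$ during the $-1$ steps. One must make sure the large $n^2$ contribution decays fast enough that the upward steps are not shrunk too much. The specific choice $\beta_2=1/(n^2+1)$ makes $\beta_2n^2<1$, so the bound $v_t<2$ already holds one step after the spike. The comparison $\sqrt2$ versus $\sqrt{1+1/n^2}$ then leaves room to absorb the ratio $\eta_{s+n}/\eta_s$ of decaying stepsizes.
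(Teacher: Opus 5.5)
Your argument is correct, and it proves a slightly stronger statement than the one cited. The paper gives no proof of this statement. It only restates Theorem~1 of \citep{reddi2019convergence}, so the relevant comparison is with the original argument there, which is written for period $3$ and a general constant $C$ in place of $n$.

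That proof assumes $x_1=1$, justified by a change of coordinates, and a stepsize condition $\alpha<\sqrt{1-\beta_2}$. It then shows by direct induction that the iterate equals $1$ at the start of every period:
\begin{itemize}
\item the large step is at most $\alpha_t/\sqrt{1-\beta_2}$, since $v_t\ge(1-\beta_2)C^2$;
\item each small step is at least $\alpha_t/\sqrt{\beta_2C^2+1-\beta_2}=\alpha_t/\sqrt{2(1-\beta_2)}$, since $v\le C^2$;
\item the identity $\beta_2C^2+1-\beta_2=2(1-\beta_2)$ turns the comparison into an explicit inequality.
\end{itemize}

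Your Steps~1 and~3 are the same trapping mechanism. You use the cruder bound $v<2$, which is why you only get domination of the upward steps for large $t$.

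Your Step~2 is what your route adds. It combines the per-cycle drift $\ge c\,\eta_s$, the divergence of $\sum\eta_s$, and the observation that clipping at $-1$ only helps. Together these remove both the special initialization and the stepsize restriction. So you obtain $x_t\to1$ from any $x_1\in[-1,1]$, any $v_0>0$, and any $\eta>0$, at the cost of a purely asymptotic argument. Start Step~2 after the time beyond which the Step~3 comparison holds, so that the first clipping at $1$ lands in that regime.

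There is one small slip in Step~1. After an $f_0$-step the correct bound is $v_{s+j}\le\beta_2^jn^2+(1-\beta_2^j)=1+\beta_2^j(n^2-1)$, not $\beta_2^jn^2+(1-\beta_2)$. Your conclusion $v_{s+j}\le 2n^2/(n^2+1)<2$ is unaffected. Using the sharper value $2(1-\beta_2)$, as Reddi et al.\ do, would make your comparison hold already for moderate $t$.
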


This theorem considers choosing $\betabeta$ after $n$. However, this result only shows divergence on {\it a single hyper-parameter choice} $\betabeta = (0, 1/(n^2+1))$. This configuration lies somewhere on the left boundary of Figure \ref{fig:intro_paper} (a). 
It therefore provides no characterization beyond this single point: it remains unclear which directions of changing $\betabeta$ preserve divergence and which directions suppress it or induce convergence.

\begin{figure}[h!]
\begin{center}
\subfigure[Cyclic update order]{
    \begin{minipage}[t]{0.3\linewidth}
    \centering
    \includegraphics[width=\textwidth]{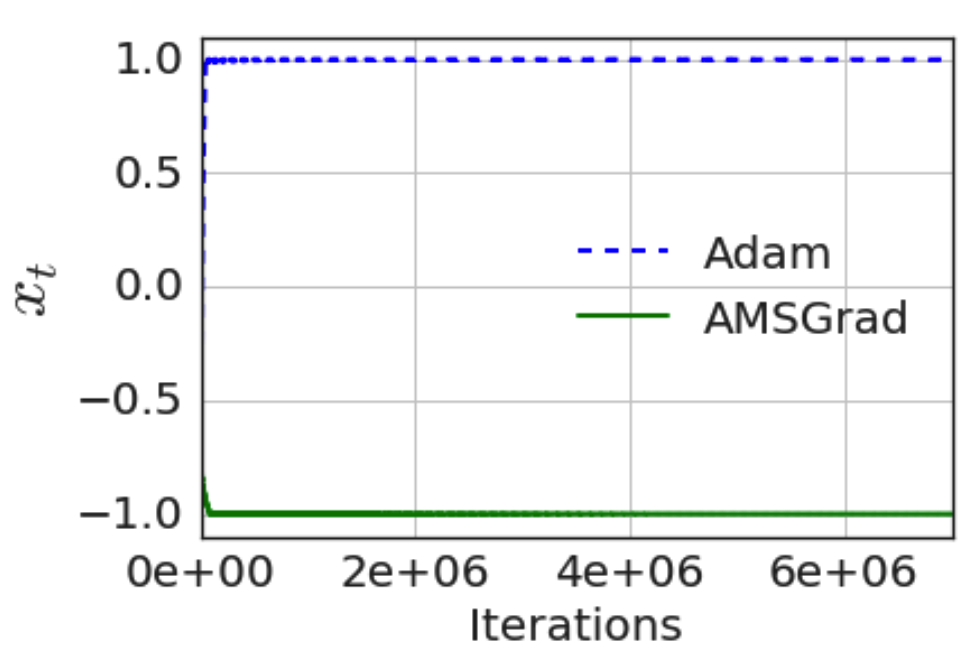}
    \end{minipage} }%
    \subfigure[Randomized update order]{
    \begin{minipage}[t]{0.3\linewidth}
    \centering
    \includegraphics[width=\linewidth]{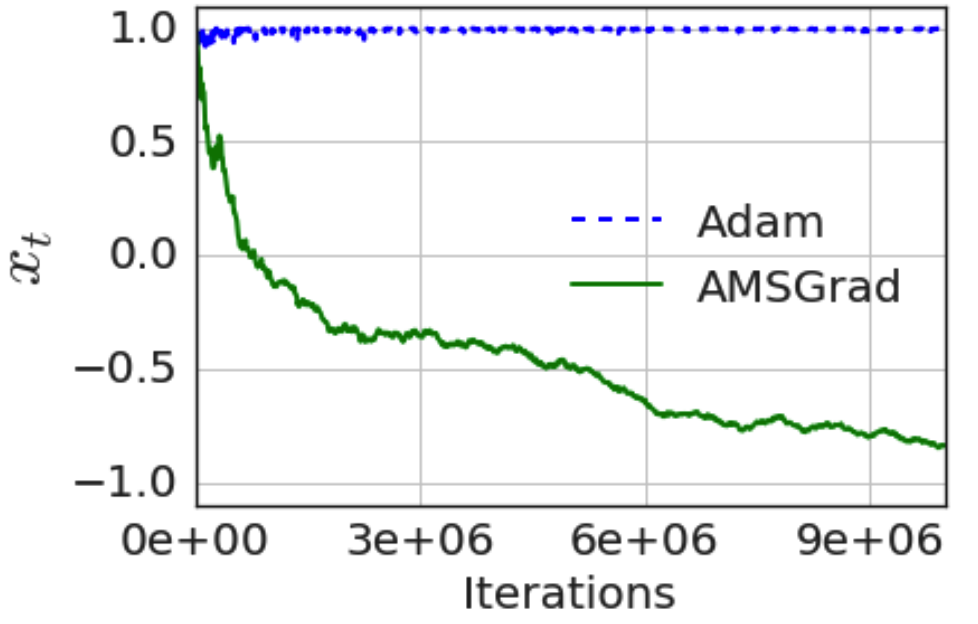}
    \end{minipage}%
    }%
    \subfigure[Diminishing stepsize]{
    \begin{minipage}[t]{0.28\linewidth}
    \centering
    \includegraphics[width=\linewidth]{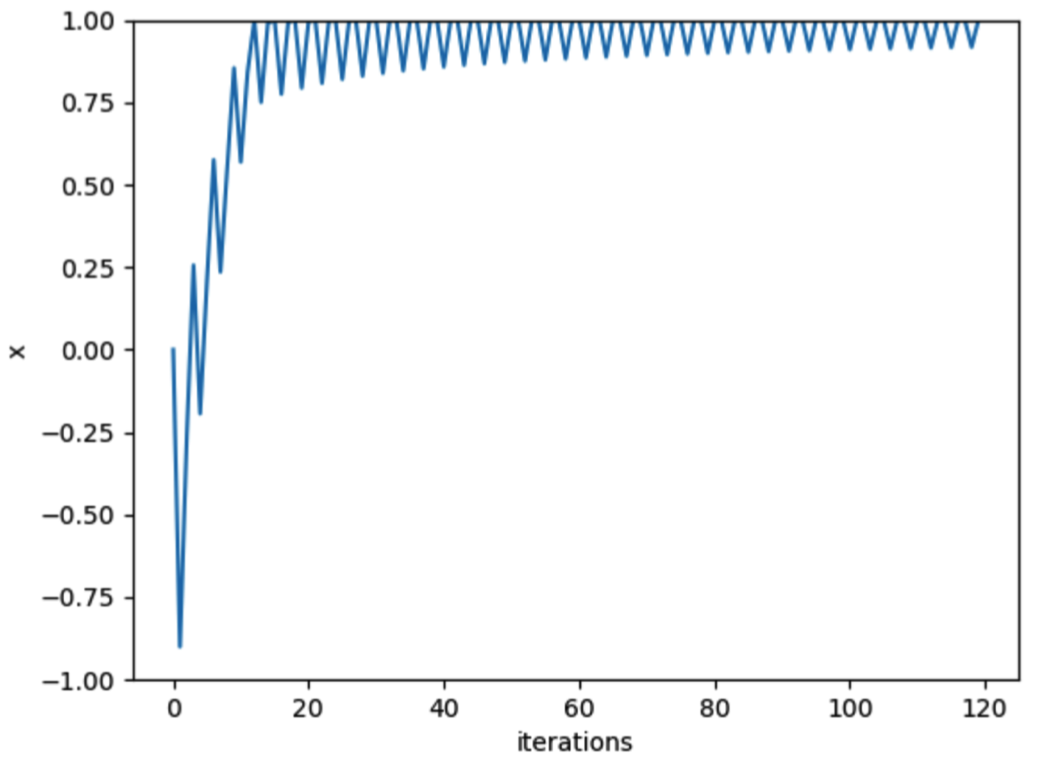}
    \end{minipage}%
    }%
\caption{{\small 
{\bf (a, b):} We restate Figure 1 from \citep{reddi2019convergence}. The divergence of Adam happens under both cyclic and update orders, so randomization cannot prevent divergence.  Since they consider constrained problems, the term “divergence” here means getting stuck at the sub-optimal solution $x=1$.  In the figures, AMSGrad is a different method, and it is not our focus.  {\bf (c):} Diminishing stepsize $\eta_k = \frac{1}{\sqrt{k}}$ does not prevent divergence. }
}
\label{fig:reddi_figures}
\vspace{-2mm}
\end{center}
\end{figure}

Finally, these divergence results also hold when randomized update orders are used instead of cyclic orders, as proved in \citep[Theorem 3]{reddi2019convergence}. Consequently, randomization does not prevent divergence. 
Further, the proof of \citep{reddi2019convergence} uses $\eta_k \propto \frac{1}{\sqrt{k}}$, so diminishing stepsize does not prevent divergence, either. These claims are supported by numerical evidence in Figure \ref{fig:reddi_figures}.

\citet{reddi2019convergence} wrote: {\it ``These results have important consequences insofar as one has to use \textit{problem-dependent} $\epsilon$, $\beta_1$, and $\beta_2$ in order to avoid bad convergence behavior.  In high-dimensional problems, this typically amounts to using...a different $\epsilon$, $\beta_1$, and $\beta_2$ for each dimension. However, this defeats the purpose of adaptive methods since it requires tuning a large set of parameters.''}
In the following, we show that in our setting such divergence can be avoided without changing the algorithm or introducing new hyperparameters.
The key is to choose $\betabeta$ in a problem-dependent manner, with a particular dependence on the number of mini-batches $n$ (equivalently, the batch size).
Importantly, this does \textit{not} require per-coordinate tuning of $\betabeta$---a direction mentioned in the discussion of \citet{reddi2019convergence}.

\subsection{Related Works}
\label{section:related}

Ever since \citet{reddi2019convergence} pointed out the divergence issue, there are many attempts on designing new variants of Adam. 
Since we focus on understanding vanilla Adam {\it without modification},  we do not discuss these variants here. We relegate the introduction to more Adam variants later in Appendix \ref{appendix:more_related}. 
Compared with proposing new variants, 
the convergence of vanilla Adam is far less studied than its variants (perhaps due to the criticism of divergence).  We discuss as follows.

\citet{shi2020rmsprop} study randomly-shuffled RMSprop proposed in \citep{hinton2012neural} (Algorithm \ref{algorithm_rr} with $\beta_1 =0$) \footnote{We notice that \citet{shi2020rmsprop} also analyze randomly-shuffled Adam  with $\beta_1$ close enough to $0$. However, \citet{yushun2022doesadamconverge} show that they require $\beta_1<10^{-7}$. Thus, their result does not provide much extra information other than randomly-shuffled RMSProp.}.   We believe it is important to study Adam rather than RMSProp: numerically, Adam often substantially outperforms RMSProp (e.g. \citep{agarwal2020optimistic}). Furthermore, all mainstream LLMs are trained using Adam, and RMSProp is reported to perform $\geq 20\%$ worse than Adam \citep[Table 1]{orvieto2025search}. 
Theoretically, literature on RMSProp cannot reveal  how these hyperparameters jointly affect (or jeopardize) the convergence of Adam in the $\betabeta$-2D plane. 
We note that \citet{shi2020rmsprop} conjecture the existence of a critical threshold $\beta_2^*$ for RMSProp. We generalize this conjecture to $\betabeta$-2D plane for Adam and confirm the existence of a critical boundary $(\beta_1^*, \beta_2^*)$. 
One additional difference is that  \citet{shi2020rmsprop} only study randomly-shuffled RMSprop, whereas we consider more general cases of both with-replacement and randomly-shuffled Adam.

\citet{defossez2022a} analyze Adam with $0<\beta_2 <1$  and $\beta_1<\beta_2$. However, they require bounded gradient assumption $\|\nabla f(x)\|\leq G$,  which prevents the potential divergence a priori. Further, their bound is proportional to $1/\epsilon$, where $\epsilon$ is the stability hyperparameter and is usually small (e.g., default $\epsilon$ is $10^{-8}$). As a result, they require sufficiently large non-zero  \(\epsilon\) to keep the result non-vacuous. These assumptions incur  artificial boundedness property on Adam: $ 0 <\epsilon \leq \sqrt{v} +\epsilon \leq G$, and thus Adam will never diverge and is essentially changed to another algorithm called AdaBound \citep{luo2019adaptive}. In contrast, we do not impose these boundedness conditions and reveal the di-convergence phase transition under different $\betabeta$.

\paragraph{New proofs of Adam after the online appearance of the conference version in 2022.} After the conference version of this manuscript \citep{zhang2022adam} appeared online in August 2022, there has been a series of works providing new convergence proofs of Adam. Here, we summarize how their assumptions differ from ours. 

Recent works \citep{xie2024adam,JMLR2025,li2025frac} provide new proofs for RMSProp and Adam. Compared to our work, their proof relies on a ``coordinate-wise bounded variance'' condition, which imposes the inequality of \eqref{eq_a2_bdd_variance} on each coordinate individually.
Note that this is stronger than \eqref{eq_a2_bdd_variance}, which only imposes the inequality on the overall gradient.  Moreover, since \eqref{eq_a2_bdd_variance} is already known to be restrictive— for instance, it fails even on simple quadratic problems \eqref{eq_a2_quadratic_example}—its coordinate-wise variant is necessarily more limiting. 
Besides the coordinate-wise bounded variance, they also required $T$-dependent $\betabeta$, where $T$ is the pre-determined total iteration budget. In practice,  $T$-dependent $\betabeta$ is rarely used or reported by practitioners. In theory, we argue that such dependence on $T$ is unnecessary and can be removed. In contrast, our proof does not require these two conditions.

\citet{wang2023closing} provide a novel convergence proof for Adam under the ``coordinate-wise affine variance'' condition and $T$-dependent $\betabeta$.   Similarly,  the coordinate-wise version is strictly stronger than the affine variance condition \eqref{eq_a2_affine_variance} or Assumption \ref{assum2}; and the $T$-dependence is unnecessary.  \citet{wang2024provable} further relax these two conditions, but their analysis focused on Adam under random shuffling and does not cover with-replacement sampling.

\citet{li2023convergence,peng2025simple,hong2023high,hong2024convergence,zhang2024convergence} provide refined convergence proofs of Adam. One limitation is that their complexity is proportional to $1/\epsilon$, where  $\epsilon$ is the stability hyperparameter. These bounds
become vacuous as $\epsilon$ approaches $0$, which brings concerns since $\epsilon$ is usually small in practice (e.g., default $\epsilon$ is $10^{-8}$). We argue that the dependency on $\epsilon$ is artificial, and in contrast, our result is independent of $\epsilon$ and allows arbitrarily small $\epsilon$ including 0. In addition to the dependency on $\epsilon$, \citet{li2023convergence,peng2025simple} require the bounded variance condition \eqref{eq_a2_bdd_variance} or its stronger version; and \citet{hong2023high,hong2024convergence,zhang2024convergence} require an almost-sure or coordinate-wise version of the affine variance condition \eqref{eq_a2_affine_variance}. All these assumptions are stronger than Assumption \ref{assum2}.

Another line of new works prove the convergence of Adam under bounded gradient or iterate, or bounded $v_k$ assumptions \citep{ding2023adam,jiang2023uadam,xiao2024adam,liang2025convergence}, while we do not impose these boundedness conditions.  

Besides the difference in assumptions, we further highlight an important difference between our work and the literature above. The above works focus on proving convergence upper bounds for Adam under certain choices of $\betabeta$. However, a good upper bound is just one side of the story.  In contrast, we establish the {\it phase transition from divergence to convergence} in different regions of   $\betabeta$, which presents a more complete picture.  Our work points out two missing facts in the literature.
\begin{itemize}
    \item {\bf First}, we show that tuning $\betabeta$ is not only sufficient {\it but also necessary} for convergence. Specifically, we prove the existence of a phase transition: as $\beta_2$ increases from 0 to 1, the behavior of Adam shifts from divergence (a bad lower bound  in Theorem  \ref{thm_diverge}) to convergence (good upper bounds in Theorem \ref{thm_wr} and \ref{thm_rr}). This reveals that tuning $\beta_2$ is not merely about optimizing the convergence rate; it is crucial for preventing the fundamental divergence danger.  To our knowledge, this is  the first phase transition in $\betabeta$ 2D-plane for Adam reported in the literature.
    \item {\bf Second}, we show that this phase transition occurs at a {\it problem-dependent} boundary $(\beta_1^*, \beta_2^*)$, which grows {\it inversely with batch size}.
    This contrasts with prior theoretical works that suggest  $\beta_2$ should increase with the total iteration budget $T$ and does not reveal the dependency on the considered problem. 
\end{itemize}

\section{Main Results}
\label{sec_main}

\subsection{Convergence Results}
\label{sec_conv_wr}
Now we prove that Adam converges with proper problem-dependent hyperparameters.
We first present the result for Algorithm \ref{algorithm_wr} and then present the result for Algorithm \ref{algorithm_rr}.

\begin{theorem}
\label{thm_wr}
{\bf (Convergence result for Algorithm \ref{algorithm_wr})}
Assume Algorithm \ref{algorithm_wr} satisfies:
$0\leq\beta_1 < \sqrt{\beta_2} <1$;  $\beta_2$ satisfies {\small$ \beta_2 \geq  \gamma_1(n) : = 1- \mathcal{O}\left(\frac{1-\beta_1^n}{n^5}\right)$}; and {\small $\eta_k = \frac{\eta_0}{\sqrt{k}}$}. For any {\small $f(x) \in \functionclass$}, when   $T \in  \mathbb{N}$ is sufficiently large, we have
\vspace{-3mm}
\begin{align*}
  \min_{k \in [1, T]} \Ex\left[\min  \left\{ \frac{\|\nabla f(x_{k})\|_2^2 }{\sqrt{D_{0}} }  , \frac{\|\nabla f(x_k)\|_2}{2\sqrt{d D_1 }}\right\}  \right]
   =\mathcal{O}\left(\frac{\log T }{\sqrt{T}} \right)  + \mathcal{O}(\delta(\beta_2)\sqrt{D_0}),
\end{align*}
where $\delta(\beta_2)$ is a positive constant that approaches 0 as $\beta_2$ approaches 1 (see \eqref{def_delta_C} in Section \ref{appendix:lemma_descent_WR}). 

\end{theorem}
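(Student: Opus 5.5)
The plan is a descent-lemma argument on $f$, evaluated along a momentum-corrected auxiliary sequence. The key step is a concentration argument that replaces the random preconditioner $1/\sqrt{v_k}$ by a nearly deterministic one for large $\beta_2$.

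\textbf{Step 1: auxiliary sequence.} Since $\beta_1<\sqrt{\beta_2}$, I would introduce
\[
u_k = \frac{x_k-\beta_1 x_{k-1}}{1-\beta_1},
\]
the standard device for heavy-ball-type methods. With the per-coordinate preconditioner $1/\sqrt{v_k}$, one gets
\[
u_{k+1}-u_k \approx -\eta_k \frac{\nabla f_{\taut}(x_k)}{\sqrt{v_k}} + \text{(error terms)}.
\]
The error terms come from two sources: the drift of $\eta_k$ and the change of $v_k$ across the window of momentum memory. Expanding $m_k$ as a geometric sum, the ratio $v_{k-j}/v_k\ge \beta_2^{j}$ gives the bound $|m_{k,(l)}|/\sqrt{v_{k,(l)}}\le C/(1-\beta_1/\sqrt{\beta_2})$. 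This is where $\beta_1<\sqrt{\beta_2}$ is used. It yields a uniform bound $\|x_{k+1}-x_k\|=\mathcal{O}(\eta_k\sqrt d)$ without assuming bounded gradients.

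\textbf{Step 2: descent inequality.} Using $L$-smoothness (Assumption \ref{assum1}) on $f(u_{k+1})-f(u_k)$, the main term is $-\eta_k\langle \nabla f(u_k), \nabla f_{\taut}(x_k)/\sqrt{v_k}\rangle$. The second-order term is $\mathcal{O}(L\eta_k^2 d)$, and since $\sum_k \eta_k^2=\mathcal{O}(\log T)$ it produces the $\log T$ factor. Replacing $\nabla f(u_k)$ by $\nabla f(x_k)$ costs $L\|u_k-x_k\|=\mathcal{O}(\eta_k)$, which is again summable in the same sense.

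\textbf{Step 3: decoupling the preconditioner, the main obstacle.} The difficulty is that $v_k$ depends on $\taut$, so $\Exk[\nabla f_{\taut}/\sqrt{v_k}]\neq \nabla f/\sqrt{v_k}$. I would replace $v_k$ by the $\mathcal F_{k-1}$-measurable surrogate
\[
\tilde v_k=\beta_2 v_{k-1}+(1-\beta_2)\tfrac1n\sum_i \nabla f_i(x_k)^{\circ 2}.
\]
Then $\Exk[\nabla f_{\taut}(x_k)/\sqrt{\tilde v_k}]=\nabla f(x_k)/\sqrt{\tilde v_k}$, and the main term becomes $-\eta_k\sum_l \partial_l f(x_k)^2/\sqrt{\tilde v_{k,l}}$. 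The correction $|1/\sqrt{v_k}-1/\sqrt{\tilde v_k}|$ is controlled by $(1-\beta_2)|\nabla f_{\taut,l}^2-\text{avg}|/(\sqrt{\beta_2 v_{k-1}}\cdot\cdot\cdot)$. Combined with a Cauchy--Schwarz/Young split, this error is absorbed into half of the main term plus a term of order
\[
\eta_k(1-\beta_2)\,n\sum_i|\partial_l f_i|/\sqrt{v_{k-1}}.
\]
Controlling that term requires showing $v_{k-1,l}\gtrsim \frac{1-\beta_2}{n}\sum_i \partial_l f_i(x_k)^2$ up to constants, i.e.\ that over a window of length about $1/(1-\beta_2)$ every component has been sampled with high probability. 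This concentration is the delicate part. The window must satisfy $\beta_2^{\text{window}}=\Theta(1)$, and the iterate drift over the window must be small relative to the gradients. That smallness is what forces $1-\beta_2=\mathcal{O}((1-\beta_1^n)/n^5)$, after tracking the $n$ factors from sampling and from Assumption \ref{assum2}. The leftover non-concentrating part is what defines $\delta(\beta_2)\to0$.

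\textbf{Step 4: lower-bounding the main term and telescoping.} By Assumption \ref{assum2}, coordinatewise $\sqrt{\tilde v_{k,l}}\lesssim \sqrt{\sum_i\|\nabla f_i\|^2}\le \sqrt{D_1}\|\nabla f(x_k)\|+\sqrt{D_0}$, up to the history term, which is likewise controlled by the window argument. Hence
\[
\sum_l \frac{\partial_l f^2}{\sqrt{\tilde v_{k,l}}}\gtrsim \frac{\|\nabla f\|^2}{\sqrt{D_0}+\sqrt{dD_1}\|\nabla f\|}\ge \tfrac12\min\Bigl\{\frac{\|\nabla f\|^2}{\sqrt{D_0}},\frac{\|\nabla f\|}{\sqrt{dD_1}}\Bigr\}.
\]
The residual error terms that scale with $\sqrt{D_0}$ (noise not vanishing at stationarity) contribute $\eta_k\delta(\beta_2)\sqrt{D_0}$. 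I would then take expectations, telescope $f(u_1)-f^*$, divide by $\sum_{k\le T}\eta_k\asymp\sqrt T$, and bound the minimum by the weighted average. This yields $\mathcal{O}(\log T/\sqrt T)+\mathcal{O}(\delta(\beta_2)\sqrt{D_0})$, with "$T$ sufficiently large" used to make $\eta_k$ small enough for the absorption steps.
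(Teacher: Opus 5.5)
Your outline matches the paper's architecture: descent on a momentum-corrected sequence, replacement of $1/\sqrt{v_k}$ by $1/\sqrt{\tilde v_k}$ (your $\tilde v_k$ is exactly the paper's $\mathbb{E}_k(v_k)$), and telescoping with $\sum_k\eta_k^2=\mathcal{O}(\log T)$. The gap is in Step 3, which aims at an estimate too weak to justify the replacement.

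``Every index sampled once in a window of length $\sim 1/(1-\beta_2)$'', i.e.\ $v_{k-1,l}\gtrsim\frac{1-\beta_2}{n}\sum_i\partial_l f_i(x_k)^2$, is of the same order as the trivial bound \eqref{eq:trivial}, which comes from the fresh term alone. It leaves $(1-\beta_2)\max_i(\partial_l f_i(x_k))^2/\tilde v_{l,k}$ of order one (up to a factor $n$). So the relative perturbation $|v_{l,k}-\tilde v_{l,k}|/\tilde v_{l,k}$ does not vanish as $\beta_2\to1$, and $1/\sqrt{v_{l,k}}$ differs from $1/\sqrt{\tilde v_{l,k}}$ by a constant factor correlated with $\tau_k$: a large $|\partial_l f_{\tau_k}|$ gets the smaller step. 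With $\tilde v_{l,k}\approx(1-\beta_2)\max_i(\partial_l f_i)^2$, the replacement error is of order $\eta_k|\partial_l f(x_k)|/\sqrt{1-\beta_2}$. This exceeds the descent term $\eta_k(\partial_l f(x_k))^2/\sqrt{\tilde v_{l,k}}$ by the factor $\max_i|\partial_l f_i(x_k)|/|\partial_l f(x_k)|$. That is the divergence mechanism itself, not an absorbable error.

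What you need is that the weighted frequency $(1-\beta_2)\sum_{j\le k^*}\beta_2^j\,\mathbb{I}(\tau_{k-n-j}=i)$ stays above a constant multiple of $1/n$ for all $i$ simultaneously. The paper gets this from Bernstein plus a union bound over $i$ (needed because the maximizing index depends on $x_k$). This yields $\mathbb{E}_k(v_{l,k})\ge\frac{\beta_2^n}{8n}\max_i(\partial_l f_i(x_k))^2$ and a relative error $\mathcal{O}((1-\beta_2)n/\beta_2^n)$.

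You also do not say how the failure event is handled, and it is not routine. On that event only the crude bound $|m_{l,k}|/\sqrt{v_{l,k}}\le\frac{1-\beta_1}{\sqrt{1-\beta_2}\,(1-\beta_1/\sqrt{\beta_2})}$ is available, so the failure probability must be exponentially small in $1/(1-\beta_2)$ to beat the factor $1/\sqrt{1-\beta_2}$. Moreover, $\mathbb{E}\bigl[|\partial_l f(x_k)|\,\mathbb{I}(\text{failure})\bigr]$ does not factor, because $x_k$ is driven by the same indices that define the event. The paper decouples by passing to $x_{k-h}$ with $h=2n+k^*$, which is independent of the window. It then factors and returns to $x_k$ via the Lipschitz drift, at $\mathcal{O}(1/\sqrt{k})$ cost.

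Two further corrections. First, window drift is not what forces $\beta_2$ toward $1$: a larger $\beta_2$ lengthens the window and increases drift. The paper controls drift by splitting coordinates on whether $\max_i|\partial_l f_i(x_k)|\le Q_k=\mathcal{O}(1/\sqrt{k})$. Small coordinates get the crude bound at cost $\mathcal{O}(\eta_k/\sqrt{k})$, and concentration is used only on the rest; you need this split too. The threshold on $\beta_2$ instead comes from requiring the accumulated factors $\delta_i(\beta_2)$ to fall below the main-term coefficient $\frac{1}{4nd\sqrt{5D_1 n}}$.

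Second, your one-step $u_k$ produces the cross term $\frac{\beta_1}{1-\beta_1}\eta_k\,\partial_l f(x_k)\,m_{l,k-1}\bigl(1/\sqrt{v_{l,k}}-1/\sqrt{v_{l,k-1}}\bigr)$. This gives a threshold $1-\mathcal{O}((1-\beta_1)/n^5)$, as the paper itself remarks, not the $1-\beta_1^n$ you claim. The stated dependence needs $z_k=(x_k-\beta_1^n x_{k-n})/(1-\beta_1^n)$. Its extra cross term with the $n-1$ intermediate gradients is handled by shifting to $x_{k-n}$ and conditioning on $\mathcal{F}_{k-n}$.

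A minor point: the bound on $|m|/\sqrt{v}$ rests on $v_{l,k}\ge(1-\beta_2)\beta_2^j(\partial_l f_{\tau_{k-j}}(x_{k-j}))^2$. The ratio inequality $v_{k-j}/v_k\ge\beta_2^j$ that you wrote is false in general.
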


\begin{figure*}[h]
    \centering
    \subfigure[batch size v.s. $\beta_2$]{
    \begin{minipage}[t]{0.3\linewidth}
        \centering
        \includegraphics[width=\linewidth]{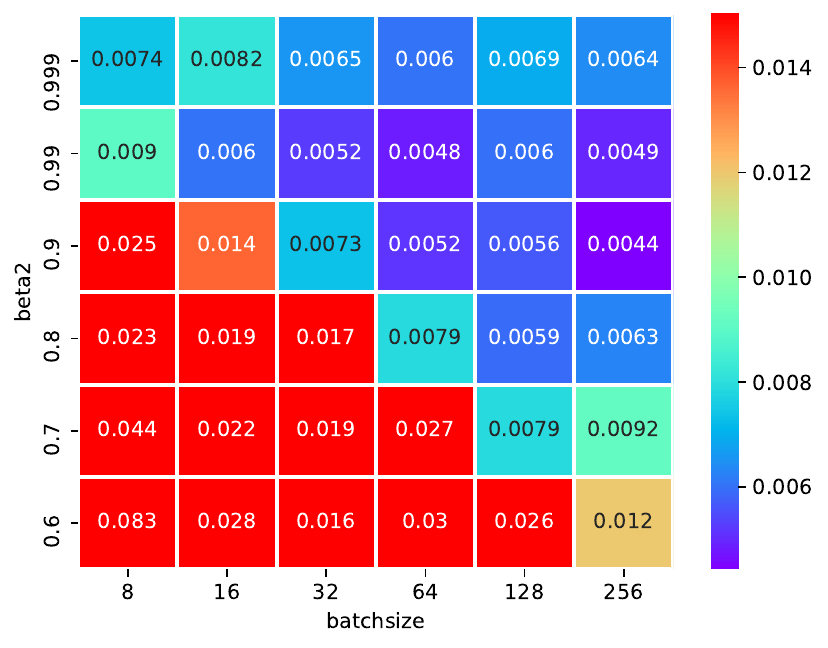}
    \end{minipage}    
    }%
    \subfigure[Large $\beta_2$ and $D_0>0$]{
      \begin{minipage}[t]{0.3\linewidth}
      \centering
    \includegraphics[width=\linewidth]{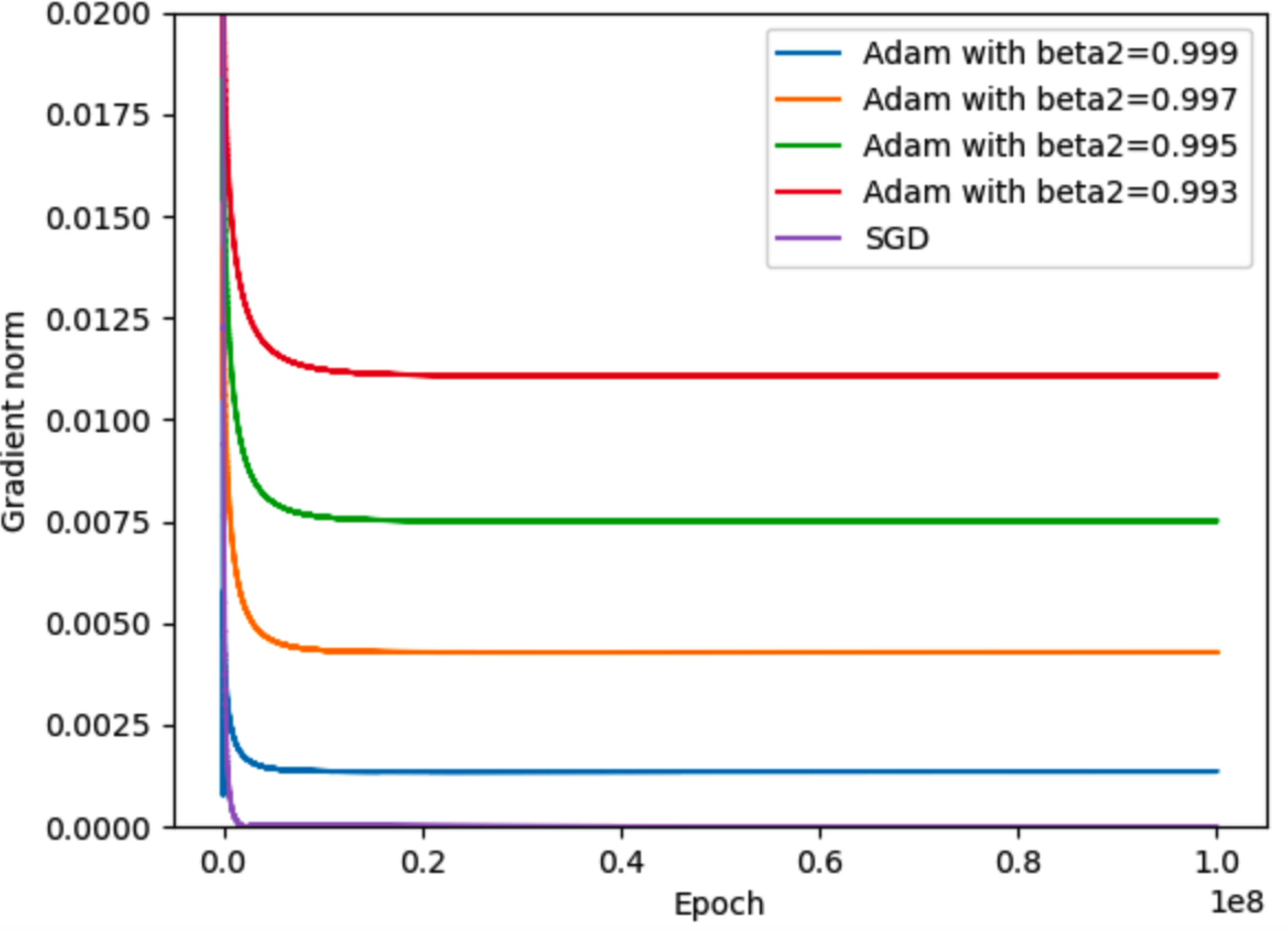}
      \end{minipage}%
      }%
    \subfigure[Large $\beta_2$ and $D_0= 0$]{
    \begin{minipage}[t]{0.3\linewidth}
    \centering
\includegraphics[width=\linewidth]{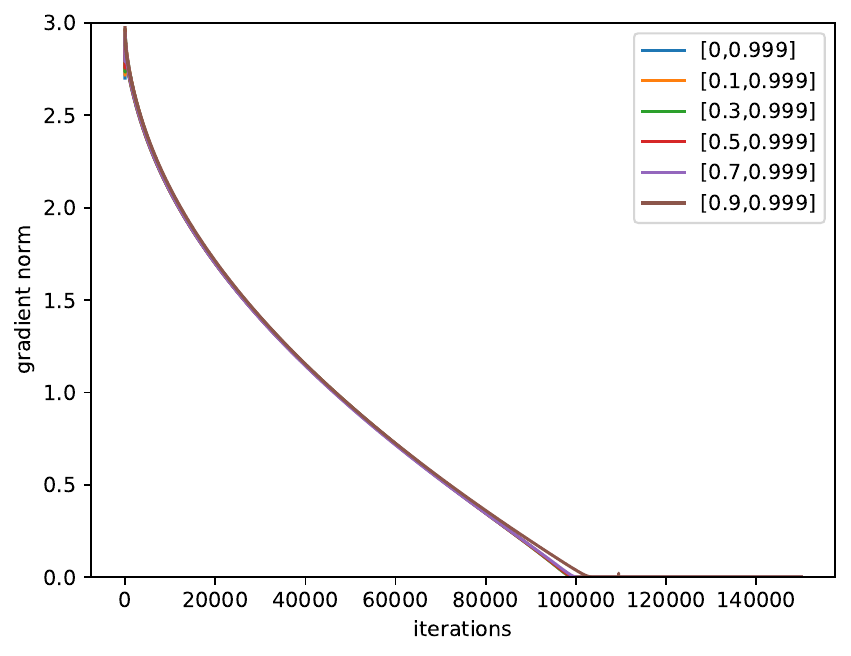}
    \end{minipage}%
    }%
    \centering
    \vspace{-3mm}
    \caption{{\small {\bf (a):} The training
loss of Adam on MNIST under different batch size and $\beta_2$. The trend aligns with our theory: we need a larger $\beta_2$ when batch size is small. Here, we used the default $\beta_1=0.9$. 
    {\bf (b) (c):} 
  large-$\beta_2$ Adam converges to a neighborhood of critical points when $D_0>0$ and converges to exact critical points when $D_0 = 0$. We use diminishing stepsize $\eta_k = 0.1 /\sqrt{k}$ as in our theory. Experimental details are shown in Appendix \ref{appendix:exp_setting}.
    }}
    \vspace{-2mm}
    \label{fig:cifar_nlp_mnist}
\end{figure*}

  \paragraph{Remark 1: the choice of $\beta_2$.} Our theory suggests that large $\beta_2$ should be used to ensure convergence. This message matches our experiments in Figure \ref{fig:intro_paper}. We emphasize that the requirement of ``large $\beta_2$'' is necessary, because small $\beta_2$ will indeed lead to provable divergence (shown later in Theorem \ref{thm_diverge}).
    We here comment a bit on the threshold $\gamma_1(n)$. By the proof of Theorem  \ref{thm_wr}, and particularly, by \eqref{eq_delta_1_beta2} in Section \ref{appendix:lemma_descent_WR}, $\beta_2$ needs to satisfy $\beta_2 \geq 1- \mathcal{O}\left(\frac{1-\beta_1^n}{n^5}\right) $.  %
    We also remark that $\gamma_1(n)$ slowly increases with $\beta_1$. This property is visualized in Figure \ref{fig:intro_paper} (b) where the lower boundary of the blue region slightly lifts up when $\beta_1$ increases.
    Note that our threshold of $\beta_2$ is a sufficient condition for convergence and the power is not claimed tight.  Tightening the threshold for $\beta_2$ will be an interesting future direction. 

  \paragraph{Remark 2: $\beta_2$ and batch size.}  We find that the condition of $\beta_2$: {\small $\beta_2 \geq \gamma_1(n) = 1- \mathcal{O}\left(\frac{1-\beta_1^n}{n^5}\right)$} is problem-dependent and it increases with $n$. This property suggests that we need larger $\beta_2$ when $n$ is large, or equivalently, we need larger $\beta_2$ when the batch size is small \footnote{This is because batch size equals $\frac{\mathcal{D}}{n}$, where $\mathcal{D}$ denotes the total sample size and $n$ denotes the number of mini-batches into which the dataset is divided.}. This aligns with our experiments in Fig. \ref{fig:cifar_nlp_mnist} (a):  on MNIST, smaller batch size $\frac{\mathcal{D}}{n}$ (which is equivalent to larger $n$) requires a larger $\beta_2$ to reach small loss.

 \paragraph{Remark 3: the choice of $\beta_1$.} Theorem \ref{thm_wr} requires  $\beta_1 < \sqrt{\beta_2}$. 
  Since $\beta_2$ is suggested to be large, our convergence result can cover flexible choice of $\beta_1 \in [0,1)$. For instance,  $\beta_2=0.999$ brings the threshold of $\beta_1 < 0.9995$, which covers  basically all practical choices of $\beta_1$ reported in the literature, including the default setting $\beta_1=0.9$.  Our theory aligns with our grid-search experiments on MNIST and CIFAR-10 (Figure \ref{fig:intro_paper} (c,d) in Section \ref{section:intro}), where Adam shows good performance for a wide range of $\beta_1$ when $\beta_2$ is large.

\paragraph{Remark 4: guidance to LLM pre-training.} Our theory indicates that larger $\beta_2$ is required when $n$ is large. This equivalently indicates that a larger $\beta_2$ is required when the batch size, which equals $\frac{\mathcal{D}}{n}$, is small. Our subsequent divergence theory (presented later) further implies that making $\beta_2$ dependent on the batch size is necessary to avoid divergence. Collectively, these messages can provide guidance for hyperparameter tuning in LLM pre-training, as confirmed by various literature.  We list some numerical evidence  as follows.
\begin{itemize}
    \item \citet{zhao2024deconstructing,orvieto2025search} report that training fails if both $\betabeta$  are close to 0, and the performance substantially improves when use $\beta_1 = \beta_2 =0.95$ or 0.975. This finding aligns with our results.  We restate some of their numerical results in Figure \ref{fig:otherworks} (a). 
    \item \citet{sreckovic2025your} state that {\it ``\citet{zhang2022adam} shows that higher $\beta_2$ values substantially improve small-batch training, and  \citet{marek2025small} highlights the importance of scaling $\beta_2$ in this regime.''} We restate some of their numerical results in Figure \ref{fig:otherworks} (b) and (c). 
    \item  \citet{zhang2024does} emphasize that  {\it ``larger $\beta_2$ (increase from 0.95 to 0.99, 0.999, or 0.9995) substantially improves small batch size training ... aligning with findings in \citep{zhang2022adam}''}.  We restate some of their numerical results in Figure \ref{fig:otherworks} (d, e). 
    \item \citet{porian2024resolving} report that {\it ``enlarging $\beta_2$ (from 0.95 to 0.99 and 0.999) is essential at lower batch sizes ... This matches the theoretical work \citep{zhang2022adam}.''}
\end{itemize}

  \paragraph{Remark 5: convergence to a neighborhood of critical points.}
  When {\small $D_0>0$}, Adam converges to a neighborhood of critical points, in lieu of the exact critical points. We emphasize that this is {\it not} due to the limitation of the analysis, and this phenomenon is also observed in practice: even for simple convex quadratic function with  {\small $D_0>0$},  Adam with diminishing stepsize {\it cannot} reach exactly zero gradient (see Figure \ref{fig:cifar_nlp_mnist} (b). We state the non-realizable function in Appendix \ref{appendix:exp_setting}).
  
 In the non-realizable case ({\small $D_0>0$}),  converging to the neighborhood of critical points is common for stochastic gradient methods, including constant-stepsize SGD  \citep{luo1991convergence,bertsekas1997nonlinear, yan2018unified,yu2019linear,liu2020improved} and diminishing-stepsize RMSProp \citep{Manzil2018adaptive,shi2020rmsprop}.  This is because: even though $\eta_k$ is decreasing, the effective stepsize $\eta_k/\sqrt{v_{k}}$ might not decay.   
  The good news is that the size of the neighborhood {\small $\mathcal{O}(\delta(\beta_2)\sqrt{D_0})$} vanishes to 0 as $\beta_2$ goes to 1 (both in theory and experiments in Figure \ref{fig:cifar_nlp_mnist} (b)). This can be seen in the expression of $\delta(\beta_2)$ in \eqref{def_delta_C}  in Appendix \ref{appendix:lemma_descent_WR}. The size shrinks to 0 because the movement of  $\sqrt{v_{k}}$ shrinks as  $\beta_2$ increases.

\begin{figure*}[t]
  \vspace{-1.1cm}
    \centering
    \subfigure[Fig. 12 from \citep{orvieto2025search}]{
      \begin{minipage}[t]{0.35\linewidth}
      \centering
    \includegraphics[width=\linewidth]{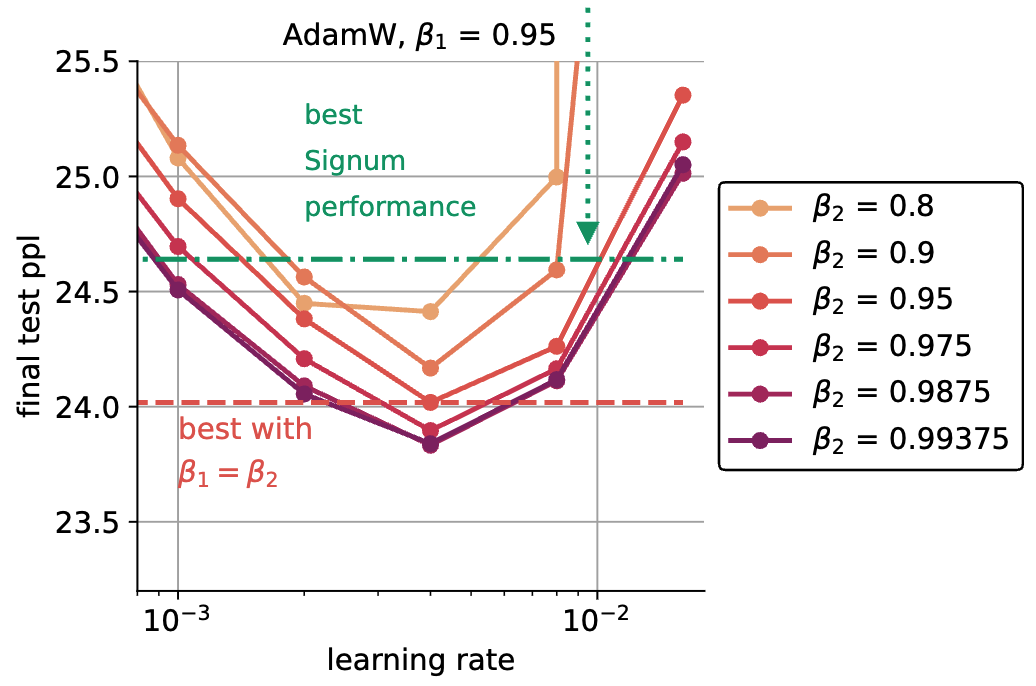}
      \end{minipage}%
      }%
    \subfigure[Fig. 5 from \citep{sreckovic2025your}]{
    \begin{minipage}[t]{0.30\linewidth}
        \centering
        \includegraphics[width=0.65\linewidth]{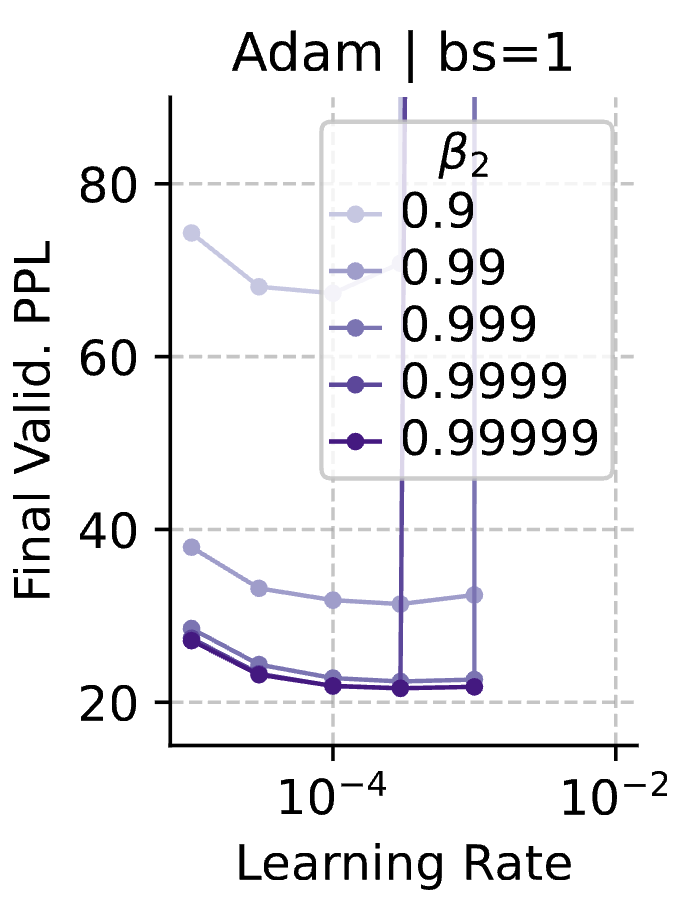}
    \end{minipage}    
    }%
    \subfigure[Fig.4 from \citep{marek2025small}]{
    \begin{minipage}[t]{0.30\linewidth}
    \centering
\includegraphics[width=0.8\linewidth]{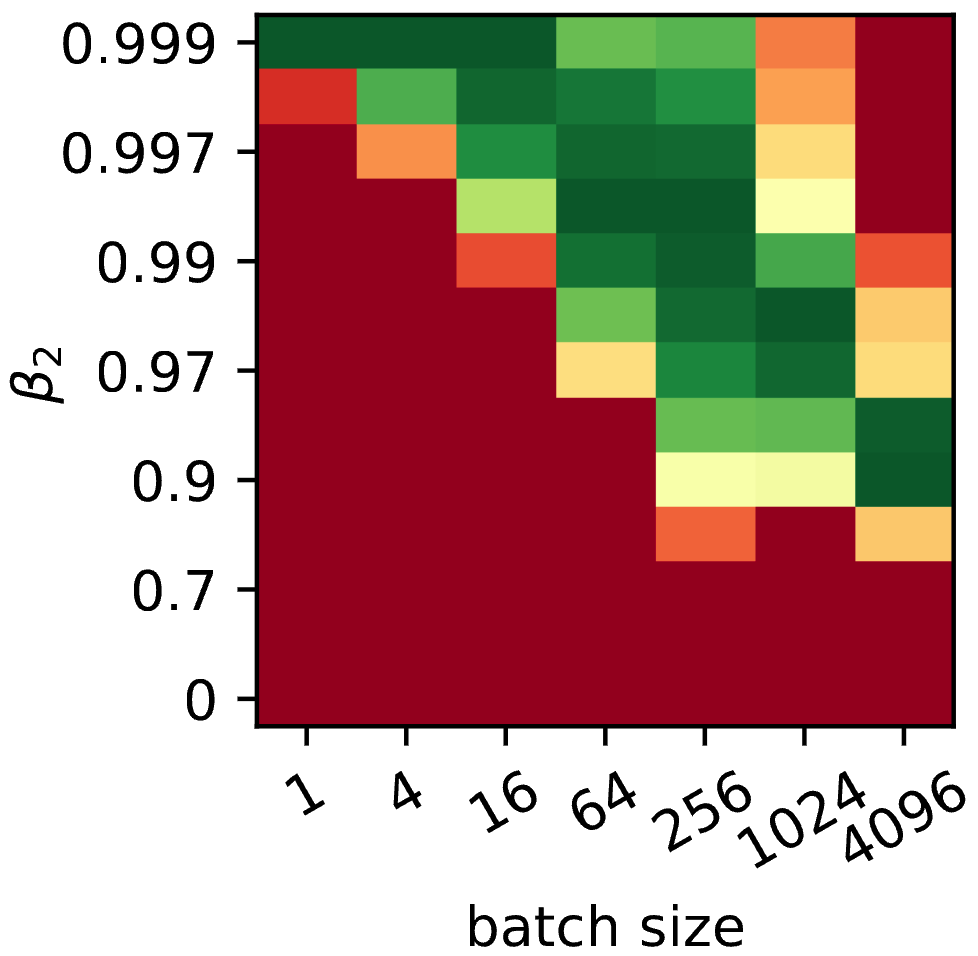}
    \end{minipage}%
    }%
    \\
    \subfigure[Figure 8 (b) from  \citep{zhang2024does}]{
    \begin{minipage}[t]{0.35\linewidth}
    \centering
    \includegraphics[width=\linewidth]{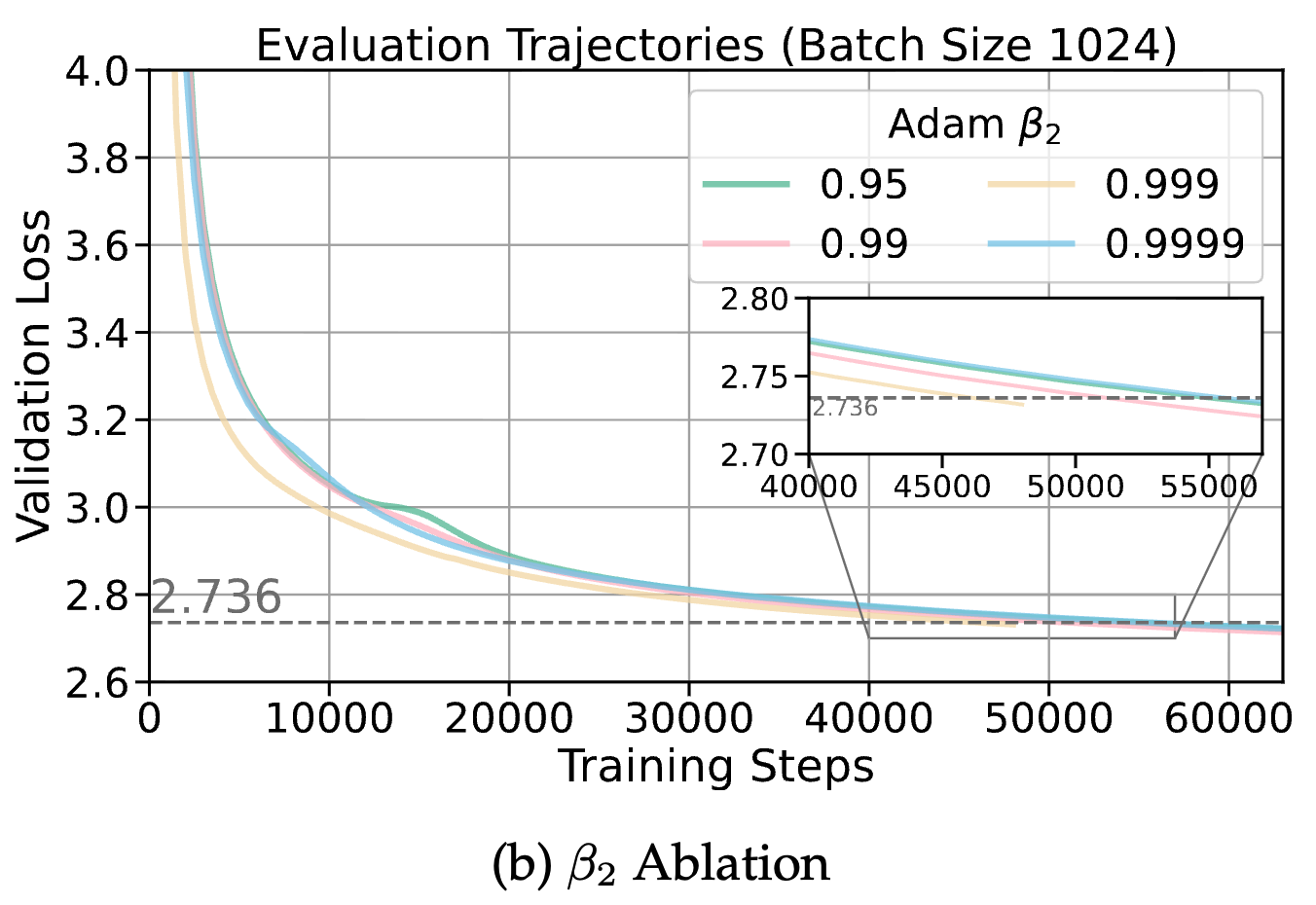}
    \end{minipage}%
    }%
    \subfigure[Table 4 from  \citep{zhang2024does}]{
    \begin{minipage}[t]{0.35\linewidth}
    \centering
    \includegraphics[width=0.65\linewidth]{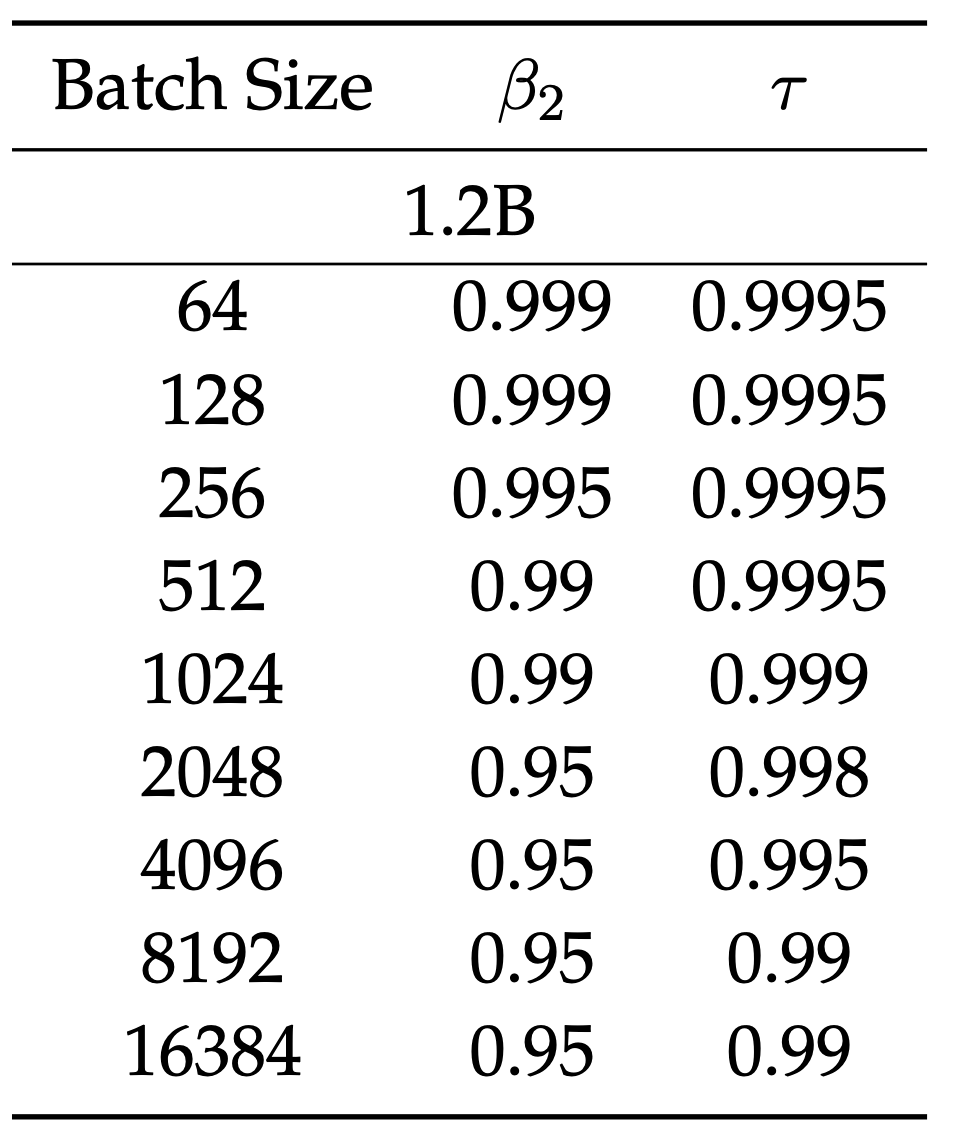}
    \end{minipage}%
    }%
    \centering
    \vspace{-3mm}
    \caption{{\small  On the effect of $\beta_2$ on LLM pre-training from recent literature. {\bf (a,b,c)}: Final validation loss of LLMs trained under different $\beta_2$ and batch size. {\bf (c):} greener color indicates lower validation loss. {\bf (d, e):} The optimal $\beta_2$ to train a LLM with 1.2B parameters under different batch size. Here, $\tau$ serves for other training tricks that are independent of our discussion.  These results reach a consistent conclusion: Larger $\beta_2$ helps boost performance and shall be tuned up under small batch-size regimes.  These results confirm that our theory provides valid guidance for hyperparameter tuning in LLM pre-training. }}
    \vspace{-2mm}
    \label{fig:otherworks}
\end{figure*}

\vspace{1mm}
  As a corollary of Theorem \ref{thm_wr}, we have the following result in the realizable case (i.e., $D_0=0$).
  
\begin{corollary}\label{thm1_sgc_wr}
  Under the setting in Theorem \ref{thm_wr}. When $D_0=0$ for Assumption \ref{assum2}, we have
{\small
\begin{align*}
\vspace{-2mm}
     \min_{k \in [1, T]} \Ex\|\nabla f(x_{k})\|_2
   =\mathcal{O}\left(\frac{\log T }{\sqrt{T}} \right).\nonumber
   \vspace{-2mm}
\end{align*}
}%
\end{corollary}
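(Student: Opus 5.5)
The plan is to specialize Theorem \ref{thm_wr} to $D_0=0$ and then remove the $\min\{\cdot,\cdot\}$ inside the expectation. With $D_0=0$, the additive neighborhood term $\mathcal{O}(\delta(\beta_2)\sqrt{D_0})$ vanishes. The first argument of the minimum, $\|\nabla f(x_k)\|_2^2/\sqrt{D_0}$, has to be read as $+\infty$ whenever $\nabla f(x_k)\neq 0$, and as $0$ when $\nabla f(x_k)=0$. In either case the minimum equals the second argument, $\|\nabla f(x_k)\|_2/(2\sqrt{dD_1})$. Note that $D_1>0$ here, since Assumption \ref{assum2} forbids $D_0=D_1=0$.

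Taken at face value, Theorem \ref{thm_wr} then gives
\[
\min_{k\in[1,T]} \Ex\|\nabla f(x_k)\|_2 \le 2\sqrt{dD_1}\cdot \mathcal{O}\left(\frac{\log T}{\sqrt{T}}\right),
\]
which is the claim, since $d$ and $D_1$ are constants.

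The one delicate point is that the theorem is stated with $\sqrt{D_0}$ in a denominator, so substituting $D_0=0$ formally is not rigorous. I would therefore go back to where the proof of Theorem \ref{thm_wr} produces the minimum. Presumably this comes from a lower bound on the descent term of the form
\[
\Ex\left[\frac{\|\nabla f(x_k)\|_2^2}{\sqrt{D_0}+\sqrt{dD_1}\,\|\nabla f(x_k)\|_2}\right]
\]
(up to constants), combined with the elementary inequality $\frac{a^2}{b+c a}\ge \frac12\min\{a^2/b,\ a/c\}$.

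With $D_0=0$ that quantity is exactly $\|\nabla f(x_k)\|_2/\sqrt{dD_1}$, with no case split needed. Every bound on the error terms that carried a $\sqrt{D_0}$ factor, namely the $\delta(\beta_2)\sqrt{D_0}$ residual coming from the variance of $v_k$ around its conditional expectation, now equals zero. The summed descent inequality then reads
\[
\sum_{k=1}^T \eta_k\,\Ex\|\nabla f(x_k)\|_2 \lesssim f(x_1)-f^*+\mathcal{O}\left(\sum_k \eta_k^2\right)=\mathcal{O}(\log T).
\]
Dividing by $\sum_k\eta_k\asymp\sqrt T$ gives the rate.

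The main obstacle is checking that no step of the original proof used $D_0>0$, for example by dividing by $D_0$ or by lower-bounding $v_k$ through $D_0$. I expect every such step to be monotone in $D_0$, with $D_0$ entering only through upper bounds of Assumption \ref{assum2}. If so, the argument goes through directly. Failing that, one can take $D_0'\downarrow 0$: the assumption still holds with any $D_0'>0$, and the constants in Theorem \ref{thm_wr} remain uniform as $D_0'\to 0$.
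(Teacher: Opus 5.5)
Your proposal is correct and matches how the paper obtains the corollary: it is read off Theorem~\ref{thm_wr} with $D_0=0$. The $\delta(\beta_2)\sqrt{D_0}$ residual vanishes, and in the proof's case split the sub-case $(\partial_\alpha f(x_k))^2\le D_0/(D_1 d)$ contains only zero gradients, so the minimum collapses to $\|\nabla f(x_k)\|_2/(2\sqrt{dD_1})$ and multiplying by $2\sqrt{dD_1}$ gives the rate. Your extra check that no step divides by $D_0$, and your $D_0'\downarrow 0$ fallback, are a more careful version of the same specialization; the fallback is valid because the trajectory, the $\beta_2$ threshold and $T_{\min}$ do not depend on $D_0$, while the remaining constants are nondecreasing in it.
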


  In the realizable case (i.e. $D_0=0$), Corollary \ref{thm1_sgc_wr} states that Adam can converge to critical points.  This matches our numerical results in Figure \ref{fig:cifar_nlp_mnist} (c).   The convergence rate in Theorem \ref{thm_wr} and  Corollary \ref{thm1_sgc_wr}  is comparable to that of SGD under the same condition in \citep{vaswani2019fast}.

Similarly to Theorem \ref{thm_wr}, we now present the convergence result for Adam under random shuffling (Algorithm \ref{algorithm_rr}). All the remarks above also apply to the random-shuffling case.

\begin{theorem}
\label{thm_rr}
{\bf (Convergence result for Algorithm \ref{algorithm_rr})} 
Assume that Algorithm \ref{algorithm_rr} satisfies: $\beta_1 < \sqrt{\beta_2} <1$;  $\beta_2$ is greater than or equal to a threshold $ \gamma_2(n) : = 1- \mathcal{O}\left(\frac{1-\beta_1^n}{n^{5.5}}\right)$; and $\eta_{k}=\frac{\eta_{0}}{\sqrt{k}}$. 
For any $f(x) \in \functionclass$,
when $T \in \mathbb{N}$ is sufficiently large,  we  have: %

\vspace{-3mm}
{\small
\begin{align*}
     \min_{k \in [1, T]} \Ex\left[\min  \left\{ \frac{\|\nabla f(x_{k, 0})\|_2^2 }{\sqrt{D_{0}} }  , \frac{\|\nabla f(x_{k, 0})\|_2}{2\sqrt{d D_1 }}\right\}  \right]
   =\mathcal{O}\left(\frac{\log T }{\sqrt{T}} \right)  +\mathcal{O}(\tilde{\delta}(\beta_2)\sqrt{D_0}) ,\nonumber  
\end{align*}
}%
where $\tilde{\delta}(\beta_2)$ is a  positive constant that approaches 0 as $\beta_2$ approaches 1.

\end{theorem}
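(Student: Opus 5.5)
We will prove Theorem \ref{thm_rr} by reducing each epoch of Algorithm \ref{algorithm_rr} to one ``aggregated'' step, in the spirit of the with-replacement analysis behind Theorem \ref{thm_wr}. The potential will be $f$ evaluated at a momentum-corrected sequence
\[
z_{k,i} = \frac{x_{k,i}-\beta_1 x_{k,i-1}/\sqrt{\beta_2}}{1-\beta_1/\sqrt{\beta_2}}.
\]
The factor $\beta_1/\sqrt{\beta_2}<1$ is exactly where the hypothesis $\beta_1<\sqrt{\beta_2}$ enters. The point of this construction is that the increments $z_{k,i+1}-z_{k,i}$ are, up to error terms, $-\eta_k \nabla f_{\tau_{k,i}}(x_{k,i})/\sqrt{v_{k,i}}$. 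The error terms contain differences $1/\sqrt{v_{k,i}}-\sqrt{\beta_2}/\sqrt{v_{k,i-1}}$, and these are controlled coordinate-wise since $v_{k,i}\ge \beta_2 v_{k,i-1}$.

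Summing the descent inequality for $f(z)$ over the $n$ inner steps of epoch $k$ gives
\[
\Ex f(z_{k+1,0})\le \Ex f(z_{k,0})-\eta_k\,\Ex\Big\langle \nabla f(x_{k,0}),\ \textstyle\sum_i \frac{\nabla f_{\tau_{k,i}}(x_{k,i})}{\sqrt{v_{k,i}}}\Big\rangle + \text{(error)}.
\]
In the main term we replace each $x_{k,i}$ by $x_{k,0}$ and each $v_{k,i}$ by $v_{k,0}$ (more precisely, by the epoch-start quantity $v_{k,-1}$). Both replacements cost $\mathcal{O}(n\eta_k)$ times gradient quantities. For the first this uses $L$-smoothness and the bounded per-step movement $|x_{k,i+1}-x_{k,i}|\le \eta_k\,\mathcal{O}(1/(1-\beta_1/\sqrt{\beta_2}))$ coordinate-wise, which holds because $|m|/\sqrt{v}$ is bounded when $\beta_1<\sqrt{\beta_2}$. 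For the second it uses $v_{k,i}/v_{k,0}\in[\beta_2^{n},\,\beta_2^{-n}+\ldots]$. Once $v$ is frozen, random shuffling makes the sum $\sum_i \nabla f_{\tau_{k,i}}(x_{k,0})=n\nabla f(x_{k,0})$ deterministic. The main term then becomes $n\eta_k\sum_j |\partial_j f(x_{k,0})|^2/\sqrt{v_{k,0,j}}$. This is the advantage over with-replacement: no conditional-expectation argument is needed for the main term. The only randomness left is in $v$.

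The key step, and the main obstacle, is to lower bound $\sum_j |\partial_j f|^2/\sqrt{v_{k,0,j}}$ without a bounded-gradient assumption and with $v$ possibly tiny. Our plan is to show that, for $\beta_2\ge\gamma_2(n)$, the quantity $v_{k,0,j}$ is comparable to $\beta_2^{n}$-weighted averages of $\frac1n\sum_i|\partial_j f_i(x_{k,0})|^2$ over recent epochs, up to a multiplicative factor $1\pm\mathcal{O}(\delta)$ and an additive term. Here $\delta\to0$ as $\beta_2\to1$; this term is $\tilde\delta(\beta_2)$, and the additive term scales like $\tilde\delta\sqrt{D_0}$ times the gradient. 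The argument unrolls $v$ over $\mathcal{O}(1/(1-\beta_2))$ epochs and uses the bounded drift of $x$ over that window, which is $\mathcal{O}(\eta_k n/(1-\beta_2))$. The threshold $1-\beta_2=\mathcal{O}((1-\beta_1^n)/n^{5.5})$ is what makes the accumulated drift errors, each carrying polynomial factors of $n$, smaller than the signal. We then apply Assumption \ref{assum2} and Cauchy--Schwarz, giving $\sqrt{\sum_j v_{k,0,j}}\lesssim\sqrt{(D_1\|\nabla f\|^2+D_0)/n}$, hence
\[
\sum_j \frac{|\partial_j f|^2}{\sqrt{v_j}}\gtrsim \frac{\|\nabla f\|^2}{\sqrt{D_1\|\nabla f\|^2+D_0}}\ge \min\Big\{\frac{\|\nabla f\|^2}{\sqrt{D_0}},\frac{\|\nabla f\|}{\sqrt{dD_1}}\Big\}.
\]
This is the measure appearing in Theorem \ref{thm_rr}.

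The error terms are all of the form $\eta_k^2\cdot\mathrm{poly}(n)\cdot(\text{bounded ratio }|m|/\sqrt v)$. Coordinates where $\sqrt{v}$ is small would make these second-order terms large; we handle them by charging them to the first-order term via Assumption \ref{assum2}, absorbing a fraction, or by bounding $|x_{k,i+1}-x_{k,i}|$ directly. We then telescope over $k=1,\ldots,T$ using $f\ge f^*$ and $\sum\eta_k^2=\mathcal{O}(\log T)$, $\sum\eta_k\gtrsim\sqrt T$, and take the minimum over $k$. This yields $\mathcal{O}(\log T/\sqrt T)+\mathcal{O}(\tilde\delta(\beta_2)\sqrt{D_0})$ for large $T$. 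The requirement that $T$ be large enough is needed so that $\eta_k n$ drift terms fall below the constant threshold used in the concentration step.
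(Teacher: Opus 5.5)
Your proposal has a genuine gap: the error terms that matter are first order in $\eta_k$, not $\mathcal{O}(\eta_k^2)$. Write $g_l=\partial_l f_{\tau_{k,i}}(x_{k,i})$. Within an epoch, your potential gives in coordinate $l$
\[
(z_{k,i+1}-z_{k,i})_l=-\frac{\eta_k}{1-\beta_1/\sqrt{\beta_2}}\Bigl[(1-\beta_1)\frac{g_l}{\sqrt{v_{l,k,i}}}+\beta_1 m_{l,k,i-1}\Bigl(\frac{1}{\sqrt{v_{l,k,i}}}-\frac{1}{\sqrt{\beta_2 v_{l,k,i-1}}}\Bigr)\Bigr].
\]
The inequality $v_{k,i}\ge\beta_2 v_{k,i-1}$ only fixes the sign of the bracketed difference. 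Its size relative to $1/\sqrt{\beta_2 v_{l,k,i-1}}$ is $1-\bigl(1+(1-\beta_2)g_l^2/(\beta_2 v_{l,k,i-1})\bigr)^{-1/2}$, which does not involve $\eta_k$ at all.

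The same holds when you freeze $v_{k,i}$ at $v_{k,-1}$. The relative change of $v$ within an epoch is of order $n(1-\beta_2)(1+g_l^2/v_l)$, and that is exactly what the ``$\ldots$'' in $v_{k,i}/v_{k,0}\in[\beta_2^n,\beta_2^{-n}+\ldots]$ hides. Because $\partial_l f\,g_l$ and $\partial_l f\,m_{l,k,i-1}$ have no sign, both terms must be bounded in absolute value. A priori (without a drift argument) $(1-\beta_2)g_l^2/v_l$ is not small: bounds like \eqref{eq:trivial} only give $\mathcal{O}(n)$, and relative to $v_{l,k,i-1}$ there is no bound at all. So these terms do not vanish as $\beta_2\to1$ and cannot be absorbed into the descent term.

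Replacing $x_{k,i}$ by $x_{k,0}$ inside $g_l/\sqrt{v_l}$ has the same defect. Its cost $\eta_k|\partial_l f|\,n\Delta_k/\sqrt{v_l}$ blows up as $v_l\to0$. Splitting it by AM--GM against the descent term leaves $\eta_k n^2\Delta_k^2/\sqrt{v_l}$, which also blows up. Note too that if every error really were $\mathcal{O}(\eta_k^2)$, your telescoping would give exact convergence even when $D_0>0$. The $\tilde\delta(\beta_2)\sqrt{D_0}$ floor in the theorem is precisely the trace of these first-order errors, whose coefficients are $\tilde\delta(\beta_2)$ times $|\partial_\alpha f(x_{k,0})|+\sqrt{D_0/(D_1d)}$.

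The missing ingredient is a deterministic \emph{lower} bound on $v$ for coordinates with large component gradients (Lemma~\ref{lemma_f_over_v}). If $\max_i|\partial_l f_i(x_{k,0})|$ exceeds a drift threshold of order $\Delta_1/((1-\beta_2)\sqrt{k})$, then $v_{l,k,-1}\ge\frac{\beta_2^n}{4n}\max_i(\partial_l f_i(x_{k,0}))^2$. This uses random shuffling essentially: the maximizing index is sampled exactly once in each of the last $\lceil\log(1/2)/(n\log\beta_2)\rceil$ epochs, and bounded drift keeps its squared partial derivative above half its current value over that window. It yields $(1-\beta_2)g_l^2/v_{l,k,-1}=\mathcal{O}(n(1-\beta_2)/\beta_2^n)$ and $|\partial_l f(x_{k,0})|/\sqrt{v_{l,k,-1}}\le2\sqrt{n/\beta_2^n}$. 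That is what turns all the terms above into first-order errors with coefficients vanishing as $\beta_2\to1$ (Lemmas~\ref{lemma_concentrate_v_k}, \ref{lemma_concentrate_v_k-1}, \ref{lemma_k-k-1}).

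This bound must be paired with a split of coordinates at $Q_k$. Below the threshold, the whole contribution of coordinate $l$ is $\mathcal{O}(\eta_kQ_k)$ by the crude bound $|m_l|/\sqrt{v_l}\le\frac{1-\beta_1}{\sqrt{1-\beta_2}(1-\beta_1/\sqrt{\beta_2})}$.

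Your comparability step aims at the wrong side. The main-term lower bound only needs an upper bound on $v$, which is easy. The hard side is the lower bound, and it must be multiplicative above the threshold: an additive slack leaves $g_l^2/v_l$ uncontrolled when $v_l$ is of the order of the slack.

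Finally, even after this repair, your one-step potential pits $\beta_1$ on the momentum error against $1-\beta_1$ on the signal. It therefore naturally gives a threshold $1-\mathcal{O}((1-\beta_1)/\mathrm{poly}(n))$, not the stated $\gamma_2(n)$. The factor $1-\beta_1^n$ comes from the paper's $n$-step potential $z_k=(x_{k,0}-\beta_1^nx_{k-1,0})/(1-\beta_1^n)$. Its momentum terms are then handled through the uniform marginals $\Ex_k[\partial_l f_{\tau_{k,i}}(x_{k,0})]=\partial_l f(x_{k,0})$.
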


  As a corollary of Theorem \ref{thm_rr}, we have the following result in the realizable case (i.e., $D_0=0$).

\begin{corollary}\label{thm1_sgc_rr}
  Under the setting in Theorem \ref{thm_rr}. When $D_0=0$ for Assumption \ref{assum2}, we have
  \vspace{-0.5mm}
{\small
\begin{align*}
     \min_{k \in [1, T]} \Ex\|\nabla f(x_{k,0})\|_2
   =\mathcal{O}\left(\frac{\log T }{\sqrt{T}} \right).\nonumber  
\end{align*}
}%
 \end{corollary}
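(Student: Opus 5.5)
This follows from Theorem \ref{thm_rr} by specializing to $D_0=0$ and rewriting the min-expression, the same way Corollary \ref{thm1_sgc_wr} follows from Theorem \ref{thm_wr}. The one subtlety is the term $\|\nabla f(x_{k,0})\|_2^2/\sqrt{D_0}$, which is undefined at $D_0=0$. So I will not plug in $D_0=0$ naively. Instead I will go back to the descent inequality behind Theorem \ref{thm_rr}, or equivalently read the bound with the convention that this first branch is $+\infty$, so that the minimum is attained by the second branch.

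Concretely, in the proof of Theorem \ref{thm_rr} the min-structure comes from a case split on each coordinate's (or the full) gradient size. One case is the ``small gradient'' regime, where the denominator $\sqrt{v}$ is dominated by the $D_0$ contribution, giving a progress term of order $\|\nabla f\|^2/\sqrt{D_0}$. The other is the ``large gradient'' regime, where $\sqrt{v}\lesssim \sqrt{d D_1}\|\nabla f\|$, giving progress of order $\|\nabla f\|/\sqrt{dD_1}$. When $D_0=0$, Assumption \ref{assum2} gives $\sum_i\|\nabla f_i(x)\|^2\le D_1\|\nabla f(x)\|^2$, so only the second regime occurs. The per-epoch lower bound on descent therefore reads $c\,\eta_k\|\nabla f(x_{k,0})\|_2/(2\sqrt{dD_1})$. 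The error term $\mathcal{O}(\tilde\delta(\beta_2)\sqrt{D_0})$ is identically zero, since it arises from the $D_0$ part of the variance bound.

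Summing over $k$ exactly as in Theorem \ref{thm_rr} gives
$$\min_{k\in[1,T]}\Ex\frac{\|\nabla f(x_{k,0})\|_2}{2\sqrt{dD_1}}=\mathcal{O}\!\left(\frac{\log T}{\sqrt T}\right).$$
Multiplying by the constant $2\sqrt{dD_1}$ then yields the claim; $D_1>0$ here because $D_0$ and $D_1$ are not both zero. Equivalently, if the bound of Theorem \ref{thm_rr} is stated for every $D_0'>0$ with $D_0'\ge D_0$, one may apply it with $D_0'\downarrow 0$. The assumption still holds since it is an upper bound, the second term vanishes, and the first branch of the min blows up, which leaves the second.

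The only genuine obstacle is making sure the constants hidden in $\mathcal{O}(\log T/\sqrt T)$ do not themselves blow up as $D_0\to 0$. If the limiting argument is used, this requires checking the dependence on $D_0$ in the proof of Theorem \ref{thm_rr}. I would therefore rather redo the final summation step directly with $D_0=0$, where the case analysis collapses to a single case.
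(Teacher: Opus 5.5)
Your proposal is correct and matches the paper, which presents this corollary as an immediate specialization of Theorem~\ref{thm_rr}: with $D_0=0$ the first branch of the min is vacuous, the $\tilde{\delta}(\beta_2)\sqrt{D_0}$ term vanishes, and multiplying by $2\sqrt{dD_1}$ (with $D_1>0$) gives the claim. Your worry about hidden constants is justified, and there is one correction to how you would fix it.

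\begin{itemize}
\item \textbf{The worry is real.} The paper's $\tilde{C}_{10}$ in the random-shuffling proof contains a term with $\sqrt{5D_0nd}$ in the denominator. It comes from bounding the small-gradient case $|\partial_\alpha f(x_{k,0})|\lesssim \Delta_k/(1-\beta_2)$ through the first branch of the min, so plugging $D_0=0$ into the stated bound does not literally work.
\item \textbf{The case split does not collapse when $D_0=0$.} Since $v_{k,-1}$ is built from gradients at earlier iterates, you only have $\sqrt{v}\lesssim\sqrt{dD_1}\,\|\nabla f(x_{k,0})\|_2+\mathcal{O}(\Delta_k/(1-\beta_2))$, not $\sqrt{v}\lesssim\sqrt{dD_1}\,\|\nabla f(x_{k,0})\|_2$. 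The small-gradient drift case therefore remains.
\item \textbf{How to handle that case.} Bound it through the second branch, exactly as in Case (b) of the with-replacement proof. There $C_{10}$ is free of $D_0$, and the case contributes only $\mathcal{O}(1/\sqrt{k})$ to the per-iteration bound, which ends up inside the $\mathcal{O}(\log T/\sqrt{T})$ term.
\end{itemize}
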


The proof of Theorem \ref{thm_wr} and Theorem \ref{thm_rr} can be seen in Section \ref{section:lemma_concentrate_v} -- \ref{appendix_main_body_wr} and  Appendix \ref{appendix:thm_rr}, respectively.

\subsection{Divergence Results}
\label{sec_div}
Now we prove that small-$\beta_2$ Adam can diverge, and the divergence region is problem-dependent. 
The divergence of small-$\beta_2$ Adam suggests that ``large $\beta_2$'' is necessary for Theorem \ref{thm_wr}.

We construct a counter-example in {\small $\functionclass$}.  Consider {\small $f(x)= \frac{1}{n}\sum_{i=0}^{n-1} f_i(x)$} for $n \geq 3$ and  $x \in \mathbb{R}$
, we define  $ f_i(x)$ as:
\vspace{-1mm}
\begin{equation}
  f_{0}(x)=\left\{\begin{array}{ll} \left(1 + (n-1)a \right)x, &  x \geq -1\\ \frac{\left(1 + (n-1)a \right)}{2}(x+2)^2 -\frac{3n}{2}, & x < -1 \end{array}\right., \quad 
  f_{i}(x)=\left\{\begin{array}{ll} -ax, &  x \geq -1\\ -\frac{a}{2}(x+2)^2 +\frac{3}{2}, & x < -1 \end{array}\right. \quad \text{for $i>0$,} \label{counterexample1}
  \vspace{-1mm}
\end{equation}
where $a>0$. Summing up all the $f_i(x)$, one can see that: for any finite positive $a>0$,  $f(x)$ belongs to $\functionclass$. For instance, when $a = 1/(n-1)^2$,  we have $D_1 =2n^2$, $D_0 = 0$, and $f(x)$
is lower bounded with optimal $x^*=-2$. Further,  it satisfies Assumption \ref{assum2} but not bounded variance \eqref{eq_a2_bdd_variance}, which restricts $D_1$ to be $n$. 
Function \eqref{counterexample1} is modified based on \citep{reddi2019convergence} but it allows both iterates and gradients to diverge to infinity.
We present the divergence result in Theorem  \ref{thm_diverge}. We note that Sign-SGD is a special case of Adam with $\beta_1 =\beta_2 =0$, so the following divergence result applies to Sign-SGD as well.

\begin{theorem}\label{thm_diverge} 
For any problem class $\functionclass$ with $n \geq 3$, $D_1 \geq 2n^2$, and $D_0 \geq 0$, there exists an $f(x) \in \functionclass$, s.t. when $\betabeta$ satisfies analytic conditions \eqref{diverge_c1}, \eqref{diverge_c2}, \eqref{diverge_c3} in Section \ref{appendix:thm_diverge}, Adam's iterates, gradients of the iterates, and function values of the iterates all diverge to infinity.
The size of the region depends on $n$ and it expands to the whole region $[0,1]^2$ as $n$ grows to infinity. 
\end{theorem}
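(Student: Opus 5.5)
The plan is to take the scalar instance \eqref{counterexample1} and make its gradients grow along the escape direction, so that all three quantities blow up together. Concretely, fix $\phi:\mathbb{R}\to\mathbb{R}$ with $\phi'(x)=x$ for $x\geq 1$, extended on $x<1$ by a smooth piece so that $\phi$ is $1$-smooth and lower bounded; for instance, a shifted quadratic glued as in \eqref{counterexample1}. Then set $f_0=(1+(n-1)a)\phi$ and $f_i=-a\phi$ for $i>0$, so that $f=\phi/n$. With $a=1/(n-1)^2$ the identity $\sum_i|f_i'|^2=\bigl((1+(n-1)a)^2+(n-1)a^2\bigr)|\phi'|^2$ together with $|f'|=|\phi'|/n$ gives Assumption~\ref{assum2} with $D_1\le 2n^2$ and $D_0=0$. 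Assumption~\ref{assum1} follows after rescaling by $L$, and $f^*>-\infty$. Hence $f$ lies in $\functionclass$ for every $D_1\ge 2n^2$ and $D_0\ge 0$. On $x\ge 1$ we have $f'(x)=x/n$ and $f(x)=x^2/(2n)+\mathrm{const}$. Therefore, once I show $x_k\to+\infty$, the gradients (full and componentwise) and the function values diverge to infinity automatically. The whole task reduces to proving $x_k\to+\infty$.

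The key structural observation is that on $x\ge 1$ every component gradient equals a fixed constant times the common factor $x$: it is $c_0x$ with $c_0=1+(n-1)a$ for $f_0$, and $-ax$ for the others. Adam's update $m_k/\sqrt{v_k}$ is invariant under a common rescaling of all past gradients. If $x$ were frozen, the dynamics would therefore coincide with Adam on the linear Reddi-type instance, where a sign-like argument applies. I will run a period analysis on a fixed sampling order (cyclic, as in \citep{reddi2019convergence}, or per-epoch for Algorithm~\ref{algorithm_rr}). Write the gradient at step $j$ as $g_j=c_{\tau_j}x_j$. Then
\[
\frac{m_k}{\sqrt{v_k}}=\frac{(1-\beta_1)\sum_{j}\beta_1^{k-j}c_{\tau_j}x_j}{\sqrt{(1-\beta_2)\sum_j\beta_2^{k-j}c_{\tau_j}^2x_j^2}}.
\]
I will compare this quantity with the frozen version obtained by replacing every $x_j$ by $x_k$.

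Since the effective step satisfies $|x_{k+1}-x_k|\le \eta_k\,\frac{1-\beta_1}{\sqrt{1-\beta_2}}\,\frac{1}{1-\beta_1/\sqrt{\beta_2}}=:C\eta_k$ (this uses $\beta_1<\sqrt{\beta_2}$, or directly conditions \eqref{diverge_c1}--\eqref{diverge_c3}), the relative drift over the effective memory window $M$ is $x_j/x_k=1+O(M\eta_k/x_k)$. This tends to $1$ as $k\to\infty$ with $x_k\ge 1$. Truncating geometric tails beyond the window costs $O(\beta_1^M+\beta_2^{M/2})$.

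Next, in the frozen regime, I sum the normalized updates over one period of $n$ steps. The single $f_0$ step receives a large gradient $c_0x$, but it also inflates $v$. The $n-1$ steps of $f_i$ each push $x$ upward with magnitude governed by $a/\sqrt{\text{inflated }v}$. Conditions \eqref{diverge_c1}--\eqref{diverge_c3} are exactly the analytic inequalities in $(\beta_1,\beta_2,n)$ under which the net per-period sum of the frozen normalized updates is $\le -\kappa<0$. Here $\kappa$ is computable, and the negative sign means $x$ increases because $x_{k+1}=x_k-\eta_k\,m_k/\sqrt{v_k}$. I would derive these conditions explicitly as closed-form geometric sums, and check that they hold on a region that fills $[0,1]^2$ as $n\to\infty$. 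The reason is that the dominant term is $(n-1)a$ versus $c_0$ weighted by $\beta_2$-averages, and $\beta_2^n\to0$. Combining this with the drift estimate, each period $p$ gives $x_{p+1,0}-x_{p,0}\ge \eta_p(\kappa-o(1))n$. Since $\sum_p\eta_p=\infty$ with $\eta_p=\eta_0/\sqrt p$, we get $x_p\to\infty$ like $\sqrt p$. For intermediate steps inside a period, the bound $|x_{k+1}-x_k|\le C\eta_k$ extends the divergence to every iterate. Under random shuffling, the same bound must hold for every permutation, which is why the conditions are uniform over orders. For with-replacement sampling I would instead use a law-of-large-numbers argument on the frozen drift, but the theorem only needs one instance and one sampling scheme.

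Two remaining points need care. The first is getting into the region $x\ge1$ and staying there. I will initialize $x_{1}\ge 1$, and I will show by induction that once $x_k$ exceeds a threshold $X_0$ depending on $(\beta_1,\beta_2,n,\eta_0)$, the drift error is below $\kappa/2$, so $x$ never returns. Before the threshold is reached, boundedness of steps and the positive drift still apply, because the relative-drift error is $O(\eta_k/x_k)\le O(\eta_k)$, which is small for large $k$. The second point, and the main obstacle, is the drift-comparison step. I must show that the nonlinear map from the recent history $(x_j)$ to $m_k/\sqrt{v_k}$ is Lipschitz in the ratios $x_j/x_k$, uniformly in $k$, while $v_k$ is not bounded away from zero a priori. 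The way around this is that $v_k\ge(1-\beta_2)\beta_2^{n-1}c_{\min}^2x_k^2(1-o(1))$ because each period contains an $f_0$ sample. This lower bound makes the ratio's denominator comparable to $x_k^2$, so the perturbation bound follows by elementary calculus.
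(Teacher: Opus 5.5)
\textbf{The central step is asserted, not derived, and for your instance it is false.} You state that (C1)--(C3) are exactly the conditions under which the frozen per-period sum of normalized updates is $\le -\kappa<0$. In the paper, that per-period computation \emph{is} the proof:
\begin{itemize}
\item the epoch movement $\sum_i\delta_{k,i}$ is split into (I), (II), (III);
\item (C2) gives $m_{k-1,n-1}<0$, hence (I)$>0$;
\item (II) is bounded below by discarding the steps with $\beta_2^j n^2>0.1$;
\item $|\text{(III)}|$ is bounded above by $\frac{1-\beta_1+\beta_1 n}{\sqrt{1-\beta_2}}$;
\item (C1) says the lower bound on (II) dominates the upper bound on $|\text{(III)}|$.
\end{itemize}
All of this is done for $a=1$, where $f_0'=n$ and $f_i'=-1$. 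So the $n^2$ in $\log_{\beta_2}(1/(10n^2))$, the $\beta_1 n$ in (C1), and the factor $n$ on the right of (C2) are all the gradient ratio $c_0/a$.

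With your choice $a=1/(n-1)^2$, the frozen dynamics is the Reddi-type instance with ratio $c_0/a=n(n-1)$, and (C1)--(C3) no longer suffice. Take $n=20$ and $(\beta_1,\beta_2)=(0.5,0.01)$. Then (C1) holds ($11.7\ge 10.55$), (C2) holds, and (C3) is met by a small $\eta_0$. In the frozen cyclic dynamics, however, the momentum, driven by the $f_0$ gradient $c_0=380a$, stays positive for eight consecutive steps. Meanwhile $v$ returns to about $a^2x^2$ within three or four steps. The leftward motion per period is about $54.6\,\eta$ against about $10.5\,\eta$ rightward. So on $\{x\ge 1\}$ the drift points toward the minimizer, and your mechanism gives no divergence. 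Separately, (C3) plays no role in the sign; in the paper it only keeps iterates inside the linear piece.

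\textbf{Where this comes from and how to close it.} The paper's remark that ``the same proof procedure applies to any finite positive $a$'' makes the same leap, so you inherit this gap rather than introduce it. Your argument still closes only in one of two ways.
\begin{itemize}
\item Take $a=1$. Then your frozen per-period sum coincides (up to $-\eta_k$) with the paper's $\sum_i\delta_{k,i}$, and (C1)--(C2) give its sign. But then $f\in\functionclass$ only when $D_1\ge n^4+n^3-n^2$.
\item Redo the (I)/(II)/(III) estimates with $n$ replaced by $c_0/a$. This yields different conditions from (C1)--(C3).
\end{itemize}
Note also that $D_1\le 2n^2$ for $a=1/(n-1)^2$ already fails at $n=3$, where $D_1=171/8>18$.

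\textbf{Two smaller points.}
\begin{itemize}
\item The example above has $\beta_1>\sqrt{\beta_2}$, so (C1)--(C3) do not give you the step bound of Lemma \ref{lemma_delta}. You need the instance-specific bound from $v_k\ge(1-\beta_2)a^2x_k^2$ that you sketch at the end.
\item Your planned check that the region fills $[0,1]^2$ cannot succeed for (C1) as written, and the paper gives no argument for it either. The left side of (C1) is at most $(n-1)/\sqrt{1.1}$, while the right side is at least $\beta_1 n/\sqrt{1-\beta_2}$. This forces $1.1\beta_1^2+\beta_2<1$ for every $n$.
\end{itemize}

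\textbf{What your route buys once repaired.} The quadratic escape branch, together with scale invariance of $m_k/\sqrt{v_k}$ and the drift comparison, yields divergence of iterates, gradients, and function values along a single trajectory. The paper avoids any comparison by working where gradients are exactly constant. That gives only iterates and function values, and the paper merely asserts the gradient case via a different initialization in the quadratic piece.
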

\begin{figure}[h!]
\begin{center}
\vspace{-0.5cm}
\subfigure[Divergence region]{
    \begin{minipage}[t]{0.35\linewidth}
    \centering
    \includegraphics[width=\textwidth]{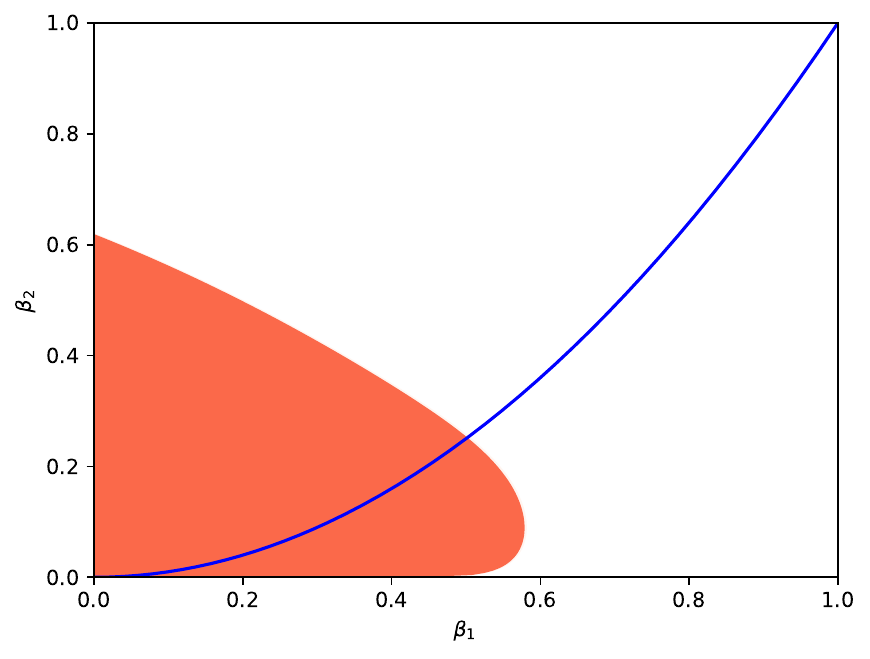}
    \end{minipage} }%
    \subfigure[The effect of small $\beta_2$]{
    \begin{minipage}[t]{0.35\linewidth}
    \centering
    \includegraphics[width=\linewidth]{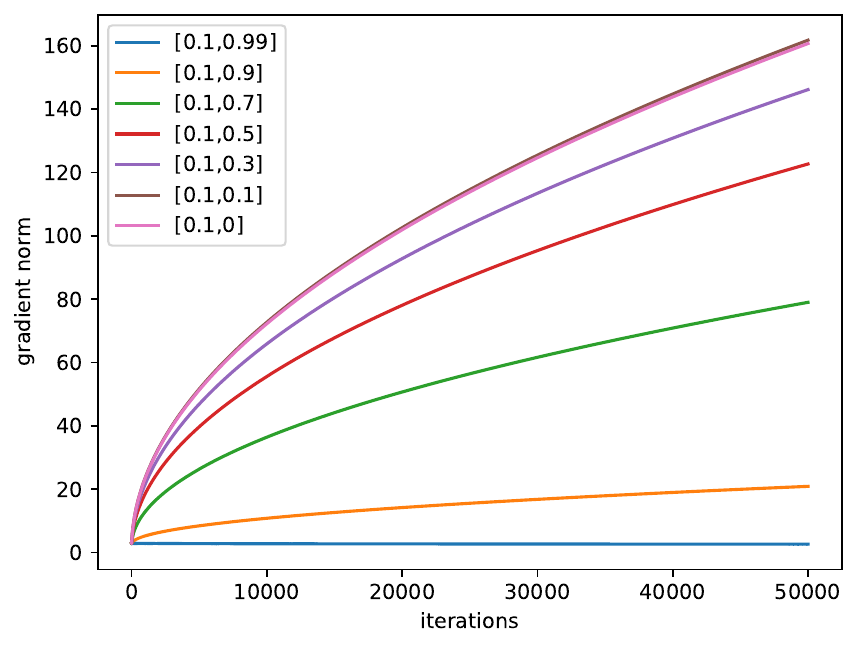}
    \end{minipage}%
    }%
\caption{{\small 
{\bf (a):} On function \eqref{counterexample1} with  $n=20$ and $a= 1$, Adam diverges in the colored region. The region is plotted by solving condition \eqref{diverge_c1}, \eqref{diverge_c2}, \eqref{diverge_c3} in \texttt{NumPy}.
The blue curve satisfies  $\beta_1 = \sqrt{\beta_2}$.  {\bf (b):} When $\beta_2$  is small, Adam diverges. We use function \eqref{counterexample1} with initialization $x =-5$ and $n=20$.  The labels in (b) stand for $[\beta_1,\beta_2]$. }
}
\label{fig:counter_boundary_paper}
\vspace{-2mm}
\end{center}
\end{figure}
  The proof can be seen in Section \ref{appendix:thm_diverge}.  We  find 
 the ``divergence region'' is problem-dependent. Moreover, it
 always stays below the ``convergence threshold'' $\gamma_1(n)$ in Theorem \ref{thm_wr}, so the two results are self-consistent (see the remark in Section \ref{appendix:thm_diverge}).   The divergence of small-$\beta_2$ Adam is also observed numerically (see Figure \ref{fig:counter_boundary_paper} (b)).
  These results characterize Adam's divergence behavior both numerically and theoretically. 

  In Figure \ref{fig:toy_gradient}, we provide $\betabeta$ grid-search on the counter-example \eqref{counterexample1}. We find that Adam's behavior aligns with our prediction in theory: when initialized in the linear side of function \eqref{counterexample1}, i.e., $x \geq -1$, the iterates will keep moving rightwards when $\beta_1$ and $\beta_2$ are both small, causing divergence.  The divergence region expands with $n$. Similarly, when initialized in the quadratic region, i.e., $x < -1$, the iterates will keep moving leftwards, so the gradient norm also diverges (this is also shown in Figure \ref{fig:counter_boundary_paper} (b)).

\begin{figure*}[t]
      \vspace{-1.2cm}
        \centering
          \vspace{-2mm}
    \centering
    \subfigure[$n=5$]{
    \begin{minipage}[t]{0.25\linewidth}
    \centering
    \includegraphics[width=\linewidth]{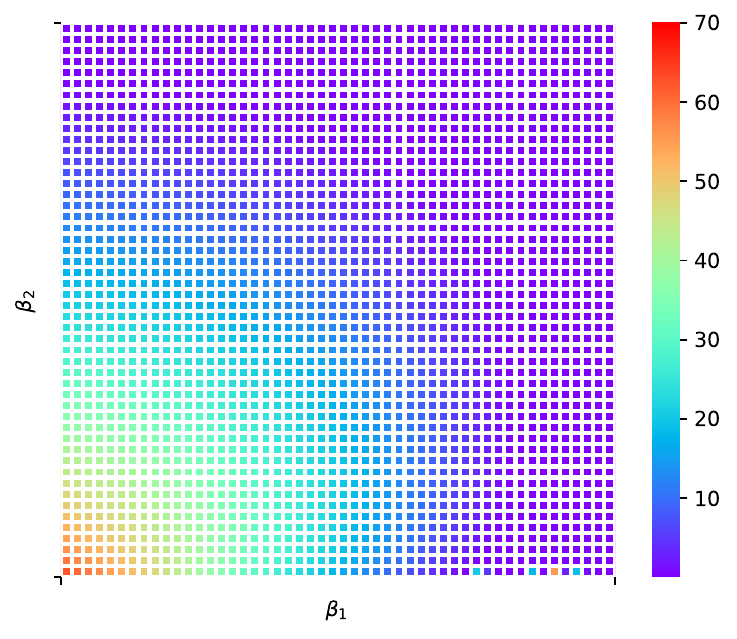}
    \end{minipage}%
    }%
    \subfigure[$n=10$]{
      \begin{minipage}[t]{0.25\linewidth}
      \centering
    \includegraphics[width=\linewidth]{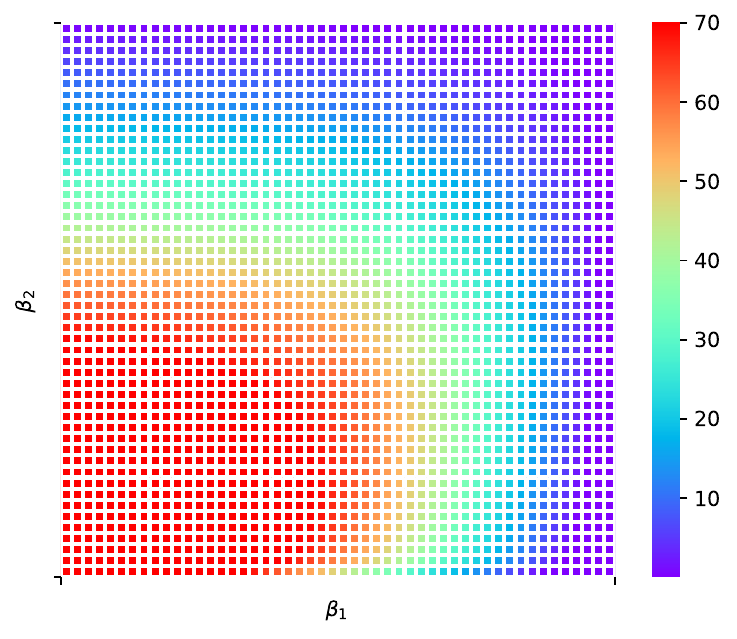}
      \end{minipage}%
      }%
    \subfigure[$n=15$]{
      \begin{minipage}[t]{0.25\linewidth}
      \centering
    \includegraphics[width=\linewidth]{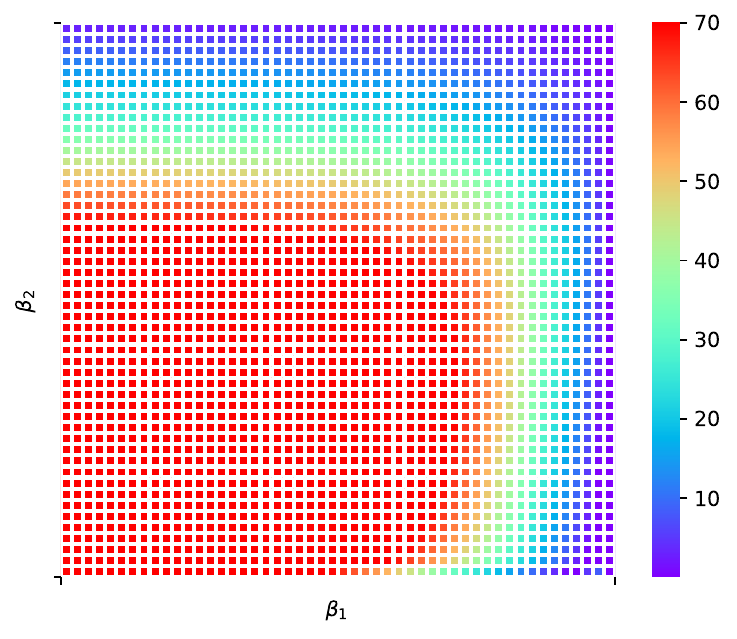}
      \end{minipage}%
      }%
    \subfigure[$n=20$]{
      \begin{minipage}[t]{0.25\linewidth}
  \includegraphics[width=\linewidth]{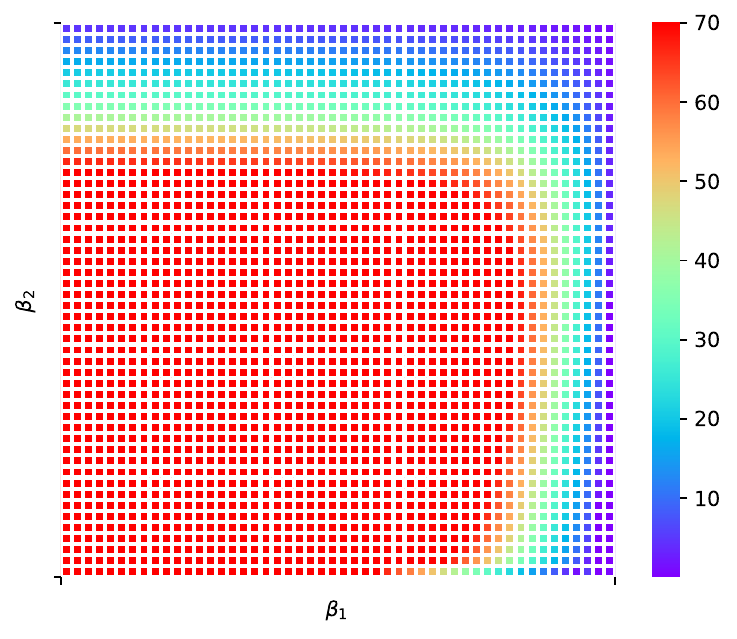}
      \end{minipage}%
      }%
      \\
        \subfigure[$n=5$]{
        \begin{minipage}[t]{0.25\linewidth}
        \centering
        \includegraphics[width=\linewidth]{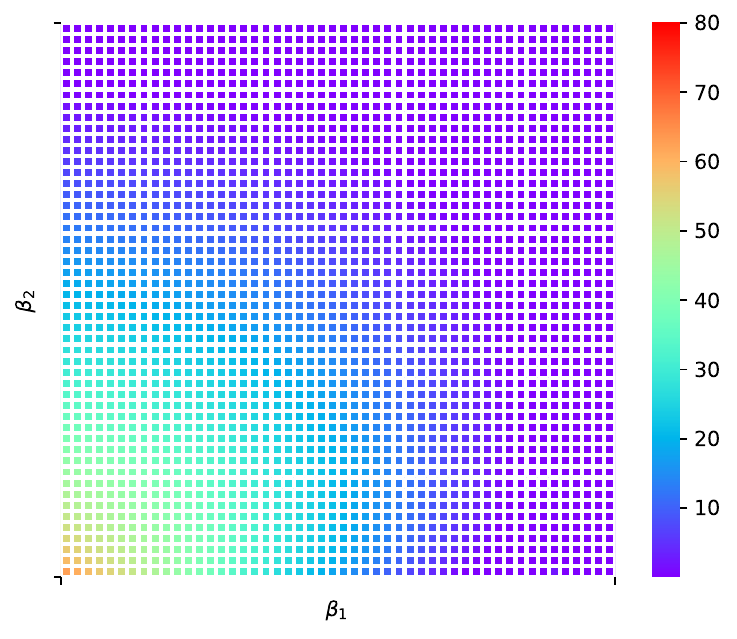}
        \end{minipage}%
        }%
        \subfigure[$n=10$]{
          \begin{minipage}[t]{0.25\linewidth}
          \centering
        \includegraphics[width=\linewidth]{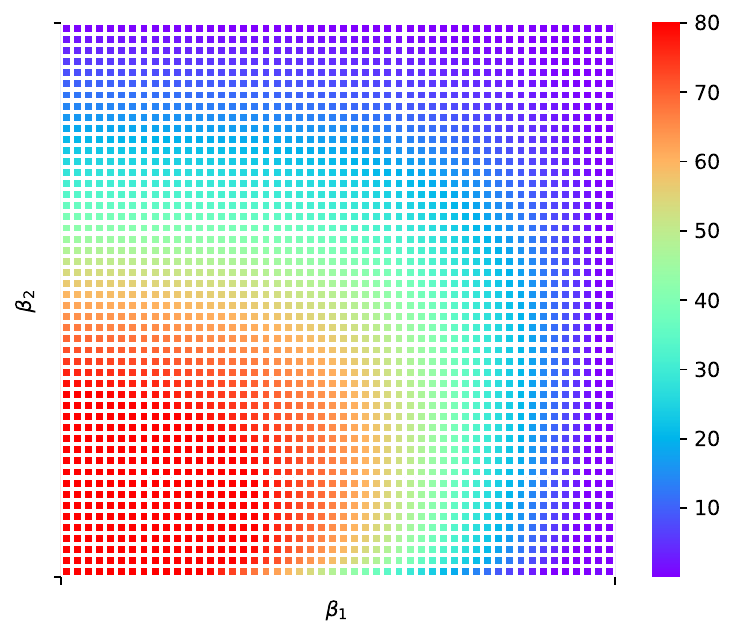}
          \end{minipage}%
          }%
        \subfigure[$n=15$]{
          \begin{minipage}[t]{0.25\linewidth}
          \centering
        \includegraphics[width=\linewidth]{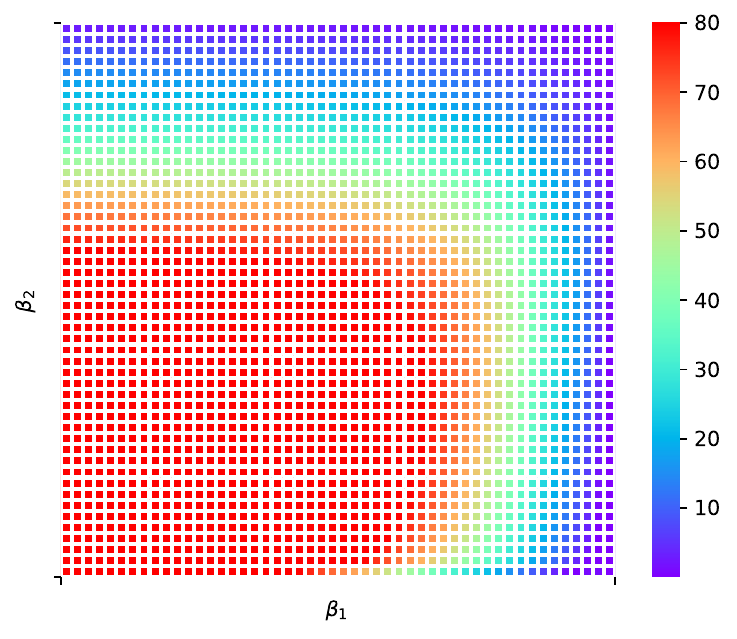}
          \end{minipage}%
          }%
        \subfigure[$n=20$]{
          \begin{minipage}[t]{0.25\linewidth}
          \centering
        \includegraphics[width=\linewidth]{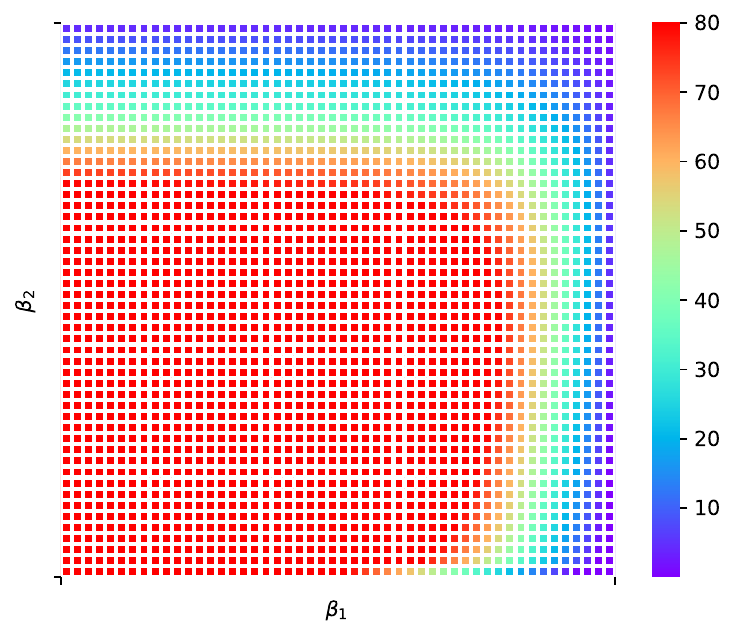}
          \end{minipage}%
          }%
        \centering
        \vspace{-3mm}
        \caption{{\small {\bf (a) - (d):}  The optimality gap $|x-x^*|
        $ after running 50k iterations of Adam on \eqref{counterexample1}. We use initialization $x = 1$. {\bf (e-h):} The gradient norm  after running 50k iterations of Adam on  \eqref{counterexample1}. We use initialization $x = -5$.}}
        \vspace{-2mm}
        \label{fig:toy_gradient}
\end{figure*}

Finally, we emphasize that the orange region in Figure \ref{fig:counter_boundary_paper} (a) is {\it not} discussed in \citep{reddi2019convergence} because we consider $n$ fixed while they allow $n$ changing. 
With Theorem \ref{thm_wr} and Theorem  \ref{thm_diverge}, we establish a clearer image on the relation between $\betabeta$ and Adam's qualitative behavior.

\paragraph{Remark 7: on the divergence of SignSGD.} Theorem \ref{thm_diverge} implies that SignSGD (Adam with $\beta_1 = \beta_2 = 0$) can diverge. The divergence can be avoided by modifying the algorithm, such as adding momentum \citep{sun2023momentum,jiang2025improved}; or by introducing stronger assumptions like the bounded variance condition in \eqref{eq_a2_bdd_variance} and increasing batch size \citep{bernstein2018signsgd}. In fact, the bounded variance assumption alone is sufficient to exclude our counter-example from the considered function class, and SignSGD will converge to the neighborhood of critical points with $\eta_k = \eta_0 /\sqrt{k}$ in that case \footnote{We have not recognized a clean convergence proof of SignSGD under bounded variance  \eqref{eq_a2_bdd_variance} without increasing batch size in the literature, but it is expected to be a simple proof. The convergence fails when relaxing \eqref{eq_a2_bdd_variance} to our Assumption \ref{assum2}.}. However, the analysis under bounded variance does not reveal the true divergent behavior of SignSGD. Our counter-example, which satisfies Assumption \ref{assum2} (with $D_1 \geq  2n^2$) but not bounded variance \eqref{eq_a2_bdd_variance} (with $D_1 = n$), provides additional motivation to relax \eqref{eq_a2_bdd_variance}. An intriguing open question is whether there exist more counter-examples with $D_1 \in (n, 2n^2)$. We leave it for future investigation.

\paragraph{Remark 8: more fine-grained characterization on the critical boundary.} Integrating the divergence and convergence theories discussed above, one can conclude that there exists (at least) one critical boundary $(\beta_1^*, \beta_2^*)$ that demarcates the di-convergence phase transition. Since both  the divergence region and convergence region depend on $n$ (or equivalently, depend on batch size $\frac{\mathcal{D}}{n}$),  the critical boundary $(\beta_1^*, \beta_2^*)$ must also be dependent on the batch size and is located in the white region in Fig. \ref{fig:intro_paper} (b).  Note that our theory implies the existence of the critical boundary, whereas we have not fully determined the precise number and the shape of the boundar(-ies). Nevertheless, our experimental results in Fig. \ref{fig:intro_paper} (c, d) suggest that there exists only one boundary, which likely resembles the geometry of the blue region in  Fig. \ref{fig:intro_paper} (b).  We only point out its existence here and leave a more precise characterization as a future direction.

\section{Key Lemmas for the Convergence Result in  Theorem \ref{thm_wr}}
\label{sec:proofidea_wr}

Here, we summarize the key challenges in the proof of Theorem \ref{thm_wr}. The proof of Theorem \ref{thm_wr} can be seen in Section \ref{appendix_main_body_wr}. The proof of Theorem \ref{thm_rr}
follows a similar idea and we relegate the proof to Appendix \ref{appendix:thm_rr}.

\paragraph{Additional notations.} 
For $a \in \mathbb{R}$, we use $\lceil a \rceil$ to denote taking the ceiling (rounding $a$ up). For $a \in \mathbb{N}$, we denote $[a]$ as the index set $\{1,\cdots, a\}$. For the function indices $n$ in \eqref{finite_sum}, we abuse the notation  $[n]=\{0, \cdots, n-1\}$. 
We denote $\partial_l f(x)$  and $\partial_l f_i(x)$  as the $l$-th component of $\nabla f(x)$ and  $\nabla f_i(x)$, respectively. We use $x_{l,k}$ to denote the $l$-th component of $x$ at the $k$-th iteration, i.e., $x_k$. Similarly for $v_{l,k}$ and $m_{l,k}$.  We use $\mathbb{I}(\cdot)$ as the indicator function. $\mathbb{E}(\cdot)$ means taking expectation over the whole trajectory.  
\RED{For $k\ge 1$, $\mathbb E_k[\cdot]$ denotes the conditional expectation given the entire history up to the end of iteration $k-1$
(i.e., given $x_1,m_0,v_0$ and all past sampled indices $\tau_1,\ldots,\tau_{k-1}$), but excluding the fresh sample $\tau_k$.}
We abuse the notation of $\alpha$ as follows: for $\partial_l f_{\alpha}(x)$, we  define $\alpha:= \arg \max_i|\partial_l f_i(x)|$; for $\partial_\alpha f(x)$,  we define $\alpha:= \arg \max_l|\partial_l f(x)|$. Similarly for $m_{\alpha, k}$ and $v_{\alpha,k}$. We use ``i.i.d.'' and ``r.v.'' as abbreviations for ``independent and identically distributed'' and ``random variable'', respectively.

The core of the proof  is to characterize when  the expected update direction of Adam $\Ex \left(\frac{m_k}{\sqrt{v_k}}
\right)$ constitutes a descent direction, i.e., a direction that lies in the dual cone of gradient $\nabla f(x_k)$. There are at least twofold challenges. {\bf First},  $v_k$ is a random variable and it appears in the denominator.  This makes the entire system a stochastic non-linear dynamic system, which is difficult to analyze in general.  Further, $v_k$ can potentially hit 0, which imposes extra difficulties.  {\bf Second}, $m_{k}$ contains heavy historical signals, which distort the trajectory from the gradient direction.  

\begin{figure}[h]
\begin{center}
\vspace{-1mm}
\centerline{\includegraphics[width=0.45\textwidth]{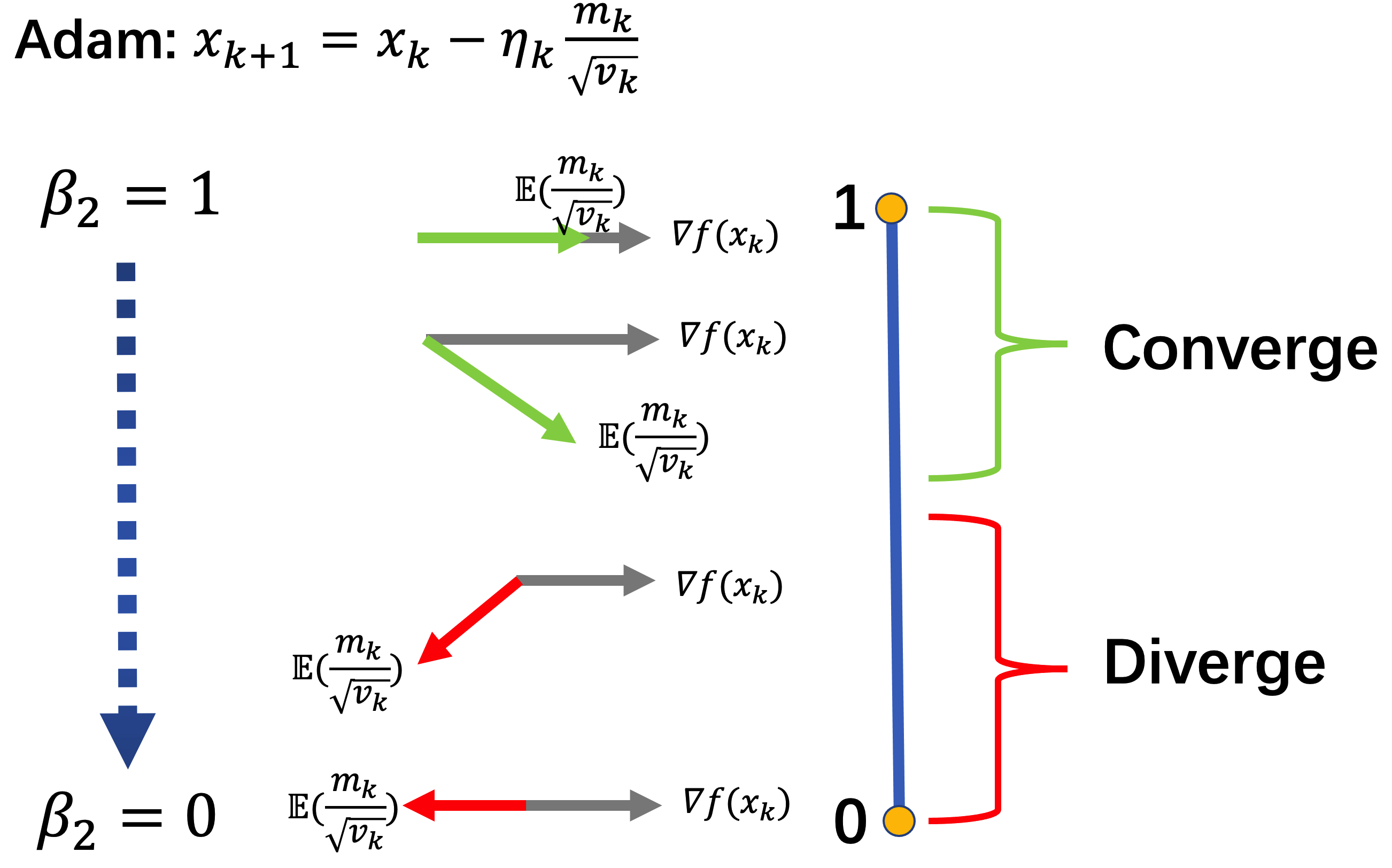}}
\caption{{\small An illustration of the changes of Adam's update direction when $\beta_2$ changes. } }
\label{fig:update_direction}
\end{center}
\vspace{-1cm}
\end{figure}

Our key insights are illustrated in Figure \ref{fig:update_direction}: we find that Adam's update direction  $\Ex \left(\frac{m_k}{\sqrt{v_k}}\right)$ is close to $\nabla f(x_k)$ when $\beta_2$ is large (leading to convergence) and starts to deviate to the opposite direction when $ \beta_2$ is small (causing divergence). We now introduce the proof for our convergence results. The proof for the divergence result is shown later in Section \ref{sec_div}.

\vspace{-3mm}
\paragraph{Step I: Concentration effects of $\frac{1}{\sqrt{v_k}}$ when $\beta_2$ is large.}
We find that $\frac{1}{\sqrt{v_k}}$ concentrates around $\frac{1}{\sqrt{\Exk (v_k)}}$ when $\beta_2$ is large. In particular, we prove that:
$$ 
\frac{1}{\sqrt{v_k}} \approx  \frac{1}{\sqrt{\Exk (v_k)}},  \text{ w.h.p., when $\beta_2$ is large}
$$
With this concentration property, $\frac{1}{\sqrt{v_k}}$ is stabilized. {\bf Why does large $\beta_2$ help?} Intuitively, this is because large $\beta_2$ slows down the changes of $v_k$, and its behavior will become largely predictable. As a result, $-\frac{\nabla f_{\tau_k}(x_k)}{\sqrt{v_k}}$ becomes a descent direction in this case, i.e.,
$$\Exk\left(\frac{\nabla f_{\tau_k}(x_k)}{\sqrt{v_k}}\right) \approx \frac{\Exk \left(\nabla f_{\tau_k}(x_k)\right)}{\sqrt{\Exk (v_k)}} = \frac{\nabla f(x_k)}{\sqrt{\Exk (v_k)}}, $$ 
which lies in the dual cone of the gradient direction $\nabla f(x_k)$.

 It requires substantial effort to compare $ \frac{1}{\sqrt{v_k}}$ and $ \frac{1}{\sqrt{\Exk (v_k)}}$. In statistical theory, it is common to show the concentration of a random variable $x$ around its mean $\Ex (x)$, yet it is much less common  to characterize how $\frac{1}{\sqrt{x}}$ deviates from $\frac{1}{\sqrt{\Ex(x)}}$, especially when $x$ can be arbitrarily close to 0  and no extra boundedness condition is imposed.  Without any specific property on  r.v. $x$,   every small deviation in  $\sqrt{x}$ will be amplified  in $\frac{1}{\sqrt{x}}$  and it can behave very badly near zero and become arbitrarily far away from $\frac{1}{\sqrt{\Ex (x)}}$ \citep{durrett2019probability}.

In Lemma \ref{lemma_concentrate_v}, we  prove that $\frac{1}{\sqrt{v_k}}$ concentrates around $\frac{1}{\sqrt{\Exk (v_k)}}$ {\it when $\beta_2$ is large}, and the result holds {\it without} any boundedness condition on stochastic gradients. We prove this result by utilizing two special properties of  Adam: 1) stochastic gradients have a ``geometric sum'' structure in  $v_k$.  2) the index of stochastic gradients are uniformly sampled from a finite index set. With these two special properties, we find the behavior of $\frac{1}{\sqrt{v_k}}$ is largely predictable when $\beta_2$ is large. 

Technically, the concentration result is established via two steps: (i) map the dynamics of  $1/\sqrt{v_k}$ to the dynamics of a sequence of i.i.d. Bernoulli r.v.s, which is bounded and is much easier to analyze; (ii) map the dynamics back by several decoupling steps.  Step (i) and (ii) allow us to track the dynamics of possibly unbounded r.v. sequence using bounded Bernoulli proxies.  The rigorous concentration effect is shown in Lemma \ref{lemma_concentrate_v} as follows.  Lemma \ref{lemma_concentrate_v} can be used as a generic tool for analyzing Adam-type algorithms.

\begin{lemma}\label{lemma_concentrate_v}
Assume Algorithm \ref{algorithm_wr} satisfies: $\beta_2 \geq \frac{1}{2}$; $0\leq \beta_1<\sqrt{\beta_2}<1$; $\beta_2$ satisfies $\frac{1-\beta_2}{\beta_2^n} <  \frac{1}{8n} - \frac{\delta}{4}$; and $\eta_k = \frac{\eta_0}{\sqrt{k}}$. For any $f(x) \in \functionclass$, and for any $0< \delta\leq \frac{1}{4n}$, let $k\geq \lceil\frac{\log (n\delta)}{\log \beta_2} \rceil +n + 1$. 
Define the threshold $R_k := \frac{16 \sqrt{2} \Delta_1 }{ \sqrt{k}}\left(\lceil\frac{\log (n\delta)}{\log \beta_2}\rceil +n \right)$, where $\Delta_1$ is defined in \eqref{eq_constants_wr}.
For any index $ l \ \in [d]$, if $\max_i |\partial_l f_i (x_k)| \geq R_k$, then with probability at least 
$$
1-n \exp\left(-\frac{\delta^2}{(1-\beta_2)(\frac{28}{3n}+\frac{8}{3}\delta)}\right),
$$
we have the following concentration bound:
\begin{equation} \label{eq_concentrate_v}
    \frac{C_{\text{lower}}}{\sqrt{\mathbb{E}_k(v_{l,k})}} \leq \frac{1}{\sqrt{v_{l,k}}} \leq \frac{C_{\text{upper}}}{\sqrt{\mathbb{E}_k(v_{l,k})}},
\end{equation}
where the constants are defined as:
$$
C_{\text{lower}} := 1- (1-\beta_2) \frac{4n}{(1-2n\delta)\beta_2^n}, \quad 
C_{\text{upper}} := \left(1-(1-\beta_2)\frac{8n}{(1-2n\delta)\beta_2^n}\right)^{-1/2}.
$$
\end{lemma}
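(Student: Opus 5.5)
Fix a coordinate $l$ and write $g_{i}(x):=\partial_l f_i(x)$. Unrolling the second-moment recursion gives
\[
v_{l,k}=\beta_2^{k}v_{l,0}+(1-\beta_2)\sum_{j=1}^{k}\beta_2^{k-j}\,g_{\tau_j}(x_j)^2 .
\]
The plan has three steps. First, replace each past gradient $g_{\tau_j}(x_j)$ by the corresponding gradient at the current point, $g_{\tau_j}(x_k)$. Second, show that the resulting geometric sum is a weighted sum of bounded Bernoulli indicators, so it concentrates. Third, convert concentration of $v_{l,k}$ around $\mathbb{E}_k(v_{l,k})$ into the stated two-sided bound on $1/\sqrt{v_{l,k}}$.

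\emph{Step 1: truncation and drift control.} Split the sum at lag $m:=\lceil \log(n\delta)/\log\beta_2\rceil+n$, which is admissible because $k\geq m+1$. The terms with $k-j\geq m$ carry total weight at most $\beta_2^{m}\leq n\delta\,\beta_2^{n}$. They are nonnegative, so they can only help the lower bound on $v_{l,k}$; for the upper bound they are absorbed into the $2n\delta$ factor. The remaining $j\in(k-m,k]$ lie in a window of length $m$. Adam's per-coordinate update is bounded: for $\beta_1<\sqrt{\beta_2}$ one has $|m_{l,k}|/\sqrt{v_{l,k}}\leq (1-\beta_1)/\sqrt{(1-\beta_2)(1-\beta_1^2/\beta_2)}$, which is (up to constants) the $\Delta_1$ of \eqref{eq_constants_wr}. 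Hence $\|x_j-x_k\|\lesssim \Delta_1\, m\,\eta_0/\sqrt{k-m}\lesssim \Delta_1 m/\sqrt{k}$. By Assumption \ref{assum1}, $|g_i(x_j)-g_i(x_k)|\leq L\|x_j-x_k\|$. Now let $\alpha$ attain $\max_i|g_i(x_k)|\geq R_k$. The choice of $R_k$, with constant $16\sqrt2$, makes these drifts at most a small fraction of $|g_\alpha(x_k)|$. Therefore $g_{\tau_j}(x_j)^2$ is sandwiched between $\tfrac12 g_{\tau_j}(x_k)^2$ and $2g_{\tau_j}(x_k)^2$, up to additive errors that are small relative to $g_\alpha(x_k)^2$. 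The same sandwich holds for $\mathbb{E}_k(v_{l,k})=\beta_2 v_{l,k-1}+(1-\beta_2)\tfrac1n\sum_i g_i(x_k)^2$.

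\emph{Step 2: Bernoulli concentration.} With the gradients frozen at $x_k$, the windowed sum equals $(1-\beta_2)\sum_{i}g_i(x_k)^2\sum_{j}\beta_2^{k-j}\mathbb{I}(\tau_j=i)$. For each fixed $i$, the indicators $\mathbb{I}(\tau_j=i)$ are i.i.d.\ Bernoulli$(1/n)$ and independent of $x_k$'s gradient values at the frozen point. This independence is exactly why we froze at a deterministic-given-history point; if it causes trouble, freeze at $x_{k-m}$ instead and then bridge $x_{k-m}$ to $x_k$ by the same drift bound. Apply Bernstein's inequality to $S_i:=(1-\beta_2)\sum_j\beta_2^{k-j}(\mathbb{I}(\tau_j=i)-1/n)$. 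Each summand is bounded by $1-\beta_2$, and the variance is at most $(1-\beta_2)/n$. This gives $|S_i|\leq\delta$ with probability at least $1-\exp\!\big(-\delta^2/((1-\beta_2)(\tfrac{28}{3n}+\tfrac83\delta))\big)$, and a union bound over the $n$ indices produces the factor $n$. On this event, every index receives geometric mass within $\delta$ of its mean $(1-\beta_2^{m})/n$. Consequently $v_{l,k}$ and $\mathbb{E}_k(v_{l,k})$ are both comparable to $\tfrac1n\sum_i g_i(x_k)^2$, and $\sum_i g_i(x_k)^2\geq g_\alpha(x_k)^2$.

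\emph{Step 3: inverting.} The hypothesis $\frac{1-\beta_2}{\beta_2^n}<\frac1{8n}-\frac\delta4$ guarantees that $1-(1-\beta_2)\frac{8n}{(1-2n\delta)\beta_2^n}>0$, so $C_{\text{upper}}$ is well defined. Writing $v_{l,k}=\mathbb{E}_k(v_{l,k})(1+\rho)$, Steps 1 and 2 give $-(1-\beta_2)\frac{8n}{(1-2n\delta)\beta_2^n}\leq\rho\leq(1-\beta_2)\frac{8n}{(1-2n\delta)\beta_2^n}$. Taking $(1+\rho)^{-1/2}$ then yields $C_{\text{upper}}$ directly. For $C_{\text{lower}}$, use $(1+\rho)^{-1/2}\geq 1-\rho/2$, which explains the halved constant $4n$.

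The main obstacle is Step 1 combined with the independence issue in Step 2. The samples $\tau_j$ inside the window also drive $x_j$, so the relative error must be controlled uniformly without assuming bounded gradients. This works only because the threshold $|g_\alpha(x_k)|\geq R_k$ makes the Lipschitz drift negligible in relative terms. The bookkeeping of the constants $16\sqrt2$, $4n$, $8n$ and $1-2n\delta$ is where I expect the effort to go.
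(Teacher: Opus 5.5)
\textbf{There is a genuine gap between Steps 1--2 and Step 3.} Steps 1--2 can only give constant-factor comparability, never a relative error that vanishes as $\beta_2\to 1$.

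\begin{itemize}
\item The drift sandwich loses a factor of $2$ in each direction.
\item The Bernstein event pins each index's geometric mass only to within an additive $\delta$ of about $1/n$. That is a relative accuracy of $n\delta\le 1/4$.
\item Comparing both $v_{l,k}$ and $\mathbb{E}_k(v_{l,k})$ to the common reference $\frac1n\sum_i g_i(x_k)^2$ with this slack gives $1+\rho\in[c,1/c]$ for an absolute constant $c$. It does not give $|\rho|\le(1-\beta_2)\frac{8n}{(1-2n\delta)\beta_2^n}$, which must tend to $0$ as $\beta_2\to1$. So the claim that Steps 1 and 2 give this bound on $\rho$ does not follow.
\end{itemize}

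Worse, the upper half of your two-sided comparison is false in general. The tail terms $\beta_2^{k}v_{l,0}$ and $(1-\beta_2)\beta_2^{k-j}g_{\tau_j}(x_j)^2$ at large lags have small weight but unbounded magnitude. The initialization is arbitrary, there is no bounded-gradient assumption, and $\|x_j-x_k\|$ can grow like $\sqrt{k}$ over long lags, while $|g_\alpha(x_k)|$ may be as small as $R_k\sim 1/\sqrt{k}$. So these terms cannot be ``absorbed into the $2n\delta$ factor.''

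\textbf{The missing idea is that no comparison of $v_{l,k}$ to the current gradients is needed, because the entire history cancels.} The quantities $v_{l,k}$ and $\mathbb{E}_k(v_{l,k})$ share the term $\beta_2 v_{l,k-1}$ and differ only in the fresh sample. Hence, deterministically,
\[
v_{l,k}-\mathbb{E}_k(v_{l,k})=(1-\beta_2)\Big(g_{\tau_k}(x_k)^2-\tfrac1n\textstyle\sum_i g_i(x_k)^2\Big),\qquad |\rho|\le 2(1-\beta_2)\,r,\quad r:=\frac{\max_i g_i(x_k)^2}{\mathbb{E}_k(v_{l,k})}.
\]
This identity is where the factor $(1-\beta_2)$ comes from.

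The only probabilistic task left is the one-sided bound $r\le\frac{4n}{(1-2n\delta)\beta_2^n}$, that is, a lower bound on $\mathbb{E}_k(v_{l,k})$. The paper obtains it as follows.
\begin{itemize}
\item Drop all nonnegative history except the terms with $\tau_{k-j}=\alpha$ at lags $j=n+1,\dots,n+k^*$, where $k^*=\lceil\log(n\delta)/\log\beta_2\rceil$. Here history only helps, so no upper control of it is ever needed.
\item Use drift control only for the maximizing index $\alpha$: the hypothesis $|g_\alpha(x_k)|\ge R_k$ gives $g_\alpha(x_{k-j})^2\ge g_\alpha(x_k)^2/2$.
\item Apply only the lower tail of Bernstein to $(1-\beta_2)\sum_{t\le k^*}\beta_2^t\mathbb{I}(\tau_{k-n-t}=i)$ for each fixed $i$. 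Then take a union bound over $i$ to cover the random $\alpha$.
\end{itemize}
The union bound also settles your independence concern. The good event involves only the $\tau$'s, so no independence from $x_k$ is required. That is fortunate, since the indicators are in fact not independent of $x_k$.

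With $r$ bounded, the inequalities $(1+|\rho|)^{-1/2}\ge 1-|\rho|/2$ and $(1-|\rho|)^{-1/2}$ give exactly $C_{\text{lower}}$ and $C_{\text{upper}}$. Your reading of the constants $8n$ and $4n$ is right, but they come from this identity, not from Steps 1--2.
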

The proof of Lemma \ref{lemma_concentrate_v} is shown in Section \ref{section:lemma_concentrate_v}. By Lemma \ref{lemma_concentrate_v}, we have  $\frac{1}{\sqrt{v_k}}\approx \frac{1}{\sqrt{\Exk (v_k)}}$ as $\beta_2 \rightarrow 1$.

\vspace{-3mm}
\paragraph{Step II: Potential function to handle $m_k$.} Note that the concentration effect of $\frac{1}{\sqrt{v_k}} \approx \frac{1}{\sqrt{\Ex_k(v_k)}}$ alone is not enough to establish the convergence of Adam. This is because $m_k$ also contains heavy historical gradient signals which further distort the update direction from the gradient direction. In other words, it is still unclear whether  $\frac{\Ex_k(m_k)}{\sqrt{\Ex_k(v_k)}}$ is close to $\nabla f(x_k)$. 

We use a potential function $f(z_k)$ to offset the effect of $m_k$, where $f(\cdot)$ is the original function in the problem  \eqref{finite_sum} and $z_k := \frac{x_k - \beta_1^n x_{k-n}}{1-\beta_1^n}$. The auxiliary sequence  $z_k$ can help cancel out all historical signals up to $(k-n)$-th iteration, which provides more convenience to the convergence analysis.
\begin{eqnarray}
    z_{k}-z_{k+1} &=& \frac{x_k  - x_{k+1} -  \beta_1^n \left(x_{k-n} - x_{k+1-n} \right)}{1-\beta_1^n} =  \frac{1}{{1-\beta_1^n}}\cdot\left(\eta_k \frac{m_k}{\sqrt{v_k}}  - \beta_1^n \eta_{k-n} \frac{m_{k-n}}{\sqrt{v_{k-n}}}\right)  \nonumber\\
    &\overset{\text{Lemma \ref{lemma_concentrate_v}}}{\approx}  &\frac{\eta_k}{{1-\beta_1^n}} \cdot \frac{\nabla f_{\tau_k}(x_k) + \beta_1 \nabla f_{\tau_{k-1}}(x_{k-1}) + \cdots + \beta_1^{n-1} \nabla f_{\tau_{k-n}}(x_{k-n})}{\sqrt{\Ex_k(v_k)}}  \nonumber
\end{eqnarray}
Note that the  potential function is inspired by the following $L_k$ from SGD analysis \citep{liu2020improved}: 
$$
\vspace{-1mm}
L_k=\left(f\left(\widetilde{z}_k\right)-f^{\star}\right)+\sum_{i=1}^{k-1} c_i\left\|x_{k+1-i}-x_{k-i}\right\|^2, \quad \widetilde{z}_k = \frac{x_k -\beta_1 x_{k-1}}{1-\beta_1}, \quad c_i>0
\vspace{-1mm}
$$
We find that this idea of constructing  $L^k$ and $\widetilde{z}_k $ is useful for Adam analysis, and we make the following changes. 
\begin{itemize}
    \item {\bf Change I:} For Adam, we find that $L_k$ can be simplified by setting $c_i = 0$. This is because the purpose of introducing $c_i$ is to help bound the update magnitude $\frac{m_k}{\sqrt{v_k}}$, which is inherently bounded for Adam (shown later in Lemma \ref{lemma_delta} in Appendix \ref{appendix:lemma_descent_WR}). %
    \item {\bf Change II:} We replace \( \widetilde{z}_k \) with \( z_k \), which differs by retaining \( (n - 1) \) additional historical gradient signals. This modification enlarges the convergence region in the \( \betabeta \) diagram: with \( \widetilde{z}_k \), convergence requires
$
\beta_2 \geq 1 - \mathcal{O}\left( \frac{1 - \beta_1}{n^{5}} \right),
$
whereas with \( z_k \),  the requirement is relaxed to 
$
\beta_2 \geq 1 - \mathcal{O}\left( \frac{1 - \beta_1^n}{n^{5}} \right),
$
which defines a strictly larger region.
\end{itemize}
With the help of Lemma \ref{lemma_concentrate_v} and  the potential function $f(z_k)$, we show that Adam's update direction is close to $\nabla f(x_k)$ when $\beta_2$ is large enough, which can lead to convergence.   We find that it is  rather non-trivial to apply the concentration results in Lemma \ref{lemma_concentrate_v} to the convergence proof due to the existence of momentum, possibly unbounded gradients, and multiple coupling effects. We propose a systematic procedure to overcome them in Lemma \ref{lemma_descent_WR}.  

\begin{lemma}
\label{lemma_descent_WR}
Consider  the same condition as Lemma \ref{lemma_concentrate_v};  
when $k\in \mathbb{N}$ satisfies {\small $k\geq \lceil\frac{\log \frac{1}{4} }{\log \beta_2} \rceil +2n+ 1$} and 
  {\small $\frac{C_7}{\sqrt{k}} \leq \frac{1}{4n d \sqrt{5D_1 n}}$}  and {\small $\beta_1^{k-n}\leq \frac{1}{\sqrt{k}}$}, we have

$$ 
\vspace{-2mm}
\Ex \langle \nabla f(z_k),z_{k}-z_{k+1}\rangle   \geq \Ex  \left[\min\left\{\frac{\|\nabla f(x_k)\|_2^2}{nd\sqrt{5D_0 n d}}, \frac{\|\nabla f(x_k)\|_2}{2nd^2\sqrt{5D_1 n}}\right\} \right] -\delta(\beta_2) \sqrt{D_0}  - \frac{C}{\sqrt{k}},
$$

\noindent where constant $C>0$ and $\delta(\beta_2)$ is a constant that approaches 0 as $\beta_2$ approaches to 1 (see \eqref{def_delta_C} in App. \ref{appendix:lemma_descent_WR}). 
The proof of Lemma \ref{lemma_descent_WR} is in Section \ref{appendix:lemma_descent_WR} and the whole proof of Theorem \ref{thm_wr} is in Section \ref{appendix_main_body_wr}.

\end{lemma}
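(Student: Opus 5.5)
We write $z_k-z_{k+1}=\frac{1}{1-\beta_1^n}\bigl(\eta_k \frac{m_k}{\sqrt{v_k}}-\beta_1^n\eta_{k-n}\frac{m_{k-n}}{\sqrt{v_{k-n}}}\bigr)$. Unrolling $m_k=\beta_1^n m_{k-n}+(1-\beta_1)\sum_{j=0}^{n-1}\beta_1^j\nabla f_{\tau_{k-j}}(x_{k-j})$ lets us split $z_k-z_{k+1}$ into two parts. The \emph{main term} is
\[
\frac{(1-\beta_1)\eta_k}{1-\beta_1^n}\sum_{j=0}^{n-1}\beta_1^j\frac{\nabla f_{\tau_{k-j}}(x_{k-j})}{\sqrt{v_k}}.
\]
The \emph{error term} is $\frac{\beta_1^n}{1-\beta_1^n}\,m_{k-n}\circ\bigl(\frac{\eta_k}{\sqrt{v_k}}-\frac{\eta_{k-n}}{\sqrt{v_{k-n}}}\bigr)$. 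We bound the error term coordinatewise. We use $v_{l,k}\ge\beta_2^n v_{l,k-n}$, the mirror fact that $v$ grows at most by the new squared gradients, and $\eta_k-\eta_{k-n}=O(n\eta_k/k)$. With these, $|\frac{1}{\sqrt{v_{l,k}}}-\frac{1}{\sqrt{v_{l,k-n}}}|\,|m_{l,k-n}|$ is at most $(\beta_2^{-n/2}-1)$ times the normalized update plus a term involving the last $n$ stochastic gradients. Next we use the bounded-update fact (Lemma \ref{lemma_delta}: $|m_{l,k}|/\sqrt{v_{l,k}}\le$ a constant, by $\beta_1<\sqrt{\beta_2}$) and $\|\nabla f(z_k)\|\le\|\nabla f(x_k)\|+L\|z_k-x_k\|$, where $\|z_k-x_k\|=O(n\eta_{k-n})$. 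After these steps the error term contributes at most $\delta(\beta_2)$ times $\sum_l|\partial_l f(x_k)|$ plus $O(1/\sqrt k)$. The $\delta(\beta_2)$ piece is absorbed into the main descent once $\beta_2$ is close to $1$.

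For the main term, we first replace $\nabla f(z_k)$ and each $\nabla f_{\tau_{k-j}}(x_{k-j})$ by quantities at $x_{k-n}$, or at $x_k$ for the freshest sample. Smoothness and bounded updates make these replacements cost $O(\eta_k^2 n^2 L d)$, i.e.\ $O(1/k)$ per step; this yields the $C/\sqrt k$ term after summation. We then also replace $\frac{1}{\sqrt{v_k}}$ by $\frac{1}{\sqrt{v_{k-n}}}$, paying a multiplicative $\beta_2^{-n/2}$ factor. The point of this step is that conditioned on time $k-n$ (or $k-j$), the sampled indices are fresh and uniform. We split coordinates by whether $\max_i|\partial_l f_i(x_{k-j})|\ge R_{k}$.

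\emph{Case 1: the max is large.} Lemma \ref{lemma_concentrate_v} applies. Off a bad event of probability $n\exp(-\Theta(\delta^2/((1-\beta_2)/n)))$, we have $\frac{1}{\sqrt{v_{l}}}\in[C_{\rm lower},C_{\rm upper}]/\sqrt{\Exk v_l}$. On the bad event, we still bound the contribution by the bounded-update constant times $|\partial_l f|$. Hence
\[
\Exk\Bigl[\partial_l f(x)\frac{\partial_l f_{\tau}(x)}{\sqrt{v_l}}\Bigr]\ge \frac{C_{\rm lower}(\partial_l f)^2}{\sqrt{\Exk v_l}}-(C_{\rm upper}-C_{\rm lower})\frac{\Exk|\partial_l f||\partial_l f_\tau|}{\sqrt{\Exk v_l}} - (\text{bad-event mass})|\partial_lf|.
\]
We next estimate $\Exk v_l$. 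It is $\beta_2v_{l,k-1}+\frac{1-\beta_2}{n}\sum_i(\partial_lf_i)^2$, and $v_{l,k-1}$ is itself a geometric average of past squared component gradients, which are comparable to current ones up to $O(\eta\sqrt k)$ drift. So $\sqrt{\Exk v_l}\lesssim\sqrt{\frac1n\sum_i\|\nabla f_i\|^2}\le\sqrt{(D_1\|\nabla f\|^2+D_0)/n}$. Summing over $l$ and using $\sum_l(\partial_lf)^2=\|\nabla f\|^2$ together with $\max_l|\partial_l f|\ge\|\nabla f\|/\sqrt d$ gives the lower bound $\|\nabla f\|^2/(nd\sqrt{5(D_1\|\nabla f\|^2+D_0)nd})$. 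Splitting on which of $D_0$ and $D_1\|\nabla f\|^2$ dominates produces the stated $\min\{\cdot,\cdot\}$. The mismatch $C_{\rm upper}-C_{\rm lower}=O(n(1-\beta_2)/\beta_2^n)$, the bad-event mass, and the $\sqrt{D_0}$ part together define $\delta(\beta_2)\sqrt{D_0}$. The $D_1$ parts of these quantities are absorbed into the main term when $\beta_2\ge\gamma_1(n)$, which is where the $n^5$ threshold comes from.

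\emph{Case 2: the max is small,} i.e.\ $|\partial_lf_i|<R_k=O(1/\sqrt k)$ for all $i$. Then $|\partial_lf|<R_k$ as well. The contribution is at most $R_k$ times a bounded update, giving $O(1/\sqrt k)$ per coordinate and thus part of $C/\sqrt k$. The missing descent $(\partial_lf)^2/\sqrt{\cdot}$ in the target is likewise $O(R_k)$, so it can be subtracted at the same cost.

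The expected main obstacle is Case 1 with momentum. The terms $j\ge1$ involve $\nabla f_{\tau_{k-j}}$ paired with $v_k$, which already depends on $\tau_{k-j}$. We cannot condition cleanly unless we switch the denominator to $v_{k-j-1}$, or to $v_{k-n}$ uniformly, which costs the $\beta_2^{-n}$ factors. We must also make sure the nonnegativity argument holds for all $n$ momentum lags simultaneously. We plan to do this switch first, uniformly to $v_{k-n}$, and to apply Lemma \ref{lemma_concentrate_v} at index $k-j$ for each $j$ with a union bound. The conditions $\beta_1^{k-n}\le1/\sqrt k$ and $C_7/\sqrt k\le\frac{1}{4nd\sqrt{5D_1n}}$ are used to kill initialization effects and to absorb the $D_1$-weighted error into half of the main term.
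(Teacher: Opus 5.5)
You have the same architecture as the paper's proof. That includes the potential $z_k$ and a main part matching $(d_1)+(d_2)$. It also includes an $m_{k-n}$-mismatch part matching $(d_3)$ plus the $v_{k-n}\to v_k$ conversion, and the bounded/unbounded split $B_{l,k}$. You use Lemma \ref{lemma_concentrate_v} with a union bound over indices and over $\bar k\in[k-n,k]$ (the event $A_k$), condition at time $k-n$ for the momentum lags, and finish by upper bounding $\Exk(v_{\alpha,k})$ via Assumption \ref{assum2}.

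\textbf{Main gap: the concentration failure event.} In your Case~1 display you subtract ``(bad-event mass)$\cdot|\partial_l f|$'' inside $\Exk$. But the failure event is determined by indices sampled before time $k-n$, and these are fixed under $\Exk$. What you actually get is a term of order $-\eta_k\,\mathbb{I}(A_k^c)\,|\partial_l f(x_k)|/\sqrt{1-\beta_2}$. After the outer expectation you must therefore bound $\Ex[|\partial_l f(x_k)|\,\mathbb{I}(A_k^c)]$.
\begin{itemize}
\item Since $x_k$ is generated by exactly those indices, this does not factor as $\mathbb{P}(A_k^c)\,\Ex|\partial_l f(x_k)|$.
\item Cauchy--Schwarz would give $\sqrt{\mathbb{P}(A_k^c)\,\Ex(\partial_l f(x_k))^2}$, which cannot be absorbed into the $\Ex[\min\{\cdot,\cdot\}]$ descent term.
\end{itemize}
The missing idea is the paper's decoupling with lag $h=2n+k^*$. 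The iterate $x_{k-h}$ is independent of $A_k$, so Lemma \ref{lemma_delta} applied twice gives $\Ex[|\partial_l f(x_k)|\mathbb{I}(A_k^c)]\le(\Ex|\partial_l f(x_k)|+2h\Delta_{k-h})\,\mathbb{P}(A_k^c)$; see \eqref{eq:independence_e}--\eqref{eq:e_decouple_mid}. This is exactly what the hypothesis $k\ge\lceil\log\frac14/\log\beta_2\rceil+2n+1$ is for. The step is needed for the term $(e)$ and for the $\mathbb{I}(A_k^c)$ remainders in $(d_1)$ and $(d_{2,1})$. Without it you do not get the factor $\mathbb{P}(A_k^c)/\sqrt{1-\beta_2}$ that makes $\delta_1(\beta_2)\to0$.

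\textbf{Secondary gap: steps presented as deterministic.} Several of these also need the concentration bound.
\begin{itemize}
\item In the mismatch term, take $\partial_l f(x_k)\,m_{l,k-n}\ge0$ with $v_l$ increasing on $[k-n,k]$. You then need $1/\sqrt{v_{l,k-n}}\le(1+o(1))/\sqrt{v_{l,k}}$, i.e.\ $v_{l,k-n}\gg(1-\beta_2)\max_i(\partial_l f_i)^2$. From $v_{l,k}\ge\beta_2^n v_{l,k-n}$, the growth bound and bounded updates alone, you only get a bound of order $\eta_k|\partial_l f(x_k)|/\sqrt{1-\beta_2}$, which does not vanish as $\beta_2\to1$. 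The paper instead uses \eqref{eq:v_k-1_geq_v_k}, valid on $A_k\cap B_{l,k}^c$ through \eqref{upperbound_of_f_over_v_epsilon}.
\item Moving $\partial_l f_{\tau_{k-j}}$ to $x_{k-n}$ costs about $\eta_k|\partial_l f(x_k)|\,n\Delta_{k-n}/\sqrt{v_{l,k}}$. This is $O(\eta_k\Delta_k)$ only when $|\partial_l f(x_k)|/\sqrt{v_{l,k}}$ is bounded, which again holds only on the good event.
\item In the sign-indefinite cross terms, the switch $1/\sqrt{v_{l,k}}\to1/\sqrt{v_{l,k-n}}$ needs the two-sided estimate of Lemma \ref{lemma_k-k-n_WR}, not just a factor $\beta_2^{-n/2}$.
\end{itemize}
So the $B_{l,k}$/$A_k$ split has to come before all of these replacements, with the off-event parts routed through the decoupled tail bound.

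\textbf{Minor points.}
\begin{itemize}
\item The drift in $\Exk(v_{l,k})$ is of order $\Delta_k/(1-\beta_2)$, not $O(\eta\sqrt k)$. The regime where it dominates $|\partial_\alpha f(x_k)|$ must be handled separately (the paper's Case (b)).
\item A past sample $(\partial_l f_{\tau_{k-j}})^2$ is bounded only by $\sum_i\|\nabla f_i\|^2$, not by $\frac1n\sum_i\|\nabla f_i\|^2$.
\end{itemize}
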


\section{Proof of Lemma \ref{lemma_concentrate_v}}
\label{section:lemma_concentrate_v}

The proof proceeds in three steps. In Step I, we derive deterministic algebraic bounds relating $\frac{1}{\sqrt{v_{l,k}}}$ to $\frac{1}{\sqrt{\Exk(v_{l,k})}}$ via the ratio $\frac{\max_i (\partial_l f_i(x_k))^2}{\Exk(v_{l,k})}$. In Step II, we establish a lower bound on $\Exk(v_{l,k})$ using the geometric-sum structure of Adam, which yields an upper bound on this ratio. In Step III, we apply Bernstein's inequality to show that the lower bound on the geometric sum holds with high probability, thereby establishing the concentration result.

\paragraph{Step I: Deterministic algebraic bounds relating $1/\sqrt{v_{l,k}}$ to $1/\sqrt{\Exk(v_{l,k})}$.}
We begin by establishing the relationship between $\frac{1}{\sqrt{v_{l,k}}}$ and $\frac{1}{\sqrt{\Exk\left(v_{l,k}\right)}}$. 

For the lower bound, we have
\begin{eqnarray}
    \frac{1}{\sqrt{v_{l,k}}} 
    &=& \frac{1}{\sqrt{\Exk\left(v_{l,k}\right) + \left(v_{l,k} - \Exk\left(v_{l,k}\right)\right)}} \nonumber\\
    &\geq& \frac{1}{\sqrt{\Exk\left(v_{l,k}\right)}} \cdot \frac{1}{\sqrt{1+\frac{\left|v_{l,k}-\Exk\left(v_{l,k}\right)\right|}{\Exk\left(v_{l,k}\right)}}}  \nonumber\\
    &\geq& \frac{1}{\sqrt{\Exk\left(v_{l,k}\right)}}\left(1- \frac{\left|v_{l,k} - \Exk\left(v_{l,k}\right)\right|}{2\Exk\left(v_{l,k}\right)} \right), \label{eq_lower_bound_preliminary}
\end{eqnarray}
where the last inequality uses $\frac{1}{\sqrt{1+x}} \geq 1-\frac{x}{2}$ for $x\geq 0$.

To proceed, we bound $\left|v_{l,k} - \Exk\left(v_{l,k}\right)\right|$. By the update rule of $v_{l,k}$, we have
\begin{equation}
\label{eq_vlk_deviation}
\left|v_{l,k} - \Exk\left(v_{l,k}\right)\right| = (1-\beta_2)\left|\left(\partial_l f_{\tau_k}(x_k)\right)^2 - \Exk\left[\left(\partial_l f_{\tau_k}(x_k)\right)^2\right]\right| \leq 2 (1-\beta_2) \max_i\left(\partial_l f_i(x_k)\right)^2.
\end{equation}
Substituting \eqref{eq_vlk_deviation} into \eqref{eq_lower_bound_preliminary}, we obtain
\begin{equation}
\label{lower_bound_of_one_over_sqrt_v}
\frac{1}{\sqrt{v_{l,k}}} \geq \frac{1}{\sqrt{\Exk\left(v_{l,k}\right)}} \left(1- (1-\beta_2)  \frac{\max_i\left(\partial_l f_i(x_k)\right)^2}{\Exk\left(v_{l,k}\right)}  \right).
\end{equation}

For the upper bound, we proceed similarly:
\begin{eqnarray}
    \frac{1}{\sqrt{v_{l,k}}} &=& \frac{1}{\sqrt{v_{l,k} - \Exk\left(v_{l,k}\right) + \Exk\left(v_{l,k}\right)}} \nonumber \\
     &=& \frac{1}{\sqrt{\Exk\left(v_{l,k}\right)}}\cdot\frac{1}{\sqrt{1+\frac{v_{l,k}-\Exk\left(v_{l,k}\right)}{\Exk\left(v_{l,k}\right)}}}, \label{upper_bound_of_one_over_sqrt_v_deterministic}
\end{eqnarray}
To obtain a meaningful upper bound from \eqref{upper_bound_of_one_over_sqrt_v_deterministic}, we need to ensure that the denominator remains positive and bounded away from zero. In particular, we want to have
\begin{equation}
\label{condition_for_upper_bound}
    \left|\frac{v_{l,k}-\Exk\left(v_{l,k}\right)}{\Exk\left(v_{l,k}\right)}\right| < 1.
\end{equation}
By \eqref{eq_vlk_deviation}, condition \eqref{condition_for_upper_bound} holds if the following holds
\begin{equation}
\label{condition_for_upper_bound_explicit}
   2 (1-\beta_2) \frac{\max_i \left(\partial_l f_i(x_k)\right)^2 }{\Exk\left(v_{l,k}\right)} < 1.
\end{equation}
Assuming \eqref{condition_for_upper_bound_explicit} holds, we have $1+\frac{v_{l,k}-\Exk\left(v_{l,k}\right)}{\Exk\left(v_{l,k}\right)} \geq 1 - \left|\frac{v_{l,k}-\Exk\left(v_{l,k}\right)}{\Exk\left(v_{l,k}\right)}\right| > 0$. This allows us to bound the reciprocal square root:
\begin{eqnarray}
    \frac{1}{\sqrt{1+\frac{v_{l,k}-\Exk\left(v_{l,k}\right)}{\Exk\left(v_{l,k}\right)}}} 
    &\leq& \frac{1}{\sqrt{1 - \left|\frac{v_{l,k}-\Exk\left(v_{l,k}\right)}{\Exk\left(v_{l,k}\right)}\right|}} \nonumber \\
    &\overset{\eqref{eq_vlk_deviation}}{\leq}& \frac{1}{\sqrt{ 1-2(1-\beta_2)\frac{\max_i \left(\partial_l f_i(x_k)\right)^2 }{\Exk\left(v_{l,k}\right)}}}. \label{upper_bound_of_one_over_sqrt_v_intermediate}
\end{eqnarray}
Combining \eqref{upper_bound_of_one_over_sqrt_v_deterministic} and \eqref{upper_bound_of_one_over_sqrt_v_intermediate}, we obtain the following (assuming \eqref{condition_for_upper_bound_explicit} holds):
\begin{equation}
\label{upper_bound_of_one_over_sqrt_v}
    \frac{1}{\sqrt{v_{l,k}}} \leq \frac{1}{\sqrt{\Exk\left(v_{l,k}\right)}} \cdot\frac{1}{\sqrt{ 1-2(1-\beta_2)\frac{\max_i \left(\partial_l f_i(x_k)\right)^2 }{\Exk\left(v_{l,k}\right)}}}.
\end{equation}

The bounds \eqref{lower_bound_of_one_over_sqrt_v} and \eqref{upper_bound_of_one_over_sqrt_v} are useful only if we can bound the ratio $\frac{\max_i \left(\partial_l f_i(x_k)\right)^2 }{\Exk\left(v_{l,k}\right)}$ and verify condition \eqref{condition_for_upper_bound_explicit}. A trivial bound follows immediately:
\begin{equation}
\label{eq:trivial}
    \frac{\max_i \left(\partial_l f_i(x_k)\right)^2 }{\Exk\left(v_{l,k}\right)} \leq \frac{n  \Exk\left[\left(\partial_l f_{\tau_k}(x_k)\right)^2\right]  } {\Exk\left(v_{l,k}\right)} \leq \frac{n  \Exk\left[\left( \partial_l f_{\tau_k}(x_k)\right)^2\right]  } { (1-\beta_2)  \Exk\left[\left(\partial_l f_{\tau_k}(x_k)\right)^2\right]}  = \frac{n}{1-\beta_2},
\end{equation}
which diverges as $\beta_2 \to 1^-$. Substituting this into \eqref{condition_for_upper_bound_explicit} would require $(1-\beta_2) \cdot \frac{2n}{1-\beta_2} = 2n < 1$, which fails for any $n \geq 1$. Thus, the trivial bound is insufficient.

Our goal is to derive a refined bound such that 
\begin{equation}
\label{goal_bound}
    (1-\beta_2)\frac{\max_i \left(\partial_l f_i(x_k)\right)^2 }{\Exk\left(v_{l,k}\right)} \rightarrow 0 \quad \text{as } \beta_2 \to 1^-,
\end{equation}
which ensures that condition \eqref{condition_for_upper_bound_explicit} is satisfied for $\beta_2$ sufficiently close to 1, and that both \eqref{lower_bound_of_one_over_sqrt_v} and \eqref{upper_bound_of_one_over_sqrt_v} remain tight. Establishing such a bound requires substantial effort, which we undertake in the following Steps II and III.

\paragraph{Step II: Lower bounding $\Exk(v_{l,k})$ via the geometric-sum structure.}
Define  $\alpha := \arg \max_i |\partial_l f_i(x_k)|$. Expanding $\Exk\left(v_{l,k}\right)$, we have
\begin{eqnarray}
    \Exk\left(v_{l,k}\right) &=& (1-\beta_2) \left( \Exk\left[\left(\partial_l f_{\tau_k} (x_k)\right)^2\right]  + \sum_{j = 1}^{k-1} \beta_2^{j} \left(\partial_l f_{\tau_{k-j}} (x_{k-j})\right)^2   \right) \nonumber\\
    &\geq& (1-\beta_2) \sum_{j=1}^{k-1} \beta_2^{j} \left(\partial_l f_{\tau_{k-j}} (x_{k-j})\right)^2 \nonumber\\ 
    &\geq&  (1-\beta_2) \sum_{j=1}^{k-1} \beta_2^{j} \left(\partial_l f_{\alpha} (x_{k-j})\right)^2 \mathbb{I}\left(\tau_{k-j} = \alpha \right). \label{eq_Exk_vlk_expansion}
\end{eqnarray}

Notice that when $k\geq \lceil\frac{\log (n\delta)}{\log \beta_2}\rceil +n +1$, the condition of Lemma \ref{lemma_concentrate_v} implies the condition of Lemma \ref{lemma_k-j_k} for $j \leq \lceil\frac{\log (n\delta)}{\log \beta_2}\rceil +n$. Applying Lemma \ref{lemma_k-j_k}, we obtain
\begin{eqnarray}
\label{v_k_fix_x}
    \Exk\left(v_{l,k}\right)
    &\geq&   
    \frac{\left(\partial_l f_{\alpha} (x_k)\right)^2}{ 2}(1-\beta_2)  \sum_{j=1}^{\lceil\frac{\log (n\delta)}{\log \beta_2}\rceil+n} \beta_2^{j}  \mathbb{I}\left(\tau_{k-j} = \alpha \right)\nonumber \\
    &\geq& \frac{\left(\partial_l f_{\alpha} (x_k)\right)^2}{ 2}(1-\beta_2)\beta_2^n  \sum_{t=1}^{\lceil\frac{\log (n\delta)}{\log \beta_2}\rceil} \beta_2^{t}  \mathbb{I}\left(\tau_{k-n-t} = \alpha \right).  \nonumber
\end{eqnarray}

For $k^* < k-n$, define the auxiliary quantity
\begin{equation}
\label{eq_def_Y_k}
    Y_{k^*,k}(i) := (1-\beta_2) \sum_{j=1}^{k^*} \beta_2^{j} \mathbb{I}\left(\tau_{k-n-j} = i\right), \quad \forall\, i \in \{0,1,\ldots,n-1\},
\end{equation}
which represents a weighted count of how many times index $i$ is sampled within the $k^*$-step historical window preceding step $k-n$. Since $\tau_{k-n-j}$ is sampled uniformly from $\{0, 1, \ldots, n-1\}$, the sequence $\{\mathbb{I}(\tau_{k-n-j} = i)\}_{j=1}^{k^*}$ forms a sequence of i.i.d.\ Bernoulli random variables with success probability $\frac{1}{n}$. 

\paragraph{Step III: Concentration of $Y_{k^*,k}(i)$ via Bernstein's inequality and the choice of $k^*$.}

Our objective is to establish that for any $0 < \delta \leq \frac{1}{4n}$, the random variable $Y_{k^*,k}(\alpha)$ satisfies $Y_{k^*,k}(\alpha) \geq \frac{1}{2n}-\delta$ with probability approaching 1 as $\beta_2 \to 1^-$, where $\alpha := \arg \max_i |\partial_l f_i(x_k)|$. The concentration analysis is enabled by the geometric weighting structure: each term in $Y_{k^*,k}(\alpha)$ is scaled by the coefficient $(1-\beta_2)\beta_2^j$, whose magnitude is controlled by $(1-\beta_2)$. As $\beta_2 \to 1^-$, this coefficient vanishes, allowing us to apply Bernstein's inequality to get a sharp probability bound. However, direct analysis is complicated because the index $\alpha = \arg\max_i |\partial_l f_i(x_k)|$ depends on the random iterate $x_k$. Consequently, $\alpha$ is a random variable coupled with the optimization trajectory, making the distribution of $\mathbb{I}\left(\tau_{k-n-j} = \alpha \right)$ difficult to characterize. To circumvent this, we first establish concentration for an arbitrary fixed index $i \in \{0,1,\ldots,n-1\}$, then apply a union bound.

Define the events
\begin{equation}
\label{eq_def_events}
    \mathcal{E}_k(i) := \left\{Y_{k^*,k}(i) \geq \frac{1}{2n} - \delta\right\}, \quad \forall\, i \in \{0,1,\ldots,n-1\},
\end{equation}
and the global failure event
\begin{equation}
\label{eq_def_Uk}
    \mathcal{U}_k := \bigcup_{i=0}^{n-1} \left\{Y_{k^*,k}(i) < \frac{1}{2n} - \delta\right\} = \bigcup_{i=0}^{n-1} \mathcal{E}_k(i)^c.
\end{equation}
Our goal is to bound the probability $\mathbb{P}(\mathcal{E}_{k}(\alpha)^c)$. Since $\alpha$ takes some value in $[n]$, we observe that the failure event for $\alpha$ is contained within the global failure event:
\begin{equation}
    \mathcal{E}_{k}(\alpha)^c= \left\{Y_{k^*,k}(\alpha) < \frac{1}{2n} - \delta\right\} \subseteq \mathcal{U}_k.  \nonumber
\end{equation}
By the union bound,
\begin{equation}
    \label{eq:union_bound_prelim}
    \mathbb{P}\left(\mathcal{E}_{k}(\alpha)^c\right) \leq \mathbb{P}\left(\mathcal{U}_k\right) \leq \sum_{i=0}^{n-1} \mathbb{P}\left(Y_{k^*,k}(i) < \frac{1}{2n} - \delta\right) \leq \sum_{i=0}^{n-1} \mathbb{P}\left(Y_{k^*,k}(i) \leq \frac{1}{2n} - \delta\right).
\end{equation}
It thus suffices to bound $\mathbb{P}\left(Y_{k^*,k}(i) \leq \frac{1}{2n} - \delta\right)$ for an arbitrary fixed $i$.

For ease of presentation, we simplify notation by fixing an arbitrary index $i \in [n]$, drop the variable $k$ and replace the subscript $\tau_{k-n-j}$ with $\tau_{j}$ and redefine 
\begin{equation*}
Y_{k^*} := (1-\beta_2) \sum_{j=1}^{k^*} \beta_2^{j} \mathbb{I}(\tau_{j} = i).
\end{equation*}

To apply Bernstein's inequality, we introduce centered random variables. Define 
$$
X_j := (1-\beta_2)\beta_2^j \left[\mathbb{I}(\tau_j = i) - \frac{1}{n}\right]
$$ 
for $j = 1, \ldots, k^*$. Then $\mathbb{E}[X_j] = 0$ and $|X_j| \leq (1-\beta_2)\beta_2^j$ for all $j$. We can express
\begin{equation}
    Y_{k^*} -\mathbb{E}\left[Y_{k^*}\right] = \sum_{j=1}^{k^*} X_j.  \nonumber
\end{equation}

Before applying Bernstein's inequality, we first lower bound the expectation of $Y_{k^*}$ and upper bound its variance. When $\beta_2 \geq \frac{1}{2}$ and $k^*$ is sufficiently large such that $\beta_2^{k^*} \leq n\delta$, we have
\begin{equation}
\label{eq_e_yk}
   \mathbb{E}\left[Y_{k^*}\right] = (1-\beta_2) \sum_{j=1}^{k^*} \beta_2^{j} \mathbb{E}\bigl[\mathbb{I}(\tau_{j} = i)\bigr] = \frac{\beta_2(1-\beta_2^{k^*})}{n} \geq \frac{1}{2n} - \frac{\delta}{2}.
\end{equation}

For the variance, since the $X_j$ are independent, we have
\begin{eqnarray}
    \operatorname{Var}(Y_{k^*}) &=& \sum_{j=1}^{k^*} \operatorname{Var}(X_j) = (1-\beta_2)^2 \sum_{j=1}^{k^*} \beta_2^{2j} \operatorname{Var}\bigl(\mathbb{I}(\tau_{j} = i)\bigr) \nonumber \\
    &\leq& \frac{(1-\beta_2)^2\beta_2^2}{1-\beta_2^2} \frac{1}{n}\left(1-\frac{1}{n}\right)
    \leq  \frac{(1-\beta_2)^2}{n(1-\beta_2^2)}
    \leq  \frac{1-\beta_2}{n}. \label{eq_var_yk}
\end{eqnarray}

We now apply Bernstein's inequality. Note that for all $j \in \{1, \dots, k^*\}$, the random variables $X_j$ satisfy the uniform bound $|X_j| \leq (1-\beta_2)\beta_2^j \leq 1-\beta_2$. Applying Bernstein's inequality and using the variance bound \eqref{eq_var_yk}, we obtain for any $a > 0$:
\begin{eqnarray*}
\label{bernstein_bound}
    \mathbb{P}\left(Y_{k^*} -\mathbb{E}\left[Y_{k^*}\right] \leq -a\right) &\leq& \exp\left(-\frac{a^2/2}{\operatorname{Var}(Y_{k^*}) + \frac{1}{3}(1-\beta_2) a}\right) \\
    &\leq& \exp\left(-\frac{a^2/2}{\frac{1-\beta_2}{n} + \frac{1}{3}(1-\beta_2)a}\right).
\end{eqnarray*}

Let $a =\mathbb{E}\left[Y_{k^*}\right] - (\frac{1}{2n}-\delta)$ and notice that $a \leq \frac{1}{2n}+\delta$. When $a \geq \frac{\delta}{2}$, which holds for $\beta_2 \geq \frac{1}{2}$ and $k^* \geq \lceil\frac{\log(n\delta)}{\log \beta_2}\rceil$, we have
\begin{eqnarray}
\label{eq:failure_1}
    \mathbb{P}\left(Y_{k^*} \leq \frac{1}{2n}-\delta\right) &\leq& \exp\left(-\frac{a^2/2}{\frac{1-\beta_2}{n} + \frac{1}{3}(1-\beta_2)a}\right) \nonumber \\
    &\leq& \exp\left(-\frac{(\delta/2)^2/2}{\frac{1-\beta_2}{n} + \frac{1}{3}(1-\beta_2)\left(\frac{1}{2n}+\delta\right)}\right) \nonumber \\
    &=& \exp\left(-\frac{\delta^2}{(1-\beta_2)\left(\frac{28}{3n}+\frac{8}{3}\delta\right)}\right).
\end{eqnarray}
This concentration bound holds for any fixed index $i \in [n]$. Substituting \eqref{eq:failure_1} into the union bound \eqref{eq:union_bound_prelim}, we obtain
\begin{equation}
    \mathbb{P}(\mathcal{E}_{k}(\alpha)^c) \leq n\exp\left(-\frac{\delta^2}{(1-\beta_2)\left(\frac{28}{3n}+\frac{8}{3}\delta\right)}\right).  \nonumber
\end{equation}
We obtain that with probability at least $1-n\exp\left(-\frac{\delta^2}{(1-\beta_2)\left(\frac{28}{3n}+\frac{8}{3}\delta\right)}\right)$, the inequality $Y_{k^*,k}(\alpha) \geq \frac{1}{2n} - \delta$ holds. 
 
 Setting $k^* = \lceil\frac{\log(n\delta)}{\log \beta_2}\rceil$ and substituting this result into the lower bound for $\Exk\left(v_{l,k}\right)$ \eqref{v_k_fix_x}, we conclude that with probability at least $1-n\exp\left(-\frac{\delta^2}{(1-\beta_2)\left(\frac{28}{3n}+\frac{8}{3}\delta\right)}\right)$,
\begin{equation}\label{eq_lemma_concentrate_v_lower_bd}
    \Exk\left(v_{l,k}\right) \geq \frac{(\partial_l f_{\alpha}(x_k))^2}{2} \cdot \beta_2^n \cdot \left(\frac{1}{2n} - \delta\right).
\end{equation}
Recalling that $0 < \delta \leq \frac{1}{4n}$, we equivalently obtain 
\begin{equation}
\label{upperbound_of_f_over_v}
    \frac{\max_i\bigl(\partial_l f_i(x_k)\bigr)^2}{\Exk\left(v_{l,k}\right)} \equiv \frac{(\partial_l f_\alpha(x_k))^2}{\Exk\left(v_{l,k}\right)} \leq \frac{4n}{(1-2n\delta)\beta_2^n}.
\end{equation}
Finally, we verify that condition \eqref{condition_for_upper_bound_explicit} is satisfied for $\beta_2$ sufficiently close to 1. Substituting the bound \eqref{upperbound_of_f_over_v} into \eqref{condition_for_upper_bound_explicit}, we require
\begin{equation}
\label{verify_condition}
    (1-\beta_2) 2\frac{\max_i \left(\partial_l f_i(x_k)\right)^2 }{\Exk\left(v_{l,k}\right)} \leq (1-\beta_2) \frac{8n}{(1-2n\delta)\beta_2^n} < 1.
\end{equation}
Rearranging \eqref{verify_condition}, we obtain the sufficient condition
\begin{equation}
\label{beta2_range}
    \frac{1-\beta_2}{\beta_2^n} < \frac{1-2n\delta}{8n} = \frac{1}{8n} - \frac{\delta}{4}.
\end{equation}
Since $0 < \delta \leq \frac{1}{4n}$, the right-hand side of \eqref{beta2_range} is at least $\frac{1}{16n} > 0$. The left-hand side $\frac{1-\beta_2}{\beta_2^n}$ is strictly decreasing in $\beta_2$ and vanishes as $\beta_2 \to 1^-$. Therefore, \eqref{beta2_range} holds when $\beta_2$ sufficiently close to 1 (e.g., when we choose $1-\beta_2=\mathcal{O}(1/n^{1.5})$), validating condition \eqref{condition_for_upper_bound_explicit} and the upper bound \eqref{upper_bound_of_one_over_sqrt_v}.

Substituting \eqref{upperbound_of_f_over_v} into the concentration bounds, we have with probability at least $1-n \exp\left(-\frac{\delta^2}{(1-\beta_2)\left(\frac{28}{3n}+\frac{8}{3}\delta\right)}\right)$, 
\begin{equation*}
    \frac{1}{\sqrt{\Exk\left(v_{l,k}\right)}} \left(1 - (1-\beta_2)\frac{4n}{(1-2n\delta)\beta_2^n}\right) \leq \frac{1}{\sqrt{v_{l,k}}} \leq \frac{1}{\sqrt{\Exk\left(v_{l,k}\right)}} \left(\frac{1}{\sqrt{1-(1-\beta_2)\frac{8n}{(1-2n\delta)\beta_2^n}}}\right).
\end{equation*}
Since the choice of $l$ was arbitrary, this bound holds for all $l \in [d]$.
This completes the proof of Lemma \ref{lemma_concentrate_v}. \qed

\section{Proof of Lemma \ref{lemma_descent_WR}}
\label{appendix:lemma_descent_WR}

To prove Lemma \ref{lemma_descent_WR}, we need to prove that the update direction of Adam lies in the dual cone of the negative gradient direction, i.e., it is a descent direction. Recall that
$
z_k := \frac{x_k - \beta_1^n x_{k-n}}{1-\beta_1^n}.
$
We aim to establish a lower bound on $\Ex\left\langle \nabla f(z_k),z_k-z_{k+1}\right\rangle$, which we decompose as follows:
\begin{eqnarray*}
\Ex\underbrace{\left\langle \nabla f(z_k)-\nabla f(x_k),z_k-z_{k+1}\right\rangle}_{:=(a)}
+\Ex\underbrace{\left\langle \nabla f(x_k),z_k-z_{k+1}\right\rangle}_{:=(b)}.
\end{eqnarray*}
Now we bound both terms. $\Ex(a)$ can be simply bounded by using Lemma \ref{lemma_delta}.
\begin{eqnarray}
    \Ex(a)&\overset{\text{Cauchy-Schwarz}}{\geq}& - \frac{1}{1-\beta_1^n}\|\nabla f(z_k)-\nabla f(x_k)\|_2\|x_k-x_{k+1}-\beta_1^n (x_{k-n}-x_{k-n+1})\|_2 \nonumber\\
    &\overset{\text{Assumption }\ref{assum1}}{\geq}& -\frac{L}{1-\beta_1^n}\left\| \frac{x_k-\beta_1^n x_{k-n}}{1-\beta_1^n}-x_k\right\|_2 \left(\|x_k-x_{k+1}\|_2+\beta_1^n \|x_{k-n}-x_{k-n+1}\|_2\right)\nonumber \\
    &\geq &-\frac{L\beta_1^n}{1-\beta_1^n}\left(\left\|  x_{k}-x_{k-1}\right\|_2+\cdots+\left\|  x_{k-n+1}-x_{k-n}\right\|_2 \right)\left(\|x_k-x_{k+1}\|_2+\beta_1^n \|x_{k-n}-x_{k-n+1}\|_2\right)\nonumber\\
    &\overset{\text{\eqref{eq:m_over_v}}}{\geq}& -\frac{L\beta_1^n}{(1-\beta_1^n)^2}\left( \frac{d(1-\beta_1)}{\sqrt{1-\beta_2}} \frac{1}{1-\frac{\beta_1}{\sqrt{\beta_2}}}\right)^2 \left(\eta_{k-1}+\eta_{k-2}+\cdots+\eta_{k-n}\right)\left( \eta_k+\beta_1^n\eta_{k-n} \right)\nonumber \\
    &\overset{k\geq n+1}{\geq}&-\frac{L\beta_1^n}{(1-\beta_1^n)^2}\left( \frac{d(1-\beta_1)}{\sqrt{1-\beta_2}} \frac{1}{1-\frac{\beta_1}{\sqrt{\beta_2}}}\right)^2 n\sqrt{n+1}\eta_{k} \left( \eta_k+\beta_1^n\sqrt{n+1}\eta_{k}\right)\nonumber\\
    &\overset{}{\geq}& - \eta_k\frac{1}{\sqrt{k}}
    \frac{L\eta_0 \beta_1^n d^2 (1-\beta_1)^2 (1 + \beta_1^n\sqrt{n+1}) n \sqrt{n+1}}{(1- \beta_1^n)^2(1-\beta_2)(1-\frac{\beta_1}{\sqrt{\beta_2}})^2} :=- \eta_k\frac{C_1}{\sqrt{k}}.\label{eq:a}
\end{eqnarray}

To handle term $(b)$, we first divide the analysis into two cases for each dimension $l$: 
\begin{itemize}
    \item {\bf Case 1:} the \textit{bounded} partial gradient event  $B_{l,k}$, defined as $\max_i |\partial_l f_i (x_k)| \leq Q_k$; The definition of constant $Q_k$ can be seen later in \eqref{eq_constants_wr}. 
    \item {\bf Case 2:} the complement of Case 1, the \textit{unbounded}  partial gradient event $B_{l,k}^c$. 
\end{itemize}
Note that the former case is easier due to the bounded gradient condition. The latter case is more involved, and we will handle it using Lemma \ref{lemma_concentrate_v}. We first state two observations, which would help the subsequent analysis.

\begin{itemize}
    \item \textbf{Observation 1:}  When applied with $\delta = \frac{1}{4n}$, Lemma \ref{lemma_concentrate_v} guarantees that the following two key inequalities (\eqref{eq_concentrate_v_epsilon_given} and \eqref{upperbound_of_f_over_v_epsilon}) hold with high probability for integer $\bar{k} \in [k-n, k]$, provided that the condition $\max_i |\partial_l f_i (x_{\bar{k}})| \geq R_{\bar{k}}$ is met:
\begin{eqnarray}
\left(1- (1-\beta_2) \frac{8n}{\beta_2^n} \right) \frac{1}{\sqrt{\mathbb{E}_{\bar{k}}(v_{l,\bar{k}})}} &\leq & \frac{1}{\sqrt{v_{l,\bar{k}}}} \leq\left(\frac{1}{\sqrt{1-(1-\beta_2)\frac{16n}{\beta_2^n}}}\right) \frac{1}{\sqrt{\mathbb{E}_{\bar{k}}(v_{l,\bar{k}})}}.\label{eq_concentrate_v_epsilon_given} \\
\frac{\max_i(\partial_l f_i(x_{\bar{k}}))^2}{\mathbb{E}_{\bar{k}}(v_{l,\bar{k}})} &\leq& \frac{4n}{(1-2n\delta)\beta_2^n} = \frac{8n}{\beta_2^n},\label{upperbound_of_f_over_v_epsilon} 
\end{eqnarray}
\item \textbf{Observation 2:}  When $B_{l,k}^c$ occurs,  \eqref{eq_concentrate_v_epsilon_given} and \eqref{upperbound_of_f_over_v_epsilon} hold for all $\bar{k} \in [k-n, k]$ with a probability of at least $1- n(n+1) \exp\left(-\frac{1}{160(1-\beta_2)n}\right)$. 
\end{itemize}

Observation 1 can be seen directly from the statement of Lemma \ref{lemma_concentrate_v} and its proof \eqref{upperbound_of_f_over_v}. Observation 2 can be shown as follows:

First, by Lemma \ref{lemma_delta} and the definition of $Q_k$, one can easily see that: under event $B_{l,k}^c$, the conditions of Lemma \ref{lemma_concentrate_v} are automatically satisfied {\it for all} integers $\bar{k} \in [k-n, k]$.

Second,  under the conditions of  Lemma \ref{lemma_concentrate_v}, the event of  ``inequalities \eqref{eq_concentrate_v_epsilon_given} and \eqref{upperbound_of_f_over_v_epsilon} are satisfied  for a given $\bar{k}$'' is equivalent to the 
event $\mathcal{E}_{\bar{k}}(\alpha(\bar{k})):=\{Y_{k^*,\bar{k}}(\alpha(\bar{k})) \geq \frac{1}{4n}\}$.   This can be easily seen from the proof of  Lemma \ref{lemma_concentrate_v}.  As such, to prove observation 2, one need to calculate the probability of event $\bigcap_{\bar{k}=k-n}^k \mathcal{E}_{\bar{k}}(\alpha(\bar{k}))$.

Third, instead of  directly working on $\bigcap_{\bar{k}=k-n}^k \mathcal{E}_{\bar{k}}(\alpha(\bar{k}))$, we consider a slightly more restricted event:
\begin{equation}
\label{eq_A_k}
    A_k := \bigcap_{i=0}^{n-1} \bigcap_{\bar{k}=k-n}^k \left\{Y_{k^*,\bar{k}}(i) \geq \tfrac{1}{4n}\right\}.
\end{equation}
The definition of $A_k$ avoids events' dependency on $\alpha(\bar{k})$,  which will benefit the subsequent analysis. The probability of the tail event $A_k^c$ can be easily calculated by applying the union bound and \eqref{eq:failure_1}, and one can get $\mathbb{P}(A_k^c)\leq n(n+1)\exp\left(-\frac{1}{160(1-\beta_2)n}\right)$.  

Finally, event $A_k$  implies event $\bigcap_{\bar{k}=k-n}^k \mathcal{E}_{\bar{k}}(\alpha(\bar{k}))$, which is further equivalent to \eqref{eq_concentrate_v_epsilon_given} and \eqref{upperbound_of_f_over_v_epsilon} under $B_{l,k}^c$.  Therefore, under $B_{l,k}^c$, \eqref{eq_concentrate_v_epsilon_given} and \eqref{upperbound_of_f_over_v_epsilon} hold for all $\bar{k} \in [k-n, k]$ with a probability of at least $1- n(n+1) \exp\left(-\frac{1}{160(1-\beta_2)n}\right)$. This concludes the proof for observation 2.

Our subsequent analysis is centered on Observations 1 and 2. We first decompose the summation in term $(b)$ into three distinct parts to facilitate the analysis of its expectation:
\begin{eqnarray}
    (b)
    &=&\frac{1}{1-\beta_1^n}\sum_{l=1}^{d}\left[
    \eta_k \partial_lf(x_k)\frac{m_{l,k}}{\sqrt{v_{l,k}}}-\beta_1^n\eta_{k-n} \partial _lf(x_k)\frac{m_{l,k-n}}{\sqrt{v_{l,k-n}}}\right] \nonumber\\
    &=&\underbrace{ \frac{1}{1-\beta_1^n} \sum_{l=1}^{d}\left[\left(
    \eta_k \partial_lf(x_k)\frac{m_{l,k}}{\sqrt{v_{l,k}}}-\beta_1^n\eta_{k-n} \partial _lf(x_k)\frac{m_{l,k-n}}{\sqrt{v_{l,k-n}}}\right)
    \mathbb{I}(B_{l,k}) \right]}_{(c)} \nonumber\\
    &&+\underbrace{\frac{1}{1-\beta_1^n}\sum_{l=1}^{d}\left[\left(
    \eta_k \partial_lf(x_k)\frac{m_{l,k}}{\sqrt{v_{l,k}}}-\beta_1^n\eta_{k-n} \partial _lf(x_k)\frac{m_{l,k-n}}{\sqrt{v_{l,k-n}}}\right)
   \mathbb{I}(A_k \cap B_{l,k}^c)\right]}_{(d)} \nonumber\\
    &&+\underbrace{\frac{1}{1-\beta_1^n}\sum_{l=1}^{d}\left[\left(
    \eta_k \partial_lf(x_k)\frac{m_{l,k}}{\sqrt{v_{l,k}}}-\beta_1^n\eta_{k-n} \partial _lf(x_k)\frac{m_{l,k-n}}{\sqrt{v_{l,k-n}}}\right)
    \mathbb{I}(A_k^c \cap B_{l,k}^c)\right]}_{(e)}, \label{eq:b_unsimplified}
\end{eqnarray}
where $B_{l,k}, A_k$ and their complements  $B_{l,k}^c, A_k^c$ are defined in the previous paragraphs and \eqref{eq_A_k}.

\paragraph{Road map for the rest of the proof.} We will provide a lower bound for each term. We will show that Term $(d)$ constitutes the descent direction, and $(c)$ \& $(e)$ are either smaller than $(d)$ or vanishing with iteration $k$. 
Among the three terms, term $(c)$ and $(e)$ are easier to handle: term $(c)$ addresses the bounded gradient scenario, and term $(e)$ accounts for the tail event $A_k^c$. The term $(d)$ will be more involved, and it will be further divided into more sub-terms and we will resort to Lemma \ref{lemma_concentrate_v}.

\paragraph{Lower bound of $\Ex(c)$.} We start with $\Ex(c)$. Based on the proof of Lemma \ref{lemma_delta}, we have 
\begin{eqnarray}
     \Ex(c) &\geq& - \frac{1}{1-\beta_1^n}\Ex\left[\sum_{l=1}^{d} \left(
     \eta_k \vert \partial_lf(x_k)\vert \frac{\vert m_{l,k}\vert}{\sqrt{v_{l,k}}} + \beta_1^n\eta_{k-n}  \vert \partial_l f(x_k)\vert \frac{\vert m_{l,k-n}\vert}{\sqrt{v_{l,k-n}}}\right) \mathbb{I}(\text{$B_{l,k}$})\right]\nonumber\\
     &\overset{\eqref{eq:m_over_v}}{\geq}& -\frac{1}{1-\beta_1^n}\Ex\left[\sum_{l=1}^{d} \left(
     \eta_k Q_k \frac{(1-\beta_1)}{\sqrt{1-\beta_2}} \frac{1}{1-\frac{\beta_1}{\sqrt{\beta_2}}}  +  \beta_1^n\eta_{k-n}  Q_k \frac{(1-\beta_1)}{\sqrt{1-\beta_2}} \frac{1}{1-\frac{\beta_1}{\sqrt{\beta_2}}}\right) \mathbb{I}(\text{$B_{l,k}$})\right]\nonumber\\
     & \geq&  - \frac{1}{1-\beta_1^n} d\eta_k  Q_k \frac{(1-\beta_1)}{\sqrt{1-\beta_2}} \frac{1}{1-\frac{\beta_1}{\sqrt{\beta_2}}} 
     \left(1+\beta_1^n \frac{\sqrt{k}}{\sqrt{k-n}}\right)\nonumber \\
     & \overset{k\geq n+1}{\geq}&  - \frac{\eta_k}{1-\beta_1^n} d Q_k \frac{(1-\beta_1)}{\sqrt{1-\beta_2}} \frac{1}{1-\frac{\beta_1}{\sqrt{\beta_2}}} 
     \left(1+\sqrt{n+1}\beta_1^n\right):=-\eta_k\frac{C_2}{\sqrt{k}}.\label{eq:c}
 \end{eqnarray}

\paragraph{Lower bound of $\Ex(e)$.}  Now we derive a lower bound for the term $\Ex(e)$. By applying \eqref{eq:m_over_v} and using $\mathbb{I}(A_k^c\cap B_{l,k}^c)\le \mathbb{I}(A_k^c)$, we obtain
\begin{eqnarray}
    \Ex(e)
    &=& \frac{1}{1-\beta_1^n}\Ex\left[\sum_{l=1}^{d}\left(
    \eta_k \partial_l f(x_k)\frac{m_{l,k}}{\sqrt{v_{l,k}}}-\beta_1^n\eta_{k-n} \partial_l f(x_k)\frac{m_{l,k-n}}{\sqrt{v_{l,k-n}}}\right)
    \mathbb{I}(A_k^c\cap B_{l,k}^c)\right]\nonumber\\
    &\overset{\eqref{eq:m_over_v}}{\geq}&
    -\frac{1}{1-\beta_1^n}\Ex\left[\sum_{l=1}^{d}\left(\eta_k\vert\partial_l f(x_k)\vert\frac{1-\beta_1}{\sqrt{1-\beta_2}}\frac{1}{1-\frac{\beta_1}{\sqrt{\beta_2}}}
    +\beta_1^n\eta_{k-n}\vert\partial_l f(x_k)\vert\frac{1-\beta_1}{\sqrt{1-\beta_2}}\frac{1}{1-\frac{\beta_1}{\sqrt{\beta_2}}}\right)\mathbb{I}(A_k^c\cap B_{l,k}^c)\right]\nonumber\\
    &\overset{k\geq n+1}{\geq}&
    -\eta_k\frac{1}{1-\beta_1^n}\left(1+\beta_1^n\sqrt{n+1}\right)\frac{1}{\sqrt{1-\beta_2}}\frac{1}{1-\frac{\beta_1}{\sqrt{\beta_2}}}
    \Ex \left[\sum_{l=1}^{d}\vert\partial_l f(x_k)\vert\mathbb{I}(A_k^c\cap B_{l,k}^c)\right]\nonumber\\
    &\geq &
    -\eta_k\frac{1}{1-\beta_1^n}\left(1+\beta_1^n\sqrt{n+1}\right)\frac{1}{\sqrt{1-\beta_2}}\frac{1}{1-\frac{\beta_1}{\sqrt{\beta_2}}}
    \Ex\left[\sum_{l=1}^{d}\vert\partial_l f(x_k)\vert\mathbb{I}(A_k^c) \right].
    \label{eq_e_without_blk}
\end{eqnarray}
Here, the penultimate inequality holds because $k\geq n+1$, which ensures $\frac{\eta_{k-n}}{\eta_k}=\sqrt{\frac{k}{k-n}}\leq\sqrt{n+1}$.

To upper bound $\Ex\sum_{l=1}^d \vert\partial_l f(x_k)\vert\mathbb{I}(A_k^c)$, we observe that the event $A_k^c$ depends on the sampled indices within a historical window $\{\tau_j\}_{j=k-2n-k^*}^{k-n-1}$. These indices are correlated with $x_k$ through the algorithm's update recursion, thereby inducing a dependence between $\vert\partial_l f(x_k)\vert$ and $\mathbb{I}(A_k^c)$. To get a desired upper bound, we perform the following decoupling  procedures. First, we introduce a lag parameter

\begin{equation}\label{eq:def_h_e}
h := 2n+k^*,
\qquad\text{where}\quad
k^*=\left\lceil\frac{\log(1/4)}{\log\beta_2}\right\rceil.
\end{equation}

We now shift from $x_k$ to the earlier iterate $x_{k-h}$. By the condition of Lemma \ref{lemma_descent_WR}, $k-h$ is greater than 0, and thus it is a valid index.  Further, the event $A_k^c$ depends on the indices
$$
\{\tau_j\}_{j=k-2n-k^*}^{k-n-1}=\{\tau_j\}_{j=k-h}^{k-n-1},
$$
which are sampled independently after $x_{k-h}$ and thus are independent of $\mathcal{F}_{k-h}$. Consequently, $x_{k-h}$ and $\mathbb{I}(A_k^c)$ are independent random variables, yielding
\begin{equation}\label{eq:independence_e}
\Ex\left[\vert\partial_l f(x_{k-h})\vert\mathbb{I}(A_k^c)\right]
= \Ex\left[\vert\partial_l f(x_{k-h})\vert\right]\mathbb{P}(A_k^c).
\end{equation}

Using Lemma~\ref{lemma_delta} to relate gradients at $x_k$ and $x_{k-h}$, we obtain
\begin{eqnarray}
\Ex\left[\sum_{l=1}^{d}\vert\partial_l f(x_k)\vert\mathbb{I}(A_k^c)\right]
&\overset{\text{Lemma~\ref{lemma_delta}}}{\leq}&
\Ex\left[\sum_{l=1}^{d}\left(\vert\partial_l f(x_{k-h})\vert+h\Delta_{k-h}\right)\mathbb{I}(A_k^c)\right]\nonumber\\
&\overset{\eqref{eq:independence_e}}{=}&
\Ex\left[\sum_{l=1}^{d}\left(\vert\partial_l f(x_{k-h})\vert+h\Delta_{k-h}\right)\right]\cdot \mathbb{P}(A_k^c)\nonumber\\
&\overset{\text{Lemma~\ref{lemma_delta}}}{\leq}&
\Ex\left[\sum_{l=1}^{d}\vert\partial_l f(x_k)\vert+2d h \,\Delta_{k-h}\right]\cdot \mathbb{P}(A_k^c),\label{eq:e_decouple_mid}
\end{eqnarray}
where in the last inequality we have applied Lemma~\ref{lemma_delta} once again to convert
$\Ex\sum_{l=1}^d\vert\partial_l f(x_{k-h})\vert$ back to the current iterate $x_k$, incurring an additive error of magnitude $h\Delta_{k-h}$.

Combining \eqref{eq_e_without_blk} and \eqref{eq:e_decouple_mid}, and using
$\sum_{l=1}^d \vert\partial_l f(x_k)\vert \le d\,\vert\partial_\alpha f(x_k)\vert$, where   $\partial_\alpha f(x):= \max_l|\partial_l f(x)|$, together with
$\Delta_{k-h}\le \sqrt{h+1}\Delta_1/\sqrt{k}$, we obtain
\begin{eqnarray}
\Ex(e)
&\geq&
-\eta_k\frac{1}{1-\beta_1^n}\left(1+\beta_1^n\sqrt{n+1}\right)\frac{\mathbb{P}(A_k^c)}{\sqrt{1-\beta_2}}\frac{1}{1-\frac{\beta_1}{\sqrt{\beta_2}}}
\Ex\left[d\vert\partial_\alpha f(x_k)\vert+2dh\,\Delta_{k-h}\right]\nonumber\\
&\geq&
-\eta_k\,\delta_1(\beta_2)\,\Ex\left[\vert\partial_\alpha f(x_k)\vert\right]
-\eta_k\frac{C_3}{\sqrt{k}},
\label{eq:e}
\end{eqnarray}
where we define
\begin{equation}\label{eq:def_delta1_C3_e}
\delta_1(\beta_2)
:=\frac{d\left(1+\beta_1^n\sqrt{n+1}\right)}{1-\beta_1^n}
\frac{\mathbb{P}(A_k^c)}{\sqrt{1-\beta_2}}\frac{1}{1-\frac{\beta_1}{\sqrt{\beta_2}}},
\quad
C_3
:=\frac{2\,dh\sqrt{h+1}\left(1+\beta_1^n\sqrt{n+1}\right)}{1-\beta_1^n}
\frac{\mathbb{P}(A_k^c)}{\sqrt{1-\beta_2}}\frac{1}{1-\frac{\beta_1}{\sqrt{\beta_2}}}\Delta_1 .
\end{equation}
Recall from the union bound and \eqref{eq:failure_1} that 
\begin{equation*}
\mathbb{P}(A_k^c)\leq n(n+1)\exp\left(-\frac{1}{160(1-\beta_2)n}\right).
\end{equation*}
As $\beta_2\to 1^-$, the probability $\mathbb{P}(A_k^c)$ decays exponentially at rate $O\left(\exp\left(-\frac{1}{160(1-\beta_2)n}\right)\right)$, which dominates the polynomial growth of $\frac{1}{\sqrt{1-\beta_2}}$. Consequently, the factor
$\frac{\mathbb{P}(A_k^c)}{\sqrt{1-\beta_2}}$ vanishes as $\beta_2\to 1^-$, ensuring that $\delta_1(\beta_2)\to 0$ as $\beta_2\to 1^-$.
This concludes the lower bound for $\Ex(e)$.

\paragraph{Lower bound of $\Ex(d)$.}  The lower bound of $(d)$ is more involved due to the existence of momentum and unbounded gradients. We first convert all $v_{l,k-n}$'s to $v_{l,k}$'s. We discuss two cases.

\paragraph{Case (1): when $\partial_lf(x_k)m_{l,k-n}\leq 0$.} Recall {\small $\frac{1}{\sqrt{v_{l,\bar{k}-n}}}\geq 
\frac{1}{\sqrt{v_{l,\bar{k}}}}\sqrt{\beta_2^n}$ }, we have 

 $$-
\partial _lf(x_k)\frac{m_{l,k-n}}{\sqrt{v_{l,k-n}}}\geq 
-
\partial _lf(x_k)\frac{m_{l,k-n}}{\sqrt{v_{l,k}}}\sqrt{\beta_2^n}.$$

\paragraph{Case (2): when $\partial_lf(x_k)m_{l,k-n}\geq 0$.} In this case, we first provide a  lower bound  on $v_{l,\bar{k}-1}$ for the integer $\bar{k}\in [k-n+1,k]$: 
\begin{eqnarray}
v_{l,\bar{k}-1} &\geq& v_{l,\bar{k}} \left(1 - \frac{\vert v_{l,\bar{k}-1}-v_{l,\bar{k}} \vert}{v_{l,\bar{k}}}\right) \nonumber\\
&\overset{}{\geq} &v_{l,\bar{k}}\left(1-(1-\beta_2)\frac{\max_i\vert\partial_l f_i(x_{\bar{k}}) \vert^2 +v_{l,\bar{k}-1}}{v_{l,\bar{k}}}\right)\nonumber \\   &\overset{(\ref{eq_concentrate_v_epsilon_given}) (\ref{upperbound_of_f_over_v_epsilon})}{\geq}& v_{l,\bar{k}}\left(1-(1-\beta_2)\left(\frac{8 n}{\beta_2^n-(1-\beta_2)16n}+\frac{1}{\beta_2}\right)\right).\label{eq:v_k-1_geq_v_k}
\end{eqnarray}
Then we have 
 \begin{equation*}
-\partial _lf(x_k)\frac{m_{l,k-n}}{\sqrt{v_{l,k-n}}}\overset{(\ref{eq:v_k-1_geq_v_k})}{\geq} -\partial _lf(x_k)\frac{m_{l,k-n}}{\sqrt{v_{l,k}}}\left(1-(1-\beta_2)\left(\frac{8 n}{\beta_2^n-(1-\beta_2)16n}+\frac{1}{\beta_2}\right)\right)^{-\frac{n}{2}}.\end{equation*}

Combining the two cases, we have
\begin{eqnarray}\label{eq:-f_m_over_v_for_term_d}
-\partial_lf(x_k)\frac{m_{l,k-n}}{\sqrt{v_{l,k-n}}}%
&\geq& -\partial_l f(x_k)\frac{m_{l,k-n}}{\sqrt{v_{l,k}}}-\delta_2(\beta_2)
\vert \partial_lf(x_k)\vert \frac{\vert m_{l,k-n}\vert}{\sqrt{v_{l,k}}},
\end{eqnarray}
where 
\begin{equation}
    \label{eq_delta_2}
    \delta_2(\beta_2):=\left\vert \sqrt{\beta_2^n}-1 \right\vert
 +\left\vert \left(1-(1-\beta_2)\left(\frac{8 n}{\beta_2^n-(1-\beta_2)16n}+\frac{1}{\beta_2}\right)\right)^{-\frac{n}{2}}-1\right\vert.
\end{equation}
We briefly comment on the order of $\delta_2(\beta_2)$ w.r.t. $n$.   Assume we are using $1-\beta_2 = \mathcal{O}(n^{-a})$ with $a >0$, then by basic calculation, one can show that the 1st term is $\mathcal{O}(n^{-(a-1)})$ and the 2nd term is $\mathcal{O}(n^{-(a-2)})$.

With the inequality \eqref{eq:-f_m_over_v_for_term_d} above, we have converted all $v_{l,k-n}$'s  in term $(d)$ to $v_{l,k}$'s. Now we further decompose $(d)$ into three terms $(d_1)$, $(d_2)$ and $(d_3)$:
{\footnotesize
\begin{eqnarray}
    (d)&\geq& \frac{1}{1-\beta_1^n}\sum_{l=1}^{d}\left[\left(
    \eta_k \partial_lf(x_k)\frac{m_{l,k}}{\sqrt{v_{l,k}}}-\beta_1^n\eta_{k-n} \partial _lf(x_k)\frac{m_{l,k-n}}{\sqrt{v_{l,k}}}-\beta_1^n\eta_{k-n}\delta_2(\beta_2)\vert \partial_lf(x_k)\vert \frac{\vert m_{l,k-n}\vert}{\sqrt{v_{l,k}}}\right)
    \mathbb{I}(A_k \cap B_{l,k}^c)\right] \nonumber \\
       &=&  \frac{1}{1-\beta_1^n} \sum_{l=1}^{d}\left[\left(
    \eta_k \frac{\partial_lf(x_k)}{\sqrt{v_{l,k}}}(m_{l,k}-\frac{\sqrt{k}}{\sqrt{k-n}}\beta_1^n m_{l,k-n})-\beta_1^n\eta_{k-n}\delta_2(\beta_2)\vert \partial_lf(x_k)\vert \frac{\vert m_{l,k-n}\vert}{\sqrt{v_{l,k}}}\right)
    \mathbb{I}(A_k \cap B_{l,k}^c)\right] \nonumber \\
    &=&  \frac{1}{1-\beta_1^n} \sum_{l=1}^{d}\left[\left(
    \eta_k \frac{\partial_lf(x_k)}{\sqrt{v_{l,k}}}\left((m_{l,k}-\beta_1^n m_{l,k-n}) + (1-\frac{\sqrt{k}}{\sqrt{k-n}})\beta_1^n m_{l,k-n})\right)-\beta_1^n\eta_{k-n}\delta_2(\beta_2)\vert \partial_lf(x_k)\vert \frac{\vert m_{l,k-n}\vert}{\sqrt{v_{l,k}}}\right)
    \mathbb{I}(A_k \cap B_{l,k}^c)\right] \nonumber \\
      &\overset{\text{(i)}}{\geq}& \eta_k \frac{(1-\beta_1)}{1-\beta_1^n} \sum_{l=1}^{d}\left(\frac{\partial_lf(x_k)}{\sqrt{v_{l,k}}} \partial_l f_{\taut}(x_k) +\underbrace{\frac{\partial_lf(x_k)}{\sqrt{v_{l,k}}}}_{:=F_k} \underbrace{\left(\beta_1\partial_l f_{\tau_{k-1}}(x_{k-1})+\dots+\beta_1^{n-1}\partial_l f_{\tau_{k-n+1}}(x_{k-n+1})\right)}_{:=G}\right)\mathbb{I}(A_k \cap B_{l,k}^c)\nonumber \\
          &&\underbrace{-\left(\eta_k\frac{n+1}{\sqrt{k}}\frac{\beta_1^n}{1-\beta_1^n} +\eta_{k}\frac{\sqrt{n+1}\beta_1^n}{1-\beta_1^n}\delta_2(\beta_2)\right)\sum_{l=1}^{d}  \frac{\vert \partial_l f(x_k) \vert\vert m_{l,k-n}\vert}{\sqrt{v_{l,k}}}\mathbb{I}(A_k \cap B_{l,k}^c)}_{:=(d_3)}\nonumber\\
    &\overset{}{=}& \underbrace{\eta_k \frac{(1-\beta_1)}{1-\beta_1^n} \sum_{l=1}^{d}\frac{   \partial_l f(x_k) \partial_l f_{\taut}(x_k)}{\sqrt{v_{l,k}}} \cdot \mathbb{I}(A_k \cap B_{l,k}^c)}_{:=(d_1)}
    +\underbrace{\eta_k \frac{(1-\beta_1)}{1-\beta_1^n}\sum_{l=1}^{d}F_k \cdot G\cdot \mathbb{I}(A_k \cap B_{l,k}^c)}_{:=(d_2)}+(d_3).\label{eq:d_unsimplified}
\end{eqnarray}
}
where (i) is due to $|1-\frac{\sqrt{k}}{\sqrt{k-n}}|\leq \frac{n+1}{\sqrt{k}}$. 

In the following, we will show that $(d_1)$ constitutes the descent direction, whereas $(d_2)$ and $(d_3)$ are error terms. 

\paragraph{Lower bound of $\Ex(d_1)$.} We first bound $\mathbb{E}(d_1)$. We first derive a lower bound for the conditional expectation $\mathbb{E}_k(d_1)$ and subsequently take the outer expectation. By Lemma \ref{lemma_concentrate_v}, we control the quantity $\frac{1}{\sqrt{v_{l,k}}}$ via $\frac{1}{\sqrt{\mathbb{E}_k(v_{l,k})}}$, which yields:

\begin{eqnarray}
  \Exk(d_1)
  &=& \eta_k \frac{(1-\beta_1)}{1-\beta_1^n} \Exk \left( \sum_{l =1}^d \frac{   \partial_l f(x_k) \partial_l f_{\taut}(x_k)}{\sqrt{v_{l,k}}} \mathbb{I}(A_k \cap B_{l,k}^c)  \right) \nonumber\\
  &\overset{\text{(i)}}{\geq}&  \eta_k\left[ \frac{(1-\beta_1)}{1-\beta_1^n} \Exk \left( \sum_{l =1}^d  \frac{   \partial_l f(x_k) \partial_l f_{\taut}(x_k)}{\sqrt{\Exk(v_{l,k})}} \mathbb{I}(A_k \cap B_{l,k}^c)  \right) -\delta_3(\beta_2)\left| \partial_\alpha f(x_k) \right| \right] \nonumber \\
  &=&  \eta_k \left[\frac{(1-\beta_1)}{1-\beta_1^n}\Exk \left( \sum_{l =1}^d  \frac{   \partial_l f(x_k) \partial_l f_{\taut}(x_k)}{\sqrt{\Exk(v_{l,k})}}  \mathbb{I}(A_k \cap B_{l,k}^c)  \right)   - \delta_3(\beta_2)\left| \partial_\alpha f(x_k) \right| \right. \nonumber\\
  && \left. +  \frac{(1-\beta_1)}{1-\beta_1^n} \Exk \left( \sum_{l =1}^d  \frac{   \partial_l f(x_k) \partial_l f_{\taut}(x_k)}{\sqrt{\Exk(v_{l,k})}} \mathbb{I}(A_k \cap B_{l,k})  \right)  \right. \nonumber \\
  && \left. -  \frac{(1-\beta_1)}{1-\beta_1^n} \Exk \left( \sum_{l =1}^d  \frac{   \partial_l f(x_k)   \partial_l f_{\taut}(x_k) }{\sqrt{\Exk(v_{l,k})}} \mathbb{I}(A_k \cap B_{l,k})  \right)  \right] \nonumber \\
  &\overset{\text{(ii)}}{\geq}& \eta_k\left[  \frac{(1-\beta_1)}{1-\beta_1^n}\Exk \left( \sum_{l =1}^d  \frac{   \partial_l f(x_k) \partial_l f_{\taut}(x_k)}{\sqrt{\Exk(v_{l,k})}}  \mathbb{I}(A_k)  \right)   - \delta_3(\beta_2)\left| \partial_\alpha f(x_k) \right| - \frac{(1-\beta_1)}{1-\beta_1^n} \frac{2\sqrt{2n} d}{\sqrt{\beta_2^n}} Q_k\right] \nonumber\\
  &\overset{\text{(iii)}}{\geq}&  \eta_k\left[ \frac{   (\partial_\alpha f(x_k))^2 }{n\sqrt{\Exk(v_{\alpha,k})}} - \sum_{l =1}^d  \frac{(\partial_l f(x_k))^2 }{n \sqrt{\Exk\left(v_{l,k}\right)}}\mathbb{I}(A_k^c)     -\delta_3(\beta_2)\left| \partial_\alpha f(x_k) \right|-\frac{(1-\beta_1)}{1-\beta_1^n} \frac{2\sqrt{2n} d}{\sqrt{\beta_2^n}} Q_k\right],\label{eq:d1_exactly}
\end{eqnarray}

where $\alpha := \arg \max_l |\partial_l f(x_k)|$ and  $v_{\alpha, k}:= \max_lv_{l, k}$. The steps (i), (ii), (iii) are justified as follows:

\paragraph{Justification for (i) in \eqref{eq:d1_exactly}:} 
We apply the concentration inequality \eqref{eq_concentrate_v_epsilon_given} from Lemma \ref{lemma_concentrate_v} to split the term into two parts based on the sign of $\partial_l f(x_k) \partial_l f_{\taut}(x_k)$:
\begin{itemize}
    \item When $\partial_l f(x_k) \partial_l f_{\taut}(x_k) \geq 0$, we use the lower bound in \eqref{eq_concentrate_v_epsilon_given}:
    $$
    \frac{1}{\sqrt{v_{l,k}}} \geq \left(1- (1-\beta_2) \frac{8n}{\beta_2^n} \right) \frac{1}{\sqrt{\Exk(v_{l,k})}}.
    $$
    \item When $\partial_l f(x_k) \partial_l f_{\taut}(x_k) < 0$, we use the upper bound in \eqref{eq_concentrate_v_epsilon_given}:
    $$
    \frac{1}{\sqrt{v_{l,k}}} \leq \left(\frac{1}{\sqrt{1-(1-\beta_2)\frac{16n}{\beta_2^n}}}\right) \frac{1}{\sqrt{\Exk(v_{l,k})}}.
    $$
\end{itemize}
Combining these two cases and collecting error terms, we obtain the coefficient $\delta_3(\beta_2)$ defined as:
$$
\delta_3(\beta_2) = \frac{(1-\beta_1)}{1-\beta_1^n} \left(\frac{(1-\beta_2)16\sqrt{2n}nd}{\sqrt{\beta_2^{3n}}}   +   \left(\frac{1}{\sqrt{1-(1-\beta_2)\frac{16n}{\beta_2^n}}}-1 \right) \frac{2\sqrt{2n} d}{\sqrt{\beta_2^n}}  \right).
$$
The proof follows a similar strategy as in \eqref{eq:-f_m_over_v_for_term_d}, where we decomposed the term based on the sign of the product.

\paragraph{Justification for (ii) in \eqref{eq:d1_exactly}:} 
Under the event $B_{l,k}$, we have $\partial_l f(x_k) \leq  \max_i |\partial_l f_i(x_k)| \leq  Q_k$. Additionally, under the event $A_k$, Lemma \ref{lemma_concentrate_v} (specifically inequality \eqref{upperbound_of_f_over_v_epsilon}) ensures that:
$$
\frac{\partial_l f_{\taut}(x_k)}{\sqrt{\Exk(v_{l,k})}} \leq \frac{2\sqrt{2n}}{\sqrt{\beta_2^n}}.
$$
Therefore, under $A_k \cap B_{l,k}$, we have:
$$
\left|\frac{\partial_l f(x_k) \partial_l f_{\taut}(x_k)}{\sqrt{\Exk(v_{l,k})}}\right| \leq  Q_k \cdot \frac{2\sqrt{2n}}{\sqrt{\beta_2^n}} = \frac{2\sqrt{2n} Q_k}{\sqrt{\beta_2^n}}.
$$

\paragraph{Justification for (iii) in \eqref{eq:d1_exactly}:} 
This step exploits two key facts:

\begin{enumerate}[label=(\roman*)]
    \item \textbf{Fact 1:} By the inequality $\frac{1-\beta_1}{1-\beta_1^n} \geq \frac{1}{n}$, we can lower bound the coefficient.
    \item \textbf{Fact 2:} The event $A_k$ depends only on the history up to step $k-n$, and hence is independent of the current gradient sample $\partial_l f_{\taut}(x_k)$ conditional on $\mathcal{F}_k$. Therefore:
    \begin{eqnarray}
    \Exk \left[ \sum_{l =1}^d  \frac{\partial_l f(x_k) \partial_l f_{\taut}(x_k)}{\sqrt{\Exk(v_{l,k})}}  \mathbb{I}(A_k)  \right]  
    &=&   \sum_{l =1}^d  \frac{\Exk \left( \partial_l f(x_k) \partial_l f_{\taut}(x_k)  \right)}{\sqrt{\Exk(v_{l,k})}}    \mathbb{I}(A_k) \nonumber \\
    &=&  \sum_{l =1}^d  \frac{(\partial_l f(x_k))^2 }{\sqrt{\Exk(v_{l,k})}} \left(1 -  \mathbb{I}(A_k^c) \right)\nonumber\\
    &\geq& \frac{(\partial_\alpha f(x_k))^2 }{\sqrt{\Exk(v_{\alpha,k})}}  - \sum_{l =1}^d  \frac{(\partial_l f(x_k))^2 }{\sqrt{\Exk(v_{l,k})}}\mathbb{I}(A_k^c).\label{eq_justification3_fact2}
    \end{eqnarray}
\end{enumerate}

This completes the derivation of the lower bound for $\Exk(d_1)$ in \eqref{eq:d1_exactly}.

Now we proceed to handle \eqref{eq:d1_exactly}. Taking the expectation over \eqref{eq:d1_exactly}, we now upper bound $\mathbb{E} \left[ \frac{(\partial_{l}f(x_k))^2}{n\sqrt{\mathbb{E}_k (v_{l,k})}}\mathbb{I}(A_k^c)\right]$. We will use the upper bound \eqref{eq:trivial} and the same decoupling strategy in \eqref{eq:e}:

\begin{eqnarray*}
\mathbb{E} \left[ \sum_{l =1}^d\frac{(\partial_{l}f(x_k))^2}{n\sqrt{\mathbb{E}_k (v_{l,k})}}\mathbb{I}(A_k^c)\right]
&\leq&  \mathbb{E} \left[ \sum_{l =1}^d\frac{\left(\max_i \vert \partial_{l}f_i(x_k)\vert\right) }{n\sqrt{\mathbb{E}_k (v_{l,k})}}\vert \partial_{l}f(x_k)\vert\mathbb{I}(A_k^c)\right]
\\ &\overset{\eqref{eq:trivial}}{\leq}& \mathbb{E}\left[\sum_{l =1}^d\frac{1}{n}\cdot\frac{\sqrt{n}}{\sqrt{(1-\beta_2)}} \vert \partial_{l}f(x_k) \vert\mathbb{I}(A_k^c)\right]\\
&\leq& \frac{\mathbb{P}(A_k^c)dh}{\sqrt{n(1-\beta_2)}} \mathbb{E}\vert \partial_{\alpha}f(x_k) \vert +\frac{\mathbb{P}(A_k^c)2dh}{\sqrt{n(1-\beta_2)}}\sqrt{h+1}\frac{\Delta_1}{\sqrt{k}}.
\end{eqnarray*}
Combining the results above, we have
\begin{eqnarray}
    \Ex(d_1) = \Ex \Exk(d_1)
 &\geq& \eta_k\left[\Ex\frac{   (\partial_\alpha f(x_k))^2 }{n\sqrt{\Exk(v_{\alpha,k})}}-\delta_4(\beta_2)\Ex\left| \partial_\alpha f(x_k) \right|-\frac{C_4}{\sqrt{k}}\right],
 \label{eq:d1}
\end{eqnarray}
where
\begin{eqnarray}
\delta_4(\beta_2)&=& \frac{\mathbb{P}(A_k^c)dh}{\sqrt{n(1-\beta_2)}} + \delta_3(\beta_2),\label{eq_delta_4}\\
\quad C_4&=&\frac{\mathbb{P}(A_k^c)2dh}{\sqrt{n(1-\beta_2)}}\sqrt{h+1}\Delta_1 + \frac{(1-\beta_1)}{1-\beta_1^n}\frac{2\sqrt{2n} d}{\sqrt{\beta_2^n}} Q_1. \label{def_C_4} \end{eqnarray}

This concludes the lower bound for $\Ex(d_1)$ in \eqref{eq:d_unsimplified}.

\paragraph{Lower bound of $\Ex(d_2)$.} To bound the term $\Ex(d_2)$ in \eqref{eq:d_unsimplified}, we convert all the variables $x$'s to $x_{k-n}$ and then take the expectation conditioned on the history up to $\mathcal{F}_{k-n}$. 
We denote our target terms as 
\begin{equation}
\label{eq_G_k_n}
F_{k-n}:=\frac{\partial_lf(x_{k-n})}{\sqrt{v_{l,k-n}}} \quad \text{and} \quad G_{k-n}:=\beta_1 \partial_l f_{\tau_{k-1}} (x_{k-n}) + \cdots + \beta_1^{n-1} \partial_l f_{\tau_{k-n+1}} (x_{k-n}).
\end{equation}
We now convert $F_k$ in \eqref{eq:d_unsimplified} to $F_{k-n}$ using the following Lemma \ref{lemma_k-k-n_WR}.

\begin{lemma} \label{lemma_k-k-n_WR}
Consider Algorithm \ref{algorithm_wr}. Assume that   $k \geq \lceil\frac{\log (1/4) }{\log \beta_2} \rceil +2n+ 1$, and for all integers $\bar{k} \in [k-n,k]$, $\max_i |\partial_l f_i (x_{\bar{k}})| \geq R_{\bar{k}}$, and the joint event $A_k \cap B_{l,k}^c$ occurs.
Then, for any $f \in \functionclass$, we have
\begin{equation}
\label{eq_k-k-1_WR}
    \left| \frac{\partial_l f(x_{k})}{\sqrt{v_{l,k}}} -   \frac{\partial_l f(x_{k-n})}{\sqrt{v_{l,k-n}}} \right|  \leq  \frac{n}{\sqrt{\beta_2^{n}}} \frac{\Delta_{k-n}}{\sqrt{v_{l,k-n}}} + n\delta_5(\beta_2),
\end{equation}
where $\Delta_{k-n}$ is defined as in Lemma \ref{lemma_delta}, and $\delta_5(\beta_2)$ is defined as follows.
\begin{equation}
    \label{eq_delta_5}
    \delta_5(\beta_2) = \delta_{5,1}(\beta_2)\delta_{5,2}(\beta_2)\delta_{5,3}(\beta_2),
\end{equation}
with $\delta_{5,1}(\beta_2) = \left(\left[1- \left(1-(1-\beta_2)\left(\frac{8 n}{\beta_2^n-(1-\beta_2)16n}+\frac{1}{\beta_2}\right)\right)^{\frac{1}{2}}\right] +\frac{1}{\sqrt{\beta_2}}-1  \right)$,\\
$\delta_{5,2}(\beta_2) = \left(1-(1-\beta_2)\left(\frac{8 n}{\beta_2^n-(1-\beta_2)16n}+\frac{1}{\beta_2}\right)\right)^{-\frac{1}{2}}$ and
{\small$\delta_{5,3}(\beta_2) = \frac{2\sqrt{2n}}{\sqrt{\beta_2^n-(1-\beta_2)16n}}$}.
\end{lemma}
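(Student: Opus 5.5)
Telescope over one step at a time. For $\bar k\in[k-n+1,k]$ write
\[
\frac{\partial_l f(x_{\bar k})}{\sqrt{v_{l,\bar k}}}-\frac{\partial_l f(x_{\bar k-1})}{\sqrt{v_{l,\bar k-1}}}
=\frac{\partial_l f(x_{\bar k})-\partial_l f(x_{\bar k-1})}{\sqrt{v_{l,\bar k}}}
+\partial_l f(x_{\bar k-1})\left(\frac{1}{\sqrt{v_{l,\bar k}}}-\frac{1}{\sqrt{v_{l,\bar k-1}}}\right).
\]
Summing these $n$ differences and applying the triangle inequality reduces \eqref{eq_k-k-1_WR} to a one-step bound: each step must contribute at most $\frac{1}{\sqrt{\beta_2^n}}\frac{\Delta_{k-n}}{\sqrt{v_{l,k-n}}}+\delta_5(\beta_2)$.

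\textbf{First term (gradient drift).} By Lemma \ref{lemma_delta}, $|\partial_l f(x_{\bar k})-\partial_l f(x_{\bar k-1})|\le \Delta_{\bar k-1}\le\Delta_{k-n}$, since $\Delta$ is nonincreasing in the index because $\eta_k$ decreases. For the denominator, the recursion $v_{l,\bar k}\ge \beta_2^{\bar k-(k-n)}v_{l,k-n}\ge\beta_2^n v_{l,k-n}$ gives $\frac{1}{\sqrt{v_{l,\bar k}}}\le \frac{1}{\sqrt{\beta_2^n}}\frac{1}{\sqrt{v_{l,k-n}}}$. Together these give exactly the first piece of the claimed bound.

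\textbf{Second term (change of $v$).} I would control $\bigl|\sqrt{v_{l,\bar k-1}/v_{l,\bar k}}-1\bigr|\cdot\frac{|\partial_l f(x_{\bar k-1})|}{\sqrt{v_{l,\bar k-1}}}$ and split it into three factors.
\begin{itemize}
\item \emph{Ratio of consecutive $v$'s.} From above, $v_{l,\bar k-1}\le v_{l,\bar k}/\beta_2$. From below, \eqref{eq:v_k-1_geq_v_k} gives $v_{l,\bar k-1}\ge v_{l,\bar k}\bigl(1-(1-\beta_2)(\tfrac{8n}{\beta_2^n-(1-\beta_2)16n}+\tfrac1{\beta_2})\bigr)$. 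This is valid under the hypotheses because $A_k\cap B^c_{l,k}$ together with $\max_i|\partial_l f_i(x_{\bar k})|\ge R_{\bar k}$ activates Observations 1--2, i.e.\ \eqref{eq_concentrate_v_epsilon_given} and \eqref{upperbound_of_f_over_v_epsilon}, for every $\bar k\in[k-n,k]$. Hence $\bigl|1-\sqrt{v_{l,\bar k}/v_{l,\bar k-1}}\bigr|\le\delta_{5,1}(\beta_2)$.
\item \emph{Swapping the denominator.} Rewrite $\frac{1}{\sqrt{v_{l,\bar k}}}-\frac1{\sqrt{v_{l,\bar k-1}}}=\frac{1}{\sqrt{v_{l,\bar k-1}}}\bigl(\sqrt{v_{l,\bar k-1}/v_{l,\bar k}}-1\bigr)$. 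The factor $\sqrt{v_{l,\bar k-1}/v_{l,\bar k}}$ that appears is bounded using $\delta_{5,2}$.
\item \emph{Normalized gradient.} We need $\frac{|\partial_l f(x_{\bar k-1})|}{\sqrt{v_{l,\bar k-1}}}\le\delta_{5,3}(\beta_2)$. Use $|\partial_l f|\le\max_i|\partial_l f_i|$, then \eqref{upperbound_of_f_over_v_epsilon} gives $\max_i|\partial_l f_i|\le\sqrt{8n/\beta_2^n}\sqrt{\Ex_{\bar k-1}v}$, and the upper half of \eqref{eq_concentrate_v_epsilon_given} converts $\sqrt{\Ex v}$ into $\sqrt v$ at a cost of $(1-(1-\beta_2)16n/\beta_2^n)^{-1/2}$. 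The product is $\frac{2\sqrt{2n}}{\sqrt{\beta_2^n-(1-\beta_2)16n}}$.
\end{itemize}

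\textbf{Main obstacle.} The delicate part is bookkeeping in the second term: making sure the concentration bounds are applied at index $\bar k-1$, which lies in $[k-n,k-1]$ and so is covered by Observation 2, and arranging the algebra so the constants come out exactly as $\delta_{5,1}\delta_{5,2}\delta_{5,3}$. If the exact product does not match, I would settle for bounding each factor by its stated $\delta_{5,j}$, which suffices since only the qualitative rate $\delta_5\to0$ is used later.
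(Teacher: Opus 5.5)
Your proof is correct and follows the paper's overall structure: a one-step bound between consecutive indices, then telescoping. As in the paper, the drift term is controlled by Lemma~\ref{lemma_delta} and $v_{l,\bar k}\ge\beta_2^{n}v_{l,k-n}$. The change-of-$v$ term uses the two-sided ratio bound from \eqref{eq:v_k-1_geq_v_k} and the normalized-gradient bound from \eqref{eq_concentrate_v_epsilon_given}--\eqref{upperbound_of_f_over_v_epsilon}.

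The real difference is how you split the one-step difference. The paper, through its sign case analysis, effectively writes it as
$\partial_l f(x_{\bar k})\bigl(v_{l,\bar k}^{-1/2}-v_{l,\bar k-1}^{-1/2}\bigr)+\bigl(\partial_l f(x_{\bar k})-\partial_l f(x_{\bar k-1})\bigr)v_{l,\bar k-1}^{-1/2}$.
Its normalized-gradient factor $|\partial_l f(x_{\bar k})|/\sqrt{v_{l,\bar k-1}}$ therefore mixes two indices. The paper must move it to $\sqrt{v_{l,\bar k}}$, at the cost of $\delta_{5,2}$, before concentration at $\bar k$ applies; that is the only reason $\delta_{5,2}$ appears. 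In your split, the gradient and $v$ in the change-of-$v$ term both sit at index $\bar k-1\in[k-n,k-1]$. Concentration applies there directly, so your per-step error is $\delta_{5,1}\delta_{5,3}\le\delta_5$ because $\delta_{5,2}\ge 1$. Your ``swapping the denominator'' step is therefore unnecessary.

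One bookkeeping fix. Write $c$ for the bracket in $\delta_{5,2}=c^{-1/2}$. The factor your decomposition actually needs is $\bigl|\sqrt{v_{l,\bar k-1}/v_{l,\bar k}}-1\bigr|$. From $c\le v_{l,\bar k-1}/v_{l,\bar k}\le\beta_2^{-1}$, this is at most $\max\{1-\sqrt{c},\,\beta_2^{-1/2}-1\}\le\delta_{5,1}$. In your first bullet you instead bounded the inverse ratio $\bigl|1-\sqrt{v_{l,\bar k}/v_{l,\bar k-1}}\bigr|$. That quantity is only controlled by $\max\{1-\sqrt{\beta_2},\,c^{-1/2}-1\}\le\delta_{5,1}\delta_{5,2}$, not by $\delta_{5,1}$ in general. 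Also, the factor $\sqrt{v_{l,\bar k-1}/v_{l,\bar k}}$ is bounded by $\beta_2^{-1/2}$, not by $\delta_{5,2}$.
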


The proof of Lemma \ref{lemma_k-k-n_WR} can be seen in Appendix \ref{appendix:lemma_k-k-n}. 

With Lemma \ref{lemma_k-k-n_WR} and Lemma \ref{lemma_delta}, we have the following results ($G, F_{k-n}, G_{k-n}$ are defined in \eqref{eq:d_unsimplified} and \eqref{eq_G_k_n}).
$$\vert F_k-F_{k-n}\vert \overset{\text{Lemma }\ref{lemma_k-k-n_WR}}{\leq} \frac{n}{\sqrt{\beta_2^n}} \frac{\Delta_{k-n}}{\sqrt{v_{l,k-n}}} + n \delta_5(\beta_2):=C_{F,k},$$ 
$$\vert G_{k-n}-G\vert  \overset{\text{Lemma \ref{lemma_delta}}}{\leq} \sum_{m=1}^{n-1} \beta_1^m \left( \sum_{\bar{k}=m}^{n-1} \Delta_{k-\bar{k}} \right)\leq n^2\Delta_{k-n}\leq n^2\Delta_k\sqrt{n+1}:=C_{G,k}.$$
Then,
\begin{eqnarray}
(d_2) 
&=& \sum_{l=1}^d\eta_k \frac{(1-\beta_1)}{1-\beta_1^n} (F_k \cdot G) \cdot\mathbb{I}(A_k \cap B_{l,k}^c)\nonumber\\
&=& \sum_{l=1}^d\eta_k \frac{(1-\beta_1)}{1-\beta_1^n} F_k \cdot[G_{k-n} - (G_{k-n} - G)] \cdot\mathbb{I}(A_k \cap B_{l,k}^c)\nonumber\\
&\geq & \sum_{l=1}^d\eta_k \frac{(1-\beta_1)}{1-\beta_1^n} \left[F_k \cdot G_{k-n} - \vert F_k \vert \cdot\vert G_{k-n} - G\vert\right] \cdot\mathbb{I}(A_k \cap B_{l,k}^c)\nonumber\\
&\geq& \sum_{l=1}^d\eta_k \frac{(1-\beta_1)}{1-\beta_1^n} \left[ F_{k-n} \cdot G_{k-n} - \vert F_{k-n} - F_k \vert \cdot \vert G_{k-n}\vert - \vert F_k \vert \cdot \vert G_{k-n} - G\vert\right]  \cdot\mathbb{I}(A_k \cap B_{l,k}^c)\nonumber\\
&\geq& \underbrace{\sum_{l=1}^d\eta_k \frac{(1-\beta_1)}{1-\beta_1^n} \left[ \frac{\partial_l f(x_{k-n})}{\sqrt{v_{l,k-n}}} \left( \beta_1 \partial_l f_{\tau_{k-1}} (x_{k-n}) + \cdots + \beta_1^{n-1} \partial_l f_{\tau_{k-n+1}} (x_{k-n}) \right)\right]\cdot\mathbb{I}(A_k \cap B_{l,k}^c)}_{:=(d_{2,1})} \nonumber\\
    && -\sum_{l=1}^d\eta_k \frac{(1-\beta_1)}{1-\beta_1^n}\bigg[\left| C_{F,k}\cdot G_{k-n} \right| +\left| F_{k}\cdot C_{G,k} \right| \bigg] \cdot\mathbb{I}(A_k \cap B_{l,k}^c).\label{eq:d2_unsimplified}
\end{eqnarray}
We can bound $\Ex(d_{2,1})$ by following the same procedure used to derive the lower bound of $\Ex(d_1)$ in \eqref{eq:d1_exactly} and \eqref{eq:d1}.
We first note the following fact (similar to the fact \eqref{eq_justification3_fact2}): Since event $A_k$ depends only on the history up to the $(k-n)$-th iteration and $\tau_{\bar{k}}$ is independent of $\mathcal{F}_{k-n}$ for each $\bar{k}\in [k-n,k]$, we have
\begin{equation}
\mathbb{E}_{k-n}\left[ \partial_l f(x_{k-n})  \partial_l f_{\tau_{\bar{k}}} (x_{k-n})\mathbb{I}(A_k)\right]= \partial_l f(x_{k-n}) \mathbb{E}_{k-n} [\partial_l f_{\tau_{\bar{k}}} (x_{k-n})]\mathbb{I}(A_k)=(\partial_l f(x_{k-n}))^2\mathbb{I}(A_k).
\label{eq:d_2_1_nonnegative}
\end{equation}
Proceeding as before, we obtain the following bound using the fact that $\mathbb{I}(A_k ) = \mathbb{I}(A_k \cap B_{l,k}) + \mathbb{I}(A_k \cap B_{l,k}^c)$:
\begin{eqnarray}
     \Ex(d_{2,1}) = \Ex\mathbb{E}_{k-n}(d_{2,1})&\geq& \sum_{i=1}^{n-1}\beta_1^i\eta_k \left[\Ex\frac{   \left( \partial_\alpha f(x_{k-n}) \right) ^2 }{n\sqrt{\Ex_{k-n}(v_{\alpha,k-n})}} - \delta_4(\beta_2)   \Ex\left| \partial_\alpha f(x_{k-n}) \right| -\frac{C_4}{\sqrt{k-n}} \right]\nonumber\\
&\geq& -\eta_k \frac{1}{1-\beta_1}\left[ \delta_4(\beta_2)   \Ex \left| \partial_\alpha f(x_{k-n}) \right| +\frac{C_4\sqrt{n+1}}{\sqrt{k}} \right] \nonumber\\
  &\overset{}{\geq}& -\eta_k \left[\delta_6(\beta_2)\Ex \left|\partial_{\alpha} f\left(x_{k}\right)\right|+\delta_7(\beta_2)\sqrt{D_0}+\frac{C_5}{\sqrt{k}}\right],
  \label{eq:EEk-nd_2,1}
\end{eqnarray}
where \begin{equation}
    \delta_6(\beta_2)=\frac{\delta_4(\beta_2)}{1-\beta_1}  nd\sqrt{D_1}\sqrt{n} d,
 \quad \delta_7(\beta_2)=\frac{\delta_4(\beta_2)}{1-\beta_1}nd\sqrt{n}\sqrt{d},
 \quad C_5=\frac{\delta_4(\beta_2)}{1-\beta_1}n^2d\Delta_1\sqrt{n+1}+\frac{C_4\sqrt{n+1}}{1-\beta_1}.\label{eq_delta_6_7_C_5}
\end{equation}

We deal with the rest of the terms in  (\ref{eq:d2_unsimplified}) similarly, using Lemma \ref{lemma_fi_f}, (\ref{eq_concentrate_v_epsilon_given}) and (\ref{upperbound_of_f_over_v_epsilon}).
\begin{eqnarray}
|C_{F,k} \cdot G_{k-n}|\mathbb{I}(A_k \cap B_{l,k}^c) &=& \left( \frac{n}{\sqrt{\beta_2^n}} \frac{\Delta_{k-n}}{\sqrt{v_{l,k-n}}} + n \delta_5(\beta_2) \right) \left( \beta_1 |\partial_l f_{\tau_{k-1}}(x_{k-n})| + \cdots + \beta_1^{n-1} |\partial_l f_{\tau_{k-n+1}} (x_{k-n})| \right)\mathbb{I}(A_k \cap B_{l,k}^c) \nonumber\\
&\leq& \frac{1}{\sqrt{k}}\frac{(1-\beta_1^n)}{1-\beta_1} \frac{ \Delta_1 2\sqrt{2n}n\sqrt{n+1}}{\sqrt{\beta_2^n}\sqrt{\beta_2^n-(1-\beta_2) 16 n}} 
+
\delta_5(\beta_2) \frac{(1-\beta_1^n)}{1-\beta_1} n  \sqrt{D_1}\sqrt{n} d |\partial_\alpha f(x_k)|\nonumber \\
&&  +  \frac{1}{\sqrt{k}}\delta_5(\beta_2)\frac{(1-\beta_1^n)}{1-\beta_1}n^2\sqrt{n+1}\Delta_1
+ \delta_5(\beta_2)\frac{(1-\beta_1^n)}{1-\beta_1}n \sqrt{n} \sqrt{d} \sqrt{D_0}.\label{eq:C_F*Gbase}
\end{eqnarray}
And
\begin{eqnarray}
|F_k \cdot C_{G,k}| \mathbb{I}(A_k \cap B_{l,k}^c)&=& \frac{|\partial_l f(x_k)|}{\sqrt{v_{l,k}}} n^2\Delta_k\sqrt{n+1} \mathbb{I}(A_k \cap B_{l,k}^c)\overset{(\ref{eq_concentrate_v_epsilon_given}) (\ref{upperbound_of_f_over_v_epsilon})}{\le} \frac{2\sqrt{2n} }{\sqrt{\beta_2^n-(1-\beta_2)16n}}  n^2\Delta_k  \sqrt{n+1}.\label{eq:F_K*C_G}
\end{eqnarray}
Plug \eqref{eq:EEk-nd_2,1}, \eqref{eq:C_F*Gbase}, \eqref{eq:F_K*C_G} into \eqref{eq:d2_unsimplified} and take expectation,

{\small\begin{eqnarray*}
\Ex(d_2) &\overset{\eqref{eq:d2_unsimplified}}{\geq}&\Ex(d_{2,1})-\sum_{l=1}^d\eta_k \frac{(1-\beta_1)}{1-\beta_1^n}\bigg[\left| C_{F,k}\cdot G_{k-n} \right| +\left| F_{k} \cdot C_{G,k} \right| \bigg] \mathbb{I}(A_k \cap B_{l,k}^c)\\
&\overset{\eqref{eq:EEk-nd_2,1}(\ref{eq:C_F*Gbase})(\ref{eq:F_K*C_G})}{\ge}&-\eta_k\left[ \delta_8(\beta_2)\Ex\left[\left|\partial_{\alpha} f\left(x_{k}\right)\right|\right]+\delta_9(\beta_2)\sqrt{D_0}+\frac{C_6}{\sqrt{k}}\right],
\end{eqnarray*}}
where the constant terms are defined as follows ($\delta_5(\beta_2)$, $\delta_6(\beta_2)$ and $\delta_7(\beta_2)$ are defined in \eqref{eq_delta_5} and \eqref{eq_delta_6_7_C_5}):
\begin{equation}\label{eq_delta_8}
    \delta_8(\beta_2)= \delta_6(\beta_2) + d\delta_5(\beta_2)n \sqrt{D_1}\sqrt{n} d,
\end{equation}
\begin{equation}\label{eq_delta_9}
   \delta_9(\beta_2)=\delta_7(\beta_2) + d\delta_5(\beta_2)n \sqrt{n} \sqrt{d},
\end{equation}

\begin{equation}\label{def_C_6}
C_6=C_5 +d\frac{ \Delta_1 2\sqrt{2n}n\sqrt{n+1}}{\sqrt{\beta_2^n}\sqrt{\beta_2^n-(1-\beta_2) 16 n}}
+ d\delta_5(\beta_2)\frac{(1-\beta_1^n)}{1-\beta_1}n^2\sqrt{n+1}\Delta_1
+\frac{(1-\beta_1)}{1-\beta_1^n} d\frac{\Delta_12\sqrt{2n} n^2 \sqrt{n+1} }{ \sqrt{\beta_2^n-(1-\beta_2)16n}}.
\end{equation}

\paragraph{Lower bound of $\Ex(d_3)$.} Now we provide a lower bound of  $\Ex(d_3)$ in \eqref{eq:d_unsimplified}. 
Since $\vert \partial_lf(x_k) \vert \leq n \max_i\vert \partial_lf_i(x_k) \vert$, we have
\begin{eqnarray}
    -\sum_{l=1}^{d}  \frac{\vert \partial_lf(x_k) \vert\vert m_{l,k-n}\vert}{\sqrt{v_{l,k}}}\mathbb{I}(A_k \cap B_{l,k}^c)\ &\overset{(\ref{eq_concentrate_v_epsilon_given}) (\ref{upperbound_of_f_over_v_epsilon})}{\geq}& -\frac{2\sqrt{2n}}{\sqrt{\beta_2^n-
    (1-\beta_2)16n}} \sum_{l=1}^{d} \vert m_{l,k-n}\vert. \label{eq:c_sum_m}
\end{eqnarray}
We further bound $\sum_{l=1}^d |m_{l,k-n}|$, 

\begin{eqnarray}
\sum_{l=1}^d |m_{l,k-n}| &\leq& (1-\beta_1) \sum_{l=1}^d [|\partial_l f_{\tau_{k-n}}(x_{k-n})| + \beta_1|\partial_l f_{\tau_{k-n-1}}(x_{k-n-1})| + \cdots] + \beta_1^{k-n} \sum_{l=1}^d |m_{l,0}| \nonumber\\
& \overset{\text{Lemma \ref{lemma_fi_f}} }{\leq} & (1-\beta_1) \sum_{q=1}^{k-n} \beta_1^{q-1} \left[\sqrt{D_1}\sqrt{n} d \left(|\partial_\alpha f(x_{k})| + \sqrt{\frac{D_0}{D_1 d}}\right) + \frac{2\sqrt{n+1}q d \Delta_1}{\sqrt{k}} \right]+\beta_1^{k-n} \|m_0\|_1 \nonumber \\
&\overset{\text{Lemma \ref{lemma_beta}}}{\leq}& \sqrt{D_1}\sqrt{n} d \left(|\partial_\alpha f(x_{k})| + \sqrt{\frac{D_0}{D_1 d}}\right) + \frac{1}{1-\beta_1} \frac{2\sqrt{n+1}d \Delta_1}{\sqrt{k}} + \beta_1^{k-n} \|m_0\|_1 \nonumber \\
&\overset{\text{(i)}}{\leq}& \sqrt{D_1}\sqrt{n} d \left(|\partial_\alpha f(x_{k})| + \sqrt{\frac{D_0}{D_1 d}}\right) + \frac{1}{1-\beta_1} \frac{2\sqrt{n+1}d \Delta_1}{\sqrt{k}} + \frac{1}{\sqrt{k}} \|m_0\|_1 ,\label{eq:sum_m}
\end{eqnarray}
where (i) holds for $k$ large enough such that $\beta_1^{k-n}\leq \frac{1}{\sqrt{k}}$.
Plug (\ref{eq:sum_m}) into (\ref{eq:c_sum_m}),
\begin{eqnarray}  && -\sum_{l=1}^{d}  \frac{\vert \partial_lf(x_k) \vert\vert m_{l,k-n}\vert}{\sqrt{v_{l,k}}}\mathbb{I}(A_k \cap B_{l,k}^c) \nonumber \\
&\overset{(\ref{eq:c_sum_m})}{\geq} &-\frac{2\sqrt{2n}}{\sqrt{\beta_2^n-
    (1-\beta_2)16n}} \sum_{l=1}^{d} \vert m_{l,k-n}\vert \nonumber\\
    &\overset{(\ref{eq:sum_m})}{\geq} &-\frac{2\sqrt{2n}}{\sqrt{\beta_2^n-
    (1-\beta_2)16n}}\left(\sqrt{D_1}\sqrt{n} d\left(\vert\partial_\alpha f(x_k)\vert +\sqrt{\frac{D_0}{D_1d}}\right)+\frac{2\sqrt{n+1}d\Delta_1}{(1-\beta_1)\sqrt{k}}+\frac{\|m_0\|_1}{\sqrt{k}}\right).
    \label{eq:d3_fm_over_v}
\end{eqnarray}

Plug inequality (\ref{eq:d3_fm_over_v}) into the term $(d_3)$ in \eqref{eq:d_unsimplified} and take expectation,

\begin{eqnarray}
\Ex(d_3)
    &\geq& -\eta_k\left[ \left(\delta_{10}(\beta_2)+\frac{C_7}{\sqrt{k}}\right)\Ex\vert\partial_\alpha f(x_k)\vert+\delta_{11}(\beta_2)\sqrt{D_0}+\frac{C_8}{\sqrt{k}}\right],
\end{eqnarray}
where the constant terms are defined as follows ($\delta_2(\beta_2)$ is defined in \eqref{eq_delta_2}.):
\begin{eqnarray}
    \delta_{10}(\beta_2)&=&\frac{\sqrt{n+1}\beta_1^n}{1-\beta_1^n}\delta_2(\beta_2)\frac{2\sqrt{2n}}{\sqrt{\beta_2^n-
    (1-\beta_2)16n}}\sqrt{D_1}\sqrt{n} d,\label{eq_delta_10} \\
    \delta_{11}(\beta_2)&=&\frac{\sqrt{n+1}\beta_1^n}{1-\beta_1^n}\delta_2(\beta_2)\frac{2\sqrt{2n}}{\sqrt{\beta_2^n-
    (1-\beta_2)16n}}\sqrt{n}\sqrt{d} ,\label{eq_delta_11} \\
    C_7&=&(n+1)\frac{\beta_1^n}{1-\beta_1^n}\frac{2\sqrt{2n}}{\sqrt{\beta_2^n-
    (1-\beta_2)16n}}\sqrt{D_1}\sqrt{n} d, \label{def_C_7}\\
    C_8&=&(n+1)\frac{\beta_1^n}{1-\beta_1^n}\frac{2\sqrt{2n}}{\sqrt{\beta_2^n-
    (1-\beta_2)16n}}\sqrt{n}\sqrt{D_0d}\nonumber\\
    &&+
    \frac{\sqrt{n+1}\beta_1^n}{1-\beta_1^n}\delta_2(\beta_2)\frac{2\sqrt{2n}}{\sqrt{\beta_2^n-
    (1-\beta_2)16n}}\left(\frac{2\sqrt{n+1}d\Delta_1}{(1-\beta_1)}+\|m_0\|_1\right). \label{def_C_8}
\end{eqnarray}

\paragraph{Summary of the analysis above.} Now we summarize the analysis above and provide a lower bound of $\Ex(b) \geq\Ex(c)+\Ex(d_1) + \Ex(d_2)+\Ex(d_3)+\Ex(e)$ in \eqref{eq:d_unsimplified} by grouping the terms of $(\partial_{\alpha}f(x_k))^2$, $\partial_{\alpha}f(x_k)$ and gradient-independent errors.
\begin{eqnarray}
  \Ex (b)
  &\overset{}{\geq} &\eta_k\Bigg(\Ex\frac{(\partial_{\alpha}f(x_k))^2}{n\sqrt{\Exk (v_{\alpha,k})}}
  -\Ex \vert\partial_\alpha f(x_k)\vert \cdot \left(\delta_4(\beta_2)+\delta_8(\beta_2)+\delta_{10}(\beta_2)+\delta_{1}(\beta_2)+\frac{C_7}{\sqrt{k}}\right)\nonumber\\
  &&-(\delta_9(\beta_2)+\delta_{11}(\beta_2))\sqrt{D_0}-\frac{C_2+C_3+C_4+C_6+C_8}{\sqrt{k}}\Bigg)\nonumber\\
    &\overset{}{:=}& \eta_k \left\{ \Ex\left[\frac{(\partial_{\alpha}f(x_k))^2}{n\sqrt{\Exk (v_{\alpha,k})}}-\left(\delta_{12}(\beta_2)+\frac{C_7}{\sqrt{k}}\right)\vert\partial_\alpha f(x_k)\vert\right]-\delta_{11}(\beta_2)\sqrt{D_0}-\frac{C_9}{\sqrt{k}}
    \right\}, \label{eq:b}
\end{eqnarray}
where 
\begin{eqnarray}
 \delta_{12}(\beta_2)&=&\delta_4(\beta_2)+\delta_8(\beta_2)+\delta_{10}(\beta_2)+\delta_{1}(\beta_2),   \nonumber \\
C_9&=&C_2+C_3+C_4+C_6+C_8, \label{eq_delta_1_beta2} 
\end{eqnarray}
and $\delta_4(\beta_2)$, $\delta_8(\beta_2)$,  $\delta_{10}(\beta_2)$, and $\delta_1(\beta_2)$ are defined in \eqref{eq_delta_4}, \eqref{eq_delta_8}, \eqref{eq_delta_10}, \eqref{eq:def_delta1_C3_e}. 
Note that $\delta_{12}(\beta_2)+\frac{C_7}{\sqrt{k}}$ vanishes as $\beta_2\rightarrow 1$ and $k\rightarrow \infty$.  $C_2, C_3, C_4, C_6, C_8$ are defined in \eqref{eq:c},  \eqref{eq:def_delta1_C3_e}, \eqref{def_C_4}, \eqref{def_C_6}, \eqref{def_C_8}, respectively. 

Based on \eqref{eq:b},  one can show that the following \eqref{eqn:min_gradient_norm} holds when  $\beta_2$ is large enough such that $\delta_{12}(\beta_2)\leq\frac{1}{4n d \sqrt{5D_1 n}}$ and (iii) $k$ is large enough such that $\frac{C_7}{\sqrt{k}}\leq \frac{1}{4n d \sqrt{5D_1 n}}$. 
Since  $\delta_{12}(\beta_2)$ is a polynomial of $n$,  condition (i) can be achieved by setting $1-\beta_2 = \mathcal{O}(1/n^5)$.
\begin{eqnarray}\label{eqn:min_gradient_norm}
    \Ex\left[\frac{(\partial_{\alpha}f(x_k))^2}{n \sqrt{\Exk (v_{\alpha,k})}}-\left(\delta_{12}(\beta_2)+\frac{C_7}{\sqrt{k}}\right)\vert\partial_\alpha f(x_k)\vert\right]
    \geq \Ex  \left[\min\left\{\frac{\|\nabla f(x_k)\|_2^2}{nd\sqrt{5D_0 n d}}, \frac{\|\nabla f(x_k)\|_2}{2nd^2\sqrt{5D_1 n}}\right\} \right] -\delta_{12}(\beta_2) \sqrt{\frac{D_0}{D_1 d}}  - \frac{C_{11}}{\sqrt{k}},\ 
\end{eqnarray}
where $C_{11}=\frac{2\sqrt{2}\Delta_1}{(1-\beta_2)D_1n^3d^2 \sqrt{5}} + \frac{8\sqrt{2}\Delta_1(\delta_{12}(\beta_2)+C_7)}{(1-\beta_2)\sqrt{D_1 n }d} + C_7\sqrt{\frac{D_0}{D_1 d}}$ is defined later in \eqref{def_C_11}.

The proof of \eqref{eqn:min_gradient_norm} will be presented in the immediate future.  Finally, we combine the lower bounds for term (a) (\ref{eq:a}) and term (b) (\ref{eq:b}) to get:
\begin{eqnarray}
    \Ex\langle \nabla f(z_k),z_k-z_{k+1}\rangle= \Ex[(a)+(b)]\overset{(\ref{eq:a})(\ref{eq:b})}{\geq} \eta_k \left\{ \Ex  \left[ \min \left\{\frac{\|\nabla f(x_k)\|_2^2}{nd\sqrt{5D_0 n d}}, \frac{\|\nabla f(x_k)\|_2}{2nd^2\sqrt{5D_1 n}}\right\} \right] -\delta(\beta_2)\sqrt{D_0}
    - \frac{C}{\sqrt{k}} \right\},\ 
\end{eqnarray}
where
\begin{equation}\label{def_delta_C}
    \delta(\beta_2):=\delta_{12}(\beta_2) \frac{1}{\sqrt{D_1 d}}+\delta_{11}(\beta_2), \quad
    C=C_1 + C_9 + C_{11},
\end{equation}
and $\delta(\beta_2)$ approaches 0 as $\beta_2$ approaches 1. The constants $\delta_{12}(\beta_2)$ and $\delta_{11}(\beta_2)$ are defined in \eqref{eq_delta_1_beta2} and \eqref{eq_delta_11}, respectively; the constants $C_1$ and $C_9$ are defined in \eqref{eq:a} and \eqref{eq_delta_1_beta2}, respectively;  $C_{11}$ is defined later in  \eqref{def_C_11}. 
The proof of Lemma \ref{lemma_descent_WR} is now complete.

Now we prove \eqref{eqn:min_gradient_norm}. We will derive a lower bound for {\small$\frac{(\partial_{\alpha}f(x_k))^2}{n\sqrt{\Exk (v_{\alpha,k})}}-\left(\delta_{12}(\beta_2)+\frac{C_7}{\sqrt{k}}\right)\vert\partial_\alpha f(x_k)\vert$}, where $\delta_{12}(\beta_2)$ and $C_7$ are as defined earlier in \eqref{eq_delta_1_beta2}.  Note that $\delta_{12}(\beta_2)$ approaches 0 as $\beta_2$ approaches 1, and $\frac{C_7}{\sqrt{k}}$ vanishes as $k$ increases. We consider two cases:

\paragraph{Case (a): When {\small$\vert\partial_\alpha f(x_k)\vert \geq \frac{4\sqrt{2}\Delta_k}{(1-\beta_2)\sqrt{D_1n}d}$}:}
In this case, we have

\begin{eqnarray*}
  \frac{(\partial_\alpha f(x_k))^2}{\sqrt{\Exk(v_{\alpha,k})}} 
   &=& \frac{(\partial_\alpha f(x_k))^2}{\sqrt{(1-\beta_2)\left(\frac{1}{n}\sum_{i=0}^{n-1}(\partial_{\alpha} f_i(x_{k}))^2 + \sum_{j=1}^{k-1} \left(\partial_{\alpha} f_{\tau_{k-j}}(x_{k-j})\right)^2 \beta_2^j \right)}} \\ 
  &\overset{\text{\eqref{lemma_fi_f_2nd}}}{\geq}& \frac{(\partial_\alpha f(x_k))^2}{\sqrt{(1-\beta_2) \left(\left(\sqrt{\left|\partial_{\alpha} f\left(x_{k}\right)\right|^{2}+\frac{D_{0}}{D_{1} d}} \sqrt{D_{1} n}d\right)^2+\sum_{j=1}^{k-1}\left(\sqrt{\left|\partial_{\alpha} f\left(x_{k}\right)\right|^{2}+\frac{D_{0}}{D_{1} d}} \sqrt{D_{1} n}d+\sum_{t=1}^{j} \Delta_{k-t}\right)^{2} \beta_{2}^{j}\right)}}
  \\
  &\geq& 
  \frac{(\partial_\alpha f(x_k))^2}{\sqrt{(1-\beta_2) \left(\sum_{j=0}^{\infty} \beta_{2}^{j} \left(\left(\left|\partial_{\alpha} f\left(x_{k}\right)\right|^{2}+\frac{D_{0}}{D_{1} d}\right) D_{1} n d^2 + 4 \sqrt{2}j \Delta_{k} \sqrt{\left|\partial_{\alpha} f\left(x_{k}\right)\right|^{2}+\frac{D_{0}}{D_{1} d}} \sqrt{D_{1} n} d + 8j^2\Delta_{k}^2 \right)\right)}}
   \\
   &\overset{\text{Lemma \ref{lemma_beta}}}{\geq}& \frac{(\partial_\alpha f(x_k))^2}{\sqrt{D_{1} n d^2\left(\left(\left|\partial_{\alpha} f\left(x_{k}\right)\right|^{2}+\frac{D_{0}}{D_{1} d}\right)+ \sqrt{\left|\partial_{\alpha} f\left(x_{k}\right)\right|^{2}+\frac{D_{0}}{D_{1} d}} \frac{4 \sqrt{2} \Delta_{k}}{(1-\beta_2)\sqrt{D_{1} n }d} + \frac{16 \Delta_{k}^2}{(1-\beta_2)^2 D_1 n d^2 }\right)}} \\
  &\overset{\text{Case (a)}}{\geq}& \frac{(\partial_\alpha f(x_k))^2}{\sqrt{\frac{5}{2} D_{1} n d^2 \left(\left|\partial_{\alpha} f\left(x_{k}\right)\right|^{2}+\frac{D_{0}}{D_{1} d}\right)}}.
\end{eqnarray*}

We consider two sub-cases according to the relative size of $(\partial_\alpha f(x_k))^2$:

\paragraph{Sub-case 1: When {\small$(\partial_\alpha f(x_k))^2 \leq \frac{D_0}{D_1d}$}:}
\begin{eqnarray*}
\frac{(\partial_{\alpha} f(x_k))^2}{n\sqrt{\Exk(v_{\alpha,k})}} - \left(\delta_{12}(\beta_2)+\frac{C_7}{\sqrt{k}}\right)\vert\partial_\alpha f(x_k)\vert
&\geq& \frac{(\partial_{\alpha} f(x_k))^2}{n\sqrt{5D_0 nd}} - \left(\delta_{12}(\beta_2)+\frac{C_7}{\sqrt{k}}\right)\sqrt{\frac{D_0}{D_1d}}. \\
\end{eqnarray*}

\paragraph{Sub-case 2: When {\small$(\partial_\alpha f(x_k))^2 > \frac{D_0}{D_1d}$}:}
\begin{eqnarray*}
\frac{(\partial_{\alpha} f(x_k))^2}{n\sqrt{\Exk(v_{\alpha,k})}} - \left(\delta_{12}(\beta_2)+\frac{C_7}{\sqrt{k}}\right)\vert\partial_\alpha f(x_k)\vert
&\geq& \frac{\vert\partial_\alpha f(x_k)\vert}{n\sqrt{5D_1 n d^2}} - \left(\delta_{12}(\beta_2)+\frac{C_7}{\sqrt{k}}\right)\vert\partial_\alpha f(x_k)\vert \\
&=& \vert\partial_\alpha f(x_k)\vert \left(\frac{1}{n d \sqrt{5D_1 n}} - \delta_{12}(\beta_2) - \frac{C_7}{\sqrt{k}}\right).
\end{eqnarray*}
Combining these sub-cases, when $\beta_2$ is large enough such that $\delta_{12}(\beta_2)\leq\frac{1}{4n d \sqrt{5D_1 n}}$ and $k$ is large enough such that $\frac{C_7}{\sqrt{k}}\leq \frac{1}{4n d \sqrt{5D_1 n}}$, we have
{\small\begin{eqnarray*}
\frac{(\partial_{\alpha} f(x_k))^2}{n\sqrt{\Exk(v_{\alpha,k})}} - \left(\delta_{12}(\beta_2)+\frac{C_7}{\sqrt{k}}\right)\vert\partial_\alpha f(x_k)\vert
&\geq& \min\left\{\frac{(\partial_{\alpha} f(x_k))^2}{n\sqrt{5D_0 n d}}, \frac{\vert\partial_\alpha f(x_k)\vert}{2n d \sqrt{5D_1 n}}\right\} - \delta_{12}(\beta_2)\sqrt{\frac{D_0}{D_1d}} - \frac{C_7}{\sqrt{k}}\sqrt{\frac{D_0}{D_1d}}.
\end{eqnarray*}}

\paragraph{Case (b): When $\vert\partial_\alpha f(x_k)\vert < \frac{4\sqrt{2}\Delta_k}{(1-\beta_2)\sqrt{D_1n}d}$:}

In this case:
\begin{eqnarray*}
\frac{(\partial_{\alpha} f(x_k))^2}{n\sqrt{\Exk(v_{\alpha,k})}} - \left(\delta_{12}(\beta_2)+\frac{C_7}{\sqrt{k}}\right)\vert\partial_\alpha f(x_k)\vert
&\geq& \frac{\vert\partial_\alpha f(x_k)\vert}{2n d \sqrt{5D_1 n}}  - \frac{2\sqrt{2}\Delta_1}{(1-\beta_2)D_1n^2d^2 \sqrt{5}}\frac{1}{\sqrt{k}} \\
&&- \frac{8\sqrt{2}\Delta_1\delta_{12}(\beta_2)}{(1-\beta_2)\sqrt{D_1 n }d}\frac{1}{\sqrt{k}} - \frac{8\sqrt{2}\Delta_1C_7}{(1-\beta_2)\sqrt{D_1 n }d}\frac{1}{k} \\
&\geq& \min\left\{\frac{(\partial_{\alpha} f(x_k))^2}{n\sqrt{5D_0 n d}}, \frac{\vert\partial_\alpha f(x_k)\vert}{2n d \sqrt{5D_1 n}}\right\} - \frac{C_{10}}{\sqrt{k}},
\end{eqnarray*}
where we define:

\begin{equation}\label{def_C_10}
    C_{10} = \frac{2\sqrt{2}\Delta_1}{(1-\beta_2)D_1n^2d^2 \sqrt{5}} + \frac{8\sqrt{2}\Delta_1(\delta_{12}(\beta_2)+C_7)}{(1-\beta_2)\sqrt{D_1 n }d}.
\end{equation}

Taking expectations and combining cases (a) and (b), we obtain
\begin{eqnarray*}
\Ex\left[\frac{(\partial_{\alpha}f(x_k))^2}{n\sqrt{\Exk (v_{\alpha,k})}}-\left(\delta_{12}(\beta_2)+\frac{C_7}{\sqrt{k}}\right)\vert\partial_\alpha f(x_k)\vert\right] &\geq& \Ex\left[\min\left\{\frac{(\partial_{\alpha} f(x_k))^2}{n\sqrt{5D_0 n d}}, \frac{\vert\partial_\alpha f(x_k)\vert}{2n d \sqrt{5D_1 n}}\right\}\right] - \delta_{12}(\beta_2)\sqrt{\frac{D_0}{D_1d}} - \frac{C_{11}}{\sqrt{k}},
\end{eqnarray*}
where 
\begin{equation}\label{def_C_11}
    C_{11}=C_{10}+ C_7\sqrt{\frac{D_0}{D_1d}}, 
\end{equation}
and $C_{10}$, $C_7$ are defined in \eqref{def_C_10}, \eqref{def_C_7}, respectively.

Using the relation $d(\partial_{\alpha}f(x_k))^2\geq \|\nabla f(x_k)\|^2_2$ and $d|\partial_{\alpha}f(x_k)|\geq \|\nabla f(x_k)\|_1\geq\|\nabla f(x_k)\|_2$, we have
{\small\begin{eqnarray*}
\Ex\left[\frac{(\partial_{\alpha}f(x_k))^2}{n\sqrt{\Exk (v_{\alpha,k})}}-\left(\delta_{12}(\beta_2)+\frac{C_7}{\sqrt{k}}\right)\vert\partial_\alpha f(x_k)\vert\right] &\geq& \Ex\left[\min\left\{\frac{\|\nabla f(x_k)\|_2^2}{nd\sqrt{5D_0 n d}}, \frac{\|\nabla f(x_k)\|_2}{2nd^2\sqrt{5D_1 n}}\right\}\right] - \delta_{12}(\beta_2)\sqrt{\frac{D_0}{D_1d}} - \frac{C_{11}}{\sqrt{k}}.
\end{eqnarray*}}

This completes the proof for \eqref{eqn:min_gradient_norm}.  Finally, when $D_1 =0$, i.e., when the bounded 2nd-order moment condition holds, we can arrive at a similar conclusion subject to some changes in the constant terms. The proof under $D_1 =0$ is strictly simpler than the current proof, as it reduces to the bounded gradient case.  We complete the whole proof for Lemma \ref{lemma_descent_WR}.   \qed

\section{Proof of the Convergence Results in Theorem \ref{thm_wr}}
\label{appendix_main_body_wr}

Now we present the proof for  Theorem \ref{thm_wr}. Based on the descent Lemma and the telescoping sum, by Lemma \ref{lemma_descent_WR} we have the following relation. Let $T_{\min}$ be sufficiently large and satisfy the conditions of Lemma \ref{lemma_descent_WR}, we have:
{\footnotesize
\begin{eqnarray}
\sum_{k=T_{\min}}^{T} \eta_k \left\{ \Ex \left[\min\left\{\frac{\|\nabla f(x_k)\|_2^2}{nd\sqrt{5D_0 n d}}, \frac{\|\nabla f(x_k)\|_2}{2nd^2\sqrt{5D_1 n}}\right\} \right] - \delta(\beta_2)\sqrt{D_0} - \frac{C}{\sqrt{k}}\right\} 
&\overset{}{\leq}& \sum_{k=T_{\min}}^{T} \Ex\langle \nabla f(z_k), z_{k}-z_{k+1}\rangle\nonumber\\
&\overset{}{\leq}& \Ex f(z_{T_{\min}}) - \Ex f(z_{T+1}) + \sum_{k=T_{\min}}^{T} \frac{L}{2}\Ex\|z_{k+1}-z_k\|_2^2\nonumber\\
&\leq& \Ex f(z_{T_{\min}}) - f^* + \sum_{k=T_{\min}}^{T} \frac{L}{2}\Ex\|z_{k+1}-z_k\|_2^2.\label{eq:descent_unsimplified}
\end{eqnarray}
}
where $C$ is defined in \eqref{def_delta_C}.

From \eqref{eq:m_over_v} and the relation $z_{k+1}-z_k = \frac{x_{k+1}-x_{k}-\beta_1^n(x_{k-n+1}-x_{k-n})}{1-\beta_1^n}$, we have
\begin{eqnarray}
\Vert z_{k+1} - z_k \Vert_2^2 
&\le& \frac{1}{(1-\beta_1^n)^2} \left( \Vert x_{k+1}-x_k \Vert_2 + \beta_1^n \Vert x_{k-n+1}-x_{k-n} \Vert_2 \right)^2 \nonumber\\
&\le& \frac{1}{(1-\beta_1^n)^2} \cdot \left( \frac{d(1-\beta_1)}{\sqrt{1-\beta_2}(1-\frac{\beta_1}{\sqrt{\beta_2}})} \right)^2 \cdot \left(\eta_k^2  + \beta_1^{2n} \eta_{k-n}^2+2\beta_1^{n} \eta_{k}\eta_{k-n} \right) \nonumber\\
&\le& \frac{\eta_k^2}{(1-\beta_1^n)^2}\cdot  \frac{d^2(1-\beta_1)^2}{(1-\beta_2)\left(1-\frac{\beta_1}{\sqrt{\beta_2}}\right)^2} \cdot \left( 1 + \beta_1^{2n} (n+1) + 2\beta_1^{n}\sqrt{n+1} \right) \nonumber\\
&=& \frac{1}{k} \cdot \frac{d^2(1-\beta_1)^2 \eta_0^2\left( 1 + \beta_1^{2n} (n+1) + 2\beta_1^{n}\sqrt{n+1} \right) }{(1-\beta_1^n)^2 (1-\beta_2) \left(1-\frac{\beta_1}{\sqrt{\beta_2}}\right)^2}:=\frac{C_z}{k},\label{eq:z_k+1-z_k}
\end{eqnarray}
where the second last inequality holds because for $k\geq n+1$, $\frac{1}{\sqrt{k-n}}\leq \frac{\sqrt{n+1}}{\sqrt{k}}$.

Plug \eqref{eq:z_k+1-z_k} into \eqref{eq:descent_unsimplified}, we get
\begin{eqnarray}
\sum_{k=T_{\min}}^{T} \frac{\eta_0}{\sqrt{k}} 
\left\{ 
\Ex \left[\min\left\{\frac{\|\nabla f(x_k)\|_2^2}{nd\sqrt{5D_0 n d}}, \frac{\|\nabla f(x_k)\|_2}{2nd^2\sqrt{5D_1 n}}\right\}\right] 
- \delta(\beta_2)\sqrt{D_0} 
- \frac{C}{\sqrt{k}}
\right\}
\nonumber\\
\leq \Ex f(z_{T_{\min}}) - f^* + \sum_{k=T_{\min}}^{T} \frac{L}{2}\cdot\frac{C_z}{k}.
\label{eq:thm_wr_telescoping_bound}
\end{eqnarray}

Using $\sum_{k=T_{\min}}^{T} \frac{1}{\sqrt{k}} \geq 2(\sqrt{T} - \sqrt{T_{\min}})$ and $\sum_{k=T_{\min}}^{T} \frac{1}{k} \leq \log\frac{T+1}{T_{\min}}$, we have
\begin{eqnarray}
\min_{k \in [T_{\min},T]} \Ex \left[ 
    \min\left\{\frac{\|\nabla f(x_k)\|_2^2}{nd\sqrt{5D_0 n d}}, \frac{\|\nabla f(x_k)\|_2}{2nd^2\sqrt{5D_1 n}}\right\}
\right] 
&\leq& \frac{
    \displaystyle
        \delta(\beta_2)\sqrt{D_0} \sum_{k=T_{\min}}^{T} \frac{\eta_0}{\sqrt{k}}
    + \left(C\eta_0 + \frac{LC_z}{2}\right)
      \log\frac{T+1}{T_{\min}}
    +\Ex f(z_{T_{\min}}) - f^*}
    {\sum_{k=T_{\min}}^{T} \frac{\eta_0}{\sqrt{k}}
}\nonumber\\
&\leq & \left(C\eta_0 + \frac{LC_z}{2}\right) \frac{\log T}{\sqrt{T}} + \delta(\beta_2)\sqrt{D_0}.\nonumber
\end{eqnarray}
for sufficiently large $T$.

After appropriate scaling, we have the following relation.
\begin{eqnarray}
\min_{k \in [1, T]} \Ex\left[\min  \left\{ \frac{\|\nabla f(x_{k})\|_2^2 }{\sqrt{D_{0}} }  , \frac{\|\nabla f(x_k)\|_2}{2\sqrt{d D_1 }}\right\}  \right]
&\leq&  n^{1.5} d^{1.5}\sqrt{5}\left(C\eta_0 + \frac{LC_z}{2}\right) \frac{\log T }{\sqrt{T}}  + n^{1.5} d^{1.5}\sqrt{5} \delta(\beta_2)\sqrt{D_0},\nonumber\\
&=&\mathcal{O}\left(\frac{\log T }{\sqrt{T}} \right)  + \mathcal{O}(\delta(\beta_2)\sqrt{D_0}).\label{eq_logT_sqrtT}
\end{eqnarray}

This concludes the proof for Theorem \ref{thm_wr}. \qed

\section{Proof of the Divergence Results in Theorem  \ref{thm_diverge} }
\label{appendix:thm_diverge}

  We now present the proof for the divergence result. We will primarily consider $a = 1$, which keeps the results clean. Nevertheless, the same proof procedure applies to any finite positive $a>0$. We denote $x_{k, i}$ as the value of $x$ at the $k$-th outer loop and $i$-th inner loop. We consider cyclic update ordering where $f_i(x)$ are sampled in the order of $f_0(x), f_1(x), \cdots, f_{n-1}(x)$ within the  $k$-th outer loop. We only present the proof for Adam under  cyclic update ordering, which helps  reveal the key insights. The proof for random sampling follows the same procedure and gives the similar conclusion, which we omit for brevity.  Firstly, we prove the following claim: 

\begin{center}
  {\bf Claim:} for any $n \geq 3$, there exists an orange region shown in Figure \ref{fig:counter_boundary_paper} (a) s.t., Adam with any $\betabeta$ combination in the orange region gives $x_{k+1,0}>1$ as long as $x_{k,0} = 1$. 
\end{center}

Now let us prove the claim. For function \eqref{counterexample1}, the update rule of Adam is shown as follows.
\begin{eqnarray}
x_{k,1}&=& \left(x_{k,0}+\delta_{k,0}\right), \quad \delta_{k,0}=- \frac{\eta_0}{\sqrt{k}} \left(\frac{n(1-\beta_1)+\beta_1 m_{k-1,n-1}}{\sqrt{(1-\beta_2)n^2+\beta_2 v_{k-1,n-1}}}\right),\label{eq_counter_update1} \\
x_{k,i+1} &=& \left(x_{k,i}+\delta_{k,i}\right) , \quad i=1, \cdots, n-1;\label{eq_counter_update2}
\end{eqnarray}
where $ \delta_{k,i} = - \frac{\eta_0}{\sqrt{k}} \left( \frac{(1-\beta_1) \sum_{j=0}^{i-1} (-1) \beta_1^{j} + (1-\beta_1)\beta_1^i n+\beta_1^{i+1}m_{k-1,n-1} }{\sqrt{(1-\beta_2)+ \beta_2 v_{k,i-1}}}\right).$

We decompose the  total movement $ \sum_{i=0}^{n-1}\delta_{k,i} $ into three terms as follows.
\begin{eqnarray*}
 \sum_{i=0}^{n-1}\delta_{k,i}  &=& \frac{\eta_0}{\sqrt{k}} \underbrace{\left( -\frac{\beta_1  m_{k-1,n-1}}{\sqrt{(1-\beta_2)n^2+\beta_2 v_{k-1,n-1}}} -  \frac{\beta_1^2  m_{k-1,n-1}}{\sqrt{(1-\beta_2)+\beta_2 v_{k,0}}}   -\cdots -  \frac{\beta_1^n  m_{k-1,n-1}}{\sqrt{(1-\beta_2)+\beta_2 v_{k,n-2}}}   \right) }_{\text{(I)}}  \\
 && +  \frac{\eta_0}{\sqrt{k}}  \underbrace{\left(\frac{1-\beta_1 }{ \sqrt{(1-\beta_2) +\beta_2 v_{k,0}}} + \frac{(1-\beta_1)+\beta_1(1-\beta_1) }{ \sqrt{(1-\beta_2) +\beta_2 v_{k,1}}} + \cdots+ \frac{(1-\beta_1)\sum_{j=0}^{n-2} \beta_1^j
  }{ \sqrt{(1-\beta_2) +\beta_2 v_{k,n-2}}}   \right) }_{\text{(II)}} \\
  &&+ \frac{\eta_0}{\sqrt{k}} \underbrace{\left( -  \frac{n(1-\beta_1)}{\sqrt{(1-\beta_2)n^2 +\beta_2 v_{k-1,n-1}}}  - \frac{n(1-\beta_1)\beta_1}{\sqrt{(1-\beta_2) +\beta_2 v_{k,0}}} - \cdots- \frac{n(1-\beta_1)\beta_1^{n-1}}{\sqrt{(1-\beta_2)+\beta_2 v_{k,n-2}}}     \right) }_{\text{(III)}}.
\end{eqnarray*}

We will show that for some $\beta_1$ and $\beta_2$: $\text{(I)}$, $\text{(II)}>0$ and $\text{(III)}<0$. In addition, $\text{(I)}$ and $\text{(II)}$  outweigh $\text{(III)}$, causing the divergence.  First, we show that $m_{k-1, n-1}<0 $ when $\beta_1$ is small, which implies (I)$>0$. 
\begin{eqnarray*}
  -m_{k-1,n-1} &=& (1-\beta_1) \sum_{j=0}^{n-2} \beta_1^j -  (1-\beta_1)\beta_1^{n-1}n -\beta_1^n m_{k-2,n-1} = \left[(1-\beta_1^{n-1})-(1-\beta_1)\beta_1^{n-1}n \right] \sum_{j=0}^k \left(\beta_1^{n}\right)^j. 
 \end{eqnarray*}
when $\beta_1$ is small, we have  $(1-\beta_1^{n-1})>(1-\beta_1)\beta_1^{n-1}n$, which implies  $-m_{k-1,n-1}>0$. For these choices of $\beta_1$, we  have $\text{(I)}>0$. Now we derive a lower bound for $\text{(II)}$.
 \begin{eqnarray*} 
   \text{(II)}&\geq & \frac{1-\beta_1 }{ \sqrt{1+\beta_2 n^2}} + \frac{(1-\beta_1)+\beta_1(1-\beta_1) }{ \sqrt{1+\beta_2^2 n^2}} + \cdots+ \frac{(1-\beta_1)\sum_{j=0}^{n-2} \beta_1^j
   }{ \sqrt{1+\beta_2^{n-1} n^2}}  \\
   &=&  \frac{1-\beta_1 }{ \sqrt{1+\beta_2 n^2}} + \frac{1-\beta_1^2 }{ \sqrt{1+\beta_2^2 n^2}} + \cdots+ \frac{1-\beta_1^{n-1}
   }{ \sqrt{1+\beta_2^{n-1} n^2}}.
 \end{eqnarray*}
 The inequality is due to the fact that $v_{k,0}\leq n^2$. Since $\beta_2^{j} n^2$ is small when $\beta_2$ is small and  $j$ is close to $n$, there exists some small $\beta_2$ such that  $\beta_2^{j} n^2 \leq 0.1$ for at least one $j<n$. For these small enough $\beta_2$, we keep the summand with $j \geq \log_{\beta_2} (0.1/n^2)$ and drop the rest.  With basic calculus, we have 
 \begin{eqnarray*} 
  \text{(II)}&\geq &
  \left(n-1-\min \left\{n-1, \log_{\beta_2} (\frac{1}{10n^2})   \right\}\right) \frac{1-\beta_1^{\min \left\{n-1, \log_{\beta_2} (\frac{1}{10n^2})   \right\}}}{\sqrt{1+ \max\left\{0.1, \beta_2^{n-1} n^2   \right\}}}
\end{eqnarray*}

Now we derive an upper bound for $|\text{(III)}|$. 

\begin{eqnarray*}
  |\text{(III)}| & \leq  &   \left|  \frac{n(1-\beta_1)}{\sqrt{(1-\beta_2)n^2  +\beta_2 v_{k-1,n-1}}} \right| + \left|\frac{n(1-\beta_1)\beta_1}{\sqrt{(1-\beta_2) +\beta_2 v_{k,0}}} \right|+ \cdots+ \left|\frac{n(1-\beta_1)\beta_1^{n-1}}{\sqrt{(1-\beta_2)+\beta_2 v_{k,n-2}}}\right| \\
  &\leq &  \frac{1-\beta_1}{\sqrt{1-\beta_2}}\left( 1+n(\sum_{j=1}^{n-1}\beta_1^j)    \right) =  \frac{1-\beta_1}{\sqrt{1-\beta_2}} +  \frac{\beta_1 (1-\beta_1^{n-1}) }{\sqrt{1-\beta_2}} n \\
  &\leq &  \frac{1-\beta_1}{\sqrt{1-\beta_2}} +  \frac{\beta_1  }{\sqrt{1-\beta_2}} n.
\end{eqnarray*}
Further, we will use a small enough stepsize $\eta_0 \leq 2\sqrt{(1-\beta_2)\beta_2^n}
$ to ensure the iterates will stay in the linear region, thus the above relations hold for all iterates in the trajectory. In summary, the divergence happens if the following  conditions hold:

\vspace{-5mm}

\begin{equation}\label{diverge_c1}
    {\bf (C1):}   \left(n-1-\min \left\{n-1, \log_{\beta_2} (\frac{1}{10n^2})   \right\}\right) \frac{1-\beta_1^{\min \left\{n-1, \log_{\beta_2} (\frac{1}{10n^2})   \right\}}}{\sqrt{1+ \max\left\{0.1, \beta_2^{n-1} n^2   \right\}}}
\geq  \frac{1-\beta_1}{\sqrt{1-\beta_2}} + \frac{\beta_1  }{\sqrt{1-\beta_2}} n; 
\end{equation}		

\begin{equation}\label{diverge_c2}
      {\bf (C2):}  (1-\beta_1^{n-1})>(1-\beta_1)\beta_1^{n-1}n;
\end{equation}

\begin{equation}\label{diverge_c3}
    {\bf (C3): } \eta_0 \leq 2\sqrt{(1-\beta_2)\beta_2^n}.
\end{equation}
This concludes the proof on the divergence of Adam's iterates and function values. The divergence of gradients can also be proved following a similar procedure by changing the initialization to $x <0$. 
Finally, the above proof procedure applies to any finite positive $a>0$, not merely for $a = 1$. When $a= 1/(n-1)^2$, our counter-example \eqref{counterexample1} satisfies $D_1 = 2n^2$, $D_0 = 0$, so the divergence happens for arbitrary function class $\functionclass$ with $D_1 \geq  2n^2$ and $D_0 \geq 0$.   This concludes the  proof of Theorem \ref{thm_diverge}. \qed

With the help of \texttt{NumPy}, we visualize the region where  $ {\bf (C1)} $ and $ {\bf (C2)} $ hold. The results are shown in Figure \ref{fig:counter_boundary}. We use orange color to indicate the region where $ {\bf (C1)} $ holds.  White color is used for the counterpart. As for $ {\bf (C2)} $, we use the gray vertical line to indicate 
the line where $(1-\beta_1^{n-1})=(1-\beta_1)\beta_1^{n-1}n$. Note that there are two solutions to this equation: one solution is $\beta_1=1$ and the other solution lies in  $ 0<\beta_1<1$, this is why there are two vertical lines in the figure. $ {\bf (C2)} $ holds on the left-hand side of the left gray vertical line.

\begin{figure}[h]
  \vspace{-2mm}
    \subfigure[$n=5$]{
      \begin{minipage}[t]{0.25\linewidth}
      \centering
      \includegraphics[width=\linewidth]{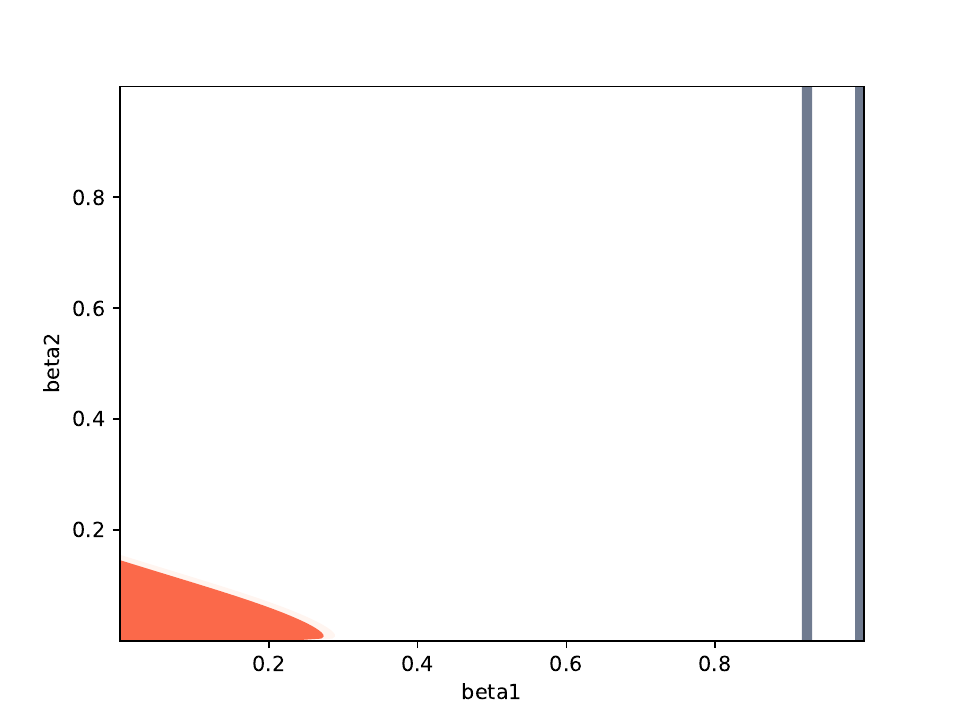}
      \end{minipage}%
      }%
      \subfigure[$n=10$]{
        \begin{minipage}[t]{0.25\linewidth}
        \centering
        \includegraphics[width=\linewidth]{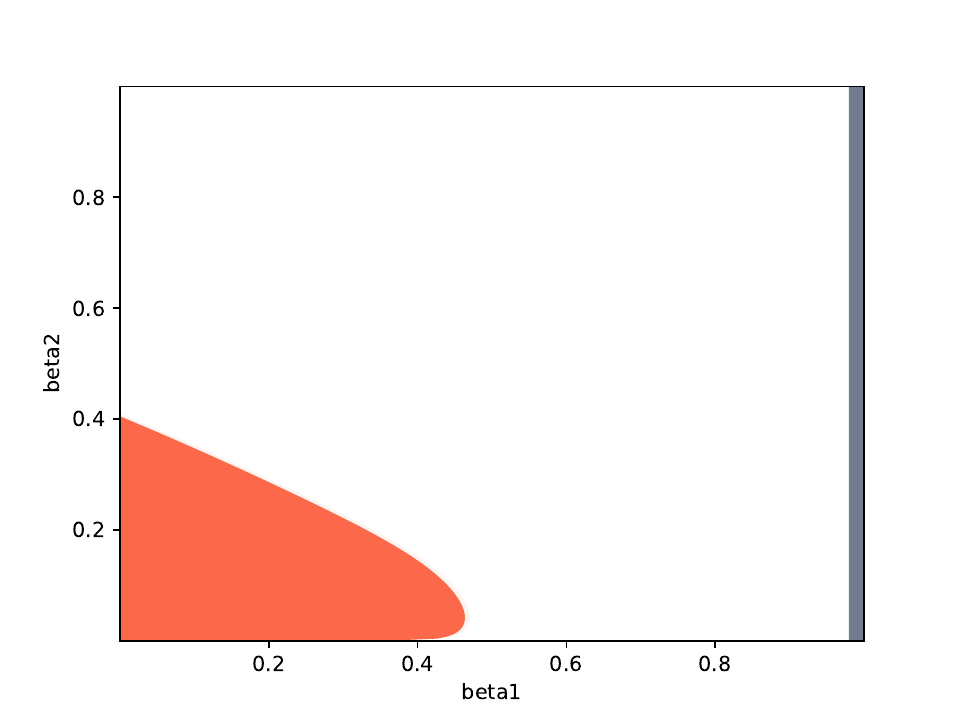}
        \end{minipage}%
        }%
        \subfigure[$n=50$]{
        \begin{minipage}[t]{0.25\linewidth}
        \centering
        \includegraphics[width=\linewidth]{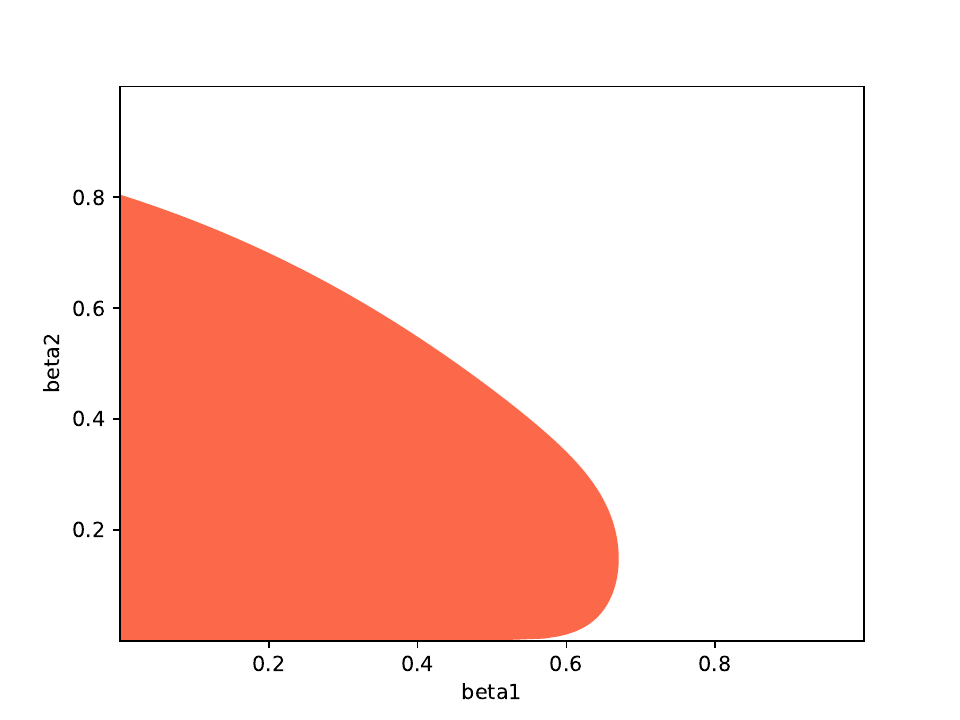}
        \end{minipage}%
        }%
        \subfigure[$n=100$]{
          \begin{minipage}[t]{0.25\linewidth}
          \centering
          \includegraphics[width=\linewidth]{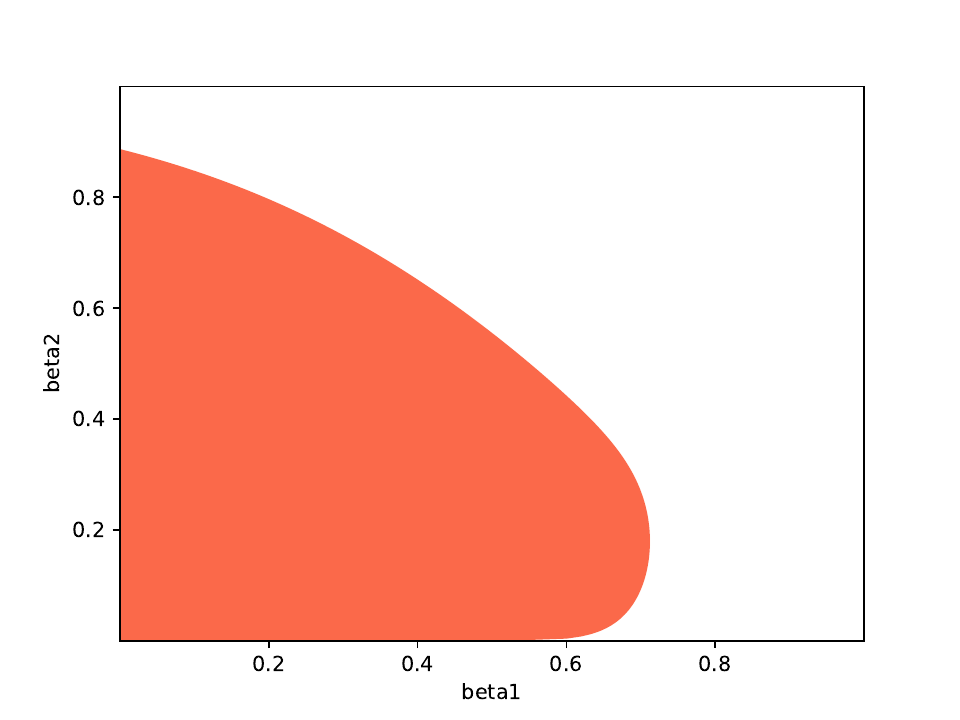}
          \end{minipage}%
          }%
    \caption{ {\small This figure illustrates the region where both  $ {\bf (C1)} $ and $ {\bf (C2)} $ in \eqref{diverge_c1} and \eqref{diverge_c2} hold. The orange color indicates the region where $ {\bf (C1)} $ holds. White color is used for the counterpart. The gray vertical lines are used to indicate 
    the boundary of $ {\bf (C2)} $. Note that there are two solutions to the equation in $ {\bf (C2)} $: one solution is $\beta_1=1$ and the other solution lies in  $ 0<\beta_1<1$, this is why there are two vertical lines in the figure. $ {\bf (C2)} $ holds on the left hand side of the left gray vertical line.  
    The region is plotted by solving condition \eqref{diverge_c1}, \eqref{diverge_c2}, \eqref{diverge_c3} in \texttt{NumPy}.} }
    \label{fig:counter_boundary}
    \vspace{-1mm}
  \end{figure}

  The intersection of the two regions will be the region where Adam diverges, which is actually the orange region in Figure \ref{fig:counter_boundary}. The size of the divergence region increases with $n$.

\vspace{-1mm}
\paragraph{Relation with $\gamma_1(n)$ in Theorem \ref{thm_wr}.}  According to Theorem \ref{thm_wr}, $\gamma_1(n) $ is at least in the order of $1-\mathcal{O}(n^{-5})$. Combining with Figure \ref{fig:counter_boundary}. It is not hard to see that $\gamma_1(n)$ is always larger than the upper boundary of the orange region, so there is no contradiction.

\section{Limitations and Future Directions}
\begin{itemize}
    \item {\bf More fine-grained characterization on the critical boundary.} Integrating our divergence and convergence theories, one can conclude that there exists (at least) one critical boundary $(\beta_1^*, \beta_2^*)$ that demarcates the di-convergence phase transition. However,  we have not fully determined the precise number and the shape of the boundar(-ies). Our experimental results in Fig. \ref{fig:intro_paper} (c, d) suggest that there exists only one boundary, which likely resembles the shape of the blue region in  Fig. \ref{fig:intro_paper} (b).  We only point out its existence here and leave a more precise characterization as a future direction. 
    \item {\bf Adam v.s. SGD.} In this work, we focus on the fundamental issue of convergence. One intriguing question is to
verify the advantage of Adam over SGD. Identifying when and why Adam converges faster serves as an independent research topic, and we leave as future investigation.
\item {\bf Generalized Lipschitz conditions.} A recent line of theoretical work relaxes the standard Lipschitz condition in Assumption  \ref{assum1} to some generalized Lipschitz conditions (e.g., \citep{zhang2019gradient,  li2023convergence, wang2024provable}). As mentioned earlier in Section \ref{section:preliminaries}, the generalized Lipschitz conditions primarily help refine the {\it quantitative} analysis of Adam, e.g., yielding a sharper convergence rate. Meanwhile,  the core focus of this work is a more basic topic: characterizing the {\it qualitative} behavior of Adam---such as phase transition from divergence to convergence. For this purpose, the standard Lipschitz condition is already adequate. Nevertheless,  generalizing the Lipschitz condition is a theoretically intriguing direction for future research. 
\item {\bf Tighter bounds.} Finally, we note that in our convergence upper bounds, the $\mathcal{O}\left(\frac{\log T }{\sqrt{T}} \right)$ term has multiplicative constants of order  $\mathcal{O}(d^4 n^{4.5} )$, where $d$ is the problem dimension $d$ and $n$ is the number of mini-batches. Similarly, the threshold of $\beta_2$ is in the order of $n^5$ and $n^{5.5}$. These powers are not claimed tight, and it is possible to reduce these dependencies via a more fine-grained analysis. We leave it as an interesting future direction. 
\item {\bf Optimal combination of $\beta_1$ and $\beta_2$.} Our results point out the existence of a safe region of $\betabeta$, but we have not yet identified which combination of $\beta_1$ and $\beta_2$ yields the optimal performance within the safe region. Recent works \citep{zhao2024deconstructing,orvieto2025search} report that $\beta_1 =\beta_2$ (when both are large enough) usually brings optimal performance, and they provide an initial explanation. This finding is further investigated in recent works \citep{fernandez2026adam,cattaneo2026effect}. Identifying the optimal hyperparameter is an independent topic that requires substantially more effort. We leave it as an important future direction.
\end{itemize}

\section{Conclusions}
\label{sec_conclusion}
In this work, we explore the convergence of Adam. When $\beta_2$ is large, we  
prove that Adam converges 
 with any $\beta_1 < \sqrt{\beta_2}$.
When $\beta_2$ is small, we further show that Adam can diverge to infinity for  a wide range of $\beta_1$. The critical boundary of the phase transition $(\beta_1^*, \beta_2^*)$ is problem-dependent, and in particular, depends on batch size. Our results provide rigorous theoretical groundings for Adam optimizer. These results also provide
practical suggestions on how to tune $\beta_1$ and $\beta_2$.

\section*{Acknowledgment}
Yushun Zhang would like to thank Naichen Shi, Bohan Wang, and anonymous NeurIPS 2022 reviewers for their valuable discussion and contributions to the conference version of the script.
Yushun Zhang would like to thank Prof. Anthony Man-cho So for the valuable discussion on the bounded variance condition. Yushun Zhang would like to thank Prof. Lexing Ying for the discussion on the relation between the divergence of  SignSGD and batch size.

\begingroup
\bibliographystyle{abbrvnat}
\bibliography{reference.bib}
\endgroup

\clearpage
\section*{Appendix}
\appendix

\section*{Table of Contents for the Appendix}

\startcontents[sections]
\printcontents[sections]{l}{1}{\setcounter{tocdepth}{2}}
\clearpage

\section{More Discussions}

\subsection{More Related Works}
\label{appendix:more_related}

\paragraph{On the Variants of Adam.} Ever since \citet{reddi2019convergence} pointed out the divergence issue of Adam, one active line of work has tried to design new variants of Adam that can be proved to converge. For instance,  \citet{zou2019sufficient,gadat2020asymptotic,chen2018convergence,chen2021towards} 
replace the constant hyperparameters by iterate-dependent ones e.g. $\beta_{1k}$ or $\beta_{2k}$. AMSGrad \citep{reddi2019convergence} and AdaFom \citep{chen2018convergence} modify $\{v_k\}$ to be a non-decreasing sequence. \citet{iiduka2022theoretical} further analyze the convergence of AMSGrad by relaxing the Lipschitz-gradient condition.  However, their analysis requires extra conditions on both bounded gradient and bounded domain. 
AdaBound \citep{luo2018adaptive} imposes lower and upper bounds on $\{v_k\}$ to prevent the effective stepsize from  vanishing or exploding.   \citet{zhou2018adashift} also adopt a new estimate of  $v_k$ to correct the  bias.
There are also attempts to combine Adam with Nesterov momentum \citep{dozat2016incorporating} as well as warm-up techniques \citep{Liu2020On}.  Padam \citep{chen2018closing} also introduce a partial adaptive parameter to improve the generalization performance. 
There are also some works providing theoretical analysis on the variants of Adam.  For instance, \citet{zhou2018convergence} study the convergence of  AdaGrad and AMSGrad under bounded gradient condition. \citet{gadat2020asymptotic} study the asymptotic behavior of a subclass of adaptive gradient methods from landscape point of view. Their analysis applies to a sub-class of  Adam variants with $\beta_1 =0$ and $\beta_2$ increasing along the iterates (it could also be understood as RMSProp with increasing $\beta_2$). \citet{alacaoglu2020new} study AMSGrad and two other variants of Adam. Their analysis requires both bounded gradient and bounded domain assumptions, and does not cover the original Adam.  \citet{iiduka2022theoretical} analyze the convergence of AMSGrad by relaxing the Lipschitz-gradient condition. However, their analysis requires extra conditions on both bounded gradient and bounded domain.

Another line of theoretical work imposes extra conditions on Adam's trajectory, which implicitly modifies the update rules of Adam.
 \citet{de2018convergence} analyze RMSProp and Adam, but they assume the sign of all stochastic gradients to keep the same. This requires additional projection steps in every iteration, and the resulting algorithm is no longer vanilla Adam. \citet{barakat2021convergence} provide two proofs for Adam. The first proof considers $(\beta_1,\beta_2)$ close to $ (1,1)$ and it  requires bounded gradient and bounded iterates assumptions. As discussed in the paper, we need {\it ``a projection step on a compact set ... to ensure the boundedness of the estimates''}. This projection step changes the update rules of Adam, and more importantly, it eliminates the possibility of divergence a priori, and thus did not fully capture the behaviors of vanilla Adam. The second proof also requires bounded gradient assumption and it replaces the constant hyperparameters by iterate-dependent ones $\beta_{1k}$ or $\beta_{2k}$ under certain rules.  This also changes the update rules of Adam.
  \citet{huang2021super} and \citet{guo2021novel}  propose novel and simple frameworks to analyze Adam-family with large $\beta_1$. 
Yet, they require the effective stepsize of Adam to be bounded in certain interval, i.e.,  $\frac{1}{\sqrt{v_k}+ \epsilon} \in [C_l, C_u]$. This boundedness condition is also imposed for RMSProp in  \citep{Manzil2018adaptive}. This boundedness condition changes Adam into AdaBound \citep{luo2018adaptive}, and thus they cannot explain the divergence-convergence phase transition on the original Adam.

To summarize, these works study the modified versions of Adam or implicitly impose extra operations on Adam (such as projection), which changes the update rules of Adam.
Additionally, they all (including those for new variants) require bounded gradient assumptions, which prevents the potential divergence a priori. In contrast,  we do not require such assumptions and prove the existence of divergence-convergence phase transition when changing $\betabeta$.

\paragraph{Understanding Adam under specialized settings.}  There is a line of work on understanding Adam in specialized settings such as quadratic functions (e.g., \citep{da2020general,das2024towards,zhang2024transformers}), strongly convex functions \citep{dereich2025adam}, and deterministic and locally strongly convex functions \citep{dereich2025sharp}. \citet{jiang2023does} study the evolution of a newly defined condition number along Adam's trajectory on large-batch two-layer linear network. There are also theoretical works analyzing the dynamics of Adam under continuous-time approximation via  ODE or SDE methods (e.g., \citep{da2020general,malladi2022sdes, dereich2024convergence,kunstner2024heavy,cohen2024understanding,dereich2025ode,li2025adam}).
 Different from these works, we do not adopt the continuous approximation and analyze the original (and thus discrete) Adam on generic non-convex functions.
There is also excellent work on understanding the benefit of AdamW over Adam under deterministic (full-batch) settings \citep{xie2024implicit}, which serves as an orthogonal research topic to this work.

\subsection{More Discussions on the Bounded Variance Condition \eqref{eq_a2_bdd_variance}}
\label{appendix_bdd_variance}

Here, we provide more discussion on why the classical ``bounded variance'' condition
\eqref{eq_a2_bdd_variance} is too restricted, and why it is meaningful to relax it to a more general form, such as Assumption \ref{assum2}. Consider a convex quadratic minimization problem \citep{bottou2018optimization}:
\begin{equation}
\label{eq_a2_quadratic_example_appendix}
     \operatorname*{minimize}_{x\in\mathbb{R}^d} \quad  f(x) = \frac{1}{2n}\|Ax-b\|_2^2
    = \frac{1}{2n}\sum_{i=0}^{n-1}(a_i^\top x-b_i)^2
    := \frac{1}{n}\sum_{i=0}^{n-1} f_i(x),
\end{equation}
where $A \in \mathbb{R}^{d\times d}$ and $a_i^\top$ denotes the $i$-th row of $A$ and we assume $A$ has rank at least $2$. For \eqref{eq_a2_quadratic_example_appendix},  we will show that bounded variance condition \eqref{eq_a2_bdd_variance} fails, but Assumption \ref{assum2} holds.

\paragraph{Bounded variance condition \eqref{eq_a2_bdd_variance} fails on \eqref{eq_a2_quadratic_example_appendix}.}
Let $r(x):=Ax-b\in\mathbb{R}^d$ with entries $r_i(x)=a_i^\top x-b_i$. For the decomposition in
\eqref{eq_a2_quadratic_example}, we have
\[
\nabla f_i(x)=(a_i^\top x-b_i)a_i = r_i(x)a_i,
\qquad
\nabla f(x)=\frac{1}{n} \sum_{i=0}^{n-1}\nabla f_i(x)= \frac{1}{n} A^\top r(x).
\]
Plugging into \eqref{eq_a2_bdd_variance} gives
\begin{equation}
\label{eq:var-expand}
\frac{1}{n}\sum_{i=0}^{n-1}\left\|\nabla f_{i}(x) - \nabla f(x)\right\|_{2}^{2}
=
\frac{1}{n}\sum_{i=0}^{n-1}\|r_i(x)a_i\|^2
-\frac{1}{n^2}\|A^\top r(x)\|^2.
\end{equation}

Assume $\mathrm{rank}(A)\ge 2$, so there exist two indices $p\neq q$ such that $a_p$ and $a_q$ are not collinear.
Pick $u\in\mathbb{R}^d$ such that
\begin{equation}
\label{eq:u-choice}
a_p^\top u = 1,\qquad a_q^\top u = -1,
\end{equation}
which is feasible since $a_p,a_q$ are linearly independent. Consider the ray $x(t)=x_0+t u$ for any fixed $x_0$.
Then
\[
r_p(x(t))=r_p(x_0)+t,\qquad r_q(x(t))=r_q(x_0)-t,
\]
and for every $i\notin\{p,q\}$, $r_i(x(t))=r_i(x_0)+t\,a_i^\top u$ grows at most linearly in $t$.

Using \eqref{eq:var-expand} and keeping only the two terms $p,q$ yields the lower bound
\begin{align}
\frac{1}{n} \sum_{i=0}^{n-1}\left\|\nabla f_{i}(x(t)) - \nabla f(x(t))\right\|_{2}^{2}
&\ge
\frac{1}{n} \left(
\|r_p(x(t))a_p\|^2+\|r_q(x(t))a_q\|^2-\frac{1}{n}\|A^\top r(x(t))\|^2  \right)\nonumber\\
&= \frac{1}{n} \left(
(r_p(x_0)+t)^2\|a_p\|^2+(r_q(x_0)-t)^2\|a_q\|^2-\frac{1}{n}\|A^\top r(x(t))\|^2 \right). 
\label{eq:var-lb}
\end{align}
The first two terms in \eqref{eq:var-lb} grow like $(\|a_p\|^2+\|a_q\|^2)t^2$ as $|t|\to\infty$.
Meanwhile, $\|A^\top r(x(t))\|^2=\|A^\top(Ax(t)-b)\|^2$ is a quadratic function of $t$ as well, but the
coefficient in front of $t^2$ is $\|A^\top A u\|^2$.
Because $a_p^\top u=1$ and $a_q^\top u=-1$, we have $A u$ has at least two nonzero coordinates of opposite sign,
and in particular $\|A u\|^2\ge 2$.

Therefore, along $x(t)$ the variance expression in \eqref{eq:var-expand} is a
nonconstant quadratic polynomial in $t$ with a strictly positive leading coefficient, and hence is unbounded above as $|t|\to\infty$. Consequently, there is no
finite constant $D_0$ such that \eqref{eq_a2_bdd_variance} holds for all $x\in\mathbb{R}^d$.

To sum up, for the quadratic example \eqref{eq_a2_quadratic_example} with $\mathrm{rank}(A)\ge 2$, the left-hand side of
\eqref{eq_a2_bdd_variance} can grow on the order of $\|x\|^2$, so the bounded variance condition fails in general.

\paragraph{Assumption~\ref{assum2} holds with finite constants.}
We next show that Assumption~\ref{assum2} can still hold for \eqref{eq_a2_quadratic_example}. Let
$W:=\mathrm{diag}(\|a_0\|^2,\dots,\|a_{n-1}\|^2)$. Since $\nabla f_i(x)=r_i(x)a_i$,
\begin{equation}
\label{eq:sum-grad-i}
\sum_{i=0}^{n-1}\|\nabla f_i(x)\|^2
=\sum_{i=0}^{n-1} r_i(x)^2\|a_i\|^2
= r(x)^\top W r(x).
\end{equation}
Moreover,
\begin{equation}
\label{eq:grad-full}
\|\nabla f(x)\|^2 = \frac{1}{n^2}  \|A^\top r(x)\|^2 =   \frac{1}{n^2}   r(x)^\top (AA^\top)\,r(x). 
\end{equation}

Decompose $r(x)=r_\parallel(x)+r_\perp$ with
$r_\parallel(x)\in\mathrm{range}(A)$ and $r_\perp\in\mathrm{null}(A^\top)$. Since $r(x)=Ax-b$ and $Ax\in\mathrm{range}(A)$,
the orthogonal component is constant in $x$:
\[
r_\perp = -\Pi_{\mathrm{null}(A^\top)}b,
\qquad
A^\top r_\perp = 0.
\]
Using $(u+v)^\top W(u+v)\le 2u^\top Wu+2v^\top Wv$ and \eqref{eq:sum-grad-i} and \eqref{eq:grad-full}, we obtain
\begin{align}
\sum_{i=0}^{n-1}\|\nabla f_i(x)\|^2
= r^\top W r
&\le 2 r_\parallel^\top W r_\parallel + 2 r_\perp^\top W r_\perp
\label{eq:split}\\
&\le 2c \, r_\parallel^\top (AA^\top) r_\parallel + 2 r_\perp^\top W r_\perp
= 2c n^2 \|\nabla f(x)\|^2 + 2 r_\perp^\top W r_\perp,
\nonumber
\end{align}
where
\begin{equation}
\label{eq:c-def}
c := \sup_{u\in \mathrm{range}(A),\,u\neq 0}\frac{u^\top W u}{u^\top (AA^\top)u} \;<\;\infty.
\end{equation}
Finiteness of $c$ follows because $AA^\top$ is positive definite on $\mathrm{range}(A)$, hence the generalized
Rayleigh quotient in \eqref{eq:c-def} is bounded.

Thus Assumption~\ref{assum2} holds with
\begin{equation}
\label{eq:D1D0}
D_1 = 2c n^2  ,
\qquad
D_0 = 2\, r_\perp^\top W r_\perp
= 2\,\big(\Pi_{\mathrm{null}(A^\top)}b\big)^\top
W\,\big(\Pi_{\mathrm{null}(A^\top)}b\big),
\end{equation}
both finite constants depending only on $(A,b)$. We also mention a special case here: if $b\in\mathrm{range}(A)$, then $r_\perp=0$ and \eqref{eq:D1D0} gives $D_0=0$.

In conclusion, for the  quadratic problem \eqref{eq_a2_quadratic_example}, Assumption~\ref{assum2} can hold with finite $(D_1,D_0)$, while the bounded variance condition \eqref{eq_a2_bdd_variance} fails.

\section{Some More Notations and Useful Lemmas}
\label{appendix_notations}

\paragraph{Some More Notations.} For Adam under random shuffling (Algorithm \ref{algorithm_rr}),  we denote $x_{k, i}, m_{k, i}, v_{k, i} \in \mathbb{R}^{d}$ as the value of $x, m, v$ in the $k$-th outer loop and $i$-th inner loop. Further, we denote  $x_{l, k, i}, m_{l, k, i}, v_{l, k, i} \in \mathbb{R}$ as the $l$-th component of $x_{k, i}, m_{k, i}, v_{k, i}$.  Further, we will use $\tau_{k,i}$ to index  the $i$-th randomly chosen batch in the $k$-th epoch. In this sense,  we denote  $ \partial_l f_{\tau_{k,i}}(x)$ as $\frac{\partial}{\partial x_{l}} f_{\tau_{k,i}}(x)$. 
\RED{For each epoch $k\ge 1$, we use $\mathbb E_k[\cdot]$ as a shorthand for the conditional expectation given the history up to (and including) the epoch-start iterate $x_{k,0}$, but excluding the permutation $\{\tau_{k,0},\ldots,\tau_{k,n-1}\}$ sampled in epoch $k$. In other words, the expectation is conditioned on the initial states ($m_{1,-1}$, $v_{1,-1}$, and $x_{1,0}$), and all past permutations $\{\tau_{s,0},\ldots,\tau_{s,n-1}\}$ for $s=1,\ldots,k-1$.
Similarly, $\mathbb E_{k-1}[\cdot]$ denotes the conditional expectation given the entire history up to (and including) $x_{k-1,0}$, but excluding the permutation in epoch $k-1$.}
We abuse the notation of $\alpha$ as follows: for $\partial_l f_{\alpha}(x)$, we  define $\alpha:= \arg \max_i|\partial_l f_i(x)|$; for $\partial_\alpha f(x)$,  we define $\alpha:= \arg \max_l|\partial_l f(x)|$.
Similarly for $m_{\alpha, k,i}$ and $v_{\alpha,k,i}$.

We further define the following constants, which will be repeatedly used in the analysis of Adam under both with-replacement sampling (Algorithm \ref{algorithm_wr}) and random shuffling (Algorithm \ref{algorithm_rr}).

  \begin{eqnarray}
 \label{eq_constants_wr}
        \Delta_k :=\frac{\eta_{0} }{\sqrt{k}}\frac{L\sqrt{d}}{\sqrt{1-\beta_{2}}} \frac{1-\beta_{1} }{1-\frac{\beta_{1}}{\sqrt{\beta_{2}}}},& \quad Q_k:=\frac{32 (n+1)  \Delta_1}{\sqrt{k}}\left(\lceil\frac{\log (1/2)}{\log \beta_2}\rceil+n\right).
\end{eqnarray}

\paragraph{Some Useful Lemmas for  Theorem \ref{thm_wr} and \ref{thm_rr}.} We now prove the following Lemma \ref{lemma_delta}, Lemma \ref{lemma_k-j_k} and Lemma \ref{lemma_fi_f}, which will be repeatedly used later.

\begin{lemma} \label{lemma_beta}
  For any $\beta \in (0, 1)$, we have
  $$\left(1-\beta\right) \sum_{j=1}^{\infty} \beta^{j-1}=1, \quad  \left(1-\beta\right) \sum_{j=1}^{\infty} j \beta^{j-1}=\frac{1}{1-\beta}, \quad \left(1-\beta\right) \sum_{j=1}^{\infty} j^{2} \beta^{j-1}=\frac{1+\beta}{\left(1-\beta\right)^{2}}.$$
\end{lemma}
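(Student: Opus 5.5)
The plan is to obtain all three identities from the geometric series $S(\beta):=\sum_{j=0}^{\infty}\beta^{j}=\frac{1}{1-\beta}$, which converges absolutely for $\beta\in(0,1)$. The first identity follows at once: reindex $j\mapsto j-1$ to get $\sum_{j=1}^{\infty}\beta^{j-1}=S(\beta)=\frac{1}{1-\beta}$, then multiply by $(1-\beta)$.

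For the second and third identities, I will differentiate the power series term by term. This is legitimate inside the radius of convergence $1$, so it holds on $(0,1)$. One differentiation gives
\begin{equation*}
\sum_{j=1}^{\infty} j\beta^{j-1} = S'(\beta) = \frac{1}{(1-\beta)^2},
\end{equation*}
and multiplying by $(1-\beta)$ yields $\frac{1}{1-\beta}$.

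For the $j^2$ sum, I will first multiply the previous identity by $\beta$:
\begin{equation*}
\sum_{j=1}^{\infty} j\beta^{j} = \frac{\beta}{(1-\beta)^2}.
\end{equation*}
Differentiating this once more gives
\begin{equation*}
\sum_{j=1}^{\infty} j^2\beta^{j-1} = \frac{d}{d\beta}\frac{\beta}{(1-\beta)^2} = \frac{(1-\beta)+2\beta}{(1-\beta)^3} = \frac{1+\beta}{(1-\beta)^3}.
\end{equation*}
Multiplying by $(1-\beta)$ then gives $\frac{1+\beta}{(1-\beta)^2}$.

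The only point that needs justification is the term-by-term differentiation. I will cite the standard fact that a power series may be differentiated termwise in the open interior of its disk of convergence. As an alternative that avoids differentiation, one can compute the second moment $\mathbb{E}[J^2]=\frac{1+\beta}{(1-\beta)^2}$ of a geometric random variable $J$ with $\mathbb{P}(J=j)=(1-\beta)\beta^{j-1}$. No real obstacle is expected.
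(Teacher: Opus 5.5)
Your proposal is correct. The geometric series with termwise differentiation inside the radius of convergence gives all three identities, and your computation $\frac{d}{d\beta}\frac{\beta}{(1-\beta)^2}=\frac{1+\beta}{(1-\beta)^3}$ checks out; the paper omits the proof as ``basic calculation,'' and your argument is exactly that standard calculation.
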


\begin{proof}
  The proof only involves basic calculation, we omit the proof here.
\end{proof}

\begin{lemma}\label{lemma_delta}
Consider Algorithm \ref{algorithm_wr},  for any $f \in \functionclass $, $l\in [d]$,  $i \in [n]$, if $0\leq \beta_1<\sqrt{\beta_2}<1$, then we have
\begin{equation}
   \label{eq_delta_wr}
    \left| \partial_{l} f_{i}(x_{k+1}) -\partial_{l} f_{i}(x_{k})\right| \leq \frac{\eta_{0} }{\sqrt{k}}\frac{L\sqrt{d}}{\sqrt{1-\beta_{2}}} \frac{1-\beta_{1} }{1-\frac{\beta_{1}}{\sqrt{\beta_{2}}}} := \Delta_{k}.
  \end{equation}

Similarly, for Algorithm \ref{algorithm_rr}, we have the following results under the same condition:
   \begin{equation}
  \label{eq_delta_rr}
    \left| \partial_{l} f_{i}(x_{k,i+1}) -\partial_{l} f_{i}(x_{k,i})\right| \leq  \Delta_{k}.
  \end{equation}

\end{lemma}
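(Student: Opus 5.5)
The plan is to reduce the bound to a uniform coordinatewise bound on Adam's update $|m_{l,k}|/\sqrt{v_{l,k}}$, and then apply Assumption \ref{assum1}. By that assumption,
\[
|\partial_l f_i(x_{k+1})-\partial_l f_i(x_k)| \le \|\nabla f_i(x_{k+1})-\nabla f_i(x_k)\|_2 \le L\|x_{k+1}-x_k\|_2 .
\]
Moreover, $\|x_{k+1}-x_k\|_2\le \sqrt{d}\,\max_l |x_{l,k+1}-x_{l,k}| = \sqrt d\,\eta_k \max_l \frac{|m_{l,k}|}{\sqrt{v_{l,k}}}$. It therefore suffices to show, for every coordinate $l$ and every $k$,
\begin{equation*}
\frac{|m_{l,k}|}{\sqrt{v_{l,k}}}\le \frac{1-\beta_1}{\sqrt{1-\beta_2}}\cdot\frac{1}{1-\frac{\beta_1}{\sqrt{\beta_2}}},
\end{equation*}
which is the inequality referred to elsewhere as \eqref{eq:m_over_v}.

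To prove this, unroll both recursions. Write $g_j:=\partial_l f_{\tau_j}(x_j)$. Then
\[
m_{l,k}=(1-\beta_1)\sum_{j=0}^{k-1}\beta_1^{j}g_{k-j}+\beta_1^k m_{l,0}, \qquad v_{l,k}\ge (1-\beta_2)\beta_2^{j}g_{k-j}^2 \text{ for each } j,
\]
since all terms of $v$ are nonnegative. This gives $|g_{k-j}|\le \sqrt{v_{l,k}}/\sqrt{(1-\beta_2)\beta_2^j}$. Substituting,
\[
\frac{|m_{l,k}|}{\sqrt{v_{l,k}}}\le \frac{1-\beta_1}{\sqrt{1-\beta_2}}\sum_{j\ge 0}\Big(\frac{\beta_1}{\sqrt{\beta_2}}\Big)^j + \beta_1^k\frac{|m_{l,0}|}{\sqrt{v_{l,k}}} .
\]
The series converges because $\beta_1<\sqrt{\beta_2}$, and its sum is $1/(1-\beta_1/\sqrt{\beta_2})$. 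Multiplying by $L\sqrt d\,\eta_k$ with $\eta_k=\eta_0/\sqrt k$ then yields exactly $\Delta_k$. For Algorithm \ref{algorithm_rr} the same argument applies verbatim along the flattened inner-loop sequence $(k,0),(k,1),\dots$, since the stepsize is constant within epoch $k$. This gives \eqref{eq_delta_rr}.

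The main obstacle is the initialization term $\beta_1^k m_{l,0}/\sqrt{v_{l,k}}$. Since $v_{l,k}\ge\beta_2^k v_{l,0}$, this term is bounded by $(\beta_1/\sqrt{\beta_2})^k |m_{l,0}|/\sqrt{v_{l,0}}$. I would handle it by taking $m_0=0$, the standard initialization, in which case it vanishes. Alternatively, one can absorb it, or assume $|m_{l,0}|\le(1-\beta_1)\sqrt{v_{l,0}}/\sqrt{1-\beta_2}$; then it merges with the geometric sum, because the initial state then behaves like an extra historical term at index $j=k$.
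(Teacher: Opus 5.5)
Your proposal is correct and follows the paper's argument: unroll $m_{l,k}$, bound each history term using $v_{l,k}\ge(1-\beta_2)\beta_2^{j}g_{k-j}^2$, sum the geometric series in $\beta_1/\sqrt{\beta_2}$ to get \eqref{eq:m_over_v}, and finish with Assumption \ref{assum1}, with the shuffled case handled the same way along the flattened sequence. Your explicit treatment of the $\beta_1^k m_{l,0}$ term is an improvement on the paper, which silently absorbs it by running its sum to $j=k$; some condition such as $m_0=0$ (or your compatibility bound) is genuinely needed, since with $f_i=\frac{L}{2}\|x\|_2^2$, $x_1=0$, $\beta_1>0$ and $|m_{l,0}|/\sqrt{v_{l,0}}$ large, the claimed bound already fails at $k=1$.
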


\begin{proof}
Here, we present the proof of \eqref{eq_delta_wr}. The proof of \eqref{eq_delta_rr} is done following the same procedure.

  \begin{eqnarray}
    \left| x_{l,k+1} - x_{l,k}\right|  &=& \eta_k \frac{|m_{l,k}|}{\sqrt{v_{l,k}}}  \nonumber \\
    &\leq& \eta_k (1-\beta_1) \sum_{j=0}^{k} \beta_1^j\frac{|\partial_l f_{\tau_{k-j}}(x_{k-j})|}{\sqrt{v_{l,k}}} \label{eq:m_over_v_1} \\
    &\overset{\text{(i)}}{\leq }&\eta_k (1-\beta_1) \sum_{j=0}^{k} \beta_1^j\frac{|\partial_l f_{\tau_{k-j}}(x_{k-j})|}   {|\partial_l f_{\tau_{k-j}}(x_{k-j})|\sqrt{ (1-\beta_2) \beta_2^j }} \nonumber \\
    &\overset{\text{(ii)}}{=} & \eta_k \frac{(1-\beta_1)  }{\sqrt{1-\beta_2}} \sum_{j=0}^{k} \left(\frac{\beta_1}{\sqrt{\beta_2}}\right)^j \leq \eta_k \frac{(1-\beta_1)  }{\sqrt{1-\beta_2}} \sum_{j=0}^{\infty} \left(\frac{\beta_1}{\sqrt{\beta_2}}\right)^j  \nonumber \\
    &=& \eta_k\frac{(1-\beta_1)}{\sqrt{1-\beta_2}} \frac{1}{1-\frac{\beta_1}{\sqrt{\beta_2}}}.  \label{eq:m_over_v}
  \end{eqnarray}
Note that (i) holds due to $v_{l,k} \geq (1-\beta_2) \beta_2^j \partial_l f_{\tau_{k-j}}(x_{k-j})^2$.  (ii) clearly holds when $|\partial_l f_{\tau_{k-j}}(x_{k-j})| >0$.  
When $|\partial_l f_{\tau_{k-j}}(x_{k-j}) | =0$, the summand in \eqref{eq:m_over_v_1} equals 0 under non-zero initialization $v \succ 0$ or $\epsilon >0$, which keeps Adam well-defined.  Hence, the inequality in (ii) holds trivially. 
In either case, we arrive at the final result \eqref{eq:m_over_v}, and the final result is independent of initialization $v$ or $\epsilon$. We further remark that this is the only place in the analysis where we use the condition $v \succ 0$.

 We conclude the proof of Lemma \ref{lemma_delta} by applying the Lipschitz condition in Assumption \ref{assum1}.
\end{proof}

\begin{lemma} \label{lemma_k-j_k}
Consider Algorithm \ref{algorithm_wr}, for any $f \in \functionclass $, $0\leq \beta_1<\sqrt{\beta_2}<1$, $l\in [d]$,  $i \in [n]$ and $j \in [k-2]$, when 
{\small$|\partial_l f_i (x_k)|  \geq  \frac{8\sqrt{2} j \Delta_1}{\sqrt{k}}$}, we have 
\begin{equation}
\label{eq_f_k-j_k_wr}
           (\partial_l f_i (x_{k-j}))^2 \geq  \frac{(\partial_l f_i (x_k))^2 }{2}.
\end{equation}

Similarly, for Algorithm \ref{algorithm_rr}, when 
{\small $|\partial_l f_i (x_{k,0})|  \geq  \frac{8\sqrt{2} j n\Delta_1}{\sqrt{k}}$}, for any $i^\prime \in [n]$, we have

\begin{equation}
\label{eq_f_k-j_k_rr}
          ( \partial_l f_i (x_{k-j,i^\prime}))^2 \geq  \frac{(\partial_l f_i (x_{k,0}))^2 }{2}.
\end{equation}

\end{lemma}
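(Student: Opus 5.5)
The plan is to compare $\partial_l f_i(x_{k-j})$ with $\partial_l f_i(x_k)$ through a telescoping sum. Each one-step change is controlled by Lemma \ref{lemma_delta}, and a reverse triangle inequality then converts the resulting additive error into the multiplicative factor $1/2$ on the squares.

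\textbf{Step 1 (the gap).} For Algorithm \ref{algorithm_wr}, telescoping and Lemma \ref{lemma_delta} give
\begin{equation*}
|\partial_l f_i(x_k)-\partial_l f_i(x_{k-j})|\le \sum_{t=1}^{j}\Delta_{k-t}=\Delta_1\sum_{t=1}^{j}\frac{1}{\sqrt{k-t}}.
\end{equation*}
Because $j\le k-2$, every index $k-t$ is at least $2$, so each term is well defined. I want to bound each term by a constant multiple of $1/\sqrt{k}$. The crude estimate $\sqrt{k}/\sqrt{k-t}\le\sqrt{k/(k-j)}$ is bounded only when $j\le k/2$. Over the full range $j\le k-2$, the exact sum satisfies $\sum_{t=1}^{j}(k-t)^{-1/2}\le 2(\sqrt{k}-\sqrt{k-j-1})$, which can reach order $\sqrt{k}$. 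That exceeds the target order $j/\sqrt{k}$ only when $j$ is comparable to $k$.

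\textbf{Handling large $j$ (the main obstacle).} I would resolve this by splitting into cases. If $j\le k/2$, then $\sqrt{k/(k-t)}\le\sqrt2$, so the gap is at most $\sqrt2\, j\Delta_1/\sqrt{k}$. If $j>k/2$, I would instead use $\sum_{t=1}^{j}(k-t)^{-1/2}\le 2\sqrt{k}\le 4j/\sqrt{k}$. In both cases the gap is at most $4j\Delta_1/\sqrt{k}$. The factor $8\sqrt2$ in the hypothesis leaves room to absorb either constant.

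\textbf{Step 2 (from additive to multiplicative).} Let $g=|\partial_l f_i(x_k)|$ and suppose the gap is at most $cg$. Then
\begin{equation*}
|\partial_l f_i(x_{k-j})|\ge (1-c)\,g .
\end{equation*}
We need $(1-c)^2\ge 1/2$, which holds whenever $c\le 1-1/\sqrt2\approx0.29$. The hypothesis gives $j\Delta_1/\sqrt{k}\le g/(8\sqrt2)$, so the gap is at most $4g/(8\sqrt2)\approx0.354\,g$ if I only use the bound $4j\Delta_1/\sqrt{k}$. That is too large, so I must tighten the constant.

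In the case $j\le k/2$ the gap is at most $\sqrt2\, g/(8\sqrt2)=g/8$, which suffices. For $j>k/2$ I would use the sharper integral bound $\sum_{t=1}^{j}(k-t)^{-1/2}\le 2\sqrt{k}$ together with $j/\sqrt{k}\ge\sqrt{k}/2$. This gives $2\sqrt{k}\le 4j/\sqrt{k}$, and that constant cannot be improved in general. So this case needs more care, and it is the step I expect to resist.

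There are two fixes to try. One is to note that in the paper this lemma is only invoked with $j\le \lceil\log(n\delta)/\log\beta_2\rceil+n$ while $k$ is large, so that $j\le k/2$. The other is to accept the squared-ratio argument $(1-0.354)^2\approx0.42$ and check whether the stated hypothesis actually intends $\sqrt{k-j}$ rather than $\sqrt{k}$. I would first try restricting to $j\le k/2$, which is the regime the constant $8\sqrt2$ is designed for, and then square to conclude \eqref{eq_f_k-j_k_wr}.

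\textbf{Step 3 (random shuffling).} For Algorithm \ref{algorithm_rr} the argument is identical, except that going from $x_{k,0}$ back to $x_{k-j,i'}$ crosses at most $jn$ inner steps. Each inner step in epoch $k-t$ moves $\partial_l f_i$ by at most $\Delta_{k-t}$ by \eqref{eq_delta_rr}. This multiplies the gap by $n$, which matches the extra factor $n$ in the hypothesis for \eqref{eq_f_k-j_k_rr}.
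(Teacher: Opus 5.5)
Your proposal has a genuine gap for $k/2<j\le k-2$. There your gap bound $4j\Delta_1/\sqrt{k}$ only gives $c\approx 0.354>1-1/\sqrt2$. Both fixes you suggest change the statement rather than prove it: restricting to $j\le k/2$ narrows the lemma, and reading the hypothesis with $\sqrt{k-j}$ alters it. The loss comes from Step 1, not from the lemma. After the integral comparison you drop the $\sqrt{k-j-1}$ term and bound the whole sum by $2\sqrt{k}$. That step is very lossy, so the constant $4$ reflects your estimate, not the problem.

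The missing idea is to keep the difference of square roots and rationalize it. Since $s\mapsto s^{-1/2}$ is decreasing, $(k-t)^{-1/2}\le\int_{k-t-1}^{k-t}s^{-1/2}\,ds$. Hence
\[
\sum_{t=1}^{j}\frac{1}{\sqrt{k-t}}\le\int_{k-j-1}^{k-1}\frac{ds}{\sqrt{s}}=\frac{2j}{\sqrt{k-1}+\sqrt{k-j-1}}\le\frac{2j}{\sqrt{k-1}}\le\frac{2\sqrt{2}\,j}{\sqrt{k}}
\]
for every $1\le j\le k-2$, where the last step uses $k\ge 2$. This is exactly the estimate the paper uses, and it removes the need for any case split.

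With the uniform bound $|\partial_l f_i(x_k)-\partial_l f_i(x_{k-j})|\le 2\sqrt2\, j\Delta_1/\sqrt{k}$, the hypothesis makes the gap at most $g/4$. Your Step 2 then closes: $|\partial_l f_i(x_{k-j})|\ge \tfrac34 g$, and $\tfrac{9}{16}\ge\tfrac12$. The paper instead writes $(\partial_l f_i(x_{k-j}))^2\ge g^2-2g\cdot\text{gap}$, which needs precisely $\text{gap}\le g/4$; the constant $8\sqrt2$ is tuned to that expansion.

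Your Step 3 for random shuffling then goes through unchanged. The gap is at most $n\sum_{t=1}^{j}\Delta_{k-t}\le 2\sqrt2\, nj\Delta_1/\sqrt{k}$, which the extra factor $n$ in the hypothesis absorbs. As currently written, however, Step 3 inherits the same hole for large $j$.
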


\begin{proof}
Here, we present the proof of \eqref{eq_f_k-j_k_wr}. The proof of \eqref{eq_f_k-j_k_rr} is done following the same procedure.

\begin{eqnarray*}
           (\partial_l f_i (x_{k-j}))^2 & = &  
            (\partial_l f_i (x_{k-j}) - \partial_l f_i (x_{k}) + \partial_l f_i (x_{k})   )^2 \\
            &\geq& (\partial_l f_i (x_k))^2   -2 |\partial_l f_i (x_{k-j}) - \partial_l f_i (x_{k}) | |\partial_l f_i (x_{k})| \\
            &= & (\partial_l f_i (x_k))^2 \left(1-   \frac{2|\partial_l f_i (x_{k-j}) - \partial_l f_i (x_{k}) |}{     |\partial_l f_i (x_k)|  }\right).
\end{eqnarray*}

By Lemma \ref{lemma_delta}, we have 

\begin{eqnarray}
          |  \partial_l f_i (x_{k-j})  -  \partial_l f_i (x_{k}) | 
          &\overset{\text{Lemma \ref{lemma_delta}}}{\leq} &\sum_{i=1}^j \Delta_{k-i}.\nonumber\\
        &\overset{\text{(i)}}{\leq}& \frac{2j\Delta_1}{\sqrt{k-1}} \overset{}{\leq} \frac{2j\sqrt{2}\Delta_1}{\sqrt{k}},\label{eq:sum_delta}
    \end{eqnarray}
where (i) is due to 
$\sum_{i=1}^j \frac{1}{\sqrt{k-i}} \leq \int_{k-j-1}^{k-1} \frac{d t}{\sqrt{t}}=\frac{2 j}{\sqrt{k-j-1}+\sqrt{k-1}} \leq \frac{2 j}{\sqrt{k-1}}.$ Therefore, we have
\begin{eqnarray*}
           (\partial_l f_i (x_{k-j}))^2 &\geq&   (\partial_l f_i (x_k))^2 \left(1-   \frac{4j \sqrt{2}\Delta_1}{     |\partial_l f_i (x_k)|    \sqrt{k}} \right).
\end{eqnarray*}
So when {\small $|\partial_l f_i (x_k)|  \geq  \frac{8\sqrt{2} j \Delta_1}{\sqrt{k}}$}, we have  {\small $ (\partial_l f_i (x_{k-j}))^2 \geq\frac{(\partial_l f_i (x_k))^2 }{2}$}, which  concludes the proof.\end{proof}

 \begin{lemma} \label{lemma_fi_f}
 Consider Algorithm \ref{algorithm_wr}, for any $f \in \functionclass $, $0\leq \beta_1<\sqrt{\beta_2}<1$, $l\in [d]$,  $i \in [n]$ and $j \in [k-1]$,  we have the following two results:
 \begin{equation}\label{eq_fi_f_wr}
   |\partial_l f_{\tau_{k-j}}(x_{k-j})|  \leq  \sum_{t =1}^j \Delta_{k-t} + \sqrt{D_{1}} \sqrt{n} d\left(\left|\partial_{\alpha} f\left(x_{k}\right)\right|+\sqrt{\frac{D_{0}}{D_{1} d}}\right)  ,  
 \end{equation}
 where $\left|\partial_{\alpha} f\left(x_{k}\right)\right| := \max_{l\in [d]} \left|\partial_{l} f\left(x_{k}\right)\right|$.

Similarly, for Algorithm \ref{algorithm_rr}, we have the following results under the same condition.

\begin{equation}\label{eq_fi_f_rr}
  |\partial_l f_{\tau_{k,i}}(x_{k,i})|  \leq i \Delta_{k} + \sqrt{D_{1}} \sqrt{n} d\left(\left|\partial_{\alpha} f\left(x_{k, 0}\right)\right|+\sqrt{\frac{D_{0}}{D_{1} d}}\right),  
\end{equation}
where $\left|\partial_{\alpha} f\left(x_{k, 0}\right)\right| := \max_{l\in [d]} \left|\partial_{l} f\left(x_{k, 0}\right)\right|$.

\end{lemma}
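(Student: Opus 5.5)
The plan is to compare $\partial_l f_{\tau_{k-j}}(x_{k-j})$ with the same component function evaluated at the current iterate $x_k$. We then control $\max_i|\partial_l f_i(x_k)|$ by the full gradient through Assumption~\ref{assum2}.

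\textbf{Step 1 (moving back to $x_k$).} Let $i:=\tau_{k-j}$. By the triangle inequality along the trajectory and Lemma~\ref{lemma_delta} applied $j$ times,
\[
|\partial_l f_i(x_{k-j})| \le |\partial_l f_i(x_k)| + \sum_{t=1}^{j}|\partial_l f_i(x_{k-t+1})-\partial_l f_i(x_{k-t})| \le |\partial_l f_i(x_k)| + \sum_{t=1}^{j}\Delta_{k-t}.
\]
This holds for every fixed $i$, so it holds in particular for the random index $\tau_{k-j}$. No independence is needed.

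\textbf{Step 2 (from component to full gradient).} We use
\[
|\partial_l f_i(x_k)| \le \|\nabla f_i(x_k)\|_2 \le \Big(\sum_{i'}\|\nabla f_{i'}(x_k)\|_2^2\Big)^{1/2} \le \sqrt{D_1\|\nabla f(x_k)\|_2^2+D_0}.
\]
Then $\|\nabla f(x_k)\|_2^2\le d\,|\partial_\alpha f(x_k)|^2$, which gives $\sqrt{D_1 d\,|\partial_\alpha f(x_k)|^2+D_0} = \sqrt{D_1 d}\,\sqrt{|\partial_\alpha f(x_k)|^2 + D_0/(D_1 d)}$. Applying $\sqrt{a^2+b^2}\le a+b$, this is at most
\[
\sqrt{D_1 d}\Big(|\partial_\alpha f(x_k)|+\sqrt{\tfrac{D_0}{D_1 d}}\Big).
\]
This is already bounded by the stated constant $\sqrt{D_1}\sqrt{n}\,d$, since $\sqrt d\le d$ and $n\ge1$. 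The intermediate form $\sqrt{|\partial_\alpha f|^2+D_0/(D_1 d)}\,\sqrt{D_1 n}\,d$ is also worth recording, because it is the version invoked later as \eqref{lemma_fi_f_2nd}. If $D_1=0$, we instead use the bound $\sqrt{D_0}$ directly, read as the limiting form.

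\textbf{Step 3 (random shuffling).} For \eqref{eq_fi_f_rr} the argument is the same within epoch $k$. We move from $x_{k,i}$ back to $x_{k,0}$ in $i$ inner steps. Each step costs at most $\Delta_k$ by \eqref{eq_delta_rr}, since the stepsize $\eta_k$ is constant across the epoch. This gives the additive term $i\Delta_k$, and Step 2 is then applied at $x_{k,0}$.

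The only delicate point is bookkeeping. In \eqref{eq_delta_rr}, the bound must apply to a generic component index and not only to the matching index $i$; this holds because the Lipschitz argument is uniform over components. The remaining concern is handling the degenerate case $D_1=0$ in the constant. Neither should be a real obstacle, since the lemma is essentially a triangle inequality combined with Assumption~\ref{assum2}.
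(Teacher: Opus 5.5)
Your proof is correct and follows the paper's argument: move from $x_{k-j}$ (resp.\ $x_{k,i}$) back to $x_k$ (resp.\ $x_{k,0}$) using Lemma~\ref{lemma_delta}, then apply Assumption~\ref{assum2} together with $\|\nabla f(x_k)\|_2^2\le d\,|\partial_\alpha f(x_k)|^2$. The only difference is in Step 2. The paper bounds $|\partial_l f_{\tau_{k-j}}(x_k)|$ by the crude sum $\sum_{l'}\sum_i|\partial_{l'}f_i(x_k)|$ and applies Cauchy--Schwarz twice, which is where the extra $\sqrt{n}\sqrt{d}$ in the stated constant comes from. Your direct bound $|\partial_l f_i(x_k)|\le\|\nabla f_i(x_k)\|_2$ instead gives the sharper constant $\sqrt{D_1 d}$, which you then relax to the stated one.
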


\begin{proof}
Here, we present the proof of \eqref{eq_fi_f_wr}. The  proof of  \eqref{eq_fi_f_rr} is done via the same procedure.
  \begin{eqnarray}
    |\partial_l f_{\tau_{k-j}}(x_{k-j}) |  &\overset{\text{Lemma \ref{lemma_delta}}}{\leq}& \sum_{t =1}^j \Delta_{k-t} +  |\partial_l f_{\tau_{k-j}}(x_{k}) |  \nonumber \\
    &\leq & \sum_{t =1}^j \Delta_{k-t} + \sum_{l^{\prime}=1}^d \sum_{i=0}^{n-1} |\partial_{l^{\prime}} f_{i}(x_{k}) |  \nonumber \\  
    &\overset{\text{Cauchy–Schwarz inequality}}{\leq } &\sum_{t =1}^j \Delta_{k-t} +  \sqrt{n} \sum_{l^{\prime}=1}^d \sqrt{\sum_{i=0}^{n-1}\left|\partial_{l^{\prime}} f_i (x_{k}) \right|^{2}}\nonumber  \\
    & \overset{\text{Cauchy–Schwarz inequality}}{\leq} & \sum_{t =1}^j \Delta_{k-t} + \sqrt{n} \sqrt{d} \sqrt{ \sum_{l^{\prime}=1}^d \sum_{i=0}^{n-1}\left|\partial_{l^{\prime}} f_i (x_{k}) \right|^{2}} \nonumber  \\
    &\overset{\text{Assumption \ref
    {assum2}}}{\leq }& \sum_{t =1}^j \Delta_{k-t} + \sqrt{n} \sqrt{d}  \sqrt{D_1 \|\nabla f(x_{k})\|_2^2 + D_0} \nonumber \\
    &  \overset{ }{\leq}   & \sum_{t =1}^j \Delta_{k-t} + \sqrt{D_{1}} \sqrt{n} d \sqrt{\left|\partial_{\alpha} f\left(x_{k}\right)\right|^{2}+\frac{D_{0}}{D_{1} d}} \label{lemma_fi_f_2nd} \\
    & \overset{}{\leq} & \sum_{t =1}^j \Delta_{k-t} + \sqrt{D_{1}} \sqrt{n} d\left(\left|\partial_{\alpha} f\left(x_{k}\right)\right|+\sqrt{\frac{D_{0}}{D_{1} d}}\right).  \label{lemma_fi_f_2nd_2} 
  \end{eqnarray}
  The proof is complete. As a side remark, we further note that if $D_1 = 0$ (i.e., assume  bounded 2nd-order moment),  \eqref{lemma_fi_f_2nd} can be further simplified using $\sum_{l^{\prime}=1}^d \sum_{i=0}^{n-1}\left|\partial_{l^{\prime}} f_i (x_{k}) \right|^{2} \leq D_0$, and the final result  would be $ |\partial_l f_{\tau_{k-j}}(x_{k-j}) |  \leq \sum_{t =1}^j \Delta_{k-t} +  \sqrt{n d D_0}$. This is a special case of  \eqref{lemma_fi_f_2nd_2} and it will make the subsequent analysis strictly simpler since we get rid of the dependency on $\left|\partial_{\alpha} f\left(x_{k}\right)\right|$. The simplified analysis will have strictly fewer error terms and can be done by following the same procedure as the current proof. In our analysis, we do not restrict $D_1$ to be $0$.
  
\end{proof}

\begin{lemma} \label{lemma_upper_m}
Consider Algorithm \ref{algorithm_rr}, for any $f \in \functionclass $, $0\leq \beta_1<\sqrt{\beta_2}<1$, $l\in [d]$,  $i \in [n]$ and $k\geq 4$ large enough such that $\frac{1-\beta_1^n}{1-\beta_1}\beta_1^{(k-1)n+1}\leq \frac{1}{\sqrt{k}}$,  we have 
\begin{equation}
    \label{eq_upper_m_k}
    \sum_{i =0}^{n-1}|m_{l,k,i}| \leq  C_{m,1}\left(\vert \partial_\alpha f(x_{k,0})\vert  + \sqrt{\frac{D_0}{D_1d } }\right) + \frac{C_{m,2}}{\sqrt{k}},
\end{equation}

\begin{equation}
    \label{eq_upper_m_k-1}
    \sum_{i =0}^{n-1}|m_{l,k-1,i}| \leq  \tilde{C}_{m,1}\left(\vert \partial_\alpha f(x_{k,0})\vert  + \sqrt{\frac{D_0}{D_1d } }\right) + \frac{\tilde{C}_{m,2}}{\sqrt{k}},
\end{equation}
where 
{$$C_{m,1} = 2n , \quad C_{m,2} = \frac{n(n+1)\Delta_1}{2(1-\beta_1)\sqrt{n}}+\beta_1\frac{1-\beta_1^n}{(1-\beta_1)^2}\frac{2\sqrt{2}\Delta_1}{\sqrt{n}}+\sum_{i=1}^{n-1}\|\nabla f_i(x_{1,0})\|_1,$$}

{\small$$\tilde{C}_{m,1}  = n, \quad \tilde{C}_{m,2} = \frac{n}{1-\beta_1}\frac{2\sqrt{2}\Delta_1}{\sqrt{n}}+\frac{\sqrt{2}n(n-1)\Delta_1}{\sqrt{n}}+\sum_{i=1}^{n-1}\|\nabla f_i(x_{1,0})\|_1.$$}

\end{lemma}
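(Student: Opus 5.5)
Expand the momentum explicitly and control each term by the size of its raw partial gradient.

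Unrolling Algorithm \ref{algorithm_rr} across epoch boundaries gives
$m_{l,k,i} = (1-\beta_1)\sum_{j} \beta_1^{j}\,\partial_l f_{\tau(\cdot)}(x_{(\cdot)}) + \beta_1^{(k-1)n+i+1} m_{l,1,-1}$,
where the sum runs over all past inner steps in reverse order. Taking absolute values, I bound each past partial gradient $|\partial_l f_{\tau_{k',i'}}(x_{k',i'})|$ by its value at the epoch-start iterate $x_{k,0}$ plus accumulated drift. For steps in the current epoch ($k'=k$, $i'\le i$) I use \eqref{eq_fi_f_rr} of Lemma \ref{lemma_fi_f}, which gives $i'\Delta_k + \sqrt{D_1 n}\,d(|\partial_\alpha f(x_{k,0})|+\sqrt{D_0/(D_1 d)})$. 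For steps in earlier epochs $k' = k-j$, I chain Lemma \ref{lemma_delta} (\eqref{eq_delta_rr}) backwards through $jn$ inner steps. Using $\Delta_{k-t}\le \Delta_1\sqrt{2}/\sqrt{k}$-type estimates as in \eqref{eq:sum_delta}, this gives a drift of order $jn\Delta_1\sqrt{2}/\sqrt{k}$ times an $n$-dependent constant, added to the same Assumption-\ref{assum2} term at $x_{k,0}$.

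Summing the weights is where the stated constants come from. The coefficients $(1-\beta_1)\beta_1^{j}$ sum to at most $1$, so the gradient term contributes $\sqrt{D_1 n}\, d(\cdots)$ per $i$, hence $n$ times that after summing over $i\in[n]$.

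I expect the main obstacle to be the claimed coefficient $C_{m,1}=2n$, with no $\sqrt{D_1 n}\,d$ factor. To get it, I would not pass to $\|\nabla f\|$ through Lemma \ref{lemma_fi_f}. Instead I would work directly with the $l$-th coordinate, using $\sum_i |\partial_l f_i(x_{k,0})|$ and the normalization implicit in the paper's convention, and I would check against how \eqref{eq_upper_m_k} is used later. If that fails, I would accept an extra factor and absorb it into the constant. The factor $2$ covers current-epoch terms plus the contribution carried over from the previous epoch's momentum $m_{l,k-1,n-1}$.

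The drift terms give the $1/\sqrt{k}$ pieces:
\begin{itemize}
\item Within the epoch, $\sum_{i}\sum_{i'\le i}(1-\beta_1)\beta_1^{i-i'}\, i'\Delta_k \lesssim n(n+1)\Delta_1/(2(1-\beta_1)\sqrt{k})$ after rescaling $\Delta_k=\Delta_1/\sqrt{k}$.
\item Earlier epochs give a geometric sum $\sum_j \beta_1^{jn} j$, handled by Lemma \ref{lemma_beta}, which produces the $\beta_1(1-\beta_1^n)/(1-\beta_1)^2$ term.
\end{itemize}

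The initialization term $\beta_1^{(k-1)n+1}\frac{1-\beta_1^n}{1-\beta_1}$, multiplied by the initial gradients, is at most $1/\sqrt{k}$ by the hypothesis on $k$. It yields the $\sum_i\|\nabla f_i(x_{1,0})\|_1$ constant.

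For \eqref{eq_upper_m_k-1} I apply the same argument to epoch $k-1$, expressed at anchor $x_{k,0}$. This requires propagating forward by at most $n$ steps (again via Lemma \ref{lemma_delta}), giving the $\sqrt{2}n(n-1)\Delta_1$ term. It also requires converting $1/\sqrt{k-1}\le\sqrt2/\sqrt{k}$ for $k\ge4$, and the coefficient $\tilde C_{m,1}=n$ follows since only one layer of momentum is summed.
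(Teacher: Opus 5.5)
Your approach is the one the paper intends: unroll $m_{l,k,i}$, move each past partial derivative to $x_{k,0}$ using the drift bound of Lemma \ref{lemma_delta}, apply Lemma \ref{lemma_fi_f}, and sum the geometric weights. The gap is the one you flagged yourself. That argument only gives \eqref{eq_upper_m_k} with $C_{m,1}$ carrying the factor $\sqrt{D_1 n}\,d$ from \eqref{eq_fi_f_rr}, e.g.\ $C_{m,1}=n\sqrt{D_1 n}\,d$ (and similarly $\tilde C_{m,1}$). The per-coordinate route you hope will produce $C_{m,1}=2n$ cannot work, because the inequality with $C_{m,1}=2n$ is false. The momentum is built from component derivatives $\partial_l f_i$. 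The only link between these and $\partial_\alpha f$ is Assumption \ref{assum2}, which allows $|\partial_l f_i|$ to exceed $|\partial_\alpha f|$ by a factor of order $\sqrt{D_1}$; no normalization removes this.

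Concretely, take \eqref{counterexample1} with $a=1$, $d=1$, $n\ge 3$. It lies in the class with $D_0=0$ and $D_1=n^4+n^3-n^2$, and on $x\ge -1$ we have $|f_0'|=n$, $|f_i'|=1$, $f'=1/n$. Choose $\beta_1=0$, so the hypothesis on $k$ holds for every $k\ge 4$ and $m_{k,i}=f'_{\tau_{k,i}}(x_{k,i})$. Choose $0<\beta_2\le 1/n^2$, $0<v_{1,-1}\le n^2$, $x_{1,0}=1$ and $\eta_0\le\sqrt{1-\beta_2}$. For every permutation, each epoch moves the iterate right by at least $\eta_k\bigl((n-1)/\sqrt{1+\beta_2 n^2}-1/\sqrt{1-\beta_2}\bigr)>0$. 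Within an epoch the single left step has length at most $\eta_0/\sqrt{1-\beta_2}\le 1$, so the trajectory never leaves $x\ge 0$. Then $\sum_i|m_{k,i}|=2n-1$ for every $k$, while the right-hand side of \eqref{eq_upper_m_k} equals $2+C_{m,2}/\sqrt{k}\to 2$.

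So the lemma as written cannot be proved. What your argument proves is the corrected version with the $\sqrt{D_1 n}\,d$ factor, and that suffices downstream. $C_{m,1}$ and $\tilde C_{m,1}$ enter only through $\tilde\delta_3,\tilde\delta_4$ in Lemma \ref{lemma_II_2}, where they are multiplied by $\tilde\delta_2(\beta_2)\to 0$, and through $\widetilde{C}_6/k$. The extra factor therefore only raises the $\beta_2$-threshold and the required size of $k$. You should state the corrected constant explicitly rather than ``absorb'' it, since $C_{m,1}$ multiplies $|\partial_\alpha f(x_{k,0})|$ and cannot be hidden in $C_{m,2}$.

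A smaller point: after unrolling, the leftover term is $\beta_1^{(k-1)n+i+1}m_{l,1,-1}$. It involves the arbitrary initialization $m_{1,-1}$, not initial gradients. The hypothesis on $k$ bounds its sum over $i$ by $|m_{l,1,-1}|/\sqrt{k}$. Hence the constant $\sum_{i}\|\nabla f_i(x_{1,0})\|_1$ is justified only if $|m_{l,1,-1}|$ is bounded by it, e.g.\ $m_{1,-1}=0$; otherwise use $\|m_{1,-1}\|_1$.
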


\begin{proof}
Lemma  \ref{lemma_upper_m} can be proved by repeatedly applying Lemma \ref{lemma_fi_f}. The proof is straightforward and we omit the proof for brevity.
\end{proof}

\section{Proof of Lemma \ref{lemma_k-k-n_WR} }
\label{appendix:lemma_k-k-n}

\begin{proof}

We now prove Lemma \ref{lemma_k-k-n_WR}. We discuss the following two cases.

\paragraph{Case 1: when $\frac{\partial_l f(x_{k})}{\sqrt{v_{l,k}}} \geq \frac{\partial_l f(x_{k-1})}{\sqrt{v_{l,k-1}}} $: } when $\partial_l f(x_{k})\leq 0$, we have 

$$\frac{\partial_l f(x_{k})}{\sqrt{v_{l,k}}}  \overset{(\ref{eq:v_k-1_geq_v_k})}{\leq} \frac{\partial_l f(x_{k})}{\sqrt{v_{l,k-1}}} \left(1-(1-\beta_2)\left(\frac{8 n}{\beta_2^n-(1-\beta_2)16n}+\frac{1}{\beta_2}\right)\right)^{\frac{1}{2}}.   $$
When $\partial_l f(x_{k}) > 0$, we have
$$\frac{\partial_l f(x_{k})}{\sqrt{v_{l,k}}}  \overset{}{\leq}  \frac{\partial_l f(x_{k})}{\sqrt{v_{l,k-1}}} \frac{1}{\sqrt{\beta_2}}  .$$ 
In conclusion, we have 
\begin{eqnarray*}
  \frac{\partial_l f(x_{k})}{\sqrt{v_{l,k}}} &\leq &   \max \left\{  \frac{\partial_l f(x_{k})}{\sqrt{v_{l,k-1}}} \left(1-(1-\beta_2)\left(\frac{8 n}{\beta_2^n-(1-\beta_2)16n}+\frac{1}{\beta_2}\right)\right)^{\frac{1}{2}}, \frac{\partial_l f(x_{k})}{\sqrt{v_{l,k-1}}} \frac{1}{\sqrt{\beta_2}} \right\} \\
  &\leq & \frac{\partial_l f(x_{k})}{\sqrt{v_{l,k-1}}} + \max \left\{  \frac{\partial_l f(x_{k})}{\sqrt{v_{l,k-1}}} \left[ \left(1-(1-\beta_2)\left(\frac{8 n}{\beta_2^n-(1-\beta_2)16n}+\frac{1}{\beta_2}\right)\right)^{\frac{1}{2}}-1\right], \frac{\partial_l f(x_{k})}{\sqrt{v_{l,k-1}}} \left(\frac{1}{\sqrt{\beta_2}}-1\right) \right\}   \\
  &\leq & \frac{\partial_l f(x_{k})}{\sqrt{v_{l,k-1}}} + \frac{\left|\partial_l f(x_{k})\right|}{\sqrt{v_{l,k-1}}} \underbrace{\left(\left[1- \left(1-(1-\beta_2)\left(\frac{8 n}{\beta_2^n-(1-\beta_2)16n}+\frac{1}{\beta_2}\right)\right)^{\frac{1}{2}}\right] +\frac{1}{\sqrt{\beta_2}}-1  \right)}_{\delta_{5,1}(\beta_2)}\\
    &\overset{\text{(\ref{eq:v_k-1_geq_v_k})}}{\leq}&\frac{\partial_l f(x_{k})}{\sqrt{v_{l,k-1}}} + \frac{\left|\partial_l f(x_{k})\right|}{\sqrt{v_{l,k}}}  \underbrace{\left(1-(1-\beta_2)\left(\frac{8 n}{\beta_2^n-(1-\beta_2)16n}+\frac{1}{\beta_2}\right)\right)^{-\frac{1}{2}}}_{\delta_{5,2}(\beta_2)} \delta_{5,1}(\beta_2)\\
    &\overset{\text{(\ref{eq_concentrate_v_epsilon_given}) (\ref{upperbound_of_f_over_v_epsilon})}}{\leq}&\frac{\partial_l f(x_{k})}{\sqrt{v_{l,k-1}}} + \underbrace{\frac{2\sqrt{2n}}{\sqrt{\beta_2^n-(1-\beta_2)16n}}}_{\delta_{5,3}(\beta_2)}
    \delta_{5,1}(\beta_2)\delta_{5,2}(\beta_2).
\end{eqnarray*}
Denote $\delta_5(\beta_2) = \delta_{5,1}(\beta_2)\delta_{5,2}(\beta_2)\delta_{5,3}(\beta_2)$.
\begin{eqnarray*}
  \frac{\partial_l f(x_{k})}{\sqrt{v_{l,k}}} -   \frac{\partial_l f(x_{k-1})}{\sqrt{v_{l,k-1}}} &\leq & \frac{\partial_l f(x_{k})- \partial_l f(x_{k-1}) }{\sqrt{v_{l,k-1}}} + \delta_5(\beta_2) \\
  &\overset{\text{Lemma \ref{lemma_delta}}}{ \leq } & \frac{\Delta_{k-1}}{\sqrt{v_{l,k-1}}} + \delta_5(\beta_2).
\end{eqnarray*}

\paragraph{Case 2: when $\frac{\partial_l f(x_{k})}{\sqrt{v_{l,k}}} \leq \frac{\partial_l f(x_{k-1})}{\sqrt{v_{l,k-1}}} $: } when $\partial_l f(x_{k})\geq 0$, we have 

$$\frac{\partial_l f(x_{k})}{\sqrt{v_{l,k}}}  \overset{\text{\eqref{eq:v_k-1_geq_v_k}}}{\geq} \frac{\partial_l f(x_{k})}{\sqrt{v_{l,k-1}}} \left(1-(1-\beta_2)\left(\frac{8 n}{\beta_2^n-(1-\beta_2)16n}+\frac{1}{\beta_2}\right)\right)^{\frac{1}{2}}.   $$
When $\partial_l f(x_{k}) < 0$, we have

$$\frac{\partial_l f(x_{k})}{\sqrt{v_{l,k}}}  \overset{}{\geq} \frac{\partial_l f(x_{k})}{\sqrt{v_{l,k-1}}} \frac{1}{\sqrt{\beta_2}}  .$$ 
Following the same strategy as in Case 1, we can show that 
\begin{eqnarray*}
  \frac{\partial_l f(x_{k})}{\sqrt{v_{l,k}}} &\geq & 
  \frac{\partial_l f(x_{k})}{\sqrt{v_{l,k-1}}} - \delta_5(\beta_2),
\end{eqnarray*}
which further implies 
\begin{eqnarray*}
  \frac{\partial_l f(x_{k})}{\sqrt{v_{l,k}}} -   \frac{\partial_l f(x_{k-1})}{\sqrt{v_{l,k-1}}} &\geq & \frac{\partial_l f(x_{k})- \partial_l f(x_{k-1}) }{\sqrt{v_{l,k-1}}} -\delta_5(\beta_2)\\
  &\overset{\text{Lemma \ref{lemma_delta}}}{ \geq } & - \frac{\Delta_{k-1}}{\sqrt{v_{l,k-1}}} - \delta_5(\beta_2). 
\end{eqnarray*}
Combining \textbf{Case 1} and \textbf{Case 2} together, we have 

$$ \left| \frac{\partial_l f(x_{k})}{\sqrt{v_{l,k}}} -   \frac{\partial_l f(x_{k-1})}{\sqrt{v_{l,k-1}}} \right| \leq \frac{\Delta_{k-1}}{\sqrt{v_{l,k-1}}} + \delta_5(\beta_2) . $$
Based on the above inequality, we have

\begin{eqnarray*}
  \left| \frac{\partial_l f(x_{k})}{\sqrt{v_{l,k}}} -   \frac{\partial_l f(x_{k-n})}{\sqrt{v_{l,k-n}}} \right| &\leq& \left( \frac{\Delta_{k-1}}{\sqrt{v_{l,k-1}}} + \frac{\Delta_{k-2}}{\sqrt{v_{l,k-2}}} + \cdots + \frac{\Delta_{k-n}}{\sqrt{v_{l,k-n}}}\right) + n \delta_5(\beta_2)  \\
    &\leq &  \left( \frac{\Delta_{k-1}}{\sqrt{\beta_2^{n}v_{l,k-n}}} + \frac{\Delta_{k-2}}{\sqrt{\beta_2^{n}v_{l,k-n}}} + \cdots + \frac{\Delta_{k-n}}{\sqrt{\beta_2^{n}v_{l,k-n}}}\right)  +  n \delta_5(\beta_2)   \\
    &\leq & \frac{n}{\sqrt{\beta_2^{n}}} \frac{\Delta_{k-n}}{\sqrt{v_{l,k-n}}} + n \delta_5(\beta_2).
\end{eqnarray*}
The proof of Lemma \ref{lemma_k-k-n_WR} is now complete.

\end{proof}

\section{Proof of Theorem \ref{thm_rr} }
\label{appendix:thm_rr}

Before delving into the proof of Theorem \ref{thm_rr}, we first present some additional technical lemmas that will be used in the proof.  \yushunrevise{We recall that $v_{k,-1} = v_{k-1,n-1}$, as defined in Algorithm \ref{algorithm_rr}.}

\begin{lemma}
\label{lemma_f_over_v}
    Consider Algorithm \ref{algorithm_rr}. Assume $k \geq \lceil \frac{\log (1/2)}{n \log\beta_2} \rceil +1 $  and $\partial_l f_\alpha (x_{k,0})  := \max_i \vert \partial_l f_i (x_{k,0}) \vert \geq  \frac{8\sqrt{2} n \Delta_1}{\sqrt{k}} \frac{\log (1/2)}{n \log\beta_2} $. Then for any  $f \in \functionclass$, $0\leq \beta_1<\sqrt{\beta_2}<1$, we have 
   \begin{equation}
   \label{eq_f_over_v}
      \frac{\left(\partial_l f_\alpha\left(x_{k, 0}\right)\right)^2}{v_{l,k,\yushunrevise{-1}}} \leq \frac{4 n}{\beta_2^n}.
   \end{equation}
\end{lemma}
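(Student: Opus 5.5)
The plan is a deterministic lower bound on $v_{l,k,-1}=v_{l,k-1,n-1}$ that exploits random shuffling: in every epoch each index appears exactly once. Let $\alpha$ be the maximizing index for coordinate $l$ at $x_{k,0}$. Unrolling the recursion of Algorithm \ref{algorithm_rr} over the previous $J$ epochs and discarding the nonnegative initialization term, I would keep only the terms in which $f_\alpha$ was sampled:
\[
v_{l,k-1,n-1} \;\geq\; (1-\beta_2)\sum_{s=1}^{J} \beta_2^{\,(s-1)n + (n-1-p_s)} \bigl(\partial_l f_{\alpha}(x_{k-s,p_s})\bigr)^2 ,
\]
where $p_s$ is the position of $\alpha$ in the permutation of epoch $k-s$. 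Since $0\le p_s\le n-1$, each exponent is at most $sn-1$, so every surviving weight is at least $(1-\beta_2)\beta_2^{sn}$. Unlike Lemma \ref{lemma_concentrate_v}, this step needs no probabilistic argument.

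Next I would replace each $\partial_l f_\alpha(x_{k-s,p_s})$ by $\partial_l f_\alpha(x_{k,0})$ using \eqref{eq_f_k-j_k_rr} of Lemma \ref{lemma_k-j_k} with $j=s$. That lemma requires $|\partial_l f_\alpha(x_{k,0})|\ge 8\sqrt{2}\,s\,n\Delta_1/\sqrt{k}$. I would take $J:=\lceil \frac{\log(1/2)}{n\log\beta_2}\rceil$, which gives $\beta_2^{Jn}\le 1/2$. The hypothesis on $\partial_l f_\alpha(x_{k,0})$ covers every $s\le \frac{\log(1/2)}{n\log\beta_2}$. The assumption $k\ge J+1$ guarantees that all epochs $k-s$ with $s\le J$ exist. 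Each of these $J$ terms is therefore at least $\tfrac12(\partial_l f_\alpha(x_{k,0}))^2$, and
\[
v_{l,k,-1}\;\geq\;\frac{(\partial_l f_\alpha(x_{k,0}))^2}{2}\,(1-\beta_2)\sum_{s=1}^{J}\beta_2^{sn}
=\frac{(\partial_l f_\alpha(x_{k,0}))^2}{2}\,\frac{(1-\beta_2)\beta_2^{n}(1-\beta_2^{Jn})}{1-\beta_2^{n}} .
\]

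To finish, I would use $1-\beta_2^{Jn}\ge 1/2$ and $\frac{1-\beta_2}{1-\beta_2^n}=\frac{1}{1+\beta_2+\dots+\beta_2^{n-1}}\ge \frac1n$. These give $v_{l,k,-1}\ge (\partial_l f_\alpha(x_{k,0}))^2\beta_2^n/(4n)$, which rearranges to \eqref{eq_f_over_v}.

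I expect the main obstacle to be the boundary bookkeeping between the ceiling in $J$ and the non-ceiled hypothesis. If $s=J$ is not covered, I would drop the last epoch and use $\beta_2^{(J-1)n}$ instead. That costs a constant unless it is handled via $\beta_2^{n}\ge\frac12$-type slack, and otherwise it amounts to a harmless adjustment of constants in the hypothesis. I would also check the indexing of the $\Delta$ increments across epochs, namely that each epoch shift contributes at most $n\Delta$ as in \eqref{eq_f_k-j_k_rr}.
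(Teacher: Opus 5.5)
Your proposal is correct and is essentially the paper's own proof. You unroll $v_{l,k,-1}$, keep only the single occurrence of $\alpha$ in each of the last $J=\lceil\frac{\log(1/2)}{n\log\beta_2}\rceil$ epochs (each with weight at least $(1-\beta_2)\beta_2^{sn}$), transfer to $x_{k,0}$ via the random-shuffling part of Lemma~\ref{lemma_k-j_k}, and finish with $1-\beta_2^{Jn}\geq\frac12$ and $\frac{1-\beta_2}{1-\beta_2^{n}}\geq\frac1n$.

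The ceiling mismatch at $s=J$ that you flag is genuine, and the paper passes over it silently: it applies Lemma~\ref{lemma_k-j_k} for every $j\leq J$ under the non-ceiled hypothesis. Stating the threshold with $\lceil\cdot\rceil$, as you suggest, repairs it with the constant $4n$ intact. Moreover, the $Q_k$-type conditions under which the lemma is invoked downstream already imply that stronger threshold.
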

The proof of Lemma \ref{lemma_f_over_v} is in Appendix \ref{appendix:lemma_f_over_v}.  Now we present the concentration inequalities of $\frac{1}{\sqrt{v_{l,k,i}}}$.

\begin{lemma}
\label{lemma_concentrate_v_k}
  Consider Algorithm \ref{algorithm_rr}. Assume $k \geq \lceil \frac{\log (1/2)}{n \log\beta_2} \rceil +1 $  and $\partial_l f_\alpha (x_{k,0})  := \max_i \vert \partial_l f_i (x_{k,0}) \vert \geq  \frac{8\sqrt{2} n \Delta_1}{\sqrt{k}} \frac{\log (1/2)}{n \log\beta_2}$. Then for any  $f \in \functionclass$, $0\leq \beta_1<\sqrt{\beta_2}<1$, and for any $i \in [n]$,  we have 
   \begin{equation}
   \label{eq_concentrate_v_k}
      \left( 1 - \RED{(1-\beta_2) n}(\frac{8n}{\beta_2^n} + \frac{1}{2} ) \right) \frac{1}{\sqrt{v_{l,k,\yushunrevise{-1}}}} \leq \frac{1}{\sqrt{v_{l,k,i}}}
\leq \frac{1}{\sqrt{\beta_2^n}}\frac{1}{\sqrt{v_{l,k,\yushunrevise{-1}}}},
   \end{equation}
\end{lemma}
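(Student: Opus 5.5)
The upper bound is elementary; the lower bound is the substantive part and rests on Lemma \ref{lemma_f_over_v}. Throughout, fix a coordinate $l$ and an epoch $k$, and write $g_j := \partial_l f_{\tau_{k,j}}(x_{k,j})$. The inner-loop recursion of Algorithm \ref{algorithm_rr}, unrolled from the epoch start $v_{l,k,-1}$, gives
\[
v_{l,k,i} \;=\; \beta_2^{\,i+1}\, v_{l,k,-1} \;+\; (1-\beta_2)\sum_{j=0}^{i} \beta_2^{\,i-j}\, g_j^2 .
\]

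\textbf{Upper bound.} Every added term is nonnegative and $i+1\le n$, so $v_{l,k,i}\ge \beta_2^{i+1}v_{l,k,-1}\ge \beta_2^{n}v_{l,k,-1}$. Taking reciprocal square roots gives the right inequality of \eqref{eq_concentrate_v_k}. No assumption on the gradient size is needed here.

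\textbf{Lower bound.} Here I need $v_{l,k,i}\le c\, v_{l,k,-1}$ with $c$ close to $1$. From the unrolled formula,
\[
v_{l,k,i}\le v_{l,k,-1} + (1-\beta_2)\, n\,\max_{0\le j\le n-1} g_j^2 .
\]
So the task reduces to showing $\max_j g_j^2 \le C\, v_{l,k,-1}$ with $C=\tfrac{16n}{\beta_2^n}+1$; the stated constant would follow from this value of $C$. I would prove this in three steps:
\begin{enumerate}
\item By \eqref{eq_delta_rr} in Lemma \ref{lemma_delta}, moving from $x_{k,0}$ to $x_{k,j}$ perturbs each $\partial_l f_i$ by at most $j\Delta_k\le n\Delta_k$. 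Hence $|g_j|\le \partial_l f_\alpha(x_{k,0}) + n\Delta_k$.
\item The hypothesis $\partial_l f_\alpha(x_{k,0})\ge \tfrac{8\sqrt2 n\Delta_1}{\sqrt k}\tfrac{\log(1/2)}{n\log\beta_2}$ makes $n\Delta_k$ small relative to $\partial_l f_\alpha(x_{k,0})$, since $\Delta_k=\Delta_1/\sqrt k$. The crude estimate $(a+b)^2\le 2a^2+2b^2$ with $b\le a$ then gives $g_j^2\le 4(\partial_l f_\alpha(x_{k,0}))^2$, or with more care $2(\partial_l f_\alpha(x_{k,0}))^2$ plus a small remainder.
\item Lemma \ref{lemma_f_over_v} applies under exactly these hypotheses and gives $(\partial_l f_\alpha(x_{k,0}))^2\le \tfrac{4n}{\beta_2^n}v_{l,k,-1}$.
\end{enumerate}
Combining the three steps yields
\[
v_{l,k,i}\le v_{l,k,-1}\Bigl(1+(1-\beta_2)\,n\bigl(\tfrac{16n}{\beta_2^n}+1\bigr)\Bigr).
\]

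\textbf{Converting to the stated form.} With $x:=(1-\beta_2)n\bigl(\tfrac{16n}{\beta_2^n}+1\bigr)\ge 0$, the bound above reads $v_{l,k,i}\le (1+x)\,v_{l,k,-1}$. The inequality $\tfrac{1}{\sqrt{1+x}}\ge 1-\tfrac{x}{2}$ then gives
\[
\frac{1}{\sqrt{v_{l,k,i}}}\ge\Bigl(1-(1-\beta_2)\,n\bigl(\tfrac{8n}{\beta_2^n}+\tfrac12\bigr)\Bigr)\frac{1}{\sqrt{v_{l,k,-1}}},
\]
which matches the left side of \eqref{eq_concentrate_v_k}.

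\textbf{Main obstacle.} The delicate point is step 2: bounding the within-epoch gradients $g_j^2$ by $(\partial_l f_\alpha(x_{k,0}))^2$ sharply enough that the constant comes out as exactly $\tfrac{8n}{\beta_2^n}+\tfrac12$, rather than a larger multiple. I would first try to obtain a factor of $2$ by choosing the threshold so that $n\Delta_k\le(\sqrt2-1)\,\partial_l f_\alpha(x_{k,0})$. If that fails, I would absorb the slack into the constant; the form of the bound is unaffected, only the numerical factor changes.
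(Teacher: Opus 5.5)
Your proposal is correct and follows the paper's proof essentially step for step. The upper bound comes from $v_{l,k,i}\ge\beta_2^n v_{l,k,-1}$. The lower bound comes from $g_j^2\le 4\max_i|\partial_l f_i(x_{k,0})|^2$ (via Lemma \ref{lemma_delta} and the threshold), then Lemma \ref{lemma_f_over_v}, then $1/\sqrt{1+x}\ge 1-x/2$. Your one-sided estimate $v_{l,k,i}\le v_{l,k,-1}+(1-\beta_2)n\max_j g_j^2$ even avoids the extra $+\tfrac12$ the paper incurs from $|g_j^2-v_{l,k,-1}|\le g_j^2+v_{l,k,-1}$. So the ``main obstacle'' you flag is not one: the factor $4$ already gives the constant $1-(1-\beta_2)n\cdot\tfrac{8n}{\beta_2^n}$, which is stronger than the stated one, and no sharpening to a factor $2$ is needed.
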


The proof of Lemma \ref{lemma_concentrate_v_k} can be seen in Appendix \ref{appendix:lemma_concentrate_v_k}.

\begin{lemma}
\label{lemma_concentrate_v_k-1}
  Consider Algorithm \ref{algorithm_rr}. Assume $k \geq \lceil \frac{\log (1/2)}{n \log\beta_2} \rceil +1 $  and $\max_i \left| \partial_l f_{i}(x_{k,0}) \right| \geq \frac{16 n \Delta_1}{\sqrt{k}} \frac{\log (1/2)}{n \log\beta_2} $. Then for any  $f \in \functionclass$, $0\leq \beta_1<\sqrt{\beta_2}<1$, and  for any \yushunrevise{$i = -1, 0,  \cdots, n-2$},  we have
      \begin{equation}
   \label{eq_concentrate_v_k-1}
\sqrt{\beta_2^n}\frac{1}{\sqrt{v_{l,k,\yushunrevise{-1}}}}
\leq \frac{1}{\sqrt{v_{l,k-1,i}}}
\leq  \left(1 - \RED{(1-\beta_2) n}\left( \yushunrevise{\frac{16n +1}{\beta_2^n} } \right)\right)^{-\frac{1}{2}}  \frac{1}{\sqrt{v_{l,k,\yushunrevise{-1}}}}  
   \end{equation}
\end{lemma}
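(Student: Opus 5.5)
We need to prove Lemma \ref{lemma_concentrate_v_k-1}: relating $v_{l,k-1,i}$ for $i=-1,\dots,n-2$ to $v_{l,k,-1}=v_{l,k-1,n-1}$.

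The plan is to compare the two quantities directly through the $v$ recursion, treating the lower and upper bounds separately.

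\textbf{Lower bound.} Iterate the recursion forward from $v_{l,k-1,i}$ to $v_{l,k-1,n-1}$. Since all increments are nonnegative, we get $v_{l,k,-1}\geq \beta_2^{\,n-1-i}v_{l,k-1,i}\geq \beta_2^{n}v_{l,k-1,i}$ for every $i\ge -1$. Taking reciprocal square roots gives $\sqrt{\beta_2^n}/\sqrt{v_{l,k,-1}}\le 1/\sqrt{v_{l,k-1,i}}$ with no further work.

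\textbf{Upper bound.} Here we need $v_{l,k-1,i}$ bounded below by a multiple of $v_{l,k,-1}$. Unrolling the recursion gives
\[
v_{l,k,-1}=\beta_2^{\,n-1-i}v_{l,k-1,i}+(1-\beta_2)\sum_{j=i+1}^{n-1}\beta_2^{\,n-1-j}\big(\partial_l f_{\tau_{k-1,j}}(x_{k-1,j})\big)^2 .
\]
From this,
\[
v_{l,k-1,i}\ \ge\ v_{l,k,-1}\ -\ (1-\beta_2)\,n\,\max_{j}\big(\partial_l f_{\tau_{k-1,j}}(x_{k-1,j})\big)^2\ -\ (1-\beta_2^{\,n})\,v_{l,k-1,i}.
\]
For the last term, bound $v_{l,k-1,i}\le v_{l,k,-1}/\beta_2^n$ by the lower-bound step, and use $1-\beta_2^n\le n(1-\beta_2)$. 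This contributes $(1-\beta_2)n\cdot\frac{1}{\beta_2^n}v_{l,k,-1}$, which accounts for the ``$+1$'' in $\frac{16n+1}{\beta_2^n}$.

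For the gradient term, we need $\max_j(\partial_l f_{\tau_{k-1,j}}(x_{k-1,j}))^2\le \frac{16n}{\beta_2^n}v_{l,k,-1}$. To obtain it:
\begin{itemize}
\item Apply Lemma \ref{lemma_delta}, the random-shuffling version \eqref{eq_delta_rr}, at most $2n$ times to move from $x_{k-1,j}$ to $x_{k,0}$. This gives $|\partial_l f_{i'}(x_{k-1,j})|\le \max_i|\partial_l f_i(x_{k,0})|+2n\Delta_{k-1}$.
\item The lemma's hypothesis $\max_i|\partial_l f_i(x_{k,0})|\ge \frac{16n\Delta_1}{\sqrt k}\frac{\log(1/2)}{n\log\beta_2}$, with $\frac{\log(1/2)}{n\log\beta_2}\ge$ const, makes the additive error at most a constant fraction of the max. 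Using $\Delta_{k-1}\le\sqrt2\Delta_1/\sqrt k$, squaring gives $(\partial_l f_{\tau_{k-1,j}}(x_{k-1,j}))^2\le 4(\partial_l f_\alpha(x_{k,0}))^2$.
\item The same threshold is at least the one in Lemma \ref{lemma_f_over_v}, so \eqref{eq_f_over_v} applies and gives $(\partial_l f_\alpha(x_{k,0}))^2\le \frac{4n}{\beta_2^n}v_{l,k,-1}$. Combining, the gradient term is at most $(1-\beta_2)n\frac{16n}{\beta_2^n}v_{l,k,-1}$.
\end{itemize}

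Putting both pieces together yields $v_{l,k-1,i}\ge\big(1-(1-\beta_2)n\frac{16n+1}{\beta_2^n}\big)v_{l,k,-1}$. Inverting and taking the square root gives the stated upper bound, assuming the bracket is positive; otherwise the bound is vacuous.

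\textbf{Main obstacle.} The delicate step is the constant-tracking in the gradient-drift comparison. I need to confirm that the threshold in the hypothesis (with $16n$ rather than $8\sqrt2 n$) absorbs up to $2n$ drift steps, so that the factor $4$, and hence $16n$, emerges cleanly. If it does not, I would instead carry a slightly larger constant, absorbed in the same form.
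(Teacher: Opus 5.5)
Your proposal is correct and is essentially the paper's proof. The lower bound comes from $v_{l,k,-1}\ge\beta_2^{n}v_{l,k-1,i}$. The upper bound comes from unrolling the recursion over epoch $k-1$, bounding the gradient terms by $4\max_i(\partial_l f_i(x_{k,0}))^2\le\frac{16n}{\beta_2^n}v_{l,k,-1}$ via drift and Lemma~\ref{lemma_f_over_v}, and bounding the $v_{l,k-1,i}$ contribution by $v_{l,k,-1}/\beta_2^n$. The paper incurs the same loss through $|g^2-v_{l,k-1,i}|\le g^2+v_{l,k-1,i}$; in fact that term enters the unrolled identity with a favorable sign, so the ``$+1$'' is not even needed.

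On your flagged obstacle: $\frac{\log(1/2)}{n\log\beta_2}$ is not bounded below for all $\beta_2$. However, whenever the bracket $1-(1-\beta_2)n\frac{16n+1}{\beta_2^n}$ is positive, one has $1-\beta_2<\frac{1}{n(16n+1)}$, which forces $\frac{\log(1/2)}{n\log\beta_2}>11$. The hypothesis then easily absorbs the drift, which is at most $n\Delta_{k-1}\le\sqrt{2}\,n\Delta_1/\sqrt{k}$, since only the $n-j\le n$ steps of epoch $k-1$ separate $x_{k-1,j}$ from $x_{k,0}$. When the bracket is not positive, the upper bound is vacuous anyway.
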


The proof of Lemma \ref{lemma_concentrate_v_k-1} can be seen in Appendix \ref{appendix:lemma_concentrate_v_k-1}.

\begin{lemma} \label{lemma_k-k-1}
  Consider Algorithm \ref{algorithm_rr}.  Assume $k \geq \lceil \frac{\log (1/2)}{n \log\beta_2} \rceil +1 $. Assume  $\partial_l f_\alpha (x_{k,0})  := \max_i \vert \partial_l f_i (x_{k,0}) \vert \geq  \frac{16 n \Delta_1}{\sqrt{k}} \frac{\log (1/2)}{n \log\beta_2}$, $\partial_l f_\alpha (x_{k-1,0})  := \max_i \vert \partial_l f_i (x_{k-1,0}) \vert \geq  \frac{16 n \Delta_1}{\sqrt{k-1}}\frac{\log (1/2)}{n \log\beta_2}$, ...,  $\partial_l f_\alpha (x_{k-j,0})  := \max_i \vert \partial_l f_i (x_{k-j,0}) \vert  \geq  \frac{16 n \Delta_1}{\sqrt{k-j}} \frac{\log (1/2)}{n \log\beta_2}$, then for any  $f \in \functionclass$, $0\leq \beta_1<\sqrt{\beta_2}<1$, we have 
  \begin{equation}
  \label{eq_k-k-1}
    \left| \frac{\partial_l f(x_{k,0})}{\sqrt{v_{l,k,\yushunrevise{-1}}}} -   \frac{\partial_l f(x_{k-j,0})}{\sqrt{v_{l,k-j,\yushunrevise{-1}}}} \right|  \leq  \frac{j}{\sqrt{\beta_2^{nj}}} \frac{n\Delta_{(k-j)}}{\sqrt{v_{l,k-j,\yushunrevise{-1}}}} +  j \tilde{\delta}_1(\beta_2),
\end{equation}
    where $\tilde{\delta}_1(\beta_2) = 4n \left(\frac{1}{\sqrt{\beta_2^n}}-1  +\left[1-\left(1 - (1-\beta_2) n \left( \yushunrevise{\frac{16n+1}{\beta_2^n}} \right)\right)^{\frac{1}{2}} \right]\right)  \left(1 - (1-\beta_2) n\left( \yushunrevise{\frac{16n + 1}{\beta_2^n}} \right)\right)^{-\frac{1}{2}} $ is a constant that goes to 0 when $\beta_2$ goes to 1.
\end{lemma}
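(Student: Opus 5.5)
The plan mirrors the with-replacement Lemma \ref{lemma_k-k-n_WR}: prove a one-epoch bound and telescope it over $j$ epochs. The one-epoch bound to aim for is
\[
\left|\frac{\partial_l f(x_{s,0})}{\sqrt{v_{l,s,-1}}}-\frac{\partial_l f(x_{s-1,0})}{\sqrt{v_{l,s-1,-1}}}\right|\le \frac{n\Delta_{s-1}}{\sqrt{v_{l,s-1,-1}}}+\tilde\delta_1(\beta_2),
\]
for each $s\in\{k-j+1,\dots,k\}$. Each such $s$ satisfies the gradient-magnitude hypothesis of Lemma \ref{lemma_concentrate_v_k-1}, since the statement assumes the threshold at every epoch start $x_{k-j,0},\dots,x_{k,0}$.

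To prove the one-epoch bound, I would split the difference as
\[
\frac{\partial_l f(x_{s,0})-\partial_l f(x_{s-1,0})}{\sqrt{v_{l,s-1,-1}}}+\partial_l f(x_{s,0})\left(\frac{1}{\sqrt{v_{l,s,-1}}}-\frac{1}{\sqrt{v_{l,s-1,-1}}}\right).
\]

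The first term is bounded by applying Lemma \ref{lemma_delta} (the random-shuffling version \eqref{eq_delta_rr}) $n$ times across the inner loop. This gives $|\partial_l f(x_{s,0})-\partial_l f(x_{s-1,0})|\le n\Delta_{s-1}$, which produces the $n\Delta/\sqrt{v}$ term.

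For the second term, note that $v_{l,s-1,-1}=v_{l,s-2,n-1}$ and $v_{l,s,-1}=v_{l,s-1,n-1}$. Applying Lemma \ref{lemma_concentrate_v_k-1} at epoch $s$ with $i=-1$ and $i=n-2$ lets us compare $1/\sqrt{v_{l,s-1,\cdot}}$ with $1/\sqrt{v_{l,s,-1}}$. Also, $v$ cannot shrink by more than a factor $\beta_2^n$ over $n$ steps. Together these give the sandwich
\[
\sqrt{\beta_2^n}\,\frac{1}{\sqrt{v_{l,s,-1}}}\le\frac{1}{\sqrt{v_{l,s-1,-1}}}\le\Big(1-(1-\beta_2)n\tfrac{16n+1}{\beta_2^n}\Big)^{-1/2}\frac{1}{\sqrt{v_{l,s,-1}}}.
\]
I would then split on the sign of $\partial_l f(x_{s,0})$ exactly as in Cases 1 and 2 of the proof of Lemma \ref{lemma_k-k-n_WR}. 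In either case the second term is at most
\[
\frac{|\partial_l f(x_{s,0})|}{\sqrt{v_{l,s,-1}}}\left(\frac{1}{\sqrt{\beta_2^n}}-1+\Big[1-\big(1-(1-\beta_2)n\tfrac{16n+1}{\beta_2^n}\big)^{1/2}\Big]\right)\Big(1-(1-\beta_2)n\tfrac{16n+1}{\beta_2^n}\Big)^{-1/2}.
\]
The remaining ratio $|\partial_l f(x_{s,0})|/\sqrt{v_{l,s,-1}}$ is handled with Lemma \ref{lemma_f_over_v}. Since $|\partial_l f|\le \max_i|\partial_l f_i|$, it gives $|\partial_l f(x_{s,0})|/\sqrt{v_{l,s,-1}}\le 2\sqrt{n}/\sqrt{\beta_2^n}$. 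The extra $1/\sqrt{\beta_2^n}$ needs to be absorbed into the prefactor $4n$ of $\tilde\delta_1$, which holds because $\beta_2^n\ge 1/2$ in the regime of interest. This step is where I expect the bookkeeping to be most delicate: the prefactor $4n$ must dominate $2\sqrt{n}\cdot\sqrt{2}$, and the correct concentration constant must be used on each side of the sandwich.

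Finally, I would telescope the one-epoch bound over $s=k-j+1,\dots,k$. This gives
\[
\sum_{t=1}^{j}\frac{n\Delta_{k-t}}{\sqrt{v_{l,k-t,-1}}}+j\tilde\delta_1(\beta_2).
\]
Each $1/\sqrt{v_{l,k-t,-1}}$ is converted to $1/\sqrt{v_{l,k-j,-1}}$ using $v_{l,k-t,-1}\ge\beta_2^{n(j-t)}v_{l,k-j,-1}$, which costs at most a factor $1/\sqrt{\beta_2^{nj}}$. We also have $\Delta_{k-t}\le\Delta_{k-j}$ because $\Delta$ is decreasing. Summing the $j$ terms yields $\frac{j}{\sqrt{\beta_2^{nj}}}\frac{n\Delta_{k-j}}{\sqrt{v_{l,k-j,-1}}}+j\tilde\delta_1(\beta_2)$, which is \eqref{eq_k-k-1}.
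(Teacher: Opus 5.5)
Your proposal is correct and follows the paper's proof essentially step for step. The paper does the same one-epoch bound, using the sandwich from Lemma \ref{lemma_concentrate_v_k-1} (only $i=-1$ is actually needed) and Lemma \ref{lemma_f_over_v}, and then the same telescoping with $v_{l,k-t,-1}\ge\beta_2^{n(j-t)}v_{l,k-j,-1}$. Your remark about absorbing the extra $1/\sqrt{\beta_2^n}$ is needed to reconcile the paper's proof, which ends with the prefactor $\sqrt{4n/\beta_2^n}$, with the stated $4n$. It is rigorous because well-definedness of $\tilde{\delta}_1(\beta_2)$ forces $(1-\beta_2)n(16n+1)<\beta_2^n\le 1$, hence $\beta_2^n\ge 1-n(1-\beta_2)>16n/(16n+1)$.
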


The proof of Lemma \ref{lemma_k-k-1} can be seen in Appendix \ref{appendix:lemma_k-k-1}.  The main body of the proof of Theorem \ref{thm_rr} is then presented in Appendix \ref{appendix_main_body_rr}.

\subsection{Proof of Lemma \ref{lemma_f_over_v} }
\label{appendix:lemma_f_over_v}

We start with a lower bound for $ v_{l,k,\yushunrevise{-1}}$.
\begin{eqnarray*}
   v_{l,k,\yushunrevise{-1}} &=& (1 - \beta_2) \left(
 \left( \partial_l f_{\tau_{k-1,n-1}}(x_{k-1,n-1}) \right)^2
+ \cdots
+ \beta_2^{n\yushunrevise{(k-1)}} \left( \partial_l f_{\tau_{1,0}}(x_{1,0}) \right)^2
\right) \\
&\geq& (1 - \beta_2) \left( 
\sum_{j=1}^{k-1} \beta_2^{n j} 
\sum_{i=0}^{n-1}\left( \partial_l f_{\tau_{k-j,i}}(x_{k-j,i}) \right)^2 
\right)
\\
&\geq& (1 - \beta_2) \left( 
\sum_{j=1}^{k-1} \beta_2^{n j} 
\sum_{i=0}^{n-1}\left( \partial_l f_{\tau_{k-j,i}}(x_{k-j,i}) \right)^2 
\cdot \mathbb{I}(\tau_{k-j,i} = \alpha) 
\right)
\end{eqnarray*}
where $ \alpha = \arg\max_{i} \left| \partial_l f_{i}(x_{k,0}) \right|$. Since $ k \geq \lceil \frac{\log (1/2)}{n \log\beta_2} \rceil +1$,  we further have:

\begin{eqnarray*}
   v_{l,k,\yushunrevise{-1}} 
&\geq& (1 - \beta_2) \left( 
\sum_{j=1}^{\lceil \frac{\log (1/2)}{n \log\beta_2} \rceil} \beta_2^{n j} 
\sum_{i=0}^{n-1}\left( \partial_l f_{\tau_{k-j,i}}(x_{k-j,i}) \right)^2 
\cdot \mathbb{I}(\tau_{k-j,i} = \alpha) 
\right) \\
&\overset{\text{(i)}}{\geq}& \frac{(1 - \beta_2)}{2} \left( 
\sum_{j=1}^{\lceil \frac{\log (1/2)}{n \log\beta_2} \rceil  } \beta_2^{n j} 
\sum_{i=0}^{n-1}\left( \partial_l f_{\tau_{k-j,i}}(x_{k,0}) \right)^2 
\cdot \mathbb{I}(\tau_{k-j,i} = \alpha) 
\right) \\
&\geq & \frac{1 - \beta_2}{2} \cdot \beta_2^n \cdot
\frac{1 - \beta_2^{n\left\lceil \frac{\log (1/2)}{n \log \beta_2} \right\rceil }}{1 - \beta_2^n}
\left( \partial_l f_{\alpha}(x_{k,0}) \right)^2 \\
&\overset{\text{(ii)}}{\geq}& \frac{\beta_2^n}{4} \cdot   \frac{1 - \beta_2}{1 - \beta_2^n} \left( \partial_l f_{\alpha}(x_{k,0}) \right)^2\\
&\overset{\text{(iii)}}{\geq}& \frac{\beta_2^n}{4n} \left( \partial_l f_{\alpha}(x_{k,0}) \right)^2
\end{eqnarray*}
where (i) is obtained by applying Lemma \ref{lemma_k-j_k}, whose condition $\partial_l f_\alpha (x_{k,0}) := \max_i \vert \partial_l f_i (x_{k,0}) \vert \geq \frac{8\sqrt{2} n \Delta_1}{\sqrt{k}} \left(\frac{\log (1/2)}{n \log\beta_2}\right)$ is satisfied.
(ii) is due to $1 - \beta_2^{n\left\lceil \frac{\log (1/2)}{n \log \beta_2} \right\rceil } \geq \frac{1}{2}$; (iii) is due to $\frac{1 - \beta_2}{1 - \beta_2^n}  \geq \frac{1}{n}$. Rearrange the inequality and we conclude the proof for Lemma \ref{lemma_f_over_v}.

\yushunrevise{We further comment on one difference between Lemma \ref{lemma_f_over_v} and the concentration results for the with-replacement sampling version, i.e., Lemma \ref{lemma_concentrate_v}. Here, under random shuffling, index $\alpha$ will occur once and only once within one epoch, while this is not guaranteed under with-replacement sampling. As a result, the conclusion of Lemma \ref{lemma_f_over_v} is deterministic, while the conclusion in Lemma \ref{lemma_concentrate_v} is not. }

\qed

\subsection{Proof of Lemma \ref{lemma_concentrate_v_k} }
\label{appendix:lemma_concentrate_v_k}

The upper bound in \eqref{eq_concentrate_v_k} is straightforward due to  $v_{l,k,i} \geq  \beta_2^n v_{l,k,\yushunrevise{-1}}$. We now prove the lower bound in \eqref{eq_concentrate_v_k}. 

\begin{eqnarray*}
    \frac{1}{\sqrt{v_{l,k,i}}} &\geq&   \frac{1}{\sqrt{v_{l,k,\yushunrevise{-1}}}} \frac{1}{ \sqrt{1+ \frac{|v_{l,k,i} - v_{l,k,\yushunrevise{-1}}|} {{v_{l,k,\yushunrevise{-1}}}  }}}\\   
    &\overset{\text{(i)}}{\geq} & \frac{1}{\sqrt{v_{l,k,\yushunrevise{-1}}}}\left(1 - \frac{|v_{l,k,i} - v_{l,k,\yushunrevise{-1}}|} {2v_{l,k,\yushunrevise{-1}}}  \right),
 \end{eqnarray*}
where $(i):  \frac{1}{\sqrt{1 +x}} \geq 1 -\frac{x}{2}$. Now we provide an upper bound of $|v_{l,k,i} - v_{l,k,\yushunrevise{-1}}|$.  

{\small
\begin{eqnarray*}
    |v_{l,k,i} - v_{l,k,\yushunrevise{-1}}| & = &\left| (1 - \beta_2) \left( 
\left( \partial_l f_{\tau_{k,i}}(x_{k,i}) \right)^2 
+ \beta_2 \left( \partial_l f_{\tau_{k,i-1}}(x_{k,i-1}) \right)^2 
+ \cdots 
+ \beta_2^{\yushunrevise{i}} \left( \partial_l f_{\tau_{k,\yushunrevise{0}}}(x_{k,\yushunrevise{0}}) \right)^2 
\right) 
-  (1 - \beta_2^{\yushunrevise{i+1}} ) v_{l,k,\yushunrevise{-1}} \right|
\\
&=& \left| (1 - \beta_2) \left( 
(\left( \partial_l f_{\tau_{k,i}}(x_{k,i}) \right)^2  - v_{l,k,\yushunrevise{-1}})
+ \beta_2 (\left( \partial_l f_{\tau_{k,i-1}}(x_{k,i-1}) \right)^2   - v_{l,k,\yushunrevise{-1}} )
+ \cdots 
+ \beta_2^{\yushunrevise{i}} (\left(\partial_l f_{\tau_{k,\yushunrevise{0}}}(x_{k,\yushunrevise{0}}) \right)^2 - v_{l,k,\yushunrevise{-1}}) 
\right) \right| \\
&\leq & (1 - \beta_2)\left( \left| 
\left( \partial_l f_{\tau_{k,i}}(x_{k,i}) \right)^2  - v_{l,k,\yushunrevise{-1}} \right|
+ \beta_2 \left|\left( \partial_l f_{\tau_{k,i-1}}(x_{k,i-1}) \right)^2   - v_{l,k,\yushunrevise{-1}}\right|
+ \cdots 
+ \beta_2^{\yushunrevise{i}} \left|\left( \partial_l f_{\tau_{k,\yushunrevise{0}}}(x_{k,\yushunrevise{0}}) \right)^2  - v_{l,k,\yushunrevise{-1}}\right| \right)
\\
\end{eqnarray*}
}

Recall 
\begin{eqnarray*}
    \left| \partial_l f_{\tau_{k,i}}(x_{k,i}) \right|
&\leq& \left| \partial_l f_{\tau_{k,i}}(x_{k,i}) - \partial_l f_{\tau_{k,i}}(x_{k,0}) \right| 
+ \left| \partial_l f_{\tau_{k,i}}(x_{k,0}) \right| \\
 &\overset{\text{Lemma \ref{lemma_delta}}}{\leq} & n \Delta_k + \left| \partial_l f_{\tau_{k,i}}(x_{k,0}) \right|,
\end{eqnarray*}
so we have 
\begin{eqnarray*}
    \left| \partial_l f_{\tau_{k,i}}(x_{k,i}) \right|^2 
&\leq& (n \Delta_k)^2 + 2n \Delta_k \left| \partial_l f_{\tau_{k,i}}(x_{k,0}) \right| 
+ \left| \partial_l f_{\tau_{k,i}}(x_{k,0}) \right|^2
\\
&\overset{\text{(i)}}{\leq}& 4 \max_i \left| \partial_l f_i(x_{k,0}) \right|^2,
\end{eqnarray*}
where (i) is due to $\max_i \left| \partial_l f_{i}(x_{k,0}) \right| \geq \frac{8\sqrt{2} n \Delta_1}{\sqrt{k}}  \frac{\log (1/2)}{n \log\beta_2}> n \Delta_k$. Therefore, we have
\begin{eqnarray*}
     |v_{l,k,i} - v_{l,k,\yushunrevise{-1}}| & \leq &(1 - \beta_2)\left( \left| 
\left( \partial_l f_{\tau_{k,i}}(x_{k,i}) \right)^2  - v_{l,k,\yushunrevise{-1}} \right|
+ \beta_2 \left|\left( \partial_l f_{\tau_{k,i-1}}(x_{k,i-1}) \right)^2   - v_{l,k,\yushunrevise{-1}}\right|
+ \cdots 
+ \beta_2^{\yushunrevise{i}} \left|\left( \partial_l f_{\tau_{k,1}}(x_{k,1}) \right)^2  - v_{l,k,\yushunrevise{-1}}\right| \right) \\
&\leq & (1 - \beta_2)\left( \left| 
\left( \partial_l f_{\tau_{k,i}}(x_{k,i}) \right)^2  + v_{l,k,\yushunrevise{-1}} \right|
+ \beta_2 \left|\left( \partial_l f_{\tau_{k,i-1}}(x_{k,i-1}) \right)^2   + v_{l,k,\yushunrevise{-1}}\right|
+ \cdots 
+ \beta_2^{\yushunrevise{i}} \left|\left( \partial_l f_{\tau_{k,1}}(x_{k,1}) \right)^2  + v_{l,k,\yushunrevise{-1}}\right| \right) \\
&\leq& (1-\beta_2) n \left( 4 \max_i \left| \partial_l f_i(x_{k,0}) \right|^2   + v_{l,k,\yushunrevise{-1}}\right).
\end{eqnarray*}

Finally, we have

\begin{eqnarray*}
    \frac{1}{\sqrt{v_{l,k,i}}} &\geq&   \frac{1}{\sqrt{v_{l,k,\yushunrevise{-1}}}}\left(1 - \frac{|v_{l,k,i} - v_{l,k,\yushunrevise{-1}}|} {2v_{l,k,\yushunrevise{-1}}}  \right) \\
    &\geq &\frac{1}{\sqrt{v_{l,k,\yushunrevise{-1}}}}\left(1 - \frac{(1-\beta_2) n \left( 4 \max_i \left| \partial_l f_i(x_{k,0}) \right|^2   + v_{l,k,\yushunrevise{-1}}\right)}{2 v_{l,k,\yushunrevise{-1}}} \right) \\
    &\overset{\text{Lemma \ref{lemma_f_over_v}}}{\geq} & \frac{1}{\sqrt{v_{l,k,\yushunrevise{-1}}}}\left(1 - (1-\beta_2) n (\frac{8n}{\beta_2^n} + \frac{1}{2} )\right).
 \end{eqnarray*}
This concludes the proof of Lemma \ref{lemma_concentrate_v_k}. \qed

\subsection{Proof of Lemma \ref{lemma_concentrate_v_k-1} }
\label{appendix:lemma_concentrate_v_k-1}

The lower bound in \eqref{eq_concentrate_v_k-1} is straightforward due to  $ v_{l,k-1,i} \leq  \frac{v_{l,k,\yushunrevise{-1}}}{\beta_2^n}$. Regarding the upper bound in \eqref{eq_concentrate_v_k}, we consider

\begin{eqnarray*}
   v_{l,k-1,i} & = &   v_{l,k,\yushunrevise{-1}} \cdot \frac{v_{l,k-1,i}}{v_{l,k,\yushunrevise{-1}}}\\   
    &=& v_{l,k,\yushunrevise{-1}} \cdot \frac{ v_{l,k,\yushunrevise{-1}} + v_{l,k-1,i} - v_{l,k,\yushunrevise{-1}}}{v_{l,k,\yushunrevise{-1}}} \\
   &\geq&v_{l,k,\yushunrevise{-1}} \cdot \left(1- \frac{|v_{l,k-1,i} - v_{l,k,\yushunrevise{-1}}| }{v_{l,k,\yushunrevise{-1}}} \right).
 \end{eqnarray*}

The rest of the proof follows the same procedure as the proof of  Lemma \ref{lemma_concentrate_v_k}. Firstly,  one can show that: when $ \max_i \left| \partial_l f_{i}(x_{k,0}) \right| \geq \frac{16 n \Delta_1}{\sqrt{k}}  \frac{\log (1/2)}{n \log\beta_2}$, we have 

\begin{equation}
\label{eq_k-1_to_k}
    \left(\partial_l f_{\tau_{k-1,j}} (x_{k-1,j})\right)^2 \leq  4 \max_i \left(\partial_l f_i (x_{k,0})\right)^2.
\end{equation} 

Therefore, we have \yushunrevise{the following results for $i = -1, 0, \cdots ,n-2$}
    
\begin{eqnarray*}
   \frac{ \left| v_{l,k,\yushunrevise{-1}} - v_{l,k-1,i} \right|}{v_{l,k,\yushunrevise{-1}}}
&= &\left| \frac{(1 - \beta_2)}{{v_{l,k,\yushunrevise{-1}}}} \left[
\left( \left(\partial_l f_{\yushunrevise{\tau_{k-1,n-1}}}(x_{\yushunrevise{k-1,n-1}}) \right)^2 - v_{l,k-1,i} \right)
+ \cdots 
+ \beta_2^{n - i\yushunrevise{-2}} \left( \left( \partial_l f_{\tau_{k-1, i+1}}(x_{k-1,i+1}) \right)^2 - v_{l,k-1,i} \right)
\right] \right|
\\
&\leq&   \frac{(1 - \beta_2)}{v_{l,k,\yushunrevise{-1}}} \left[
\left( \left( \partial_l f_{\yushunrevise{\tau_{k-1,n-1}}}(x_{\yushunrevise{k-1,n-1}}) \right)^2+ v_{l,k-1,i}\right)
+ \cdots 
+ \beta_2^{n - i \yushunrevise{-2}} \left( \left( \partial_l f_{\tau_{k-1, i+1}}(x_{k-1,i+1}) \right)^2 + v_{l,k-1,i} \right)
\right] \\
&\overset{\eqref{eq_k-1_to_k}}{\leq}&   \frac{(1 - \beta_2)}{v_{l,k,\yushunrevise{-1}}} (1 + \cdots 
+ \beta_2^{n - i \RED{- 2}}) \left( 4\max_i \left| \partial_l f_i(x_{k,0}) \right|^2 + v_{l,k-1,i} \right) \\
&\overset{}{\leq}&  \RED{(1-\beta_2) n} \left( \frac{16n}{\beta_2^n} +\frac{1}{\beta_2^n} \right),
\end{eqnarray*}
where the last inequality is due to  $v_{l,k,\yushunrevise{-1}}\geq \beta_2^{n\yushunrevise{-1}-i} v_{l,k-1,i}$ \yushunrevise{for  $i = -1, 0, \cdots ,n-2$,}  and Lemma \ref{lemma_f_over_v} (note that the condition of Lemma \ref{lemma_f_over_v} is satisfied  when $\max_i \left| \partial_l f_{i}(x_{k,0}) \right| \geq \frac{16 n \Delta_1}{\sqrt{k}}  \frac{\log (1/2)}{n \log\beta_2}$). In summary, we have

\begin{eqnarray*}
   v_{l,k-1,i} 
   &\geq&v_{l,k,\yushunrevise{-1}} \left(1- \frac{|v_{l,k-1,i} - v_{l,k,\yushunrevise{-1}}| }{v_{l,k,\yushunrevise{-1}}} \right) \\
   &\geq &v_{l,k,\yushunrevise{-1}}  \left(1 - (1-\beta_2) n\left( \frac{16n + 1}{\beta_2^n}  \right)\right).
 \end{eqnarray*}

 Rearrange the inequality and we conclude the proof for Lemma \ref{lemma_concentrate_v_k-1}. \qed
 
\subsection{Proof of Lemma \ref{lemma_k-k-1} }
\label{appendix:lemma_k-k-1}

 We first prove the result for $j=1$. We discuss the following two cases.

\paragraph{Case 1: when $\frac{\partial_l f(x_{k,0})}{\sqrt{v_{l,k,\yushunrevise{-1}}}} \geq \frac{\partial_l f(x_{k-1,0})}{\sqrt{v_{l,k-1,\yushunrevise{-1}}}} $:     } when $\partial_l f(x_{k,0})\leq 0$, we have 

$$\frac{\partial_l f(x_{k,0})}{\sqrt{v_{l,k,\yushunrevise{-1}}}}  \overset{\text{Lemma \ref{lemma_concentrate_v_k-1}}}{\leq} \frac{\partial_l f(x_{k,0})}{\sqrt{v_{l,k-1,\yushunrevise{-1}}}} \left(1 - (1-\beta_2) n\left( \RED{\frac{16n + 1}{\beta_2^n}}\right)\right)^{\frac{1}{2}} .   $$

When $\partial_l f(x_{k,0}) > 0$, we have

$$\frac{\partial_l f(x_{k,0})}{\sqrt{v_{l,k,\yushunrevise{-1}}}}  \overset{\text{Lemma \ref{lemma_concentrate_v_k-1}}}{\leq}  \frac{\partial_l f(x_{k,0})}{\sqrt{v_{l,k-1,\yushunrevise{-1}}}} \frac{1}{\sqrt{\beta_2^n}}.  $$

In conclusion, we have 

{\small
\begin{eqnarray*}
  &\frac{\partial_l f(x_{k,0})}{\sqrt{v_{l,k,\yushunrevise{-1}}}} \\
  &\leq &   \max \left\{  \frac{\partial_l f(x_{k,0})}{\sqrt{v_{l,k-1,\yushunrevise{-1}}}} \left(1 - (1-\beta_2) n\left( \RED{\frac{16n + 1}{\beta_2^n}}\right)\right)^{\frac{1}{2}}, \frac{\partial_l f(x_{k,0})}{\sqrt{v_{l,k-1,\yushunrevise{-1}}}} \frac{1}{\sqrt{\beta_2^n}} \right\} \\
  &\leq & \frac{\partial_l f(x_{k,0})}{\sqrt{v_{l,k-1,\yushunrevise{-1}}}} + \max \left\{  \frac{\partial_l f(x_{k,0})}{\sqrt{v_{l,k-1,\yushunrevise{-1}}}}  \left[\left(1 - (1-\beta_2) n\left( \RED{\frac{16n + 1}{\beta_2^n}}\right)\right)^{\frac{1}{2}} -1\right], \frac{\partial_l f(x_{k,0})}{\sqrt{v_{l,k-1,\yushunrevise{-1}}}} \left(\frac{1}{\sqrt{\beta_2^n}}-1\right) \right\}   \\
  &\leq & \frac{\partial_l f(x_{k,0})}{\sqrt{v_{l,k-1,\yushunrevise{-1}}}} + \frac{\left|\partial_l f(x_{k,0})\right|}{\sqrt{v_{l,k-1,\yushunrevise{-1}}}} \left(\frac{1}{\sqrt{\beta_2^n}}-1  +\left[1-\left(1 - (1-\beta_2) n\left( \RED{\frac{16n + 1}{\beta_2^n}}\right)\right)^{\frac{1}{2}} \right]\right)\\
    &\overset{\text{Lemma \ref{lemma_concentrate_v_k-1}}}{\leq}&\frac{\partial_l f(x_{k,0})}{\sqrt{v_{l,k-1,\yushunrevise{-1}}}} + \frac{\left|\partial_l f(x_{k,0})\right|}{\sqrt{v_{l,k,\yushunrevise{-1}}}} \left(\frac{1}{\sqrt{\beta_2^n}}-1  +\left[1-\left(1 - (1-\beta_2) n\left( \RED{\frac{16n + 1}{\beta_2^n}}\right)\right)^{\frac{1}{2}} \right]\right)  \left(1 - (1-\beta_2) n\left( \RED{\frac{16n + 1}{\beta_2^n}}\right)\right)^{-\frac{1}{2}} 
\end{eqnarray*}
}

\begin{eqnarray*}
    &\overset{\text{Lemma \ref{lemma_f_over_v}}}{\leq}&\frac{\partial_l f(x_{k,0})}{\sqrt{v_{l,k-1,\yushunrevise{-1}}}} + \sqrt{\frac{4n}{\beta_2^n}} \left(\frac{1}{\sqrt{\beta_2^n}}-1  +\left[1-\left(1 - (1-\beta_2) n\left( \RED{\frac{16n + 1}{\beta_2^n}}\right)\right)^{\frac{1}{2}} \right]\right)  \left(1 - (1-\beta_2) n\left( \RED{\frac{16n + 1}{\beta_2^n}}\right)\right)^{-\frac{1}{2}}  \\
    &:=&\frac{\partial_l f(x_{k,0})}{\sqrt{v_{l,k-1,\yushunrevise{-1}}}} +  \tilde{\delta}_1(\beta_2),
\end{eqnarray*}
where $\tilde{\delta}_1(\beta_2) := \sqrt{\frac{4n}{\beta_2^n}} \left(\frac{1}{\sqrt{\beta_2^n}}-1  +\left[1-\left(1 - (1-\beta_2) n \left( \yushunrevise{\frac{16n+1}{\beta_2^n}}\right)\right)^{\frac{1}{2}} \right]\right)  \left(1 - (1-\beta_2) n \left( \yushunrevise{\frac{16n+1}{\beta_2^n}}\right)\right)^{-\frac{1}{2}} $ is a constant that goes to 0 when $\beta_2$ goes to 1. Therefore, we have

\begin{eqnarray*}
  \frac{\partial_l f(x_{k,0})}{\sqrt{v_{l,k,\yushunrevise{-1}}}} -   \frac{\partial_l f(x_{k-1,0})}{\sqrt{v_{l,k-1,\yushunrevise{-1}}}} &\leq & \frac{\partial_l f(x_{k,0})- \partial_l f(x_{k-1,0}) }{\sqrt{v_{l,k-1,\yushunrevise{-1}}}} + \tilde{\delta_1}(\beta_2) \\
  &\overset{\text{Lemma \ref{lemma_delta}}}{ \leq } & \frac{n\Delta_{(k-1)}}{\sqrt{v_{l,k-1,\yushunrevise{-1}}}} + \tilde{\delta_1}(\beta_2).
\end{eqnarray*}

\paragraph{Case 2: when $\frac{\partial_l f(x_{k,0})}{\sqrt{v_{l,k,\yushunrevise{-1}}}} \leq \frac{\partial_l f(x_{k-1,0})}{\sqrt{v_{l,k-1,\yushunrevise{-1}}}} $:} when $\partial_l f(x_{k,0})\geq 0$, we have 

$$\frac{\partial_l f(x_{k,0})}{\sqrt{v_{l,k,\yushunrevise{-1}}}}  \overset{\text{\eqref{lemma_concentrate_v_k-1}}}{\geq} \frac{\partial_l f(x_{k,0})}{\sqrt{v_{l,k-1,\yushunrevise{-1}}}} \left(1- (1-\beta_2) n \left( \yushunrevise{\frac{16n+1}{\beta_2^n}} \right)\right)^{\frac{1}{2}}.   $$

When $\partial_l f(x_{k,0}) < 0$, we have

$$\frac{\partial_l f(x_{k,0})}{\sqrt{v_{l,k,\yushunrevise{-1}}}}  \overset{}{\geq} \frac{\partial_l f(x_{k,0})}{\sqrt{v_{l,k-1,\yushunrevise{-1}}}} \frac{1}{\sqrt{\beta_2^n}} . $$ 

Following the same strategy as in Case 1, we can show that

\begin{eqnarray*}
  \frac{\partial_l f(x_{k,0})}{\sqrt{v_{l,k,\yushunrevise{-1}}}} &\geq & 
  \frac{\partial_l f(x_{k,0})}{\sqrt{v_{l,k-1,\yushunrevise{-1}}}} - \tilde\delta_1(\beta_2),
\end{eqnarray*}

which further implies 

\begin{eqnarray*}
  \frac{\partial_l f(x_{k,0})}{\sqrt{v_{l,k,\yushunrevise{-1}}}} -   \frac{\partial_l f(x_{k-1,0})}{\sqrt{v_{l,k-1,\yushunrevise{-1}}}} &\geq & \frac{\partial_l f(x_{k,0})- \partial_l f(x_{k-1,0}) }{\sqrt{v_{l,k-1,\yushunrevise{-1}}}} -\tilde\delta_1(\beta_2)\\
  &\overset{\text{Lemma \ref{lemma_delta}}}{ \geq } & - \frac{n\Delta_{(k-1)}}{\sqrt{v_{l,k-1,\yushunrevise{-1}}}} - \tilde\delta_1(\beta_2). 
\end{eqnarray*}

Combining Case 1 and Case 2 together, we have 

$$ \left| \frac{\partial_l f(x_{k,0})}{\sqrt{v_{l,k,\yushunrevise{-1}}}} -   \frac{\partial_l f(x_{k-1,0})}{\sqrt{v_{l,k-1,\yushunrevise{-1}}}} \right| \leq \frac{n\Delta_{(k-1)}}{\sqrt{v_{l,k-1,\yushunrevise{-1}}}} + \tilde\delta_1(\beta_2) . $$

Now we consider the case when $j>1$. Based on the above inequality, we have

\begin{eqnarray*}
  \left| \frac{\partial_l f(x_{k,0})}{\sqrt{v_{l,k,\yushunrevise{-1}}}} -   \frac{\partial_l f(x_{k-j,0})}{\sqrt{v_{l,k-j,\yushunrevise{-1}}}} \right| &\leq& \left( \frac{n\Delta_{(k-1)}}{\sqrt{v_{l,k-1,\yushunrevise{-1}}}} + \frac{n\Delta_{(k-2)}}{\sqrt{v_{l,k-2,\yushunrevise{-1}}}} + \cdots + \frac{n\Delta_{(k-j)}}{\sqrt{v_{l,k-j,\yushunrevise{-1}}}}\right) + j \tilde\delta_1(\beta_2)  \\
    &\leq &  \left( \frac{n\Delta_{(k-1)}}{\sqrt{\beta_2^{nj}v_{l,k-j,\yushunrevise{-1}}}} + \frac{n\Delta_{(k-2)}}{\sqrt{\beta_2^{nj}v_{l,k-j,\yushunrevise{-1}}}} + \cdots + \frac{n\Delta_{(k-j)}}{\sqrt{\beta_2^{nj}v_{l,k-j,\yushunrevise{-1}}}}\right)  +  j \tilde\delta_1(\beta_2)   \\
    &\leq & \frac{j}{\sqrt{\beta_2^{nj}}} \frac{n\Delta_{(k-j)}}{\sqrt{v_{l,k-j,\yushunrevise{-1}}}} +  j \tilde\delta_1(\beta_2).
\end{eqnarray*}
This concludes the proof for Lemma \ref{lemma_k-k-1}. \qed

\subsection{Proof of Lemma \ref{lemma_II_2}}
\label{appendix:lemma_II_2}

Now we prove Lemma \ref{lemma_II_2}, which appears later in Appendix \ref{appendix_main_body_rr}.
We start by providing a lower bound  for  $\sum_{i=0}^{n-1} \partial_l f(x_{k,0})\frac{m_{l,k,i}}{\sqrt{v_{l,k,i}}}$ for those $l\in[d]$ with $\max_i \vert \partial_lf_i(x_{k,0})\vert \geq Q_k$, where $Q_k$ is defined in \eqref{eq_constants_wr}. We discuss the following two cases.

\paragraph{Case 1:} When  $\partial_{l} f\left(x_{k, 0}\right)\mlki \geq 0$,  we have 

\begin{eqnarray*}
\partial_l f(x_{k,0})\frac{m_{l,k,i}}{\sqrt{v_{l,k,i}}}  \overset{\text{Lemma \ref{lemma_concentrate_v_k}}}{\geq}  \partial_l f(x_{k,0})\frac{m_{l,k,i}}{\sqrt{v_{l,k,\yushunrevise{-1}}}}  \left(1-(1-\beta_2) n(\frac{8n}{\beta_2^n} + \frac{1}{2} )\right).
\end{eqnarray*}

\paragraph{Case 2:} When  $\partial_{l} f\left(x_{k, 0}\right)\mlki < 0$,  we have 

\begin{eqnarray*}
\partial_l f(x_{k,0})\frac{m_{l,k,i}}{\sqrt{v_{l,k,i}}}  \overset{\text{Lemma \ref{lemma_concentrate_v_k}}}{\geq}  \partial_l f(x_{k,0})\frac{m_{l,k,i}}{\sqrt{v_{l,k,\yushunrevise{-1}}}} \frac{1}{\sqrt{\beta_2^n}}.
\end{eqnarray*}

\RED{Let $i+ := \{\, i\in\{0,\ldots,n-1\} : \partial_l f(x_{k,0})\, m_{l,k,i} \ge 0 \,\}$ and $i- := \{\, i\in\{0,\ldots,n-1\} : \partial_l f(x_{k,0})\, m_{l,k,i} < 0 \,\}$.
Accordingly, we write $\sum_{i+} (\cdot):=\sum_{i\in i+}(\cdot)$ and $\sum_{i-}(\cdot):=\sum_{i\in i-}(\cdot)$.} Combining \textbf{Case 1} and \textbf{Case 2} together, we have the following results for $l$ with $\max_i \vert \partial_lf_i(x_{k,0})\vert \geq Q_k$:

\begin{eqnarray*}
    \sum_{i=0}^{n-1} \partial_l f(x_{k,0})\frac{m_{l,k,i}}{\sqrt{v_{l,k,i}}} &= & \sum_{i+} \partial_l f(x_{k,0})\frac{m_{l,k,i}}{\sqrt{v_{l,k,i}}}  + \sum_{i-} \partial_l f(x_{k,0})\frac{m_{l,k,i}}{\sqrt{v_{l,k,i}}}  \\
    &\geq & \sum_{i+} \partial_l f(x_{k,0})\frac{m_{l,k,i}}{\sqrt{v_{l,k,\yushunrevise{-1}}}}  \left(1-(1-\beta_2) n(\frac{8n}{\beta_2^n} + \frac{1}{2} )\right) + \sum_{i-} \partial_l f(x_{k,0})\frac{m_{l,k,i}}{\sqrt{v_{l,k,\yushunrevise{-1}}}}  \frac{1}{\sqrt{\beta_2^n}} \\
    &=& \sum_{i=0}^{n-1}  \partial_l f(x_{k,0})\frac{m_{l,k,i}}{\sqrt{v_{l,k,\yushunrevise{-1}}}}  +    \sum_{i+} \partial_l f(x_{k,0})\frac{m_{l,k,i}}{\sqrt{v_{l,k,\yushunrevise{-1}}}}  \left(-(1-\beta_2) n(\frac{8n}{\beta_2^n} + \frac{1}{2} )\right)  \\
    &&+ \sum_{i-} \partial_l f(x_{k,0})\frac{m_{l,k,i}}{\sqrt{v_{l,k,\yushunrevise{-1}}}}  \left(\frac{1}{\sqrt{\beta_2^n}} -1\right) \\
    &\geq &\sum_{i=0}^{n-1}  \partial_l f(x_{k,0})\frac{m_{l,k,i}}{\sqrt{v_{l,k,\yushunrevise{-1}}}}  - \tilde{\delta}_2(\beta_2) \sum_{i=0}^{n-1}\left\vert\frac{\partial_l f(x_{k,0})}{\sqrt{v_{l,k,\yushunrevise{-1}}}}\right\vert \left\vert m_{l,k,i} \right\vert \\
    &\overset{\text{Lemma \ref{lemma_f_over_v} and \ref{lemma_upper_m}}}{\geq}& \sum_{i=0}^{n-1}  \partial_l f(x_{k,0})\frac{m_{l,k,i}}{\sqrt{v_{l,k,\yushunrevise{-1}}}}  - \tilde{\delta}_2(\beta_2) 2\sqrt{\frac{n}{\beta_2^n}} \left( C_{m,1}\left(\vert \partial_\alpha f(x_{k,0})\vert  + \sqrt{\frac{D_0}{D_1d } }\right) + \frac{C_{m,2}}{\sqrt{k}}\right),
\end{eqnarray*}
where $\tilde{\delta}_2(\beta_2 ) = (1-\beta_2) n(\frac{8n}{\beta_2^n} + \frac{1}{2} ) +  \left(\frac{1}{\sqrt{\beta_2^n}} -1\right)$ and $C_{m,1}, C_{m,2}$ are defined in Lemma \ref{lemma_upper_m}. We have

\begin{eqnarray*}
  \text{(II-2)}  & = &\frac{\eta_k}{1 - \beta_1^n} \sum_{l=1}^d  
 \sum_{i=0}^{n-1} \partial_l f(x_{k,0})\frac{m_{l,k,i}}{\sqrt{v_{l,k,i}}}
\mathbb{I} \left(\max_i \vert \partial_lf_i(x_{k,0})\vert \geq Q_k \right)  \\
&\geq & \frac{\eta_k}{1 - \beta_1^n} \sum_{l=1}^d\sum_{i=0}^{n-1}  \partial_l f(x_{k,0})\frac{m_{l,k,i}}{\sqrt{v_{l,k,\yushunrevise{-1}}}} \mathbb{I} \left(\max_i \vert \partial_lf_i(x_{k,0})\vert \geq Q_k \right)  \\
&&- \frac{d\eta_k}{1 - \beta_1^n} \tilde{\delta}_2(\beta_2) 2\sqrt{\frac{n}{\beta_2^n}} \left( C_{m,1}\left(\vert \partial_\alpha f(x_{k,0})\vert  + \sqrt{\frac{D_0}{D_1d } }\right) + \frac{C_{m,2}}{\sqrt{k}}\right)\\
&:=& \frac{\eta_k}{1 - \beta_1^n} \sum_{l=1}^d\sum_{i=0}^{n-1}  \partial_l f(x_{k,0})\frac{m_{l,k,i}}{\sqrt{v_{l,k,\yushunrevise{-1}}}} \mathbb{I} \left(\max_i \vert \partial_lf_i(x_{k,0})\vert \geq Q_k \right) - \eta_k \tilde{\delta}_3(\beta_2)\left(\vert \partial_\alpha f(x_{k,0})\vert  + \sqrt{\frac{D_0}{D_1d } }\right) -\frac{\widetilde{C}_{4}}{k},
\end{eqnarray*}
where 
$$\tilde{\delta}_3(\beta_2) =\frac{d}{1 - \beta_1^n} \tilde{\delta}_2(\beta_2) 2\sqrt{\frac{n}{\beta_2^n}} \cdot  C_{m,1}, \quad \widetilde{C}_{4} = \frac{d\eta_0}{1 - \beta_1^n}  \tilde{\delta}_2(\beta_2) 2\sqrt{\frac{n}{\beta_2^n}} \cdot  C_{m,2},$$
where $C_{m,1}$ and $C_{m,2}$ are defined in Lemma \ref{lemma_upper_m}. Note that  $\tilde{\delta}_3(\beta_2)$ is a constant that approaches 0 when $\beta_2$ approaches 1.  This concludes the proof for (II-2). The proof of (II-3) follows the exact same procedure using Lemma \ref{lemma_concentrate_v_k-1} and we omit the proof for brevity. \qed

\subsection{Proof of Lemma \ref{lemma_III}}
\label{appendix:lemma_III}

Now we prove Lemma \ref{lemma_III}, which appears later in Appendix \ref{appendix_main_body_rr}.
For those $l \in [d]$ such that  $\max_i \vert \partial_lf_i(x_{k,0})\vert \geq Q_k$, we have

\begin{eqnarray*}
   &&\mathbb{E}_{k-1} \mathbb{E}_k \left[ 
\frac{\partial_l f(x_{k,0})}{\sqrt{v_{l,k,\yushunrevise{-1}}}} 
\left( \sum_{i=0}^{n-1} \left( m_{l,k,i} - \beta_1^n m_{l,k-1,i} \right) \right)
\right] \\
&=& \underbrace{\mathbb{E}_{k-1} \mathbb{E}_k \left[ 
\frac{\partial_l f(x_{k,0})}{\sqrt{v_{l,k,\yushunrevise{-1}}}} 
\sum_{i=0}^{n-1} (1- \beta_1)
\left( \partial_l f_{\tau_{k,i}}(x_{k,i}) 
+ \beta_1 \partial_l f_{\tau_{k,i-1}}(x_{k,i-1}) 
+ \cdots 
+ \beta_1^i \partial_l f_{\tau_{k,0}}(x_{k,0}) 
\right)
\right]}_{\text{(III-1)}}
\\
&&+ \underbrace{\mathbb{E}_{k-1}\mathbb{E}_k \left[
\frac{\partial_l f(x_{k,0})}{\sqrt{v_{l,k,\yushunrevise{-1}}}} 
\sum_{i=0}^{n-2} (1 - \beta_1)  \beta_1^{i+1}
\left( \partial_l f_{\tau_{k-1,n-1}}(x_{k-1,n-1}) 
+ \cdots 
+ \beta_1^{n-i-2} \partial_l f_{\tau_{k-1,i+1}}(x_{k-1,i+1}) \right)
\right]}_{\text{(III-2)}}.
\end{eqnarray*}

We first provide a lower bound for (III-1). 
\begin{eqnarray*}
    {\text{(III-1)}} &\overset{\text{Lemma \ref{lemma_delta}}}{\geq}& \mathbb{E}_{k-1} \mathbb{E}_k \left[
\frac{\partial_l f(x_{k,0})}{\sqrt{v_{l,k,\yushunrevise{-1}}}}
\sum_{i=0}^{n-1} (1- \beta_1) \left(
\partial_l f_{\tau_{k,i}}(x_{k,0})
+ \beta_1 \partial_l f_{\tau_{k,i-1}}(x_{k,0}) 
+ \cdots + \beta_1^i \partial_l f_{\tau_{k,0}}(x_{k,0})
\right)
\right]
\\
&&- \mathbb{E}_{k-1}\mathbb{E}_k \left[
\left| \frac{\partial_l f(x_{k,0})}{\sqrt{v_{l,k,\yushunrevise{-1}}}} \right|
\right] \cdot n^3 \Delta_{k}
\\
&\overset{\text{Lemma \ref{lemma_f_over_v}}}{\geq}& \mathbb{E}_{k-1} \mathbb{E}_k \left[
\frac{\partial_l f(x_{k,0})}{\sqrt{v_{l,k,\yushunrevise{-1}}}}
\sum_{i=0}^{n-1} (1- \beta_1) \left(
\partial_l f_{\tau_{k,i}}(x_{k,0})
+ \beta_1 \partial_l f_{\tau_{k,i-1}}(x_{k,0}) 
+ \cdots + \beta_1^i \partial_l f_{\tau_{k,0}}(x_{k,0})
\right)
\right]
\\
&&- 2\sqrt{\frac{n}{\beta_2^n}} \cdot n^3 \Delta_{k}
\end{eqnarray*}
\begin{eqnarray*}
&\overset{\text{(i)}}{=}&\mathbb{E}_{k-1} \left[\frac{\partial_l f(x_{k,0})}{\sqrt{v_{l,k,\yushunrevise{-1}}}}
\cdot \partial_l f(x_{k,0})
\sum_{i=0}^{n-1} (1 - \beta_1) (1 + \beta_1 + \cdots + \beta_1^i)\right]
- 2\sqrt{\frac{n}{\beta_2^n}} \cdot n^3 \Delta_{k}
\\
&\geq& \mathbb{E}_{k-1} \left[\frac{(\partial_l f(x_{k,0}))^2}{ \sqrt{v_{l,k,\yushunrevise{-1}}}}
\cdot (1 - \beta_1)(1 + \beta_1 + \cdots + \beta_1^{n-1})\right]
- 2\sqrt{\frac{n}{\beta_2^n}} \cdot n^3 \Delta_{k} \\ 
&= &\mathbb{E}_{k-1} \left[\frac{(\partial_l f(x_{k,0}))^2}{ \sqrt{v_{l,k,\yushunrevise{-1}}}}
\cdot (1 - \beta_1^n)\right]
- 2\sqrt{\frac{n}{\beta_2^n}} \cdot n^3 \Delta_{k}, 
\end{eqnarray*}
\RED{where (i) follows by conditioning on the entire history up to $x_{k,0}$ while excluding the fresh random permutation of indices within epoch $k$. Under $\mathbb E_k[\cdot]$, the quantities $\partial_l f(x_{k,0})$ and $v_{l,k,-1}$ are treated as fixed, so we have
\[
\mathbb E_k\!\left[\frac{\partial_l f(x_{k,0})}{\sqrt{v_{l,k,-1}}}\,\partial_l f_{\tau_{k,i}}(x_{k,0})\right]
=
\frac{\partial_l f(x_{k,0})}{\sqrt{v_{l,k,-1}}}\,
\mathbb E_k[\partial_l f_{\tau_{k,i}}(x_{k,0})].
\]
Moreover, conditional on this history, for any fixed $i \in[n]$, $\tau_{k,i}$ is uniformly distributed over $[n]$, hence
$\mathbb E_k[\partial_l f_{\tau_{k,i}}(x_{k,0})]=\partial_l f(x_{k,0})$.}

Now we bound (III-2) using a similar procedure.

\begin{eqnarray*}
    \text{(III-2)} 
&\overset{\text{Lemma \ref{lemma_delta}}}{\geq}&\mathbb{E}_{k-1} \mathbb{E}_k \left[
\frac{\partial_l f(x_{k,0})}{\sqrt{v_{l,k,\yushunrevise{-1}}}}
(1 - \beta_1) \sum_{i=0}^{n-2} \beta_1^{i+1}
\left(
\partial_l f_{\tau_{k-1,n-1}}(x_{k-1,0}) + \cdots + \beta_1^{n-i-2} \partial_l f_{\tau_{k-1,i+1}}(x_{k-1,0})
\right)
\right] \\
&&-\mathbb{E}_{k-1} \mathbb{E}_k \left[
\left| \frac{\partial_l f(x_{k,0})}{\sqrt{v_{l,k,\yushunrevise{-1}}}} \right|
\right] \cdot n^3 \Delta_{k-1}
\\
&\geq& \mathbb{E}_{k-1} \mathbb{E}_k \left[
\frac{\partial_l f(x_{k-1,0})}{\sqrt{v_{l,k-1,\yushunrevise{-1}}}}
(1 - \beta_1) \sum_{i=0}^{n-2} \beta_1^{i+1}
\left(
\partial_l f_{\tau_{k-1,n-1}}(x_{k-1,0}) + \cdots + \beta_1^{n-i-2} \partial_l f_{\tau_{k-1,i+1}}(x_{k-1,0})
\right)
\right]
\\
&&- \left|
\frac{\partial_l f(x_{k-1,0})}{\sqrt{v_{l,k-1,\yushunrevise{-1}}}} - \frac{\partial_l f(x_{k,0})}{\sqrt{v_{l,k,\yushunrevise{-1}}}}
\right|
 \left|(1 - \beta_1)\sum_{i=0}^{n-2} \beta_1^{i+1}
\left(
\partial_l f_{\tau_{k-1,n-1}}(x_{k-1,0}) + \cdots + \beta_1^{n-i-2} \partial_l f_{\tau_{k-1,i+1}}(x_{k-1,0})
\right) \right|
\\
&&-\mathbb{E}_{k-1} \mathbb{E}_k \left[
\left| \frac{\partial_l f(x_{k,0})}{\sqrt{v_{l,k,\yushunrevise{-1}}}} \right|
\right] \cdot n^3 \Delta_{k-1} 
\end{eqnarray*}
\begin{eqnarray*}
&\overset{\text{Lemma \ref{lemma_k-k-1}}}{\geq}&  \mathbb{E}_{k-1} \mathbb{E}_k \left[
\frac{\partial_l f(x_{k-1,0})}{\sqrt{v_{l,k-1,\yushunrevise{-1}}}}
(1 - \beta_1) \sum_{i=0}^{n-2} \beta_1^{i+1}
\left(
\partial_l f_{\tau_{k-1,n-1}}(x_{k-1,0}) + \cdots + \beta_1^{n-i-2} \partial_l f_{\tau_{k-1,i+1}}(x_{k-1,0})
\right)
\right] \\
&& - \frac{1}{\sqrt{\beta_2^n}} \cdot \frac{\Delta_{k-1}} {\sqrt{v_{l,k-1,\yushunrevise{-1}}}}
 \mathbb{E}_{k-1} \mathbb{E}_k  (1 - \beta_1)\sum_{i=0}^{n-2} \beta_1^{i+1}
\left(
\left|\partial_l f_{\tau_{k-1,n-1}}(x_{k-1,0})\right| + \cdots + \left|\beta_1^{n-i-2} \partial_l f_{\tau_{k-1,i+1}}(x_{k-1,0})\right| 
\right) \\
&& - \tilde{\delta}_1(\beta_2)
  \mathbb{E}_{k-1} \mathbb{E}_k(1 - \beta_1)\sum_{i=0}^{n-2} \beta_1^{i+1}
\left(
\left|\partial_l f_{\tau_{k-1,n-1}}(x_{k-1,0})\right| + \cdots + \left|\beta_1^{n-i-2} \partial_l f_{\tau_{k-1,i+1}}(x_{k-1,0})\right| 
\right)\\
&&-\mathbb{E}_{k-1} \mathbb{E}_k \left[
\left| \frac{\partial_l f(x_{k,0})}{\sqrt{v_{l,k,\yushunrevise{-1}}}} \right|
\right] \cdot n^3 \Delta_{k-1} 
\end{eqnarray*}
\begin{eqnarray*}
&\overset{\text{(i)}}{\geq}& \mathbb{E}_{k-1} \mathbb{E}_k \left[
\frac{\partial_l f(x_{k-1,0})}{\sqrt{v_{l,k-1,\yushunrevise{-1}}}}
(1 - \beta_1) \sum_{i=0}^{n-2} \beta_1^{i+1}
\left(
\partial_l f_{\tau_{k-1,n-1}}(x_{k-1,0}) + \cdots + \beta_1^{n-i-2} \partial_l f_{\tau_{k-1,i+1}}(x_{k-1,0})
\right)
\right] \\
&&- \frac{1}{\sqrt{\beta_2^n}} \cdot \Delta_{k-1} \cdot (1-\beta_1)  \cdot 2\sqrt{\frac{n}{\beta_2^n}} \cdot n^2
\\
&&- \tilde{\delta}_1(\beta_2) n^2 \left( n\Delta_{k} + \sqrt{D_{1}} \sqrt{n} d\left(  \Ex_{k-1}\left|\partial_{\alpha} f\left(x_{k, 0}\right)\right|+\sqrt{\frac{D_{0}}{D_{1} d}}\right)\right)
\\
&&- 2\sqrt{\frac{n}{\beta_2^n}} \cdot n^3 \cdot \Delta_{k-1} ,
\end{eqnarray*}

\begin{eqnarray*}
\frac{\eta_k}{1-\beta_1^n} \sum_{l=1}^d \Ex_{k-1} \Exk\text{(III-2)} \mathbb{I}\left(\max_i \vert \partial_lf_i(x_{k,0})\vert \geq Q_k \right) & \geq &\frac{\eta_k}{1-\beta_1^n} \sum_{l=1}^d \Ex_{k-1} \Exk\text{(III-2)} \\
&&- \left|\frac{\eta_k}{1-\beta_1^n} \sum_{l=1}^d \Ex_{k-1} \Exk\text{(III-2)} \mathbb{I}\left(\max_i \vert \partial_lf_i(x_{k,0})\vert < Q_k \right)\right|\\
&\overset{\text{(ii)}}{\geq}&  \left[
\frac{(\partial_l f(x_{k-1,0}))^2}{\sqrt{v_{l,k-1,\yushunrevise{-1}}}}
(1 - \beta_1) \sum_{i=0}^{n-2} \beta_1^{i+1}
\left(1+ \cdots + \beta_1^{n-i-2} 
\right)
\right] \\
&&- d\tilde{\delta}_1(\beta_2) n^2 \left( n\Delta_{k} + \sqrt{D_{1}} \sqrt{n} d\left(  \Ex_{k-1}\left|\partial_{\alpha} f\left(x_{k, 0}\right)\right|+\sqrt{\frac{D_{0}}{D_{1} d}}\right)\right)
\\
&&-d\Delta_{k-1} \left( \frac{1}{\sqrt{\beta_2^n}} \cdot  (1-\beta_1)  \cdot 2\sqrt{\frac{n}{\beta_2^n}} \cdot n^2 + 2\sqrt{\frac{n}{\beta_2^n}} \cdot n^3 \right)
\\
&&- \frac{d\eta_k}{1-\beta_1^n} \frac{(1-\beta_1)^2 n^2 Q_k}{\sqrt{1-\beta_2} \left(1 - \frac{\beta_1}{\sqrt{\beta_2}}\right)} 
\end{eqnarray*}
\begin{eqnarray*}
&\overset{}{\geq}& - d\tilde{\delta}_1(\beta_2) n^2 \left( n\Delta_{k} + \sqrt{D_{1}} \sqrt{n} d\left(  \Ex_{k-1} \left|\partial_{\alpha} f\left(x_{k, 0}\right)\right|+\sqrt{\frac{D_{0}}{D_{1} d}}\right)\right)
\\
&&-d\Delta_{k-1} \left( \frac{1}{\sqrt{\beta_2^n}} \cdot  (1-\beta_1)  \cdot 2\sqrt{\frac{n}{\beta_2^n}} \cdot n^2 + 2\sqrt{\frac{n}{\beta_2^n}} \cdot n^3 \right)\\
&&- \frac{d\eta_0}{1-\beta_1^n} \frac{(1-\beta_1)^2 n^2 }{\sqrt{1-\beta_2} \left(1 - \frac{\beta_1}{\sqrt{\beta_2}}\right)} 
\frac{32 (n+1) \Delta_1\left(\lceil\frac{\log (1/2)}{\log \beta_2}\rceil+n\right)}{k}
\\
&\overset{}{:=}& - d\tilde{\delta}_1(\beta_2) n^2 \left( n\Delta_{k} + \sqrt{D_{1}} \sqrt{n} d\left(  \Ex_{k-1} \left|\partial_{\alpha} f\left(x_{k, 0}\right)\right|+\sqrt{\frac{D_{0}}{D_{1} d}}\right)\right)
\\
&&-d\Delta_{k-1} \left( \frac{1}{\sqrt{\beta_2^n}} \cdot  (1-\beta_1)  \cdot 2\sqrt{\frac{n}{\beta_2^n}} \cdot n^2 + 2\sqrt{\frac{n}{\beta_2^n}} \cdot n^3 \right)\\
&&-\frac{\widetilde{C}_8}{k},
\end{eqnarray*}
where (i) is due to  \text{Lemma \ref{lemma_f_over_v} and \ref{lemma_fi_f}}, and (ii) is due to the property of random permutation: $\Ex_{k-1}(\partial_l f_{\tau_{k-1,i}}(x_{k-1,0})) = \partial_l f(x_{k-1,0})$ for any $i\in [n]$. In (i), we are allowed to use Lemma \ref{lemma_f_over_v} for $(k-1)$-th epoch since when $\max_i |\partial_lf_i (x_{k,0})| \geq Q_k$,  $\max_i |\partial_lf_i (x_{k-1,0})|$ will also satisfy the condition of Lemma \ref{lemma_f_over_v} automatically.

Combining (III-1) and (III-2) together, we have

  \begin{eqnarray*}
     \Ex_{k-1} \Exk  \text{(III)} &=&   \Ex_{k-1} \Exk \left[\frac{\eta_{k}}{1 - \beta_1^n} \sum_{l=1}^d\sum_{i=0}^{n-1}  \frac{\partial_l f(x_{k,0})}{{\sqrt{v_{l,k,\yushunrevise{-1}}}}} \left(m_{l,k,i} - \beta_1^n m_{l,k-1,i}\right)\mathbb{I} \left(\max_i \vert \partial_lf_i(x_{k,0})\vert \geq Q_k \right)  \right] \\
      &\geq& \eta_k\mathbb{E}_{k-1} \left[ \sum_{l=1}^d\frac{(\partial_l f(x_{k,0}))^2}{ \sqrt{v_{l,k,\yushunrevise{-1}}}}\mathbb{I}\left( \max_i \vert \partial_lf_i(x_{k,0})\vert \geq Q_k \right) 
\right]-  \frac{d\eta_k}{1-\beta_1^n} 2\sqrt{\frac{n}{\beta_2^n}} \cdot n^3 \Delta_{k}  \\
&& -  \frac{d\eta_k}{1-\beta_1^n} \left(\tilde{\delta}_1(\beta_2) n^2 \left( n\Delta_{k} + \sqrt{D_{1}} \sqrt{n} d\left(  \Ex_{k-1} \left|\partial_{\alpha} f\left(x_{k, 0}\right)\right|+\sqrt{\frac{D_{0}}{D_{1} d}}\right)\right) \right)
\\
&&- \frac{d\eta_k}{1-\beta_1^n}\Delta_{k-1} \left( \frac{1}{\sqrt{\beta_2^n}} \cdot  (1-\beta_1)  \cdot 2\sqrt{\frac{n}{\beta_2^n}} \cdot n^2 + 2\sqrt{\frac{n}{\beta_2^n}} \cdot n^3 \right)   -\frac{\widetilde{C}_8}{k} \\
&\geq& \eta_k\mathbb{E}_{k-1} \left[\frac{(\partial_\alpha f(x_{k,0}))^2}{ \sqrt{v_{\alpha,k,\yushunrevise{-1}}}}
\right] -  \eta_k \tilde{\delta}_6(\beta_2)\left(  \Ex_{k-1}\left|\partial_{\alpha} f\left(x_{k, 0}\right)\right|+\sqrt{\frac{D_{0}}{D_{1} d}}\right) -   \frac{\widetilde{C}_9}{k},
\end{eqnarray*}  
where

\begin{eqnarray*}
    \tilde{\delta}_6(\beta_2)&=& \frac{1}{1-\beta_1^n} \tilde{\delta}_1(\beta_2)n^2 \sqrt{D_1} \sqrt{n} d^2,\\
   \widetilde{C}_9 &= & \frac{\eta_0d\Delta_1}{1-\beta_1^n}\left(2\sqrt{\frac{n}{\beta_2^n}} \cdot n^3 + n^3 \tilde{\delta}_1(\beta_2) + \sqrt{2} \left( \frac{1}{\sqrt{\beta_2^n}} \cdot  (1-\beta_1)  \cdot 2\sqrt{\frac{n}{\beta_2^n}} \cdot n^2 + 2\sqrt{\frac{n}{\beta_2^n}} \cdot n^3 \right) \right) + \widetilde{C}_8.
\end{eqnarray*}
This concludes the proof for Lemma \ref{lemma_III}.  \qed

\subsection{Main Body of the Proof}
\label{appendix_main_body_rr}
Now we are ready to prove Theorem \ref{thm_rr}. We apply the descent lemma on the auxiliary variable $z_k = \frac{x_{k,0} - \beta_1^n x_{k-1,0}}{1-\beta_1^n}$.

\begin{eqnarray}
\label{eq_descent_lemma}
    f(z_{k+1}) &\leq& f(z_k) + \langle \nabla f(z_k), z_{k+1} - z_k \rangle + \frac{L}{2} \| z_{k+1} - z_k \|_2^2  
 \end{eqnarray}

 We will provide an upper bound on $\frac{L}{2} \| z_{k+1} - z_k \|_2^2$ and a lower bound on $\langle \nabla f(z_k), z_{k} - z_{k+1} \rangle$.  We start with $\frac{L}{2} \| z_{k+1} - z_k \|_2^2$. 

 \begin{eqnarray}
\left\| z_{k+1} - z_k \right\|_2^2 
&=& \frac{1}{(1 - \beta_1^n)^2} 
\left\| x_{k+1,0} - \beta_1^n x_{k,0} - x_{k,0} + \beta_1^n x_{k-1,0} \right\|_2^2
\nonumber\\
&\leq& \frac{1}{(1 - \beta_1^n)^2} 
\left( \left\| x_{k+1,0} - x_{k,0} \right\|_2 
+ \left\| x_{k,0} - x_{k-1,0} \right\|_2 \right)^2
\nonumber\\
&\leq& \frac{1}{(1 - \beta_1^n)^2} 
\left( \left\| x_{k+1,0} - x_{k,n-1} + x_{k,n-1} - x_{k,n-2} + \cdots + x_{k,1} - x_{k,0} \right\|_2\right.
\nonumber\\
&&+  
\left.\left\| x_{k,0} - x_{k-1,n-1} +x_{k-1,n-1} - \cdots + x_{k-1,1} - x_{k-1,0} \right\|_2\right)^2
\nonumber\\
&\overset{\eqref{eq:m_over_v}}{\leq}& 
\frac{1}{(1 - \beta_1^n)^2} 
\cdot \left( \frac{(1 - \beta_1)}{\sqrt{1 - \beta_2}} 
\cdot \frac{d}{1 - \frac{\beta_1}{\sqrt{\beta_2}}} \right)^2 \cdot
\left( n \eta_{k} + n \eta_{k-1} \right)^2 \nonumber\\
&\overset{\text{(i)}}{\leq}& \frac{1}{(1 - \beta_1^n)^2} 
\cdot \left( \frac{(1 - \beta_1)}{\sqrt{1 - \beta_2}} 
\cdot \frac{d}{1 - \frac{\beta_1}{\sqrt{\beta_2}}} \right)^2 \cdot 
\frac{(1+\sqrt{2})^2 n^2\eta_0^2 }{k} \nonumber\\
&:=& \frac{\widetilde{C}_1}{k}, \label{eq_upper_bound_2nd_term}
 \end{eqnarray}
where (i) is due to $\frac{1}{\sqrt{k-1}} \leq \frac{\sqrt{2}}{{\sqrt{k}}}$ when $k \geq 2$.  We now provide a lower bound on $\langle \nabla f(z_k), z_{k} - z_{k+1} \rangle$. 

 \begin{eqnarray*}
     \langle \nabla f(z_k), z_{k} - z_{k+1} \rangle &= & \left\langle \nabla f(z_k),\ 
\frac{x_{k,0} - \beta_1^n x_{k-1,0}}{1 - \beta_1^n} 
- \frac{x_{k+1,0} - \beta_1^n x_{k,0}}{1 - \beta_1^n} 
\right\rangle \\
&=& 
\underbrace{\frac{1}{1 - \beta_1^n} 
\left\langle 
\nabla f(z_k) - \nabla f(x_{k,0}),\ x_{k,0} - x_{k+1,0} - \beta_1^n (x_{k-1,0} - x_{k,0}) 
\right\rangle}_{\text{(I)}} \\
&&+ 
\underbrace{\frac{1}{1 - \beta_1^n} 
\left\langle 
\nabla f(x_{k,0}),\ x_{k,0} - x_{k+1,0} - \beta_1^n (x_{k-1,0} - x_{k,0}) 
\right\rangle}_{\text{(II)}}.  
 \end{eqnarray*}

 We first prove a lower bound for (I):

 \begin{eqnarray}
{\text{(I)}}&\geq&  -  \frac{1}{1 - \beta_1^n} \|\nabla f(z_k) - \nabla f(x_{k,0})\|_2 \|x_{k,0} - x_{k+1,0} - \beta_1^n (x_{k-1,0} - x_{k,0}) \|_2 \nonumber \\
&\geq&  -  \frac{L}{1 - \beta_1^n} \|z_k - x_{k,0}\|_2 \|x_{k,0} - x_{k+1,0} - \beta_1^n (x_{k-1,0} - x_{k,0}) \|_2 \nonumber\\
&\overset{\text{(i)}}{\geq}&- \frac{L}{1 - \beta_1^n} 
\cdot \frac{\beta_1^n}{1 - \beta_1^n} 
\cdot \frac{(1 - \beta_1)}{\sqrt{1 - \beta_2}} 
\cdot \frac{nd}{1 - \frac{\beta_1}{\sqrt{\beta_2}}} 
\cdot \eta_k  \cdot \|x_{k,0} - x_{k+1,0} - \beta_1^n (x_{k-1,0} - x_{k,0}) \|_2 \nonumber\\
&\geq &- \frac{L}{1 - \beta_1^n} 
\cdot \left(\frac{\beta_1^n}{1 - \beta_1^n} 
\cdot \frac{(1 - \beta_1)}{\sqrt{1 - \beta_2}} 
\cdot \frac{nd}{1 - \frac{\beta_1}{\sqrt{\beta_2}}}\right)^2 
\cdot  \left( n + \sqrt{2}n \beta_1^n \right)\cdot \left(\frac{\eta_0}{\sqrt{k}}\right)^2\nonumber \\
&:= & -\frac{\widetilde{C}_2}{k}, \label{eq_lower_bound_I}
 \end{eqnarray}

where (i) is due to 
 \begin{eqnarray*}
     \left\| z_k - x_{k,0} \right\|_2
&=& \left\| \frac{x_{k,0} - \beta_1^n x_{k-1,0}}{1 - \beta_1^n} - x_{k,0} \right\|_2
\\
&=& \left\| \frac{\beta_1^n (x_{k,0} - x_{k-1,0})}{1 - \beta_1^n} \right\|_2
\\
&\overset{\text{Lemma \ref{lemma_delta}}}{\leq}&
\frac{\beta_1^n}{1 - \beta_1^n} \cdot \frac{(1 - \beta_1)}{\sqrt{1 - \beta_2}} \cdot \frac{\sqrt{2}nd}{1 - \frac{\beta_1}{\sqrt{\beta_2}}} \cdot \eta_k. 
 \end{eqnarray*}

We now provide a lower bound on (II), which is more involved. 

\begin{eqnarray*}
\text{(II)} &=& \frac{1}{1 - \beta_1^n} \sum_{l=1}^d \partial_l f(x_{k,0}) \left[
\eta_k \sum_{i=0}^{n-1} \frac{m_{l,k,i}}{\sqrt{v_{l,k,i}}}
- \beta_1^n \eta_{k-1} \sum_{i=0}^{n-1} \frac{m_{l,k-1,i}}{\sqrt{v_{l,k-1,i}}}
\right]
\\
&=& \underbrace{\frac{1}{1 - \beta_1^n} \sum_{l=1}^d \partial_l f(x_{k,0}) \left[
\eta_k \sum_{i=0}^{n-1} \frac{m_{l,k,i}}{\sqrt{v_{l,k,i}}}
- \beta_1^n \eta_{k-1} \sum_{i=0}^{n-1} \frac{m_{l,k-1,i}}{\sqrt{v_{l,k-1,i}}}
\right]
\mathbb{I} \left( \max_i \vert \partial_lf_i(x_{k,0})\vert \leq Q_k \right)}_{\text{(II-1)}} \\
&&+\underbrace{\frac{\eta_k}{1 - \beta_1^n} \sum_{l=1}^d  \left[
 \sum_{i=0}^{n-1} \partial_l f(x_{k,0})\frac{m_{l,k,i}}{\sqrt{v_{l,k,i}}}
\right]
\mathbb{I} \left(\max_i \vert \partial_lf_i(x_{k,0})\vert \geq Q_k \right)}_{\text{(II-2)}} \\
&&\underbrace{ - \frac{\beta_1^n \eta_{k-1} }{1 - \beta_1^n} \sum_{l=1}^d  \left[
 \sum_{i=0}^{n-1} \partial_l f(x_{k,0})\frac{m_{l,k-1,i}}{\sqrt{v_{l,k-1,i}}}
\right]
\mathbb{I} \left(\max_i \vert \partial_lf_i(x_{k,0})\vert \geq Q_k \right)}_{\text{(II-3)}}.
\end{eqnarray*}

We start with (II-1). Note that (II-1) imposes bounded gradient condition, and it can be simply bounded as follows: 

\begin{eqnarray}
    \text{(II-1)} &\overset{\eqref{eq:m_over_v}}{\geq}& -\frac{\sum_{l=1}^dn\max_i \vert \partial_lf_i(x_{k,0})\vert }{1-\beta_1^n}\frac{\eta_0}{\sqrt{k}} \left( \frac{1 - \beta_1}{\sqrt{1 - \beta_2}} \cdot \frac{1}{1 - \frac{\beta_1}{\sqrt{\beta_2}}}
+ \sqrt{2} \cdot \frac{1 - \beta_1}{\sqrt{1 - \beta_2}} \cdot \frac{1}{1 - \frac{\beta_1}{\sqrt{\beta_2}}}
\right) \nonumber \\
&\geq& -\frac{dQ_k }{1-\beta_1^n}\frac{2\sqrt{2}n\eta_0}{\sqrt{k}} \left(\frac{1 - \beta_1}{\sqrt{1 - \beta_2}} \cdot \frac{1}{1 - \frac{\beta_1}{\sqrt{\beta_2}}}
\right)  \nonumber \\
&:= & -\frac{\widetilde{C}_{3}}{k} \label{eq_II-1}.
\end{eqnarray}

Now we handle (II-2) and (II-3), which focus on the case when the gradient is unbounded. We  bound (II-2) and (II-3) in the following Lemma \ref{lemma_II_2}.

\begin{lemma}
\label{lemma_II_2}
      Consider Algorithm \ref{algorithm_rr}. Assume $k \geq \lceil \frac{\log (1/2)}{n \log\beta_2} \rceil +1 $. Then for any  $f \in \functionclass$,  we have

{\small
\begin{eqnarray}
\label{eq_II_2}
  \text{(II-2)} &= &  \!\!\frac{\eta_k}{1 - \beta_1^n} \sum_{l=1}^d  \left[
 \sum_{i=0}^{n-1} \partial_l f(x_{k,0})\frac{m_{l,k,i}}{\sqrt{v_{l,k,i}}}
\right]
\mathbb{I} \left(\max_i \vert \partial_lf_i(x_{k,0})\vert \geq Q_k \right) \nonumber \\
&\geq&  \!\!\frac{\eta_k}{1 - \beta_1^n} \sum_{l=1}^d\sum_{i=0}^{n-1}  \partial_l f(x_{k,0})\frac{m_{l,k,i}}{\sqrt{v_{l,k,\yushunrevise{-1}}}} \mathbb{I}  \left(\max_i \vert \partial_lf_i(x_{k,0})\vert \geq Q_k \right) \nonumber\\
&&- \eta_k \tilde{\delta}_3(\beta_2)\left(\vert \partial_\alpha f(x_{k,0})\vert  + \sqrt{\frac{D_0}{D_1d } }\right) -\frac{\widetilde{C}_{4}}{k},
\end{eqnarray}
}
where 
$$\tilde{\delta}_2(\beta_2 ) = \left(1-\beta_2\right)\RED{n}(\frac{8n}{\beta_2^n} + \frac{1}{2} ) +  \left(\frac{1}{\sqrt{\beta_2^n}} -1\right), \quad \tilde{\delta}_3(\beta_2) =\frac{d}{1 - \beta_1^n} \tilde{\delta}_2(\beta_2) 2\sqrt{\frac{n}{\beta_2^n}} \cdot C_{m,1}, \quad \widetilde{C}_{4} = \frac{d\eta_0}{1 - \beta_1^n}  \tilde{\delta}_2(\beta_2) 2\sqrt{\frac{n}{\beta_2^n}} \cdot C_{m,2},$$
and  $C_{m,1}$, and $C_{m,2}$ are constants defined in Lemma \ref{lemma_upper_m}. Note that  $\tilde{\delta}_2(\beta_2)$ is a constant that approaches 0 when $\beta_2$ approaches 1. Similarly, we have 

\begin{eqnarray}
\label{eq_II_3}
  \text{(II-3)} &= & \!\! - \frac{\beta_1^n \eta_{k-1}}{1 - \beta_1^n} \sum_{l=1}^d  \left[
 \sum_{i=0}^{n-1} \partial_l f(x_{k,0})\frac{m_{l,k-1,i}}{\sqrt{v_{l,k-1,i}}}
\right]
\mathbb{I} \left(\max_i \vert \partial_lf_i(x_{k,0})\vert \geq Q_k \right) \nonumber\\
&\geq & \!\!-  \frac{\beta_1^n\eta_{k-1}}{1 - \beta_1^n} \sum_{l=1}^d\sum_{i=0}^{n-1}  \partial_l f(x_{k,0})\frac{m_{l,k-1,i}}{\sqrt{v_{l,k,\yushunrevise{-1}}}} \mathbb{I} \left(\max_i \vert \partial_lf_i(x_{k,0})\vert \geq Q_k \right)\nonumber \\
&&- \eta_{k-1} \tilde{\delta}_4(\beta_2)\left(\vert \partial_\alpha f(x_{k,0})\vert  + \sqrt{\frac{D_0}{D_1d } }\right) -\frac{\widetilde{C}_{5}}{k},
\end{eqnarray}
where 
$$\tilde{\delta}_4(\beta_2) =\frac{d \beta_1^n}{1 - \beta_1^n} \tilde{\delta}_2(\beta_2) 2\sqrt{\frac{n}{\beta_2^n}} \cdot  \tilde{C}_{m,1}, \quad \widetilde{C}_{5} = \frac{d\beta_1^n\eta_0}{1 - \beta_1^n} \tilde{\delta}_2(\beta_2) 2\sqrt{\frac{n}{\beta_2^n}} \cdot  \tilde{C}_{m,2},$$
and $\tilde{C}_{m,1}$, and $\tilde{C}_{m,2}$ are constants defined in Lemma \ref{lemma_upper_m}.

\end{lemma}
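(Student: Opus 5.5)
The plan is to prove \eqref{eq_II_2} coordinate by coordinate, for those $l$ with $\max_i|\partial_l f_i(x_{k,0})|\geq Q_k$; the other coordinates contribute zero because of the indicator. First I check that $Q_k$ dominates the threshold required by Lemma \ref{lemma_f_over_v} and Lemma \ref{lemma_concentrate_v_k}. This holds by the definition in \eqref{eq_constants_wr}, since $\lceil\frac{\log(1/2)}{\log\beta_2}\rceil \geq \frac{\log(1/2)}{n\log\beta_2}$ and $32(n+1)\geq 8\sqrt{2}n$. Consequently both lemmas apply to every such $l$, for all $i\in[n]$.

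Next I split the sum over $i$ according to the sign of $\partial_l f(x_{k,0})m_{l,k,i}$. When the product is nonnegative, I use the lower bound in \eqref{eq_concentrate_v_k}, which gives $\frac{1}{\sqrt{v_{l,k,i}}}\geq (1-(1-\beta_2)n(\frac{8n}{\beta_2^n}+\frac12))\frac{1}{\sqrt{v_{l,k,-1}}}$. When it is negative, I use the upper bound $\frac{1}{\sqrt{v_{l,k,i}}}\leq \frac{1}{\sqrt{\beta_2^n}}\frac{1}{\sqrt{v_{l,k,-1}}}$. Writing each multiplier as $1$ plus an error and regrouping, I get
\[
\sum_{i}\partial_l f(x_{k,0})\frac{m_{l,k,i}}{\sqrt{v_{l,k,i}}}\geq \sum_i \partial_l f(x_{k,0})\frac{m_{l,k,i}}{\sqrt{v_{l,k,-1}}}-\tilde\delta_2(\beta_2)\frac{|\partial_l f(x_{k,0})|}{\sqrt{v_{l,k,-1}}}\sum_i|m_{l,k,i}|.
\]

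For the error term, I use $|\partial_l f(x_{k,0})|\leq \max_i|\partial_l f_i(x_{k,0})|$ together with Lemma \ref{lemma_f_over_v} to bound $\frac{|\partial_l f(x_{k,0})|}{\sqrt{v_{l,k,-1}}}$ by $2\sqrt{n/\beta_2^n}$. I then bound $\sum_i|m_{l,k,i}|$ by \eqref{eq_upper_m_k} of Lemma \ref{lemma_upper_m}. Summing over at most $d$ coordinates, multiplying by $\frac{\eta_k}{1-\beta_1^n}$, and using $\eta_k/\sqrt{k}=\eta_0/k$ yields exactly the constants $\tilde\delta_3(\beta_2)$ and $\widetilde C_4$.

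The bound \eqref{eq_II_3} follows the same route with epoch $k-1$ quantities. Here I use Lemma \ref{lemma_concentrate_v_k-1}, which compares $v_{l,k-1,i}$ with $v_{l,k,-1}$, and the extra factor $\beta_1^n\eta_{k-1}$ is absorbed in the same way. For the error term I use \eqref{eq_upper_m_k-1} to obtain $\tilde C_{m,1},\tilde C_{m,2}$, and $\eta_{k-1}\leq\sqrt2\eta_k$ for the $1/k$ constant.

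The main obstacle is the sign bookkeeping in the (II-3) case, because of the leading minus sign. The product whose sign matters is $\partial_l f(x_{k,0})m_{l,k-1,i}$, and the minus sign flips which bound of Lemma \ref{lemma_concentrate_v_k-1} is needed for each sign case. The error factor there is the combination of $1-\sqrt{\beta_2^n}$ and $(1-(1-\beta_2)n\frac{16n+1}{\beta_2^n})^{-1/2}-1$, not literally $\tilde\delta_2$. I would either redefine the error factor accordingly or note that it is bounded by a constant multiple of $\tilde\delta_2$ for $\beta_2$ near $1$. Either way it vanishes as $\beta_2\to1$, and this is all that is used later.
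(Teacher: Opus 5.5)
Your proposal follows the paper's proof essentially verbatim: the same sign split on $\partial_l f(x_{k,0})m_{l,k,i}$ using the two sides of Lemma \ref{lemma_concentrate_v_k}, then Lemma \ref{lemma_f_over_v} and Lemma \ref{lemma_upper_m} for the error term, and for (II-3) the same procedure with Lemma \ref{lemma_concentrate_v_k-1}, which the paper only asserts. Your remarks that the (II-3) error factor is not literally $\tilde\delta_2$, and that using $\eta_{k-1}\le\sqrt2\eta_k$ costs a factor $\sqrt2$ in $\widetilde C_5$, are correct details the paper glosses over; they change only constants that still vanish as $\beta_2\to1$, which is all that is used downstream.
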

The proof of Lemma \ref{lemma_II_2} is shown in  Appendix \ref{appendix:lemma_II_2}. Combining  \text{(II-1)},      \text{(II-2)}, and \text{(II-3)} together, we have the following lower bound for (II):

\begin{eqnarray}
    \text{(II)} &=&      \text{(II-1)} +     \text{(II-2)} +     \text{(II-3)} \nonumber \\
    &\overset{\text{\eqref{eq_II-1}}}{ \geq}& -  \frac{\widetilde{C}_3}{k} +  \text{(II-2)} +     \text{(II-3)}  \nonumber\\
     &\overset{\text{Lemma \ref{lemma_II_2}}}{ \geq}&  \underbrace{\frac{\eta_{k}}{1 - \beta_1^n} \sum_{l=1}^d\sum_{i=0}^{n-1}  \partial_l f(x_{k,0})\frac{m_{l,k,i} - \beta_1^n m_{l,k-1,i}}{\sqrt{v_{l,k,\yushunrevise{-1}}}} \mathbb{I} \left(\max_i \vert \partial_lf_i(x_{k,0})\vert \geq Q_k \right)  }_{\text{(III)}}
  \nonumber \\
  &&-  \frac{1}{1 - \beta_1^n} \left( \eta_{k-1}  - \eta_k\right)
 \beta_1^n \sum_{l=1}^d\sum_{i=0}^{n-1} \left| \frac{\partial_l f(x_{k,0})}{\sqrt{v_{l,k,\yushunrevise{-1}}}} \right| \left| m_{l,k-1,i}\right| \nonumber\\
     && - \left(\eta_k \tilde{\delta}_3(\beta_2) + \eta_{k-1} \tilde{\delta}_4(\beta_2) \right)\left(\vert \partial_\alpha f(x_{k,0})\vert  + \sqrt{\frac{D_0}{D_1d } }\right) -\frac{\widetilde{C}_3 + \widetilde{C}_{4} +\widetilde{C}_5}{k} \nonumber\\ 
     &\overset{\text{(i)}}{\geq} & \underbrace{\frac{\eta_{k}}{1 - \beta_1^n} \sum_{l=1}^d\sum_{i=0}^{n-1}  \partial_l f(x_{k,0})\frac{m_{l,k,i} - \beta_1^n m_{l,k-1,i}}{\sqrt{v_{l,k,\yushunrevise{-1}}}} \mathbb{I} \left(\max_i \vert \partial_lf_i(x_{k,0})\vert \geq Q_k \right)  }_{\text{(III)}}
  \nonumber\\
  &&-  \frac{d\beta_1^{\frac{n}{2}} 4n\sqrt{n}\eta_0 }{(1 - \beta_1^n) k} 
 \left(  C_{m,1}\left(\vert \partial_\alpha f(x_{k,0})\vert  + \sqrt{\frac{D_0}{D_1d } }\right) + \frac{C_{m,2}}{\sqrt{k}} \right) \nonumber\\
     && - \left(\eta_k \tilde{\delta}_3(\beta_2) + \eta_{k-1} \tilde{\delta}_4(\beta_2) \right)\left(\vert \partial_\alpha f(x_{k,0})\vert  + \sqrt{\frac{D_0}{D_1d } }\right) -\frac{\widetilde{C}_3 + \widetilde{C}_{4} +\widetilde{C}_5}{k} \nonumber \\
   &\overset{}{\geq} & \underbrace{\frac{\eta_{k}}{1 - \beta_1^n} \sum_{l=1}^d\sum_{i=0}^{n-1}  \partial_l f(x_{k,0})\frac{m_{l,k,i} - \beta_1^n m_{l,k-1,i}}{\sqrt{v_{l,k,\yushunrevise{-1}}}} \mathbb{I} \left( \max_i \vert \partial_lf_i(x_{k,0})\vert \geq Q_k \right)  }_{\text{(III)}} \nonumber \\
 &  &  - \eta_k \tilde{\delta}_5(\beta_2)
  \left(\vert \partial_\alpha f(x_{k,0})\vert  + \sqrt{\frac{D_0}{D_1d } }\right) -  \frac{\widetilde{C}_6}{k} 
 \left(\vert \partial_\alpha f(x_{k,0})\vert  + \sqrt{\frac{D_0}{D_1d } }\right)  - \frac{\widetilde{C}_7}{k}, \label{eq_lower_bound_II}
\end{eqnarray}
where 
$$\tilde{\delta}_5(\beta_2) = \tilde{\delta}_3(\beta_2) + \sqrt{2} \tilde{\delta}_4(\beta_2),\quad  \widetilde{C}_6 := \frac{d \beta_1^{\frac{n}{2}} 4n\sqrt{n}\eta_0  C_{m,1}}{(1 - \beta_1^n) }, \quad \widetilde{C}_7 = \widetilde{C}_3 + \widetilde{C}_4 +\widetilde{C}_5 + \frac{d \beta_1^{\frac{n}{2}} 4n\sqrt{n}\eta_0 C_{m,2}}{(1 - \beta_1^n) }.$$
(i) is due to Lemma \ref{lemma_f_over_v} and Lemma  \ref{lemma_upper_m}  and $\frac{1}{\sqrt{k-1}} - \frac{1}{\sqrt{k}} \leq \frac{2}{k}$ for $k\geq 2$.
\RED{We now take conditional expectations $\mathbb E_k[\cdot]$ and then $\mathbb E_{k-1}[\cdot]$ and bound (III) in the following lemma.}

\begin{lemma}
    \label{lemma_III}
          Consider Algorithm \ref{algorithm_rr}. Assume $k \geq \lceil \frac{\log (1/2)}{n \log\beta_2} \rceil +1 $. Then for any  $f \in \functionclass$, $0\leq \beta_1<\sqrt{\beta_2}<1$,  we have
  \begin{eqnarray}
  \label{eq_lower_bound_III}
     \Ex_{k-1} \Exk  \text{(III)} 
      &\geq&  \eta_k\mathbb{E}_{k-1} \left[\frac{(\partial_\alpha f(x_{k,0}))^2}{\sqrt{v_{\alpha,k,\yushunrevise{-1}}}}
\right] -  \eta_k \tilde{\delta}_6(\beta_2)\left(  \Ex_{k-1}\left|\partial_{\alpha} f\left(x_{k, 0}\right)\right|+\sqrt{\frac{D_{0}}{D_{1} d}}\right) -   \frac{\widetilde{C}_9}{k},
\end{eqnarray}  
where
\begin{eqnarray*}
    \tilde{\delta}_6(\beta_2)&=& \frac{1}{1-\beta_1^n} \tilde{\delta}_1(\beta_2)n^2 \sqrt{D_1} \sqrt{n} d^2\\
    \widetilde{C}_8 &=&\frac{d\eta_0}{1-\beta_1^n} \cdot 32 (n+1) \Delta_1\left(\lceil\frac{\log (1/2)}{\log \beta_2}\rceil+n\right)\cdot \frac{(1-\beta_1)^2 n^2 }{\sqrt{1-\beta_2} \left(1 - \frac{\beta_1}{\sqrt{\beta_2}}\right)} \\
   \widetilde{C}_9 &= & \frac{\eta_0d\Delta_1}{1-\beta_1^n}\left(2\sqrt{\frac{n}{\beta_2^n}} \cdot n^3 + n^3 \tilde{\delta}_1(\beta_2) + \sqrt{2} \left( \frac{1}{\sqrt{\beta_2^n}} \cdot  (1-\beta_1)  \cdot 2\sqrt{\frac{n}{\beta_2^n}} \cdot n^2 + 2\sqrt{\frac{n}{\beta_2^n}} \cdot n^3 \right) \right) + \widetilde{C}_8.
\end{eqnarray*}
\end{lemma}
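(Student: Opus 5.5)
We lower bound $\Ex_{k-1}\Exk\text{(III)}$ by splitting, for each coordinate $l$ with $\max_i|\partial_l f_i(x_{k,0})|\ge Q_k$, the numerator $\sum_{i=0}^{n-1}(m_{l,k,i}-\beta_1^n m_{l,k-1,i})$. Unrolling the momentum recursion, all contributions from epochs $k-2$ and earlier cancel. What remains is a part (III-1) built only from epoch-$k$ stochastic gradients, $(1-\beta_1)\sum_i\sum_{j\le i}\beta_1^{i-j}\partial_l f_{\tau_{k,j}}(x_{k,j})$, and a part (III-2) built from the tail of epoch $k-1$, namely the gradients $\partial_l f_{\tau_{k-1,j}}(x_{k-1,j})$ with positive coefficients $\beta_1^{\cdot}(1-\beta_1)$.

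\textbf{Step 1 (term (III-1)).} First replace every $x_{k,j}$ by $x_{k,0}$ via Lemma \ref{lemma_delta}; this costs at most $n^3\Delta_k$ times $|\partial_l f(x_{k,0})|/\sqrt{v_{l,k,-1}}$. That ratio is at most $2\sqrt{n/\beta_2^n}$ by Lemma \ref{lemma_f_over_v}, which applies because $Q_k$ dominates its threshold. Next, $\partial_l f(x_{k,0})$ and $v_{l,k,-1}$ are measurable with respect to the history defining $\Exk$. Under random shuffling each $\tau_{k,i}$ is marginally uniform, so $\Exk[\partial_l f_{\tau_{k,i}}(x_{k,0})]=\partial_l f(x_{k,0})$. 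Summing the coefficients gives $\sum_i(1-\beta_1^{i+1})\ge 1-\beta_1^n$. This yields the main positive term $(1-\beta_1^n)(\partial_l f(x_{k,0}))^2/\sqrt{v_{l,k,-1}}$ minus an $O(\Delta_k)$ error.

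\textbf{Step 2 (term (III-2)).} Replace $x_{k-1,j}$ by $x_{k-1,0}$ via Lemma \ref{lemma_delta}, with an $O(\Delta_{k-1})$ error. Then replace the prefactor $\partial_l f(x_{k,0})/\sqrt{v_{l,k,-1}}$ by $\partial_l f(x_{k-1,0})/\sqrt{v_{l,k-1,-1}}$ using Lemma \ref{lemma_k-k-1} with $j=1$. The hypotheses hold at epoch $k-1$ because the large-gradient condition at $x_{k,0}$ propagates back one epoch up to $O(\Delta)$ shifts. The substitution error is $\frac{n\Delta_{k-1}}{\sqrt{\beta_2^n v_{l,k-1,-1}}}+\tilde\delta_1(\beta_2)$, multiplied by $\sum|\partial_l f_{\tau_{k-1,j}}(x_{k-1,0})|$. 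Control the first piece with Lemma \ref{lemma_f_over_v}. Control the second with Lemma \ref{lemma_fi_f}, which gives $n\Delta_k+\sqrt{D_1 n}\,d(|\partial_\alpha f(x_{k,0})|+\sqrt{D_0/(D_1 d)})$; this produces the $\tilde\delta_6(\beta_2)$ term.

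After the substitution, the prefactor is $\mathcal F_{k-1}$-measurable. Taking $\Ex_{k-1}$ and using uniformity of each $\tau_{k-1,j}$ turns the main part of (III-2) into a nonnegative multiple of $(\partial_l f(x_{k-1,0}))^2/\sqrt{v_{l,k-1,-1}}$, which can be dropped. One subtlety is that the indicator $\mathbb I(\max_i|\partial_l f_i(x_{k,0})|\ge Q_k)$ is not $\mathcal F_{k-1}$-measurable. To handle it, remove the indicator by adding back the complementary small-gradient part. On that part, $|\partial_l f(x_{k,0})|\le nQ_k$ and \eqref{eq:m_over_v} bound the contribution by $\widetilde C_8/k$, since $Q_k\eta_k=O(1/k)$.

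\textbf{Step 3 (assembly).} Sum over $l$. In (III-1), also add back the small-gradient coordinates with an $O(Q_k^2)$-type error. Keep only the coordinate $\alpha=\arg\max_l|\partial_l f(x_{k,0})|$ in the positive sum, and collect constants into $\tilde\delta_6$ and $\widetilde C_9$ using $\Delta_{k-1}\le\sqrt2\Delta_k$. I expect Step 2 to be the main obstacle. It requires keeping the chain of conditional expectations legitimate under the shuffling dependence within epoch $k-1$, while simultaneously removing the indicator and converting the prefactor between epochs without losing the sign structure.
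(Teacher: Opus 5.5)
Your outline is the paper's proof nearly line for line. You use the same unrolling of $m_{l,k,i}-\beta_1^n m_{l,k-1,i}$ into (III-1) and (III-2). For (III-1) you use the same combination of Lemma~\ref{lemma_delta}, Lemma~\ref{lemma_f_over_v} and marginal uniformity of $\tau_{k,i}$. For (III-2) you use the same shift to $x_{k-1,0}$, the same prefactor swap via Lemma~\ref{lemma_k-k-1} with errors controlled by Lemmas~\ref{lemma_f_over_v} and~\ref{lemma_fi_f}, the same indicator removal at cost $\widetilde C_8/k$, and the same dropping of the nonnegative $\Ex_{k-1}$-term.

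The weak spots are exactly where you cross the indicator of $E_l:=\{\max_i|\partial_l f_i(x_{k,0})|\ge Q_k\}$. Off $E_l$ there is in general no lower bound on $v_{l,k,-1}$ or $v_{l,k-1,-1}$, because Lemma~\ref{lemma_f_over_v} needs the large-gradient hypothesis. For instance, if all past sampled partials in coordinate $l$ vanished, $v_{l,k,-1}$ is just the decayed initialization, while $\partial_l f(x_{k,0})$ can still be nonzero because the iterate moved in other coordinates. Hence your Step 3 claim that the small-gradient coordinates of (III-1) can be added back at an $O(Q_k^2)$ cost does not hold: $(\partial_l f(x_{k,0}))^2/\sqrt{v_{l,k,-1}}$ is uncontrolled there. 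The paper makes the same passage, from the indicator-restricted sum to the unrestricted $\alpha$-term, without any justification. So \eqref{eq_lower_bound_III} as literally stated is delivered by neither argument. The repair is to keep $\mathbb{I}(E_\alpha)$ on the leading term. On $E_\alpha^c$ one has $\|\nabla f(x_{k,0})\|_\infty<Q_k=O(1/\sqrt{k})$, so the $\min\{\cdot\}$ quantity of Theorem~\ref{thm_rr} is itself $O(1/\sqrt{k})$ there and is absorbed into the $C/\sqrt{k}$ term downstream.

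Step 2, ordered as you (and the paper) order it, has the same flaw. Bounding the complementary part through $|\partial_l f(x_{k,0})|$ and \eqref{eq:m_over_v} means you add back the \emph{original} (III-2) expression. Yet the Lemma~\ref{lemma_k-k-1} substitution is then applied to an indicator-free sum, although it is valid only on $E_l$. Conversely, the substituted main term, whose prefactor is $\partial_l f(x_{k-1,0})/\sqrt{v_{l,k-1,-1}}$, is not bounded by \eqref{eq:m_over_v} off $E_l$.

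A clean fix is as follows. First extract the substitution errors on $E_l$. Then replace $E_l$ by the $\mathcal{F}_{k-1}$-measurable superset $G_l:=\{\max_i|\partial_l f_i(x_{k-1,0})|\ge Q_k-n\Delta_{k-1}\}$, which contains $E_l$ by Lemma~\ref{lemma_delta}. On $G_l\setminus E_l$, Lemma~\ref{lemma_f_over_v} applies at epoch $k-1$ and $\max_i|\partial_l f_i(x_{k-1,0})|<Q_k+n\Delta_{k-1}$, so this difference costs only $O(\eta_k Q_k)=O(1/k)$. Meanwhile, $\Ex_{k-1}$ of the main term times $\mathbb{I}(G_l)$ is a nonnegative multiple of $(\partial_l f(x_{k-1,0}))^2/\sqrt{v_{l,k-1,-1}}$ and can be dropped.
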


The proof of Lemma \ref{lemma_III} is shown in Appendix \ref{appendix:lemma_III}. Combining all the results from (I) to (III), we have
{\footnotesize 
\begin{eqnarray}
     \Ex_{k-1} \Ex_{k} \langle \nabla f(z_k),  z_{k} - z_{k+1} \rangle &\overset{\eqref{eq_lower_bound_I},\eqref{eq_lower_bound_II},\eqref{eq_lower_bound_III}}{\geq}& \underbrace{ - \frac{\widetilde{C}_2}{k}}_{\text{from (I)}}  \underbrace{ - \eta_k \tilde{\delta}_5(\beta_2)
  \left(\mathbb{E}_{k-1} \vert \partial_\alpha f(x_{k,0})\vert  + \sqrt{\frac{D_0}{D_1d } }\right)  -  \frac{\widetilde{C}_6}{k} 
  \left(\mathbb{E}_{k-1}\vert \partial_\alpha f(x_{k,0})\vert  + \sqrt{\frac{D_0}{D_1d } }\right) - \frac{\widetilde{C}_7}{k}}_{\text{from (II)}}  \nonumber\\
 && + \underbrace{\eta_k\mathbb{E}_{k-1} \left[\frac{(\partial_\alpha f(x_{k,0}))^2}{\sqrt{v_{\alpha,k,\yushunrevise{-1}}}}
\right] -  \eta_k \tilde{\delta}_6(\beta_2)\left(  \Ex_{k-1}\left|\partial_{\alpha} f\left(x_{k, 0}\right)\right|+\sqrt{\frac{D_{0}}{D_{1} d}}\right) -   \frac{\widetilde{C}_9}{k}}_{\text{from (III)}} \nonumber \\
& = & \eta_k\mathbb{E}_{k-1} \left[\frac{(\partial_\alpha f(x_{k,0}))^2}{\sqrt{v_{\alpha,k,\yushunrevise{-1}}}}
\right]  - \eta_k \left(\tilde{\delta}_5(\beta_2) + \tilde{\delta}_6(\beta_2)\right)\left(  \Ex_{k-1}\left|\partial_{\alpha} f\left(x_{k, 0}\right)\right|+\sqrt{\frac{D_{0}}{D_{1} d}}\right) \nonumber \\
&&  -  \frac{\widetilde{C}_6}{k} 
 \ \left(\mathbb{E}_{k-1}\vert \partial_\alpha f(x_{k,0})\vert  + \sqrt{\frac{D_0}{D_1d } }\right)  - \frac{\widetilde{C}_2 +\widetilde{C}_7 + \widetilde{C}_9}{k}. \label{eq.thm3.1_E_k-1E_k}
\end{eqnarray}
}

By taking expectations of both sides of \eqref{eq.thm3.1_E_k-1E_k} and repeating the steps used to derive \eqref{eqn:min_gradient_norm}, we arrive at the following inequality:
When
$\tilde{\delta}_5(\beta_2)+\tilde{\delta}_6(\beta_2) \leq \frac{1}{4d\sqrt{5D_1n}}$, which can be achieved by setting $1-\beta_2 = \mathcal{O}((1-\beta_1^n)/n^{5.5})$, and $k$ is large enough such that $\frac{\tilde{C}_6}{\sqrt{k}} \leq \frac{\eta_0}{4d\sqrt{5D_1n}}$,

\begin{eqnarray}\label{eqn:shuffle_min_gradient_norm}
    \Ex\left[\frac{(\partial_{\alpha}f(x_{k,0}))^2}{n \sqrt{ v_{\alpha,k,\yushunrevise{-1}}}}-(\tilde{\delta}_5(\beta_2)+\tilde{\delta}_6(\beta_2)+\frac{\tilde{C}_6}{\eta_0\sqrt{k}})\vert\partial_\alpha f(x_{k,0})\vert\right]
    &\geq& \Ex  \left[\min\left\{   \frac{\|\nabla f(x_{k,0})\|_2^2}{d\sqrt{5D_0nd}},    \frac{\|\nabla f\left(x_{k,0}\right)\|_2}{2d^2\sqrt{5D_1n}}  \right\} \right] \nonumber\\ 
    &&-(\tilde{\delta}_5(\beta_2)+\tilde{\delta}_6(\beta_2)) \sqrt{\frac{D_0}{D_1 d}}  - \frac{\tilde{C}_{10}}{\sqrt{k}}, \nonumber
\end{eqnarray}
where $\tilde{C}_{10}=\frac{\tilde{C}_6}{\eta_0}\sqrt{\frac{D_0}{D_1 d}}+\frac{32\Delta_1^2}{(1-\beta_2)^2D_1 d^2\sqrt{5D_0nd}}+\frac{2\sqrt{2}\Delta_1}{(1-\beta_2)D_1 n d^2\sqrt{5}}+(\tilde{\delta}_5(\beta_2)+\tilde{\delta}_6(\beta_2)+\frac{\tilde{C}_6}{\eta_0})\frac{4\sqrt{2n}\Delta_1}{(1-\beta_2)d\sqrt{D_1}}$. 

\begin{eqnarray}
\Ex \langle \nabla f(z_k),  z_{k} - z_{k+1} \rangle \geq \eta_k \left\{\Ex \left[\min\left\{   \frac{\|\nabla f(x_{k,0})\|_2^2}{d\sqrt{5D_0nd}},    \frac{\|\nabla f\left(x_{k,0}\right)\|_2}{2d^2\sqrt{5D_1n}}  \right\} \right] -\tilde{\delta}(\beta_2) \sqrt{D_0} -\frac{\tilde{C}}{\sqrt{k}}  
\right\},  \nonumber
\end{eqnarray}
where $\tilde{\delta}(\beta_2)=(\tilde{\delta}_5(\beta_2)+\tilde{\delta}_6(\beta_2)) \sqrt{\frac{1}{D_1 d}}$,
$\tilde{C}=\tilde{C}_{10}+\frac{\tilde{C}_6\sqrt{\frac{D_0}{D_1d}}+\tilde{C}_2+\tilde{C}_7+\tilde{C}_9}{\eta_0}.$

Following the same procedure as in the proof of Theorem \ref{thm_wr}, we have

\begin{eqnarray*}
        \min_{k \in [1, T]} \Ex\left[\min  \left\{ \frac{\|\nabla f(x_{k, 0})\|_2^2 }{\sqrt{D_{0}} }  , \frac{\|\nabla f(x_{k, 0})\|_2}{2\sqrt{d D_1 }}\right\}  \right] &\leq & \mathcal{O}\left(\frac{\log T }{\sqrt{T}} \right)  +\mathcal{O}( \tilde{\delta}(\beta_2)\sqrt{D_0}).
   \nonumber
\end{eqnarray*}
Finally, when $D_1 =0$, i.e., when the bounded 2nd-order moment condition holds, we can arrive at a similar conclusion subject to some changes in the constant terms. The proof under $D_1 =0$ is strictly simpler than the current proof, as it reduces to the bounded gradient case.  We conclude the proof for Theorem \ref{thm_rr}.   \qed

\section{Discussion on Bias Correction Terms and Non-Zero $\epsilon$}
 \label{appendix:bias_correction}

In the above analysis, we focus on Adam without bias correction terms and consider $\epsilon=0$ ($\epsilon$ is the hyperparameter for numerical stability in Algorithm \ref{algorithm_wr} and \ref{algorithm_rr}).  
For completeness, we now briefly discuss how to incorporate the bias correction terms and non-zero $\epsilon$   into our analysis above. Based on the current convergence proof, we only require several additional simple changes. We will first discuss non-zero $\epsilon$ and then discuss bias correction terms. We will use the notations for Algorithm \ref{algorithm_wr} but all the arguments also hold for Algorithm \ref{algorithm_rr}.

\paragraph{Adam with non-zero $\epsilon$:} In our current analysis, we consider $\epsilon = 0$. In practice, $\epsilon$ is often set to be a small positive number such as $10^{-8}$.
Proving convergence with $\epsilon > 0$
is strictly simpler. 
It only requires a few simple changes based on the current proof. We explain as below. 

When $\epsilon \neq 0$, the new 2nd-order momentum becomes $\sqrt{\hat{v}_{l,k}}:=\sqrt{v_{l,k}}+\epsilon$. With this change, the new proof can be conducted with the following minor changes over the current proof in in Section \ref{appendix:lemma_descent_WR}

First of all, we make the following slight changes in the current proof:
\begin{itemize}
    \item First, for Lemma \ref{lemma_k-j_k}, we change the condition from $|\partial_l f_i (x_k)|  \geq  \frac{8\sqrt{2} j \Delta_1}{\sqrt{k}}$ to $|\partial_l f_i (x_k)|  \geq  \max\left(1, \frac{8\sqrt{2} j \Delta_1}{\sqrt{k}}\right)$. The conclusion of Lemma \ref{lemma_k-j_k} is unaffected and remains unchanged.
    \item  Second, based on the new version of Lemma \ref{lemma_k-j_k}, we change the condition of Lemma \ref{lemma_concentrate_v} from $\max_i |\partial_l f_i (x_k)| \geq R_k$ to $\max_i |\partial_l f_i (x_k)| \geq \max \left(1, R_k \right)$. The conclusion of Lemma \ref{lemma_concentrate_v} is unaffected and remains unchanged. 
    \item We define the constant $\hat{Q}_k = \max \left(1, Q_k \right)$.
\end{itemize}

Now, we re-state the two cases from the proof in Section \ref{appendix:lemma_descent_WR}:

\begin{itemize}
    \item {\bf Case 1 (bounded gradient):} when  $\max_i |\partial_l f_i (x_k)| \leq \hat{Q}_k$, we consider two sub-cases: if $\hat{Q}_k = Q_k$, then changing  from $\sqrt{v_{l,k}}$ to $\sqrt{\hat{v}_{l,k}}$ does not affect the proof since it does not change the result of \eqref{eq:m_over_v}. In particular, the derivation in the paragraph ``{\bf Lower bound of $\Ex(c)$}'' still holds. If $\hat{Q}_k  = 1$, then the gradient is bounded by the constant 1. This is equivalent to analyzing Adam under bounded gradient condition with constant 1. In particular, we have:
        \begin{equation}
    \nonumber
        \sqrt{v_{l,k}} \leq \sqrt{\hat{v}_{l,k}} = \sqrt{v_{l,k}}+\epsilon \leq C,
    \end{equation}
    where $C >0$ is some constant that is independent of $\epsilon$. 
     Then, whenever we need an upper bound for $\frac{1}{ \sqrt{\hat{v}_{l,k}}}$, we use $\frac{1}{ \sqrt{\hat{v}_{l,k}}} \leq \frac{1}{\sqrt{v_{l,k}}}$, and  then we follow the same steps in the current proof.
    Whenever we need a lower bound for $\frac{1}{ \sqrt{\hat{v}_{l,k}}}$, we use $\frac{1}{ \sqrt{\hat{v}_{l,k}}} \geq \frac{1}{C}$. This makes the proof strictly easier than the current proof since it reduces Adam to SGD.  We omit the proof for brevity.
    
    \item {\bf Case 2 (unbounded gradient):} the complement of Case 1, the \textit{unbounded}  partial gradient event $B_{l,k}^c$.  In this case, we will have $\sqrt{v_{l,k}}$ is lower bounded by a large constant in the same order as $\hat{Q}_k$, with high probability. This can be seen following the proof of Lemma \ref{lemma_concentrate_v} (from \eqref{eq_Exk_vlk_expansion}  to \eqref{eq_lemma_concentrate_v_lower_bd}).   Since $\epsilon$ is usually chosen to be significantly smaller than 1, we have $\sqrt{v_{l,k}} \approx \hat{Q}_k  > \epsilon$.  Therefore, we have the following relation with high probability:
    \begin{equation}
    \nonumber
        \sqrt{v_{l,k}} \leq \sqrt{\hat{v}_{l,k}} = \sqrt{v_{l,k}}+\epsilon \leq 2\sqrt{v_{l,k}}.
    \end{equation}
    Then, whenever we need an upper bound for $\frac{1}{ \sqrt{\hat{v}_{l,k}}}$, we use $\frac{1}{ \sqrt{\hat{v}_{l,k}}} \leq \frac{1}{\sqrt{v_{l,k}}}$, and  then we follow the same steps in the current proof.
    Whenever we need a lower bound for $\frac{1}{ \sqrt{\hat{v}_{l,k}}}$, we use $\frac{1}{ \sqrt{\hat{v}_{l,k}}} \geq \frac{1}{2 \sqrt{v_{l,k}}}$, and  then we follow the same steps in the current proof with minor changes on the constant. Finally, the tail probability can be controlled using the same procedure as in the current paragraph ``{\bf Lower bound of $\Ex(e)$}'', and the tail bound vanishes exponentially as $\beta_2 \rightarrow 1$.
\end{itemize}

 The final convergence result will be independent of $\epsilon$. 
  The above arguments also hold for Algorithm \ref{algorithm_rr} by changing the notation to $v_{k,0}$,  $\hat{v}_{k,0}$, $\nabla f_{\tau_{k,0}}(x)$.

\paragraph{Adam with bias correction terms:} These bias correction terms are introduced  by \citep{kingma2014adam}. It has the following form:
$$x_{k+1} = x_k - \eta_k \frac{m_k /(1-\beta_1^k)}{\sqrt{v_k / (1-\beta_2^k)} + \epsilon}  =  x_k - \eta_k \frac{\sqrt{1-\beta_2^k}}{1-\beta_1^k}\frac{m_k}{\sqrt{v_k } + \epsilon \sqrt{(1-\beta_2^k)} }  :=  x_k -  \hat{\eta}_k \frac{m_k}{\sqrt{v_k } + \hat{\epsilon}}  $$

As shown above, bias correction terms can be implemented by changing the stepsize $\eta_k$ into $\hat{\eta}_k = \frac{\sqrt{1-\beta_2^k}}{1-\beta_1^k}\eta_k = \frac{\sqrt{1-\beta_2^k}}{1-\beta_1^k} \frac{\eta_0}{\sqrt{k}}$ and changing $\epsilon$ into $ \hat{\epsilon} = \epsilon \sqrt{(1-\beta_2^k)} $.  We now explain how to include this change into our analysis.

We observe that the new stepsize  $\hat{\eta}_k$ is well bounded around the old stepsize $\eta_k$, i.e.,  $\hat{\eta}_k \in [\sqrt{1-\beta_2} \eta_k, \frac{1}{1-\beta_1}\eta_k]$. Therefore, to prove the convergence of Adam with $\hat{\eta}_k$, we add the following steps to the current proof. 

\begin{itemize}
 \item  Since $ \hat{\epsilon} \leq \epsilon$, the previous analysis for $\epsilon$ can be directly applied to $\hat{\epsilon}$. 
  \item Whenever we need an upper bound on  $\hat{\eta}_k$, we use $\hat{\eta}_k\leq \frac{1}{1-\beta_1}\eta_k$. Then we follow the original analysis with an extra constant $\frac{1}{1-\beta_1}$. It turns out we only need to  change the constant $\Delta_{k}$ in \eqref{eq_constants_wr} into $\frac{1}{1-\beta_1}\frac{\eta_{0} }{\sqrt{k}}\frac{L\sqrt{d}}{\sqrt{1-\beta_{2}}} \frac{1-\beta_{1} }{1-\frac{\beta_{1}}{\sqrt{\beta_{2}}}} $. The rest of the analysis remains the same.  
  \item Whenever we need a lower bound on  $\hat{\eta}_k$, we use $\hat{\eta}_k\geq \sqrt{1-\beta_2}\eta_k$. Then we follow the original analysis with an extra constant $\sqrt{1-\beta_2}$. As such, we only need to change the constant terms in the final result. The rest of the analysis remains the same. 
\end{itemize}

\section{Experimental Settings}
\label{appendix:exp_setting}

Here, we introduce our experimental settings.
\begin{itemize}
    \item {\bf Experiments on the non-realizable function used in Figure \ref{fig:cifar_nlp_mnist} (b).} Here, we state non-realizable function we used in Figure \ref{fig:cifar_nlp_mnist} (b). This example is restated from \citep[Appendix A.4]{shi2020rmsprop}.

\begin{equation}\label{eq_nonrealizable}
f_j(x)=\left\{\begin{array}{l}
(x-a)^2 \text { if } j=0 \\
-0.1\left(x-\frac{10}{9} a\right)^2 \text { if } 1 \leq j \leq 9
\end{array}\right.
\end{equation}

We can see that $f(x)= \frac{1}{10}\sum_{j=0}^9 f_j(x)=\frac{1}{10} \left( \frac{1}{10} x^2-\frac{1}{9} a^2 \right)$ is a convex function and $f(x)$ is lower bounded by $-\frac{1}{90}a^2$. We used $a = 10$ in Figure \ref{fig:cifar_nlp_mnist} (b). Note that we have $D_0 >0$  for  function \eqref{eq_nonrealizable}.

    \item {\bf  Experiments on the counter-example \eqref{counterexample1}.}
    We minimize function \eqref{counterexample1} using Algorithm \ref{algorithm_rr} with cyclic order $f_0$, $f_1$, $f_2$ and so on. We report the optimality gap $x-x^*$ after  50k iterations, or equivalently  $50000/n$ epochs. We use $\epsilon =10^{-8}$ for numerical stability. We use diminishing stepsize $\eta_k=0.1/\sqrt{k}$, where $k$ is the index of epoch.  Unless otherwise stated, this setting applies to all the other experiments on function \eqref{counterexample1}.

    \item {\bf MNIST \citep{deng2012mnist}.} We use one-hidden-layer neural network with width =16. We set batch size =1, weight decay =0, stepsize =0.0001 and train for 20 epochs.  We use $\epsilon =10^{-8}$ for numerical stability. 
    \item {\bf CIFAR-10 \citep{krizhevsky2009learning}.} We use ResNet-18 \citep{he2016deep} as the architecture. We choose batch size =16, weight decay =5e-4 , initial stepsize=1e-3.   We use a stage-wise constant learning rate scheduling with a multiplicative factor of 0.1 on epoch 30, 60 and 90.   We use $\epsilon =10^{-8}$ for numerical stability.  
    
    For  MNIST and CIFAR-10, larger batch size will bring similar pattern as that in Figure \ref{fig:intro_paper}, but
the phase transition will occur at some smaller $\beta_2$.

\end{itemize}

\end{document}